\newtheorem{theorem}{Theorem}[section]
\newtheorem{lemma}[theorem]{Lemma}
\newtheorem{assumption}[theorem]{Assumption}
\newtheorem{remark}[theorem]{Remark}
\newtheorem{definition}[theorem]{Definition}
\definecolor{Lightgray}{rgb}{0.75,0.75,0.75}
\newcommand{\floor}[1]{\left\lfloor #1 \right\rfloor}
\renewenvironment{proof}[1][Proof]{\noindent\textbf{#1.} }{\hfill\qed\vspace{\baselineskip}}
\title{Implicit Updates for Average-Reward Temporal Difference Learning}
\author{Hwanwoo Kim{$^\dagger$}, Dongkyu Derek Cho{$^\dagger$}, Eric Laber}
\affil{Department of Statistical Science, Duke University}
\date{\today}
\begin{document}
\maketitle
\def\thefootnote{$\dagger$}\footnotetext{equal contribution}
\begin{abstract}
Temporal difference (TD) learning is a cornerstone of reinforcement learning. In the average-reward setting, standard TD($\lambda$) is 
highly sensitive to the choice of step-size and thus requires careful tuning to maintain numerical stability. We introduce 
average-reward implicit TD($\lambda$), which employs an implicit fixed point update to provide data-adaptive stabilization while 
preserving the per iteration computational complexity of standard average-reward TD($\lambda$). In contrast to prior finite-time 
analyses of average-reward TD($\lambda$), which impose restrictive step-size conditions, we establish finite-time error bounds for the 
implicit variant under substantially weaker step-size requirements. Empirically, average-reward implicit TD($\lambda$) operates 
reliably over a much broader range of step-sizes and exhibits markedly improved numerical stability. This enables more efficient 
policy evaluation and policy learning, highlighting its effectiveness as a robust alternative to average-reward TD($\lambda$).
\end{abstract}

\section{Introduction}
Temporal difference (TD) learning \citep{barto2021reinforcement} is a core component of modern reinforcement learning (RL), combining 
the strengths of Monte Carlo sampling and dynamic programming thereby enabling efficient value estimation from state-action-reward 
trajectories exhibiting Markovian dependence. As a foundational method, TD learning underlies many RL algorithms and has been 
successfully applied across diverse domains, including robotics \citep{kober2013reinforcement}, financial decision-making 
\citep{nevmyvaka2006reinforcement}, and games \citep{tesauro1995temporal}. While originally developed in the discounted-reward 
setting, TD learning has since been adapted to the average-reward setting \citep{tsitsiklis1999average}, which can be more
natural in many applications \citep{giannoccaro2002inventory, dai2022queueing, tangamchit2002necessity}.

Despite its widespread use and practical relevance, standard average-reward TD($\lambda$) 
\citep{tsitsiklis1999average} is sensitive to step-size selection. 
From a theoretical standpoint, 
stability is certified by finite-time error bounds, and existing analyses establish such bounds only in 
small step-size regimes \citep{zhang2021average}.
In practice, larger step-sizes can accelerate learning 
but at the risk of numerical instability; 
conversely, smaller step
sizes are more numerically stable but can also yield slower learning.   
This stability–efficiency trade-off motivates methods that preserve the 
simplicity of the average-reward TD($\lambda$) while substantially expanding the range of step-sizes 
for which learning remains stable. We address this sensitivity by proposing an average-reward implicit 
TD($\lambda$) with finite-time error guarantees under substantially less restrictive step-size 
conditions. 
In addition, the proposed algorithm retains the 
computational complexity of standard 
average-reward TD($\lambda$).

\subsection{Related Literature} 
\paragraph{Discounted-Reward Setting.} Almost sure convergence of TD($\lambda$) with linear function 
approximation was first established in \citep{Tsitsiklis1997-hd}. Subsequent work derived finite-time error bounds under both i.i.d. data streams \citep{dalal2018finite} and Markovian samples, 
using projection-based mean-path analysis \citep{bhandari2018finite}, Lyapunov-function arguments 
\citep{srikant2019finite}, and induction-based proofs \citep{mitra2024simple}. In addition, TD-type 
methods formulated as two-time scale stochastic approximation algorithms—used for off-policy 
evaluation in the discounted setting—have been analyzed in 
\citep{sutton2008convergent,sutton2009fast,xu2019two}.

Despite the aforementioned theoretical developments in the discounted-reward setting, classical TD 
methods typically require restrictive step-size conditions 
\citep{bhandari2018finite,srikant2019finite, mitra2024simple} and display marked empirical 
sensitivity: larger steps may accelerate progress but risk divergence, whereas smaller steps improve 
stability at the cost of substantially slower convergence 
\citep{dabney2012adaptive,tamar2014implicit}. A principled remedy is to use implicit stochastic 
updates that recast the recursion as a fixed-point equation, providing data-adaptive stabilization, 
as shown in the stochastic optimization literature \citep{Toulis2014-mq,Toulis2014-bb,chee2023plus}. 
Building on this principle in reinforcement learning, recent work establishes 
asymptotic and finite-time error bounds 
for implicit variants of discounted TD in both on- and off-policy tasks without restrictive step-size requirements \citep{kim2025stabilizing}. 
Experiments further show improved numerical stability in both policy evaluation and control tasks.

\paragraph{Average-Reward Setting.} 
For foundations, background, and developments in average-reward policy evaluation, we refer to 
\citep{puterman2014markov,singh1994reinforcement,mahadevan1996average,abounadi2001learning,tsitsiklis2002average, dewanto2020average}. 
The first convergence analysis specific to average-reward TD($\lambda$) with linear feature 
approximation is due to \citep{tsitsiklis1999average}, under the assumption that the 
span of the feature vector does not include the constant vector of all ones 
(see Section \ref{sec:theoretical_analysis} for additional discussion). 
Under the same assumption, \citep{yu2009convergence} established 
asymptotic convergence of the average-reward LSPE($\lambda$), a least-squares based alternative to 
the average-reward TD($\lambda$). Relaxing the aforementioned feature space restriction, \citep{zhang2021finite} derived finite-time bounds for average-reward TD($\lambda$) with both 
constant and linearly decaying step-sizes (i.e., $t^{\text{th}}$ step-size $\propto 1/t$), while imposing a restrictive condition on the initial step-size. More recent progress on average-reward 
off-policy evaluation with function approximation includes an asymptotically convergent tabular off-policy TD algorithm \citep{wan2021learning} and extensions of gradient TD methods 
\citep{sutton2008convergent, sutton2009fast} for average-reward off-policy evaluation tasks with linear function approximation \citep{zhang2021finite}. Furthermore, TD-style methods for estimating a policy’s asymptotic variance of the cumulative 
reward in average-reward setting are developed in \citep{agrawalpolicy}.

\subsection{Contributions}
We show that the step-size sensitivity of TD($\lambda$), which has been
well documented in the discounted setting
also arises in the average-reward setting. 
To mitigate this sensitivity, 
we adopt the implicit stochastic update framework to construct 
average-reward implicit TD($\lambda$). We 
establish finite-time error bounds under markedly weaker step-size conditions 
than those in \citep{zhang2021finite}, and we demonstrate 
that this relaxation enables computationally efficient policy evaluation and learning across a range 
of examples. The primary contributions of our work are summarized as follows.  

\begin{itemize}
\item We propose average-reward implicit TD($\lambda$), which is more robust to step-size choice than the standard average-reward TD($\lambda$).
\item We provide finite-time error bounds under both constant and diminishing step-sizes, substantially relaxing the step-size conditions required by existing bounds for standard average-reward TD($\lambda$) \citep{zhang2021finite}, thereby explaining the improved numerical stability of the implicit variant.
\item In the case of a diminishing step-size, we establish the first finite-time 
 error bounds in the average-reward setting for step-size sequences of the form $\alpha_t \propto t^{-s}$ with $s\in(0,1)$, covering both square-summable ($s>1/2$) and non-square-summable ($0<s\le 1/2$) regimes, thereby further broadening the admissible family of step-sizes.
\item We empirically demonstrate the robustness and efficiency of the proposed method through comprehensive experiments in both policy evaluation and control tasks.
\end{itemize}

\section{Policy Evaluation in the Average-Reward Setting}

\paragraph{Problem Formulation.}
Consider an infinite‐horizon Markov decision process (MDP) defined by a finite state space $\mathcal{S}$, a finite action space
$\mathcal{A}$, a bounded reward function $r:\mathcal{S}\times\mathcal{A}\to[0,1]$, and a transition 
function $p:\mathcal{S}\times\mathcal{A}\times\mathcal{S}\to[0,1]$. 
Under a deterministic stationary
policy $\mu:\mathcal{S}\rightarrow \mathcal{A}$, at time $t$ with a 
current state $S^\mu_t$, the agent will take an action $A^\mu_t = \mu(S^\mu_t)$, 
receive a reward $R^\mu_t=r(S^\mu_t, A^\mu_t)$, and transition to
next state 
$S^\mu_{t+1}$ according to the probability distribution 
$p(\cdot|S^\mu_t, A^\mu_t)$. The resulting state sequence 
$\left\{S^\mu_t\right\}_{t\in \mathbb{N}}$ induced
by the policy $\mu$ forms a Markov chain with one‐step transition probabilities 
$
p^\mu\left(S^\mu_{t+1}| S^\mu_t\right)=p\left\lbrace S^\mu_{t+1} | S^\mu_t, A^\mu_t=\mu(S^\mu_t)\right\rbrace
$.\footnote{Since any Markov reward process arises from an MDP under a fixed policy, the general MDP setting covers the Markov reward process case.}
To simplify notation, let 
$
\mathcal{S} = \{1, 2, \cdots, |\mathcal{S}|\}
$ 
and define time-homogeneous transition probability matrix 
$
\boldsymbol{P}^\mu = \bigl[P^\mu_{ij}\bigr]_{i,j=1}^{|\mathcal{S}|}
$
with 
$
P^\mu_{ij} = p^\mu \left\lbrace S^\mu_{t+1}=j\mid S^\mu_{t}=i\right\rbrace
$. 
Likewise, let
$
\boldsymbol{r}^\mu = 
\left[r\{1, \mu(1)\}, \cdots, r\{|\mathcal{S}|, \mu(|\mathcal{S}|)\}\right]^\top
$
be the reward vector. 

One way to characterize the long-term performance of a given policy $\mu$ is via its average-reward, defined for each initial state $s\in\mathcal{S}$ as
$$
\omega^\mu(s):=\lim_{T\to\infty}\frac{1}{T}\mathbb{E}^{\mu}\left(\sum_{t=0}^{T-1} R^\mu_t \Bigm|S^\mu_0=s\right),
$$
where the expectation is taken over the randomness associated with the Markov chain $\left\{S^\mu_t\right\}_{t \in \mathbb{N}}$ induced by the policy $\mu$. Although the 
average-reward provides a natural evaluation criterion for $\mu$, the limit need not exist in general (see, e.g., Chapter 8 of \citep{puterman2014markov}). To guarantee the existence and uniqueness of the average-reward, it is common to make the following assumption.

\begin{assumption}\label{assumption:Markov}
The Markov chain $\left\{S^\mu_t\right\}_{t \in \mathbb{N}}$ is irreducible and aperiodic. 
\end{assumption}

Under Assumption \ref{assumption:Markov}, the chain has a unique stationary distribution 
$
\boldsymbol{\pi}^{\mu} = (\pi^{\mu}_i)_{i=1}^{|\mathcal{S}|}
$ 
satisfying 
$
{\boldsymbol{\pi}^\mu}^\top \boldsymbol{P}^\mu = {\boldsymbol{\pi}^\mu}^\top
$ 
with $\pi^{\mu}_i > 0$ for every $i \in \mathcal{S}$ \citep{levin2017markov}. Under the same assumption, one can further show that the average-reward is independent of the initial state \citep{bertsekas1996neuro}; that is,
$
\omega^\mu(s) = {\boldsymbol{\pi}^\mu}^\top \boldsymbol{r}^\mu, ~ \forall s \in \mathcal{S}.
$
Unlike its discounted counterpart, the average-reward criterion carries no information about the relative desirability of individual states. To quantify long-run, state-dependent performance under a stationary policy $\mu$, we introduce the basic differential value function $v^\mu:\mathcal{S}\to\mathbb{R}$,
$$
v^\mu(s):=\mathbb{E}^\mu\left\{\sum_{t=0}^{\infty}\left(R^\mu_t - \omega^\mu\right)\Bigm|S^\mu_0=s\right\},
$$
which measures the relative advantage (or disadvantage) of starting in state $s\in\mathcal{S}$. Accordingly, the quantities of interest are the 1) average reward: $\omega^\mu$ and 2) pairwise contrast: $v^\mu(s)-v^\mu(s')$ for any $s, s' \in \mathcal{S}$ 
which captures the comparative long-run performance of states $s,s'\in\mathcal{S}$.

In high-dimensional or continuous state spaces, a common strategy is to use linear function approximation, where we model the  pairwise difference 
 as 
$$
v^\mu(s)-v^\mu(s')\approx\{\boldsymbol{\phi}(s)-\boldsymbol{\phi}(s')\}^\top\boldsymbol{\theta}
$$ 
with a user-chosen feature map $\boldsymbol{\phi}(s)\in\mathbb{R}^d$ 
and weights $\boldsymbol{\theta}\in\mathbb{R}^d$. Because adding a 
constant to $v^\mu(s)$ leaves all differences unchanged, it suffices to 
learn $v^\mu(s)$ up to an additive constant. Let 
$\boldsymbol{\Phi}\in\mathbb{R}^{|\mathcal S|\times d}$ be the feature 
matrix whose $i^{\text{th}}$ row is $\boldsymbol{\phi}(i)^\top$ and
$\boldsymbol{M} := \mathrm{diag}(\boldsymbol{\pi}^\mu).$ Writing 
$
\boldsymbol{v}^\mu := [v^\mu(1),\ldots,v^\mu(|\mathcal{S}|)]^\top,
$ 
one has the series representation
$
\boldsymbol{v}^\mu = \sum_{t=0}^{\infty}(\boldsymbol{P}^\mu)^t\left(\boldsymbol{r}^\mu- \omega^\mu \boldsymbol{e}\right),
$
where $\boldsymbol{e}$ is the all-ones vector. With the weighted norm $\|\boldsymbol{x}\|_{\boldsymbol M}:=(\boldsymbol{x}^\top \boldsymbol M \boldsymbol{x})^{1/2}$, our second goal translates to finding $\boldsymbol{\theta}^*$ such that the weighted discrepancy
$$
\inf_{c\in\mathbb{R}} \bigl\|\boldsymbol{\Phi}\boldsymbol{\theta}^* - (\boldsymbol{v}^\mu + c\boldsymbol{e})\bigr\|_{\boldsymbol M}
$$
is small. This quantity is zero when the feature space contains a constant shift of the basic differential value function (i.e., the differential value function). When the weighted discrepancy is small, it indicates that $\boldsymbol{\phi}(s)^\top\boldsymbol{\theta}^*$ approximates $v^\mu(s)$ up to an additive constant, so the estimation of the contrasts $v^\mu(s)-v^\mu(s')$ is correspondingly accurate, improving as the discrepancy decreases.

\paragraph{Average-Reward TD($\lambda$) with Linear Approximation.}
The average-reward TD$(\lambda)$ algorithm 
\citep{tsitsiklis1999average} is a widely used stochastic-approximation 
method to achieve the aforementioned goals. At the $t^{\text{th}}$ 
iteration, the average‐reward TD($\lambda$) algorithm maintains both 
$\widehat{\omega}_t$, an estimate of the average-reward $\omega^\mu$, and an 
estimate $\widehat{\pmb{\theta}}_t$ of the optimal weight 
$\boldsymbol{\theta}^*$. With non-increasing positive step-sizes 
$\alpha_t,\beta_t$ and exponential weighting parameter 
$\lambda\in[0,1)$, the update rules are given by
\begin{equation}
\label{eqn:param_update_1}
\begin{aligned}
\widehat{\omega}_{t+1} &= \widehat{\omega}_t + \alpha_t\left(R^\mu_t - \widehat{\omega}_t\right),\\
\widehat{\pmb{\theta}}_{t+1} &= \widehat{\pmb{\theta}}_t + \beta_t \delta_t \boldsymbol{z}_t,
\end{aligned}
\end{equation}
where the eligibility trace $\boldsymbol{z}_t$ and TD error $\delta_t$ are
$$
\boldsymbol{z}_t=\sum_{i=0}^t\lambda^{t-i}\boldsymbol{\phi}\left(S^\mu_i\right),~~
\delta_t = R^\mu_t - \widehat{\omega}_t + {\widehat{\pmb{\theta}}_t}^\top\left\{\boldsymbol{\phi}\left(S^\mu_{t+1}\right)-\boldsymbol{\phi}\left(S^\mu_t\right)\right\},
$$
each respectively representing the geometrically weighted average of past feature vectors at visited states and the one‐step TD error, which measures how the reward (after subtracting the current average‐reward estimate $\widehat{\omega}_t$) plus the estimated value of the next state differs from the current value estimate. 
We restrict our attention to a single-time-scale average-reward TD($\lambda$) algorithm by assuming $\alpha_t=c_\alpha\beta_t$ with fixed $c_\alpha>0$ \citep{tsitsiklis1999average,zhang2021average}. Exploring distinct decay rates, an instance of the two-time-scale stochastic approximation framework \citep{borkar2008stochastic}, is interesting but outside our scope. 

\paragraph{Step-Size Sensitivity.}
Despite its foundational role in RL, standard average-reward TD($\lambda$) suffers from numerical 
sensitivity to step-size selection. To illustrate this issue, we present a simple numerical example. 
We consider a Markov reward process (MRP) with $|\mathcal{S}| = 100$ states and evaluate the 
performance of average-reward TD($\lambda$) learning with hyperparameter configuration 
$(c_\alpha, \lambda) = (1.0, 0.25)$ with a predetermined constant step-size 
$\beta_t = \beta_0 \in (0, 2), ~\forall t \in \mathbb{N}$. The objective is to estimate both 
the optimal weight $\boldsymbol{\theta}^*$ and the average-reward $\omega^\mu$. Detailed descriptions of 
the evaluation criterion (loss function) and experimental setting are provided in 
Sections~\ref{sec:theoretical_analysis} and \ref{SEC:NUMERICS}, respectively. 

Figure~\ref{fig:demonstration} illustrates the instability induced by step-size choices. The left panel shows a non-monotonic trend in 
performance: overly small step-sizes (e.g., $\beta_0 < 0.20$) lead to slow convergence, while a modest increase in step-size causes 
the loss function values to grow rapidly. The right panel presents the result for a moderately large step-size ($\beta_0 = 1.8$), 
where the average-reward TD($\lambda$) iterates exhibit oscillatory behavior. These empirical findings highlight the sensitivity of 
the TD learning to step-size selection and motivate the need for algorithms that are robust to such choices. In the following 
sections, we propose and analyze one such approach.

\begin{figure}[htp]
\centering
\caption{\small Sensitivity of average-reward TD($\lambda$) to the choice of step-size with exponential weighting parameter $\lambda = 0.25$ and step-size ratio $c_\alpha = 1.0$. The solid line denotes the mean, and the shaded region indicates the 95\% confidence interval. }
\label{fig:demonstration}
\begin{subfigure}[t]{0.4\linewidth}
\includegraphics[width=\linewidth]{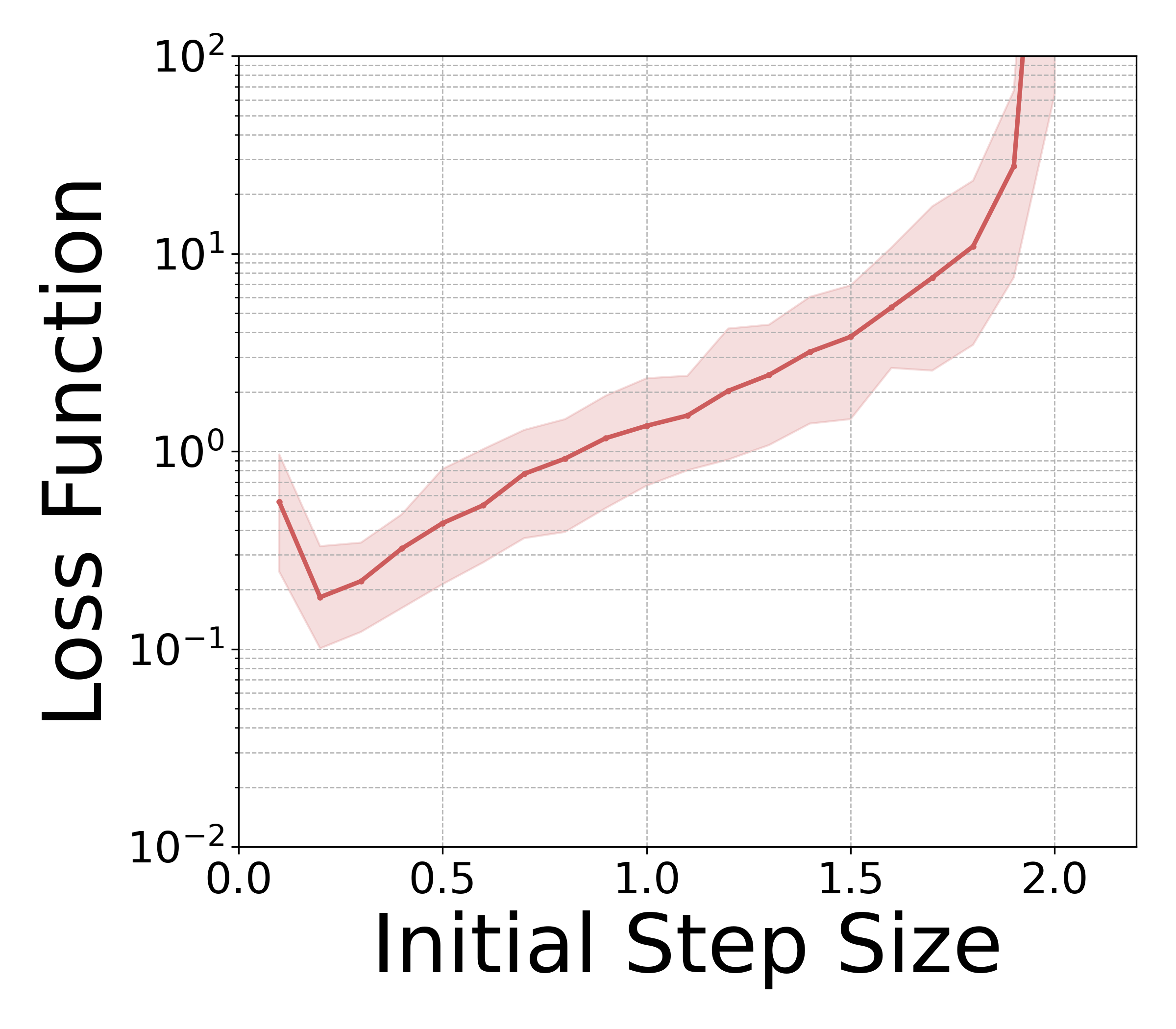}
\caption{\small Performance of average-reward TD($\lambda$) for step-sizes $\beta_0$ ranging from $0.1$ to $2.0$.}
\end{subfigure}
\begin{subfigure}[t]{0.4\linewidth}
\includegraphics[width=\linewidth]{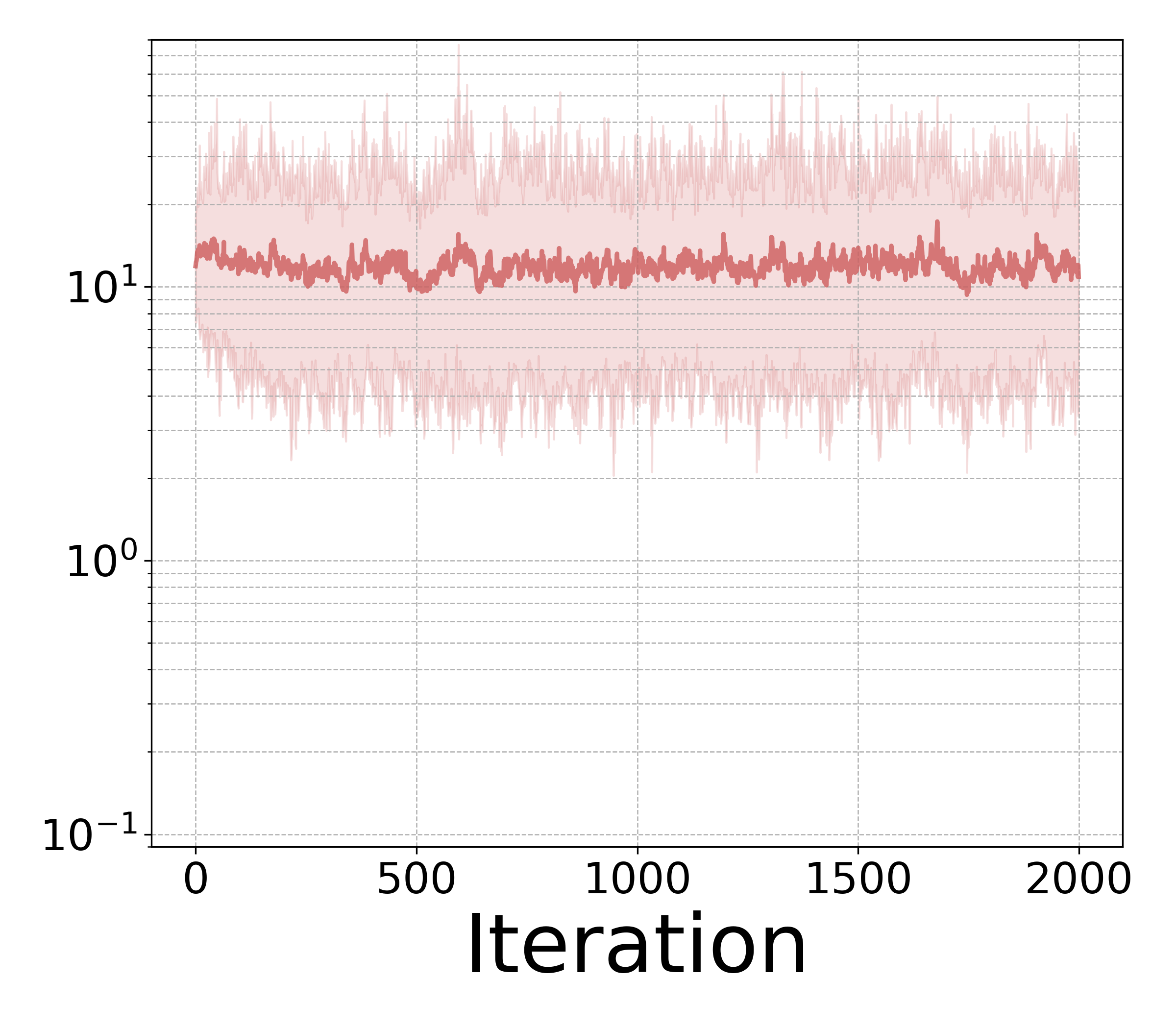}
\caption{\small Performance over iterations with step-size $\beta_0 = 1.8$, showing no improvement.}
\end{subfigure}
\end{figure}

\section{Average-Reward Implicit TD($\lambda$)}
\label{sec:implicit_TD_without_projection}

\noindent As we have seen in the previous section, average‐reward TD($\lambda$) demands carefully tuned step-sizes for stability. Implicit stochastic recursions, developed for stochastic gradient descent \citep{Toulis2014-mq, Toulis2014-bb, Toulis2021-ma, chee2023plus} and more recently extended to discounted‐reward on- and off‐policy TD \citep{kim2025stabilizing}, rewrite each update as a fixed‐point equation by allowing the gradient or TD error to depend on the new iterate. This reformulation automatically induces an adaptive shrinkage in the effective step-size, vastly improving numerical stability without increasing computational complexity. Building on this idea, we introduce the average‐reward implicit TD($\lambda$), which retains the simplicity and efficiency of standard average‐reward TD($\lambda$) while supporting more flexible step-size choices with finite-time error guarantees. In this section, we propose a novel average‐reward TD($\lambda$) algorithm that incorporates implicit updates into its recursive structure.

To derive implicit average-reward TD($\lambda$) updates, recall the update rule for $\widehat{\pmb{\theta}}_{t+1}$ given in \eqref{eqn:param_update_1}:  
\begin{equation*}
\begin{split}
\widehat{\pmb{\theta}}_{t+1} &= \widehat{\pmb{\theta}}_t+ \beta_t \left(R^\mu_t - \widehat{\omega}_t +  \boldsymbol{\phi}_{t+1}^\top \widehat{\pmb{\theta}}_t- \boldsymbol{\phi}_t^\top \widehat{\pmb{\theta}}_t\right) \boldsymbol{z}_t \\ &= \widehat{\pmb{\theta}}_t+ \beta_t \left\lbrace R^\mu_t - \widehat{\omega}_t +  \boldsymbol{\phi}_{t+1}^\top \widehat{\pmb{\theta}}_t- (\boldsymbol{z}_t - \lambda \boldsymbol{z}_{t-1})^\top \boldsymbol{\theta}_{t} \right\rbrace \boldsymbol{z}_t \\
&= \widehat{\pmb{\theta}}_t+ \beta_t \left(R^\mu_t - \widehat{\omega}_t +  \boldsymbol{\phi}_{t+1}^\top \widehat{\pmb{\theta}}_t + \lambda \boldsymbol{z}_{t-1}^\top \widehat{\pmb{\theta}}_t-  \boldsymbol{z}_t^\top \widehat{\pmb{\theta}}_t\right) \boldsymbol{z}_t,
\end{split}
\end{equation*}
where we have used the identity $\boldsymbol{\phi}_t = \boldsymbol{z}_t - \lambda \boldsymbol{z}_{t-1}$ in the second equality. At time $t$, we update $\widehat{\omega}_t$ and $\widehat{\pmb{\theta}}_t$ using a rule that depends on both the current iterate $(\widehat{\omega}_t, \widehat{\pmb{\theta}}_t)$ and their updated values $(\widehat{\omega}_{t+1}, \widehat{\pmb{\theta}}_{t+1})$:
\begin{align}
\widehat{\omega}_{t+1}
&= \widehat{\omega}_t + c_\alpha \beta_t (R^\mu_t - \textcolor{red}{\widehat{\omega}_{t+1}}),
\label{eqn:param_update_im2}\\
\widehat{\pmb{\theta}}_{t+1} 
&= \widehat{\pmb{\theta}}_t+ \beta_t (R^\mu_t - \widehat{\omega}_t +  \boldsymbol{\phi}_{t+1}^\top \widehat{\pmb{\theta}}_t + \lambda \boldsymbol{z}_{t-1}^\top \widehat{\pmb{\theta}}_t-  \boldsymbol{z}_t^\top \textcolor{red}{\widehat{\pmb{\theta}}_{t+1}}) \boldsymbol{z}_t.
\label{eqn:param_update_im1}
\end{align}
Solving the above recursions \eqref{eqn:param_update_im2} and \eqref{eqn:param_update_im1} admits the update rule for the average-reward implicit TD($\lambda$) algorithm. Lemma \ref{LEMMA:im_update_lemma} below characterizes the average-reward implicit TD($\lambda$) algorithm, and its proof is given in the supplementary materials. 

\begin{lemma} 
\label{LEMMA:im_update_lemma}
    Average-reward implicit TD($\lambda$) updates given in \eqref{eqn:param_update_im2} and \eqref{eqn:param_update_im1} can be expressed as 
    \begin{align*}
        \widehat{\omega}_{t+1}&=\widehat{\omega}_t + \frac{c_\alpha \beta_t}{1+c_\alpha \beta_t}( R^\mu_t -\widehat{\omega}_t) \\
        \widehat{\pmb{\theta}}_{t+1}&= \widehat{\pmb{\theta}}_t + \frac{\beta_t}{1 + \beta_t \|\boldsymbol{z}_t\|_2^2} \left( R^\mu_t - \widehat{\omega}_t + \boldsymbol{\phi}_{t+1}^\top \widehat{\pmb{\theta}}_t - \boldsymbol{\phi}_t^\top \widehat{\pmb{\theta}}_t \right) \boldsymbol{z}_t.
    \end{align*}
\end{lemma}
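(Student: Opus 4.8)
The plan is to solve each implicit fixed-point equation in closed form by elementary algebra, handling the two recursions separately. Neither requires anything beyond linear-equation manipulation; the interesting feature is that the vector update can be reduced to a scalar equation by exploiting the rank-one structure of the increment.

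For the average-reward update \eqref{eqn:param_update_im2}, the equation is scalar and linear in the unknown $\widehat{\omega}_{t+1}$. I would move the $\widehat{\omega}_{t+1}$ term from the right-hand side to the left, collect coefficients to obtain $(1 + c_\alpha \beta_t)\widehat{\omega}_{t+1} = \widehat{\omega}_t + c_\alpha \beta_t R^\mu_t$, and divide through by $1 + c_\alpha \beta_t$, which is strictly positive since $c_\alpha,\beta_t > 0$. A one-line rearrangement then recasts the resulting ratio into the stated incremental form $\widehat{\omega}_t + \tfrac{c_\alpha \beta_t}{1+c_\alpha\beta_t}(R^\mu_t - \widehat{\omega}_t)$.

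The vector update \eqref{eqn:param_update_im1} is the substantive case, because the unknown $\widehat{\pmb{\theta}}_{t+1}$ appears both on the left-hand side and through the scalar $\boldsymbol{z}_t^\top \widehat{\pmb{\theta}}_{t+1}$ inside the TD error. The key observation is that the update moves $\widehat{\pmb{\theta}}_{t+1}$ only along the direction $\boldsymbol{z}_t$, so it suffices to determine the single scalar $\boldsymbol{z}_t^\top \widehat{\pmb{\theta}}_{t+1}$. I would left-multiply the update by $\boldsymbol{z}_t^\top$ to produce a scalar linear equation in $\boldsymbol{z}_t^\top \widehat{\pmb{\theta}}_{t+1}$, isolate that quantity exactly as in the $\omega$ case—this is what generates the factor $1 + \beta_t \|\boldsymbol{z}_t\|_2^2$—and then substitute the solved value back into the TD error to obtain the effective increment in closed form. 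Reinserting this increment into the original update yields the claimed expression.

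The only bookkeeping that requires care is reconciling the TD error as written in \eqref{eqn:param_update_im1}, which contains $\boldsymbol{\phi}_{t+1}^\top \widehat{\pmb{\theta}}_t + \lambda \boldsymbol{z}_{t-1}^\top \widehat{\pmb{\theta}}_t - \boldsymbol{z}_t^\top \widehat{\pmb{\theta}}_{t+1}$, with the cleaner $\boldsymbol{\phi}_{t+1}^\top \widehat{\pmb{\theta}}_t - \boldsymbol{\phi}_t^\top \widehat{\pmb{\theta}}_t$ appearing in the conclusion. Here I would invoke the eligibility-trace identity $\boldsymbol{\phi}_t = \boldsymbol{z}_t - \lambda \boldsymbol{z}_{t-1}$ already used in deriving \eqref{eqn:param_update_im1}, so that $\lambda \boldsymbol{z}_{t-1}^\top \widehat{\pmb{\theta}}_t = \boldsymbol{z}_t^\top \widehat{\pmb{\theta}}_t - \boldsymbol{\phi}_t^\top \widehat{\pmb{\theta}}_t$; after this substitution the $\boldsymbol{z}_t^\top \widehat{\pmb{\theta}}_t$ terms cancel against those produced by solving the scalar equation, leaving exactly $R^\mu_t - \widehat{\omega}_t + \boldsymbol{\phi}_{t+1}^\top \widehat{\pmb{\theta}}_t - \boldsymbol{\phi}_t^\top \widehat{\pmb{\theta}}_t$ in the numerator. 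There is no genuine analytic obstacle; the only mild hazard is tracking which terms involve $\widehat{\pmb{\theta}}_t$ versus $\widehat{\pmb{\theta}}_{t+1}$ and applying the trace identity exactly once.
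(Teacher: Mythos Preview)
Your proposal is correct. The $\omega$-update argument matches the paper exactly. For the $\boldsymbol{\theta}$-update you take a genuinely different route from the paper's: the paper rewrites the recursion as $(\boldsymbol{I}+\beta_t\boldsymbol{z}_t\boldsymbol{z}_t^\top)\widehat{\pmb{\theta}}_{t+1}=\widehat{\pmb{\theta}}_t+\beta_t(\cdots)\boldsymbol{z}_t$, applies the Sherman--Morrison/Woodbury identity $(\boldsymbol{I}+\beta_t\boldsymbol{z}_t\boldsymbol{z}_t^\top)^{-1}=\boldsymbol{I}-\tfrac{\beta_t}{1+\beta_t\|\boldsymbol{z}_t\|^2}\boldsymbol{z}_t\boldsymbol{z}_t^\top$, and then expands and regroups a long chain of terms. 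You instead reduce to a scalar equation by projecting onto $\boldsymbol{z}_t$, which short-circuits the matrix inversion and the subsequent bookkeeping: once you solve for $\boldsymbol{z}_t^\top\widehat{\pmb{\theta}}_{t+1}$ and form $a-\boldsymbol{z}_t^\top\widehat{\pmb{\theta}}_{t+1}$, the factor $1+\beta_t\|\boldsymbol{z}_t\|^2$ and the numerator $a-\boldsymbol{z}_t^\top\widehat{\pmb{\theta}}_t$ drop out in two lines, and the trace identity $\boldsymbol{\phi}_t=\boldsymbol{z}_t-\lambda\boldsymbol{z}_{t-1}$ finishes the simplification. The two arguments are of course equivalent at the linear-algebra level---Sherman--Morrison is the matrix statement of your scalar reduction---but your version is shorter and avoids the term-by-term expansion the paper performs.
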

The update rule in Lemma~\ref{LEMMA:im_update_lemma} highlights a key mechanism underlying the robustness of the average-reward implicit TD($\lambda$) learning. Unlike the standard average-reward TD($\lambda$) methods, at each step $t \in \mathbb{N}$, the implicit updates dynamically rescale the step-size based on the magnitude of the eligibility trace as well as the step-size ratio parameter. Such shrinkage arises naturally from the implicit update mechanism, reducing the burden of laborious tuning. Importantly, the implicit algorithm has the same space and time complexity as the standard method in the average-reward setting, making it a practical replacement without additional computational burden or implementation difficulty. The benefits of this mechanism will be further clarified in the forthcoming theoretical analysis and subsequently illustrated
through a suite of numerical examples. 

To further enhance numerical stability and facilitate  theoretical analysis of the average‑reward implicit TD$(\lambda)$ algorithm, we introduce a projection step that forces each iterate $\widehat{\boldsymbol{\Theta}}_t:= [\widehat{\omega}_t,\widehat{\pmb{\theta}}_t]^\top$ to lie within the Euclidean ball of radius $R_{\boldsymbol{\Theta}}$ by enforcing the constraints
$\|\boldsymbol{\Theta}\|_2 \le R_{\boldsymbol{\Theta}}$. Such projection‐based stabilization is well‐studied in both the stochastic optimization and reinforcement learning literatures, and numerous theoretical results have been established \citep{nemirovski2009robust, bubeck2015convex, bhandari2018finite, xu2019two, zou2019finite, zhang2023convergence}. In practice, one can choose $R_{\boldsymbol{\Theta}}$ sufficiently large to ensure the limit point of $\widehat{\boldsymbol{\Theta}}_t$ is contained in the Euclidean ball of radius $R_{\boldsymbol{\Theta}}$. A complete algorithmic description of the average-reward implicit TD($\lambda$) is provided in Algorithm \ref{algorithm:with_projection}.

\begin{algorithm}[h]
\caption{Average-reward implicit TD($\lambda$) (with projection)}
\label{algorithm:with_projection}
\begin{algorithmic}[1]
\STATE \textbf{Input:} exponential weighting parameter \( \lambda \in [0, 1) \), basis functions \( \{\boldsymbol{\phi}_k\}_{k=1}^d \), step-size \( \{\beta_t\}_{t \in \mathbb{N}} \), step-size ratio parameter \( c_\alpha \), projection radius $R_{\boldsymbol{\Theta}}$ 
\STATE Initialize: \( \widehat{\omega}_0 \), \( \widehat{\pmb{\theta}}_0\), $S^\mu_0$ and eligibility trace \( \boldsymbol{z}_{-1} = 0 \). 
\FOR{ \( t = 0, 1, 2, \dots \) }
    \STATE Receive data: \(  (S^\mu_t, R^\mu_t, S^\mu_{t+1}) \)
    \STATE Get TD error: \vspace{-2mm}
    \begin{align*}
    \delta_t = R^\mu_t - \widehat{\omega}_t + \boldsymbol{\phi}(S^\mu_{t+1})^\top \widehat{\pmb{\theta}}_t - \boldsymbol{\phi}(S^\mu_t)^\top \widehat{\pmb{\theta}}_t
    \end{align*}\vspace{-6mm}
    \STATE Update eligibility trace: \( \boldsymbol{z}_t = \lambda \boldsymbol{z}_{t-1} + \boldsymbol{\phi}(S^\mu_t) \)
    \STATE Update parameters: \vspace{-2mm}
    \begin{align*}
    \widehat{\omega}_{t+1} &= \widehat{\omega}_t + \frac{c_\alpha \beta_t}{1 + c_\alpha \beta_t} \left(R^\mu_t - \widehat{\omega}_t\right), \\
    \widehat{\pmb{\theta}}_{t+1}& = \widehat{\pmb{\theta}}_t + \frac{\beta_t}{1 + \beta_t \|\boldsymbol{z}_t\|_2^2} \delta_t \boldsymbol{z}_t 
    \end{align*}\vspace{-2mm}
    \STATE For projected average-reward implicit TD($\lambda$):
    if $(\widehat{\omega}_{t+1})^2 + \|\widehat{\pmb{\theta}}_{t+1}\|^2 \ge R_{\boldsymbol{\Theta}}^2$, 
    \\ \vspace{-3mm}
    \begin{align*}
        \widehat{\omega}_{t+1} &= \frac{R_{\boldsymbol{\Theta}}}{\sqrt{(\widehat{\omega}_{t+1})^2 + \|\widehat{\pmb{\theta}}_{t+1}\|_2^2}}\widehat{\omega}_{t+1}, \\ \widehat{\pmb{\theta}}_{t+1}& = \frac{R_{\boldsymbol{\Theta}}}{\sqrt{(\widehat{\omega}_{t+1})^2 + \|\widehat{\pmb{\theta}}_{t+1}\|_2^2}}\widehat{\pmb{\theta}}_{t+1}
    \end{align*}
\ENDFOR
\end{algorithmic}
\end{algorithm}

\section{Theoretical Analysis}
\label{sec:theoretical_analysis}
In this section, we provide a theoretical analysis of the average-reward implicit TD($\lambda$) algorithm incorporating a projection 
step. We first assume that the columns of \( \boldsymbol{\Phi} \) are linearly independent, which implies that the basis functions 
span a \( d \)-dimensional feature space. Such an assumption ensures that redundant basis functions can be removed without loss of 
expressiveness. Hereafter, we use $\|\cdot\|$ to denote the Euclidean norm for vectors and operator norm for matrices. We assume that 
the feature vectors are normalized so that \( \|\boldsymbol{\phi}(i)\| \le 1 \) for all \( i \in \mathcal{S} \). Lastly, 
$\mathbb{E}^\mu$ denotes expectation with respect to the Markov chain $\{S^\mu_t\}_{t\in\mathbb{N}}$ under policy $\mu$ with a fixed $S^\mu_0$, and 
$\mathbb{E}^{\boldsymbol{\pi}^\mu}$ denotes expectation with respect to the stationary distribution of this chain.

To facilitate our analysis, we formulate the average-reward TD($\lambda$) update into a matrix notation form, given by
\begin{align*}
\begin{bmatrix}
\widehat{\omega}_{t+1} \\
\widehat{\pmb{\theta}}_{t+1}
\end{bmatrix}
&=
\begin{bmatrix}
\widehat{\omega}_t\\
\widehat{\pmb{\theta}}_t
\end{bmatrix}
+
\beta_t
\begin{bmatrix}
- c_{\alpha} & 0 \\
- {\boldsymbol{z}_t}  & {\boldsymbol{z}_t}(\boldsymbol{\phi}_{t+1}^{\top}-\boldsymbol{\phi}_{t}^{\top}) 
\end{bmatrix}
\begin{bmatrix}
\widehat{\omega}_t \\
\widehat{\pmb{\theta}}_t
\end{bmatrix}
+
\begin{bmatrix}
c_{\alpha} R^\mu_t \\
{R^\mu_t \boldsymbol{z}_t}
\end{bmatrix},
\end{align*}
which can be succinctly written as
\begin{equation}
\label{eqn:update_rule_mat}
    \widehat{\boldsymbol{\Theta}}_{t+1}  = \widehat{\boldsymbol{\Theta}}_t + \beta_t \{\boldsymbol{A}(\boldsymbol{X}_t) \widehat{\boldsymbol{\Theta}}_t + \boldsymbol{b}(\boldsymbol{X}_t)\}
\end{equation}
and its implicit version is given by
\begin{equation}
\label{eqn:update_rule_imp_mat}
    \widehat{\boldsymbol{\Theta}}_{t+1}  = \widehat{\boldsymbol{\Theta}}_t + \boldsymbol{D}_t \{\boldsymbol{A}(\boldsymbol{X}_t) \widehat{\boldsymbol{\Theta}}_t + \boldsymbol{b}(\boldsymbol{X}_t)\}
\end{equation}
where 
\begin{align*}
&\widehat{\boldsymbol{\Theta}}_t := \begin{bmatrix}
\widehat{\omega}_t \\
\widehat{\pmb{\theta}}_t
\end{bmatrix}, 
\quad \boldsymbol{A}(\boldsymbol{X}_t) := \begin{bmatrix}
- c_{\alpha} & 0 \\
- {\boldsymbol{z}_t}  & {\boldsymbol{z}_t}(\boldsymbol{\phi}_{t+1}^{\top}-\boldsymbol{\phi}_{t}^{\top})
\end{bmatrix},\\
&\boldsymbol{b}(\boldsymbol{X}_t) := \begin{bmatrix}
c_{\alpha} R^\mu_t \\
{R^\mu_t \boldsymbol{z}_t}
\end{bmatrix}, \quad 
\boldsymbol{D}_t := \begin{bmatrix}
        \frac{1}{1+c_{\alpha}\beta_t} & 0 \\
        0 & \frac{1}{1+\beta_t ||\boldsymbol{z}_t||^2} \boldsymbol{I}_{d}
    \end{bmatrix}
\end{align*}
with $\boldsymbol{X}_t := (S^\mu_t ,S^\mu_{t+1}, \boldsymbol{z}_t)$. 

Under suitable technical conditions, if $\boldsymbol{A}:=\mathbb{E}^{\boldsymbol{\pi}^\mu}[\boldsymbol{A}(\boldsymbol{X}_t)]$ is negative definite, results from stochastic approximation \citep{benveniste2012adaptive} imply that the iterate $\widehat{\boldsymbol{\Theta}}_t$ converges almost surely to $\boldsymbol{\Theta}^*=[ \omega^\mu, \boldsymbol{\theta}^* ]^\top$, which solves $\boldsymbol{A}\boldsymbol{\Theta}^*+\boldsymbol{b}=0$ with $\boldsymbol{b}:=\mathbb{E}^{\boldsymbol{\pi}^\mu}[\boldsymbol{b}(\boldsymbol{X}_t)]$. Earlier work \citep{tsitsiklis1999average} established almost sure convergence $\widehat{\boldsymbol{\Theta}}_t$ to $\boldsymbol{\Theta}^*$ by assuming $\boldsymbol{A}$ is negative definite (up to left multiplication by a diagonal matrix). However, such an assumption excludes cases where the feature matrix $\boldsymbol{\Phi}$ can yield value predictions that are constant across all states, i.e., when $\boldsymbol{e}$ lies in the column space of $\boldsymbol{\Phi}$.

To relax the aforementioned assumption, \citep{zhang2021finite} considered an auxiliary iterate $[\widehat{\omega}_t, \Pi_{\mathbb O}\widehat{\pmb{\theta}}_t]^\top$, where $\Pi_{\mathbb O}$ denotes projection onto $\mathbb O$, defined as the orthogonal complement of 
$
\mathbb{S}_{\boldsymbol{\Phi},\boldsymbol e}:=\operatorname{span}\{\boldsymbol{\theta}: \boldsymbol{\Phi}\boldsymbol{\theta}=\boldsymbol e\}.
$
Projecting onto $\mathbb O$ thus removes the constant-shift direction, i.e., the component of $\boldsymbol{\Phi}\boldsymbol{\theta}$ aligned with the all-ones vector direction. Such component adds the same constant to every state's value prediction and does not affect estimates of value contrasts $v^\mu(s)-v^\mu(s')$. Accordingly, it is natural to assess performance using the projected iterate $\Pi_{\mathbb O}\widehat{\pmb{\theta}}_t$ since any change in $\widehat{\pmb{\theta}}_t$ along $\operatorname{span}\{\boldsymbol e\}$ is not identifiable in the average-reward setting and can be ignored. Furthermore, on $\mathbb R\times\mathbb O$, one can restore strict negative definiteness of the matrix $\boldsymbol A$, formalized as Lemma \ref{LEMMA:NEG_DEF} below.

\begin{lemma}[Lemma 2 of \citep{zhang2021finite}]\label{LEMMA:NEG_DEF}
For $\lambda \in (0,1)$, let $\boldsymbol{M} = \text{diag}\left(\pi^\mu_1, \cdots, \pi^\mu_{|\mathcal{S}|}\right)$ and $\boldsymbol{P}^{(\lambda)} = (1 - \lambda) \sum_{m=0}^{\infty} \lambda^m (\boldsymbol{P}^\mu)^{m+1}$. Under Assumption \ref{assumption:Markov}, we have
$$
\Delta := \hspace{-3mm}\min_{\|\boldsymbol{\theta}\| = 1, \boldsymbol{\theta} \in \mathbb{O}} \boldsymbol{\theta}^\top \boldsymbol{\boldsymbol{\Phi}}^\top \boldsymbol{M} \left( \boldsymbol{I} -  \boldsymbol{P}^{(\lambda)}  \right) \boldsymbol{\boldsymbol{\Phi}} \boldsymbol{\theta} >0.
$$
In addition, for $c_\alpha \ge \Delta + \sqrt{\frac{1}{\Delta^2 (1-\lambda)^4} - \frac{1}{(1-\lambda)^2}}$, 
$$
\boldsymbol{\Theta}^\top \boldsymbol{A} \boldsymbol{\Theta} \le -\frac{\Delta}{2} \|\boldsymbol{\Theta}\|^2, ~~\text{for any} ~~\boldsymbol{\Theta} \in \mathbb{R} \times \mathbb{O}.
$$
\end{lemma}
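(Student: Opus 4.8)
\noindent The plan is to prove the two assertions separately, but both rest on first computing the stationary-mean matrix $\boldsymbol{A} = \mathbb{E}^{\boldsymbol{\pi}^\mu}[\boldsymbol{A}(\boldsymbol{X}_t)]$ in closed form. Writing it in the block form $\boldsymbol{A} = \left[\begin{smallmatrix} -c_\alpha & \boldsymbol{0}^\top \\ -\boldsymbol{b}_z & \boldsymbol{A}_{22}\end{smallmatrix}\right]$, the only nontrivial blocks are $\boldsymbol{b}_z = \mathbb{E}^{\boldsymbol{\pi}^\mu}[\boldsymbol{z}_t]$ and $\boldsymbol{A}_{22} = \mathbb{E}^{\boldsymbol{\pi}^\mu}[\boldsymbol{z}_t(\boldsymbol{\phi}_{t+1}-\boldsymbol{\phi}_t)^\top]$. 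Using $\boldsymbol{z}_t = \sum_{k\ge 0}\lambda^k\boldsymbol{\phi}_{t-k}$ and the stationary covariance identity $\mathbb{E}^{\boldsymbol{\pi}^\mu}[\boldsymbol{\phi}(S^\mu_t)\boldsymbol{\phi}(S^\mu_{t+m})^\top]=\boldsymbol{\Phi}^\top\boldsymbol{M}(\boldsymbol{P}^\mu)^m\boldsymbol{\Phi}$, I would sum the resulting geometric series to obtain $\boldsymbol{b}_z = \tfrac{1}{1-\lambda}\boldsymbol{\Phi}^\top\boldsymbol{\pi}^\mu$ and
\[
\boldsymbol{A}_{22} = \boldsymbol{\Phi}^\top\boldsymbol{M}(\boldsymbol{P}^\mu - \boldsymbol{I})(\boldsymbol{I}-\lambda\boldsymbol{P}^\mu)^{-1}\boldsymbol{\Phi},
\]
and then apply the resolvent identity $\boldsymbol{I}-\boldsymbol{P}^{(\lambda)} = (\boldsymbol{I}-\boldsymbol{P}^\mu)(\boldsymbol{I}-\lambda\boldsymbol{P}^\mu)^{-1}$ to conclude $\boldsymbol{A}_{22} = -\boldsymbol{\Phi}^\top\boldsymbol{M}(\boldsymbol{I}-\boldsymbol{P}^{(\lambda)})\boldsymbol{\Phi}$. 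This identifies the quadratic form in the definition of $\Delta$ as precisely $-\boldsymbol{\theta}^\top\boldsymbol{A}_{22}\boldsymbol{\theta}$, tying the two parts of the lemma together.

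\noindent For the first claim, I would record that $\boldsymbol{P}^{(\lambda)}$ is a stochastic matrix (a convex combination of the stochastic matrices $(\boldsymbol{P}^\mu)^{m+1}$) that inherits irreducibility from $\boldsymbol{P}^\mu$ and retains $\boldsymbol{\pi}^\mu$ as its stationary distribution. The workhorse is the Dirichlet-form identity
\[
\boldsymbol{x}^\top\boldsymbol{M}(\boldsymbol{I}-\boldsymbol{P}^{(\lambda)})\boldsymbol{x} = \tfrac12\sum_{i,j}\pi^\mu_i P^{(\lambda)}_{ij}(x_i-x_j)^2 \ge 0,
\]
which follows from $\boldsymbol{\pi}^\mu$-stationarity and is zero if and only if $\boldsymbol{x}$ is constant (by irreducibility and $\pi^\mu_i>0$). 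It remains to verify that for $\boldsymbol{\theta}\in\mathbb{O}$, $\boldsymbol{\theta}\ne\boldsymbol{0}$, the vector $\boldsymbol{x}=\boldsymbol{\Phi}\boldsymbol{\theta}$ is non-constant, i.e.\ $\boldsymbol{\Phi}\boldsymbol{\theta}\notin\operatorname{span}(\boldsymbol{e})$: if $\boldsymbol{\Phi}\boldsymbol{\theta}=c\boldsymbol{e}$, then either $c=0$, forcing $\boldsymbol{\theta}=\boldsymbol{0}$ by linear independence of the columns of $\boldsymbol{\Phi}$, or $c\ne0$, forcing $\boldsymbol{\theta}\in\mathbb{S}_{\boldsymbol{\Phi},\boldsymbol{e}}$, which with $\boldsymbol{\theta}\in\mathbb{O}=\mathbb{S}_{\boldsymbol{\Phi},\boldsymbol{e}}^\perp$ again gives $\boldsymbol{\theta}=\boldsymbol{0}$; both contradict $\boldsymbol{\theta}\ne\boldsymbol{0}$. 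Positivity of a continuous function over the compact set $\{\boldsymbol{\theta}\in\mathbb{O}:\|\boldsymbol{\theta}\|=1\}$ then shows the minimum $\Delta$ is attained and strictly positive.

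\noindent For the second claim, I would expand, for $\boldsymbol{\Theta}=[\omega,\boldsymbol{\theta}^\top]^\top\in\mathbb{R}\times\mathbb{O}$,
\[
\boldsymbol{\Theta}^\top\boldsymbol{A}\boldsymbol{\Theta} = -c_\alpha\omega^2 - \omega\,\boldsymbol{\theta}^\top\boldsymbol{b}_z + \boldsymbol{\theta}^\top\boldsymbol{A}_{22}\boldsymbol{\theta}.
\]
The first claim gives $\boldsymbol{\theta}^\top\boldsymbol{A}_{22}\boldsymbol{\theta}\le -\Delta\|\boldsymbol{\theta}\|^2$, while the cross term is controlled by Cauchy--Schwarz together with $\|\boldsymbol{b}_z\| = \tfrac{1}{1-\lambda}\|\boldsymbol{\Phi}^\top\boldsymbol{\pi}^\mu\|\le\tfrac{1}{1-\lambda}$, using the normalization $\|\boldsymbol{\phi}(i)\|\le1$. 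This reduces the target $\boldsymbol{\Theta}^\top\boldsymbol{A}\boldsymbol{\Theta}\le-\tfrac{\Delta}{2}\|\boldsymbol{\Theta}\|^2$ to requiring, for all $(\omega,\boldsymbol{\theta})$, that $\left(c_\alpha-\tfrac{\Delta}{2}\right)\omega^2 - \tfrac{|\omega|\,\|\boldsymbol{\theta}\|}{1-\lambda} + \tfrac{\Delta}{2}\|\boldsymbol{\theta}\|^2\ge 0$, i.e.\ positive semidefiniteness of the associated $2\times2$ symmetric matrix in $(|\omega|,\|\boldsymbol{\theta}\|)$. The corresponding discriminant inequality is a lower bound on $c_\alpha$ that guarantees the margin $\tfrac{\Delta}{2}$; the threshold displayed in the statement is a sufficient such bound (well-defined precisely when $\Delta(1-\lambda)\le 1$), so imposing it yields the claimed inequality.

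\noindent I expect the main obstacle to be the first claim, specifically verifying that strict positive definiteness of the Dirichlet form survives the restriction to $\mathbb{O}$ even when $\boldsymbol{e}$ lies in the column space of $\boldsymbol{\Phi}$; this is exactly the degenerate, constant-prediction direction that the projection onto $\mathbb{O}$ is designed to excise, and formalizing that $\mathbb{O}$ removes precisely this non-identifiable direction and nothing else is the crux. The supporting computations — the eligibility-trace covariance sum and the resolvent identity that express $\boldsymbol{A}_{22}$ through $\boldsymbol{I}-\boldsymbol{P}^{(\lambda)}$ — are routine but must be executed carefully, since the whole argument hinges on matching the definition of $\Delta$ to $-\boldsymbol{\theta}^\top\boldsymbol{A}_{22}\boldsymbol{\theta}$; the final reduction to a scalar discriminant condition is elementary though constant-sensitive.
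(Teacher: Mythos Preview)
The paper does not supply its own proof of this lemma; it is quoted verbatim as Lemma~2 of \citep{zhang2021finite} and used as a black box. There is therefore no in-paper argument to compare against.

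Evaluated on its own merits, your sketch is sound. The identification $\boldsymbol{A}_{22}=\boldsymbol{\Phi}^\top\boldsymbol{M}(\boldsymbol{P}^\mu-\boldsymbol{I})(\boldsymbol{I}-\lambda\boldsymbol{P}^\mu)^{-1}\boldsymbol{\Phi}=-\boldsymbol{\Phi}^\top\boldsymbol{M}(\boldsymbol{I}-\boldsymbol{P}^{(\lambda)})\boldsymbol{\Phi}$ is correct, as is the Dirichlet identity $\boldsymbol{x}^\top\boldsymbol{M}(\boldsymbol{I}-\boldsymbol{P}^{(\lambda)})\boldsymbol{x}=\tfrac12\sum_{i,j}\pi^\mu_iP^{(\lambda)}_{ij}(x_i-x_j)^2$, which holds under mere stationarity (no reversibility needed). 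Your handling of the kernel direction --- showing that $\boldsymbol{\theta}\in\mathbb{O}\setminus\{\boldsymbol{0}\}$ forces $\boldsymbol{\Phi}\boldsymbol{\theta}\notin\operatorname{span}(\boldsymbol{e})$, with separate treatment of the cases $\boldsymbol{e}\in\operatorname{col}(\boldsymbol{\Phi})$ and $\boldsymbol{e}\notin\operatorname{col}(\boldsymbol{\Phi})$ --- is exactly the point of the projection onto $\mathbb{O}$ and is argued cleanly. For the second assertion, your $2\times2$ discriminant reduction yields the sharp threshold $c_\alpha\ge\tfrac{\Delta}{2}+\tfrac{1}{2\Delta(1-\lambda)^2}$; it is easy to check (e.g.\ set $u=\sqrt{1/(1-\lambda)^2-\Delta^2}$ and compare) that the stated bound $\Delta+\sqrt{1/(\Delta^2(1-\lambda)^4)-1/(1-\lambda)^2}$ dominates this, so your observation that the displayed condition is merely sufficient is accurate. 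One small point worth making explicit in a full write-up: the stated threshold is real only when $\Delta(1-\lambda)\le 1$, which you note but do not justify; this follows from $\|\boldsymbol{\phi}(i)\|\le 1$ via $\boldsymbol{x}^\top\boldsymbol{M}(\boldsymbol{I}-\boldsymbol{P}^{(\lambda)})\boldsymbol{x}\le \boldsymbol{x}^\top\boldsymbol{M}\boldsymbol{x}\le\|\boldsymbol{x}\|_\infty^2\le 1$ for $\boldsymbol{x}=\boldsymbol{\Phi}\boldsymbol{\theta}$ with $\|\boldsymbol{\theta}\|=1$, but more directly one can observe that if $\Delta(1-\lambda)>1$ the cross term is already dominated and the conclusion holds for any $c_\alpha\ge\Delta$.
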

With the negative definiteness of the matrix $\boldsymbol{A}$, the limit point $\boldsymbol{\Theta}^*$ is assured to be unique and one can then ask how far the auxiliary iterates are from the limit point. Specifically non-asymptotic bounds on $\left( \widehat{\omega}_t - \omega^\mu \right)^2 + \left\| \Pi_{\mathbb O}\left(\widehat{\pmb{\theta}}_t - \boldsymbol{\theta}^*\right) \right\|^2$ were established both for constant and decreasing step-size schedules \citep{zhang2021finite}. In addition to the finite-time error bounds, the approximation quality of $\boldsymbol{\theta}^*$ within the chosen feature class is captured by the $\boldsymbol{M}$-weighted discrepancy
\[
\inf_{c\in\mathbb{R}}\|\boldsymbol{\Phi}\boldsymbol{\theta}^*-(\boldsymbol{v}^\mu+c\boldsymbol e)\|_{\boldsymbol M}
\;\le\;\,
\tfrac{\inf_{\boldsymbol{\theta}\in\mathbb{R}^d,\,c\in\mathbb{R}}
\|\boldsymbol{\Phi}\boldsymbol{\theta}-(\boldsymbol{v}^\mu+c\boldsymbol e)\|_{\boldsymbol M}}{\sqrt{1-c_\lambda^{2}}}.
\]
with $c_\lambda\in[0,1)$ and $c_\lambda\to 0$ as $\lambda\to 1$ \citep{zhang2021finite}. Note that the right-hand term involves the best error achievable within the feature class. Thus $\boldsymbol{\theta}^*$ is optimal up to a multiplicative factor, which approaches $1$ as $\lambda\to1$. In particular, if the feature class is rich enough to represent any one differential value function, the best achievable error is zero; that is,
$
\inf_{c\in \mathbb{R}}\big\|\boldsymbol{\Phi}\boldsymbol{\theta}^*-(\boldsymbol v^\mu+c\boldsymbol e)\big\|_{\boldsymbol M}=0.
$

\paragraph{Non-asymptotic Analysis of Average-Reward Implicit TD($\lambda$).}
We are now ready to present finite-time error bounds for average-reward implicit TD($\lambda$) with the projection step, formally stated in Theorems \ref{THM:FIN_TIME_BOUND_CONST} and  \ref{THM:FIN_TIME_BOUND_DECR}. Results are provided for both constant and decreasing step-sizes. The bounds are expressed in terms of the negative-definiteness margin $\Delta$ from Lemma \ref{LEMMA:NEG_DEF}, the step-size parameters $c_\alpha$ (ratio: $\alpha_t/\beta_t$), $\beta_0$ (initial step-size), and $s$ (decay rate) and the mixing time of the underlying Markov process $\{S^\mu_t\}_{t \in \mathbb{N}}$ whose formal definition is given below.

\begin{definition}[Mixing Time]
Let $\{S_t\}_{t \in \mathbb{N}}\subset \mathcal{S}$ be a Markov chain with stationary distribution $\pi$. For $\epsilon \in (0,1)$, its mixing time is the smallest positive integer $\tau_\epsilon \in \mathbb{N}$ such that for all $t \geq \tau_\epsilon$, 
\begin{align*}
\sup_{s\in \mathcal{S}} d_{\mathrm{TV}}\left\lbrace\mathbb{P}(S_t = \cdot  \mid S_0 = s),\pi(\cdot) \right \rbrace
\le \epsilon,
\end{align*} 
where $d_{\mathrm{TV}}$ denotes the total variation distance.
\end{definition}

\begin{remark}
Under Assumption \ref{assumption:Markov}, the Markov chain $\{S^\mu_t\}_{t\in \mathbb{N}}$ is uniformly geometrically ergodic: there exist $m > 0$ and $\rho \in (0,1)$ such that 
$
\sup_{s\in \mathcal{S}} d_{\mathrm{TV}}\left\lbrace p^\mu(S^\mu_t  = \cdot \mid S^\mu_0 = s),\pi^\mu_{\cdot} \right \rbrace
\le m \rho^{t}.
$
Consequently, its mixing time $\tau_{\epsilon}$ is of order $\mathcal{O}\left(\log \frac{1}{\epsilon}\right)$.
\end{remark}

\begin{theorem}\label{THM:FIN_TIME_BOUND_CONST}
Suppose the Markov chain $\{S^\mu_t\}_{t \in \mathbb{N}}$ is uniformly geometrically ergodic with a rate parameter $\rho \in (0,1)$, the step-size ratio parameter is chosen to satisfy $c_\alpha \ge \Delta + \sqrt{\frac{1}{\Delta^2 (1-\lambda)^4} - \frac{1}{(1-\lambda)^2}}$ and the optimal parameter $\|\boldsymbol{\Theta}^*\| \le R_{\boldsymbol{\Theta}}$. With $\lambda \in [0,1)$ and constant step-size $\beta_t = \beta$, the iterates of the projected average-reward implicit TD($\lambda$) algorithm satisfy the following finite-time error bound
\begin{align*}
\mathbb{E}^\mu\left\{\left( \widehat{\omega}_{t+1} - \omega^\mu \right)^2 + \left\| \Pi_{\mathbb O}\left(\widehat{\pmb{\theta}}_{t+1} - \boldsymbol{\theta}^*\right) \right\|^2 \right\}&\lesssim (1-\beta\gamma\Delta)^{t+1}\left\{\left( \widehat{\omega}_0^{\mu} - \omega^\mu \right)^2 + \left\|\widehat{\pmb{\theta}}_0 - \boldsymbol{\theta}^* \right\|^2 \right\} \\ &\quad+ 
\mathcal{O}\left(\beta\tau_{\beta} + h^{\tau_{\beta}} + \beta \tau_{\beta} t h^t \right), \quad t \ge 0
\end{align*}
where
$
h = \max\{1-\beta\gamma\Delta, \rho, \lambda \} ~\text{and}~\gamma = \min\left\{\frac{1}{1+c_\alpha \beta}, \frac{(1-\lambda)^2}{(1-\lambda)^2 + \beta}\right\}.
$
\end{theorem}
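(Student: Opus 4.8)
The plan is to construct a Lyapunov recursion for $L_t := (\widehat{\omega}_t - \omega^\mu)^2 + \|\Pi_{\mathbb{O}}(\widehat{\pmb{\theta}}_t - \boldsymbol{\theta}^*)\|^2 = \|\boldsymbol{y}_t\|^2$, where $\boldsymbol{y}_t := [\,\widehat{\omega}_t - \omega^\mu,\ \Pi_{\mathbb{O}}(\widehat{\pmb{\theta}}_t - \boldsymbol{\theta}^*)\,]^\top$. First I would reduce to the identifiable subspace $\mathbb{R}\times\mathbb{O}$. Since every $\boldsymbol{\theta}$ with $\boldsymbol{\Phi}\boldsymbol{\theta}\in\operatorname{span}\{\boldsymbol{e}\}$ gives $(\boldsymbol{\phi}_{t+1}-\boldsymbol{\phi}_t)^\top\boldsymbol{\theta}=0$, the TD error $\delta_t$ depends on $\widehat{\pmb{\theta}}_t$ only through $\Pi_{\mathbb{O}}\widehat{\pmb{\theta}}_t$; hence $(\widehat{\omega}_t,\Pi_{\mathbb{O}}\widehat{\pmb{\theta}}_t)$ follows a self-contained recursion, $\boldsymbol{y}_t\in\mathbb{R}\times\mathbb{O}$ for all $t$ (the ball projection is a radial scaling and preserves this subspace), and Lemma~\ref{LEMMA:NEG_DEF} applies to $\boldsymbol{y}_t$ whenever $c_\alpha$ meets the stated threshold. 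Taking $\boldsymbol{\theta}^*\in\mathbb{O}$ (legitimate since $\boldsymbol{A}$ is negative definite on $\mathbb{R}\times\mathbb{O}$), the target satisfies $\|\boldsymbol{\Theta}^*\|\le R_{\boldsymbol{\Theta}}$. The projection step couples the identifiable and non-identifiable components and must be treated with care; as in \citep{zhang2021finite}, I would verify it does not increase $L_{t+1}$, using that it fixes the in-ball target $\boldsymbol{\Theta}^*$. Finally, from Lemma~\ref{LEMMA:im_update_lemma} and $\|\boldsymbol{z}_t\|^2\le(1-\lambda)^{-2}$, the implicit rescaling matrix $\boldsymbol{D}_t$ satisfies the two-sided sandwich $\gamma\beta\,\boldsymbol{I}\preceq\boldsymbol{D}_t\preceq\beta\,\boldsymbol{I}$ with $1-\gamma=\mathcal{O}(\beta)$.

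Next I would expand one step of \eqref{eqn:update_rule_imp_mat}. With $\boldsymbol{g}(\boldsymbol{X}_t):=\boldsymbol{A}(\boldsymbol{X}_t)\widehat{\boldsymbol{\Theta}}_t+\boldsymbol{b}(\boldsymbol{X}_t)$ and $\boldsymbol{A}\boldsymbol{\Theta}^*+\boldsymbol{b}=\boldsymbol{0}$, dropping the projection gives $L_{t+1}\le L_t+2\boldsymbol{y}_t^\top\boldsymbol{D}_t\boldsymbol{g}(\boldsymbol{X}_t)+\|\boldsymbol{D}_t\boldsymbol{g}(\boldsymbol{X}_t)\|^2$. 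The heart of the argument is the drift $2\boldsymbol{y}_t^\top\boldsymbol{D}_t\boldsymbol{A}\boldsymbol{y}_t$ that appears once $\boldsymbol{g}(\boldsymbol{X}_t)$ is replaced by its stationary mean $\boldsymbol{A}\boldsymbol{y}_t$. Because $\boldsymbol{D}_t$ is random, non-scalar, and does not commute with the non-symmetric $\boldsymbol{A}$, I cannot apply Lemma~\ref{LEMMA:NEG_DEF} directly; instead I would write $\boldsymbol{D}_t=\gamma\beta\,\boldsymbol{I}+(\boldsymbol{D}_t-\gamma\beta\,\boldsymbol{I})$, where $\boldsymbol{0}\preceq\boldsymbol{D}_t-\gamma\beta\,\boldsymbol{I}$ has operator norm $\mathcal{O}(\beta(1-\gamma))=\mathcal{O}(\beta^2)$. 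The leading piece gives $2\gamma\beta\,\boldsymbol{y}_t^\top\boldsymbol{A}\boldsymbol{y}_t\le-\gamma\beta\Delta\,L_t$ by Lemma~\ref{LEMMA:NEG_DEF}, while the remainder is bounded by $\mathcal{O}(\beta^2)\,\|\boldsymbol{A}\|\,L_t$, a higher-order term.

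I would then control the Markovian bias $2\boldsymbol{y}_t^\top\boldsymbol{D}_t\{\boldsymbol{g}(\boldsymbol{X}_t)-\boldsymbol{A}\boldsymbol{y}_t\}$ and the quadratic remainder by conditioning on $\mathcal{F}_{t-\tau_\beta}$ with $\tau_\beta$ the mixing time to accuracy $\beta$. Three ingredients enter: (i) boundedness of the projected iterates together with $\boldsymbol{D}_t\preceq\beta\boldsymbol{I}$ and the uniform bounds on $\boldsymbol{A}(\boldsymbol{X}_t),\boldsymbol{b}(\boldsymbol{X}_t)$ give an iterate-drift estimate $\|\boldsymbol{y}_t-\boldsymbol{y}_{t-\tau_\beta}\|=\mathcal{O}(\beta\tau_\beta)$; (ii) by the definition of $\tau_\beta$, replacing the conditional law of the state component of $\boldsymbol{X}_t$ by $\boldsymbol{\pi}^\mu$ costs total variation at most $\beta$; and (iii) truncating the eligibility trace $\boldsymbol{z}_t$ to its most recent $\tau_\beta$ summands incurs the geometric error $\lambda^{\tau_\beta}\le h^{\tau_\beta}$. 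Combining with $\boldsymbol{D}_t\preceq\beta\boldsymbol{I}$ and $\|\boldsymbol{D}_t\boldsymbol{g}(\boldsymbol{X}_t)\|^2=\mathcal{O}(\beta^2)$, these terms contribute a per-step error of order $\mathcal{O}(\beta^2\tau_\beta+\beta h^{\tau_\beta})$, plus a part proportional to $L_t$ that is absorbed into the contraction.

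Assembling the pieces yields $\mathbb{E}^\mu[L_{t+1}]\le(1-\beta\gamma\Delta)\,\mathbb{E}^\mu[L_t]+\mathcal{O}(\beta^2\tau_\beta+\beta h^{\tau_\beta})$, augmented by cross terms of the form $\beta^2\tau_\beta\,h^{t}$ arising from coupling $\boldsymbol{y}_t$ to $\boldsymbol{y}_{t-\tau_\beta}$ across the conditioning window. Unrolling from $L_0$ produces the geometric factor $(1-\beta\gamma\Delta)^{t+1}$ on the initialization; the constant error sums, after dividing by $\beta\gamma\Delta$, to the floor $\mathcal{O}(\beta\tau_\beta+h^{\tau_\beta})$; and since $h=\max\{1-\beta\gamma\Delta,\rho,\lambda\}\ge 1-\beta\gamma\Delta$, the convolution $\sum_k(1-\beta\gamma\Delta)^{t-k}h^{k}$ is controlled only by $t\,h^{t}$ rather than a clean geometric sum, giving the residual $\beta\tau_\beta\,t\,h^{t}$. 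I expect the main obstacle to be the drift analysis with the data-dependent, non-commuting step-size matrix $\boldsymbol{D}_t$: showing that the implicit rescaling preserves the negative-definiteness margin up to an $\mathcal{O}(\beta^2)$ correction is exactly what permits the weak step-size requirement in place of the restrictive initial-step-size condition of \citep{zhang2021finite}. The mixing-time bookkeeping with the infinite-memory eligibility trace is technically heavy but follows established templates, and the projection-coupling is the remaining delicate point.
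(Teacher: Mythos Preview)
Your proposal is correct and mirrors the paper's proof essentially step for step: the paper also reduces to the auxiliary iterates on $\mathbb{R}\times\mathbb{O}$ via non-expansiveness of the projections, splits $\boldsymbol{D}_t=\underline{\boldsymbol{D}_t}+(\boldsymbol{D}_t-\underline{\boldsymbol{D}_t})$ with $\underline{\boldsymbol{D}_t}=\gamma\,\boldsymbol{I}$ to apply Lemma~\ref{LEMMA:NEG_DEF} to the scalar part while bounding the remainder by $\mathcal{O}(\beta)$ in operator norm, and then controls the Markovian noise terms (the paper calls them $\xi_t,\zeta_t$) by the same three-ingredient mixing/trace-truncation argument you outline before unrolling the recursion. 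The only cosmetic difference is that the paper leans on the projection radius $R_{\boldsymbol{\Theta}}$ to obtain \emph{uniform} bounds on the noise terms rather than bounds proportional to $L_t$, but this is exactly your ``boundedness of the projected iterates'' ingredient.
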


\begin{theorem}\label{THM:FIN_TIME_BOUND_DECR}
Suppose the Markov chain $\{S^\mu_t\}_{t \in \mathbb{N}}$ is uniformly geometrically ergodic with a rate parameter $\rho \in (0,1)$, the step-size ratio parameter is chosen to satisfy $c_\alpha \ge \Delta + \sqrt{\frac{1}{\Delta^2 (1-\lambda)^4} - \frac{1}{(1-\lambda)^2}}$ and the optimal parameter $\|\boldsymbol{\Theta}^*\| \le R_{\boldsymbol{\Theta}}$. With $\lambda \in [0,1)$ and decreasing step-sizes $\beta_t = \frac{\beta_0}{(t+1)^s}, ~s \in (0,1)$, the iterates of the projected average-reward implicit TD($\lambda$) algorithm satisfy the following finite-time error bound 
\begin{align*}
\mathbb{E}^\mu\left\{\left( \widehat{\omega}_{t+1} - \omega^\mu \right)^2 + \left\| \Pi_{\mathbb O}\left(\widehat{\pmb{\theta}}_{t+1} - \boldsymbol{\theta}^*\right) \right\|^2 \right\} &\lesssim \exp\left[-\frac{\Delta \gamma_{0} \beta_0}{1-s}\{(1+t)^{1-s}-1\} \right] \left\{\left( \widehat{\omega}_0^{\mu} - \omega^\mu \right)^2 + \left\|\widehat{\pmb{\theta}}_0 - \boldsymbol{\theta}^* \right\|^2 \right\} \\
&\quad + \mathcal{O}\left\lbrace\tau_{\beta_t}t \exp\left(-ct^{1-s}\right) + \tau_{\beta_t} t^{-s} + q^{\tau_{\beta_t}}\right\rbrace,\quad t \ge 0
\end{align*}
for some constant $c > 0$,
$q = \max\{\rho, \lambda \} ~\text{and}~\gamma_{0} = \min\left\{\frac{1}{1+c_\alpha \beta_0}, \frac{(1-\lambda)^2}{(1-\lambda)^2 + \beta_0}\right\}.
$
\end{theorem}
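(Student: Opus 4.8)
The plan is to adapt the Lyapunov-drift analysis for stochastic approximation under Markovian sampling to the implicit recursion \eqref{eqn:update_rule_imp_mat} with a decreasing schedule, reusing the one-step machinery that also underlies the constant-step case in Theorem \ref{THM:FIN_TIME_BOUND_CONST}. First I would center the recursion at the fixed point $\boldsymbol{\Theta}^*$ (which solves $\boldsymbol{A}\boldsymbol{\Theta}^*+\boldsymbol{b}=0$): writing $\boldsymbol{\eta}_t:=\widehat{\boldsymbol{\Theta}}_t-\boldsymbol{\Theta}^*$, the pre-projection update reads $\boldsymbol{\eta}_{t+1}=(\boldsymbol{I}+\boldsymbol{D}_t\boldsymbol{A})\boldsymbol{\eta}_t+\boldsymbol{D}_t\boldsymbol{\epsilon}_t$, where $\boldsymbol{\epsilon}_t:=\{\boldsymbol{A}(\boldsymbol{X}_t)-\boldsymbol{A}\}\widehat{\boldsymbol{\Theta}}_t+\{\boldsymbol{b}(\boldsymbol{X}_t)-\boldsymbol{b}\}$ is the Markovian noise. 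The structural point I would exploit is that $\boldsymbol{D}_t=\beta_t\boldsymbol{G}_t$ with $\boldsymbol{G}_t:=\mathrm{diag}\{(1+c_\alpha\beta_t)^{-1},(1+\beta_t\|\boldsymbol{z}_t\|^2)^{-1}\boldsymbol{I}_d\}$; since $\|\boldsymbol{z}_t\|\le(1-\lambda)^{-1}$ (from $\|\boldsymbol{\phi}(i)\|\le1$ and the geometric trace), one has $\gamma_t\boldsymbol{I}\preceq\boldsymbol{G}_t\preceq\boldsymbol{I}$ with $\gamma_t=\min\{(1+c_\alpha\beta_t)^{-1},(1-\lambda)^2/((1-\lambda)^2+\beta_t)\}\ge\gamma_0$, the inequality holding because $\beta_t\le\beta_0$. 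Thus the implicit denominators rescale the effective step-size but never below $\gamma_0\beta_t$, which is exactly the mechanism that produces the rate constant $\gamma_0$ in the statement.

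Next I would establish the one-step drift inequality for $V_t:=(\widehat{\omega}_t-\omega^\mu)^2+\|\Pi_{\mathbb O}(\widehat{\pmb{\theta}}_t-\boldsymbol{\theta}^*)\|^2$, the error measured on $\mathbb{R}\times\mathbb{O}$. Because the $\boldsymbol{\theta}$-block of $\boldsymbol{D}_t$ is a scalar multiple of the identity, $\boldsymbol{D}_t$ commutes with the projection onto $\mathbb{R}\times\mathbb{O}$, and since $\|\boldsymbol{\Theta}^*\|\le R_{\boldsymbol{\Theta}}$ the Euclidean-ball projection fixes $\boldsymbol{\Theta}^*$ and, following the projection handling in \citep{zhang2021finite}, does not inflate the projected error; hence it suffices to bound the drift of the unprojected update. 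The contraction comes from re-running the negative-definiteness argument of Lemma \ref{LEMMA:NEG_DEF}, but now carrying the asymmetric block weights $g^{\omega}_t=(1+c_\alpha\beta_t)^{-1}$ and $g^{\theta}_t=(1+\beta_t\|\boldsymbol{z}_t\|^2)^{-1}$ through the Schur-complement/Young's-inequality step: the prescribed lower bound on $c_\alpha$ still dominates the cross term, and the resulting margin is governed by the smaller multiplier, yielding $-\boldsymbol{\eta}_t^\top\boldsymbol{D}_t\boldsymbol{A}\boldsymbol{\eta}_t\ge\tfrac{\Delta}{2}\gamma_t\beta_t\|\boldsymbol{\eta}_t\|^2$ on $\mathbb{R}\times\mathbb{O}$. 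Combined with boundedness of $\boldsymbol{A}(\boldsymbol{X}_t),\boldsymbol{b}(\boldsymbol{X}_t),\boldsymbol{z}_t$ and of $\|\widehat{\boldsymbol{\Theta}}_t\|\le R_{\boldsymbol{\Theta}}$, this gives a drift coefficient $1-\Delta\gamma_t\beta_t+\mathcal{O}(\beta_t^2)$ acting on $V_t$, plus a noise contribution driven by $\boldsymbol{D}_t\boldsymbol{\epsilon}_t$.

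The main obstacle is that $\boldsymbol{\epsilon}_t$ is not a martingale difference under Markovian sampling, so $\mathbb{E}^\mu[\boldsymbol{\epsilon}_t\mid\mathcal{F}_t]\ne0$. I would control it with the standard device of conditioning $\tau$ steps into the past, taking $\tau=\tau_{\beta_t}$ so that, by uniform geometric ergodicity, $\mathbb{E}^\mu[\boldsymbol{\epsilon}_t\mid\mathcal{F}_{t-\tau}]$ differs from its stationary (zero) value by $\mathcal{O}(\beta_t)$; the price is a bias set by how far the iterate travels over the window, kept at $\mathcal{O}(\beta_t\tau_{\beta_t})$ precisely because the implicit denominators cap the per-step movement. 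A complication specific to TD($\lambda$) is that $\boldsymbol{z}_t$ has infinite memory, so the mixing estimate must be applied to a truncated trace $\sum_{i=t-\tau}^{t}\lambda^{t-i}\boldsymbol{\phi}(S^\mu_i)$ whose tail is $\mathcal{O}(\lambda^\tau)$; interleaving the chain's $\rho$-mixing with the trace's $\lambda$-decay produces the residual $q^{\tau_{\beta_t}}$ with $q=\max\{\rho,\lambda\}$. Since the step-size now varies, $\tau_{\beta_t}=\mathcal{O}(\log(1/\beta_t))=\mathcal{O}(\log t)$ grows slowly, so I must also verify $t-\tau_{\beta_t}\ge0$ and absorb the short burn-in; this time-varying bookkeeping is the most delicate part of the argument.

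Finally I would unroll $\mathbb{E}^\mu V_{t+1}\le(1-\Delta\gamma_t\beta_t)\mathbb{E}^\mu V_t+\mathrm{noise}_t$ from $0$ to $t$, bounding the product of contraction factors by $\prod_{k=0}^{t}(1-\Delta\gamma_k\beta_k)\le\exp(-\Delta\gamma_0\sum_{k=0}^{t}\beta_k)$ (using $\gamma_k\ge\gamma_0$ and $1-x\le e^{-x}$) and the partial sum by $\sum_{k=0}^{t}\beta_k\ge\frac{\beta_0}{1-s}\{(1+t)^{1-s}-1\}$ via integral comparison, which reproduces the exponential factor on the initial error. The accumulated noise $\sum_k\mathrm{noise}_k\prod_{j=k+1}^{t}(1-\Delta\gamma_0\beta_j)$ then splits, through the usual decreasing-step summation lemmas, into the transient $\tau_{\beta_t}t\exp(-ct^{1-s})$, the steady-state floor $\tau_{\beta_t}t^{-s}\asymp\beta_t\tau_{\beta_t}$, and the trace/mixing residual $q^{\tau_{\beta_t}}$. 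Because the implicit denominators cap the effective step-size and hence the $\mathcal{O}(\beta_t^2)$ drift term uniformly in $\beta_0$, the restrictive initial-step-size condition of \citep{zhang2021finite} is unnecessary; and since $\sum_k\beta_k=\infty$ for every $s\in(0,1)$, the exponential factor still annihilates the initial error, covering both the square-summable ($s>1/2$) and non-square-summable ($0<s\le1/2$) regimes.
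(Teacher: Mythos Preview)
Your overall architecture matches the paper's proof closely: center at $\boldsymbol{\Theta}^*$, derive a one-step drift with contraction factor $1-\Delta\gamma_t\beta_t$ on $\mathbb{R}\times\mathbb{O}$, control the Markovian noise by conditioning $\tau_{\beta_t}$ steps into the past together with a truncated eligibility trace (yielding the $q^{\tau_{\beta_t}}$ residual), and then unroll using $1-x\le e^{-x}$, $\gamma_t\ge\gamma_0$, and integral comparison on $\sum_k\beta_k$ to obtain the exponential factor and the three big-$\mathcal O$ terms. All of this is what the paper does.

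The one substantive divergence is how you obtain the contraction. You propose to re-run Lemma~\ref{LEMMA:NEG_DEF} with the asymmetric block weights $g_t^\omega,g_t^\theta$ left in place, arguing that the stated $c_\alpha$ threshold ``still dominates the cross term''. That step is delicate: the cross term in $\boldsymbol{\eta}^\top\boldsymbol{G}_t\boldsymbol{A}\boldsymbol{\eta}$ is scaled by $g_t^\theta$ while the $-c_\alpha\eta_0^2$ diagonal is scaled by $g_t^\omega$, and the ratio $g_t^\theta/g_t^\omega$ can be large (it exceeds $1$ whenever $c_\alpha>\|\boldsymbol{z}_t\|^2$, which is typical). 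So the Young's-inequality balance that fixes the $c_\alpha$ threshold in Lemma~\ref{LEMMA:NEG_DEF} does not transfer verbatim; carrying it through would in general require a stricter condition on $c_\alpha$, and hence would not prove the theorem exactly as stated. The paper sidesteps this entirely by introducing the \emph{scalar} surrogate $\underline{\boldsymbol{D}_t}:=\gamma_t\boldsymbol{I}$, applying Lemma~\ref{LEMMA:NEG_DEF} unchanged to $\gamma_t\,\boldsymbol{\eta}^\top\boldsymbol{A}\boldsymbol{\eta}\le-\tfrac{\Delta}{2}\gamma_t\|\boldsymbol{\eta}\|^2$, and then observing (Lemma~\ref{LEMMA:D_Dt_bound}) that $\|\boldsymbol{D}_t-\underline{\boldsymbol{D}_t}\|\le(1+c_\alpha)\beta_t/(1-\lambda)^2$, so the residual, after the outer $\beta_t$, is $\mathcal{O}(\beta_t^2)$ and folds into the $G\beta_t^2$ term. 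This scalar-replacement trick is both simpler and the reason the same $c_\alpha$ threshold from Lemma~\ref{LEMMA:NEG_DEF} suffices without modification. The rest of your plan (mixing-time decoupling, trace truncation, and the summation lemmas) lines up with the paper's Lemmas~\ref{LEMMA:expect_zeta_bound}, \ref{LEMMA:expect_xi_bound}, \ref{LEMMA:prelim_exp_bound}, and \ref{LEMMA:prelim_all_time_exp_bound}.
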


\begin{remark}
Our two theorems substantially relax the restrictive conditions required in \citep{zhang2021finite}.
\begin{itemize}
    \item For constant step-sizes, we establish a finite-time bound without any initial step-size requirements.  
    By contrast, previous analysis ties the initial step-size to a problem dependent quantity (e.g., $\Delta\beta<2$) and further 
    imposes restrictive upper bounds on the step-size as well as the mixing time \citep{zhang2021finite}.
    \item For decaying step-sizes, our theorem accommodates the polynomial schedule of the form $\beta_t=\beta_0/(t+1)^s$ for any 
    $s\in(0,1)$, not just the $1/t$ rate covered in \citep{zhang2021finite}. In addition, the bounds in \citep{zhang2021finite} hold 
    only under extra restrictions, for example, there must exist an index $t^* \in \mathbb{N}$ with bounded cumulative step-size up to 
    $t^*$, and all subsequent step-sizes must stay below a problem-dependent threshold.
\end{itemize}
These relaxations remove delicate initial step-size conditions and broaden the range of admissible step-size schedules, while still 
providing finite-time error guarantees. 
\end{remark}

\section{Numerical Experiments}\label{SEC:NUMERICS}
In this section, we demonstrate the effectiveness of the proposed average-reward implicit TD($\lambda$) relative to 
 standard average-reward TD($\lambda$) on both policy evaluation and control tasks. All experiments were carried out on Intel(R)
  Xeon(R) Gold 6152 CPUs at 2.10 GHz with 32 GB RAM.

\subsection{Policy Evaluation}
For policy evaluation, we use MRP and the Boyan chain examples; for policy learning, we consider the 
access-control and pendulum problems. Performance is quantified by the loss function given by
$
\bigl(\widehat{\omega}_t-
\omega^\mu\bigr)^2+\bigl\|\Pi_{\mathbb O}(\widehat{\pmb{\theta}}_t-\boldsymbol{\theta}^\ast)\bigr\|^2.
$
A detailed description of how the loss value is computed is provided in the supplementary materials. We 
consider both constant and decaying step-size schedules. For decaying schedules, the step-size is held 
fixed for the first 150 iterations to promote exploration and then decreased thereafter. Each 
configuration is run for $T=2000$ steps with $\widehat{\omega}_0^{\mu}=0$ and 
$\widehat{\pmb{\theta}}_0\sim \mathrm{Unif}([-1,1]^d)$, and results are averaged over 50 independent 
trials. We fix the step-size ratio $c_\alpha =1$ and the exponential weighting parameter $\lambda = 
0.25$. We compare four methods: (i) average-reward standard TD($\lambda$); (ii) average-reward implicit 
TD($\lambda$) without projection; and (iii–iv) average-reward implicit TD($\lambda$) with projection, 
using projection radius $R_{\boldsymbol{\Theta}}\in\{1000,\,5000\}$. Full implementation details and 
additional experimental results are provided in the supplementary materials.

\subsubsection{Markov Reward Process}
Here we study an MRP with $|\mathcal S|=100$ states; the transition matrix and reward vector are 
generated following \citep{zhang2021finite}. We first consider constant step-sizes $\beta_t\equiv\beta_0$ 
with $\beta_0\in\{0.1,\ldots,3.0\}$. Figure \ref{fig:MRP_constant_presentation} summarizes the results 
(solid line = mean, shaded band = 95$\%$ confidence interval). As shown in the left panel, the average 
loss across 50 independent experiments increases for all methods as $\beta_0$ becomes larger, around 
$\beta_0\approx 0.3$ or more. However, as $\beta_0\to 2$, standard average-reward  TD($\lambda$) becomes 
unstable and its loss explodes, whereas average-reward implicit TD($\lambda$) remains numerically stable 
with modest loss growth. To compare all four algorithms' behavior at a moderate step-size, the right 
panel fixes a step-size value $\beta_0 = 1.0$ and tracks performance over iterations: average-reward 
implicit TD($\lambda$) maintains relatively low error throughout, while standard average-reward 
TD($\lambda$) incurs substantially larger error over the horizon.

\begin{figure}[htp]
\centering
\caption{\small MRP experiment under constant step-size, with exponential weighting parameter and 
step-size ratio set to $(\lambda, c_{\alpha}) = (0.25, 1.0)$. The solid line represents the mean, and the 
shaded region denotes the 95\% confidence interval. (Left) Loss value from step-size 0.1 to 3.0. (Right) 
Loss value over iterations with initial step-size $\beta_0 = 1.0$.
}
\label{fig:MRP_constant_presentation}
    \centering
    \begin{subfigure}[t]{0.4\linewidth}
\includegraphics[width=\linewidth]{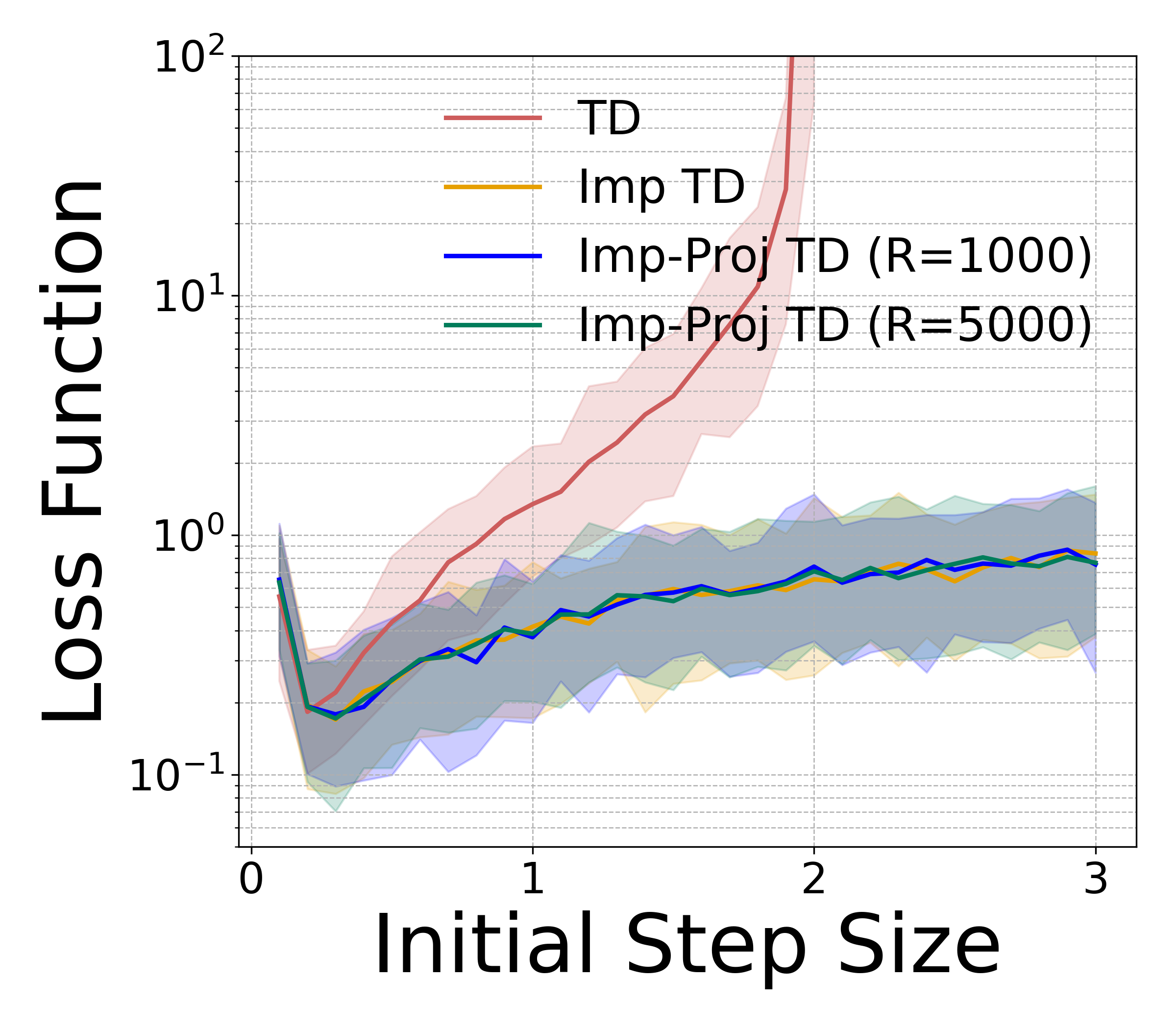}
    \end{subfigure}
\begin{subfigure}[t]{0.4\linewidth}\includegraphics[width=\linewidth]
{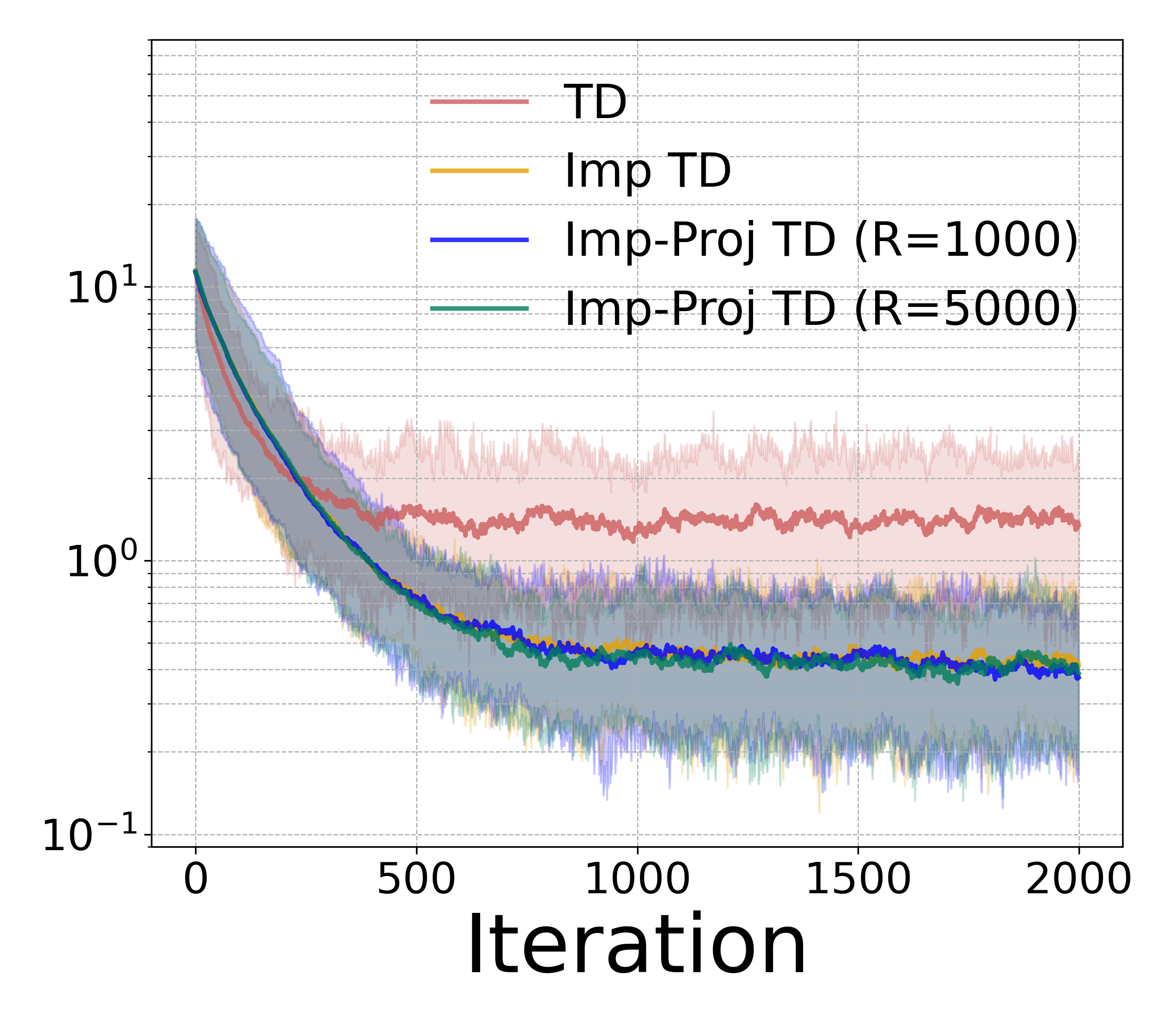}\end{subfigure}
\end{figure}

\subsubsection{Average-Reward Boyan Chain}
We next study the average-reward Boyan chain under deterministic policies. As in the MRP setting, the 
Boyan chain is a standard benchmark for TD learning \citep{boyan2002technical}. Because the original 
formulation is not average-reward, we use the variant proposed by \citep{zhang2021average}, which has 13 
states and 2 actions. In each experiment, we construct a deterministic policy by assigning an action to 
each state via independent $\mathrm{Bernoulli}(0.5)$ draws. As with the MRP experiments, we assess 
performance using the average loss across 50 independent runs. Figure \ref{fig:Boyan_selected_stepsizes} 
shows results on the Boyan chain example under the decaying step-size schedule $\beta_t = 
\beta_0/(t+1)^{0.99}$. In the left panel, average-reward implicit TD($\lambda$) methods remain stable 
across $\beta_0 \in [0.1, 3.0]$, whereas the standard average-reward TD($\lambda$) becomes unstable and 
its loss grows rapidly as $\beta_0$ approaches $1.5$. The right panel displays learning curves for 
$\beta_t = 1.5/(t+1)^{0.99}$ over 2000 iterations. The loss of standard average-reward TD($\lambda$) 
method consistently exceeds that of the average-reward implicit TD($\lambda$) methods, highlighting the 
latter's improved numerical stability and superior performance.

\begin{figure}[htp]
\centering
\caption{\small Boyan experiment with exponential weighting parameter and step-size ratio set to \( (\lambda, c_{\alpha}) = (0.25, 1.0) \) under decaying step-size schedule \( \beta_t = \beta_0 / (t+1)^{0.99} \). Solid lines denote the mean, and shaded regions represent 95\% confidence intervals. (Left) Loss value with initial step-sizes ranging from 0.1 to 3.0.
(Right) Loss value over iterations with initial step-size $\beta_0 = 1.5$. }
\label{fig:Boyan_selected_stepsizes}
    \centering
    \begin{subfigure}[t]{0.4\linewidth}
\includegraphics[width=\linewidth]{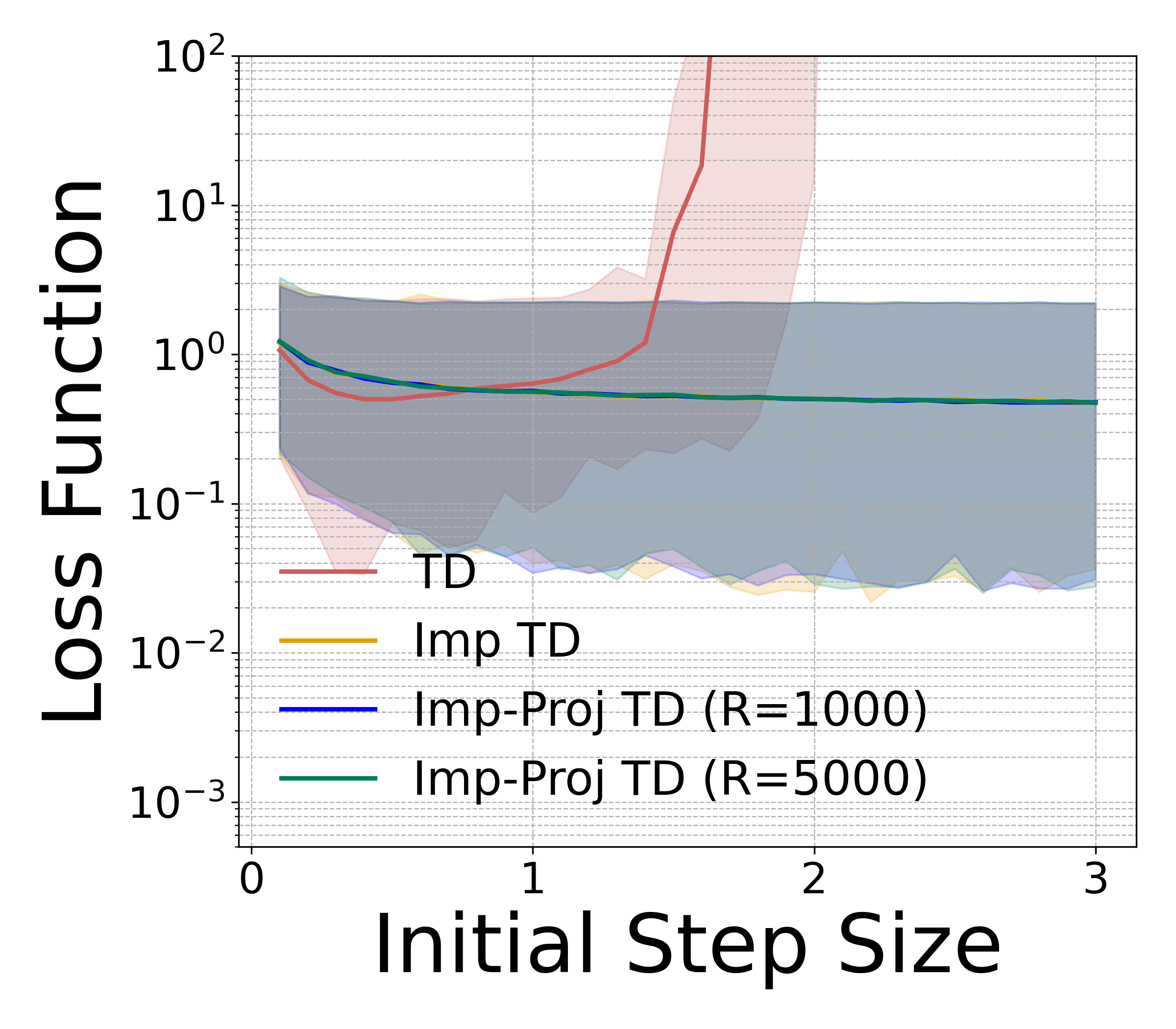}
    \end{subfigure}
\begin{subfigure}[t]{0.4\linewidth}\includegraphics[width=\linewidth]{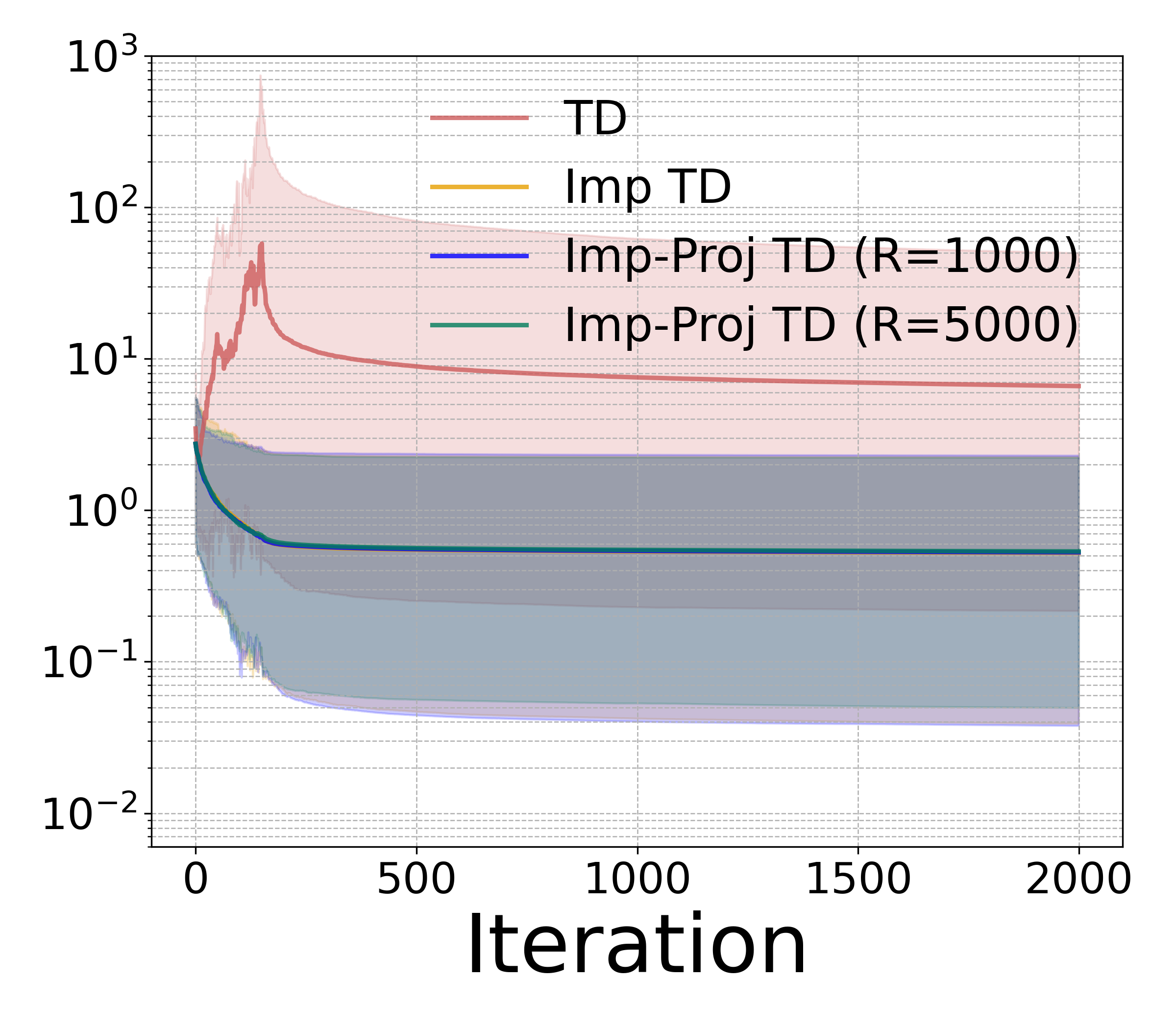}\end{subfigure}
\end{figure}

\subsection{Control Experiments}
In this section, we utilize the proposed average-reward implicit TD($\lambda$) on control tasks. We use state–action–reward–state–action (SARSA) with linear function approximation to estimate the action-value function. Each experiment comprises $T=15000$ time steps and is repeated over 30 independent runs. We employ a decaying step-size schedule $\beta_t=\beta_0/(t+400)^{0.99}$, holding $\beta_t$ constant for the first 150 iterations to encourage early exploration before gradually reducing it thereafter.

\subsubsection{Access-Control Queuing}
We study the canonical access-control queuing problem \citep{barto2021reinforcement} in the average-reward setting. At each decision epoch, an arriving customer belongs to one of four equiprobable classes, and the agent chooses whether to admit or reject the customer. There are ten identical servers; if admitted, the customer yields an immediate reward and occupies a server. Service completion occurs independently at each step with a fixed probability, inducing stochastic transitions in server availability. The goal is to learn an admission policy that optimally maps the current customer class and the number of available servers to an admit/reject decision. We illustrate average-reward learning results in the left panel of Figure~\ref{fig:combined_result_presentation}. The average-reward implicit TD($\lambda$) methods consistently outperform the average-reward TD($\lambda$) method across varying initial step-sizes in terms of average-reward. As the initial step-size increases, the implicit methods show a mild performance gain and remain stable whereas standard version deteriorates and fails to benefit from larger steps.

\begin{figure}[htp]
\centering
\caption{\small Control experiment with exponential weighting parameter and step-size ratio parameter \( (\lambda, c_{\alpha}) = (0.25, 1.0) \), under the decaying step-size schedule \( \beta_t = \beta_0 / (t+400)^{0.99} \). Initial step-size ranges from 0.25 to 1.5. Solid lines denote the mean, and shaded regions represent 95\% confidence intervals.}
\label{fig:combined_result_presentation}
    \centering
    \begin{subfigure}[t]{0.8\linewidth}
    \includegraphics[width=\linewidth]{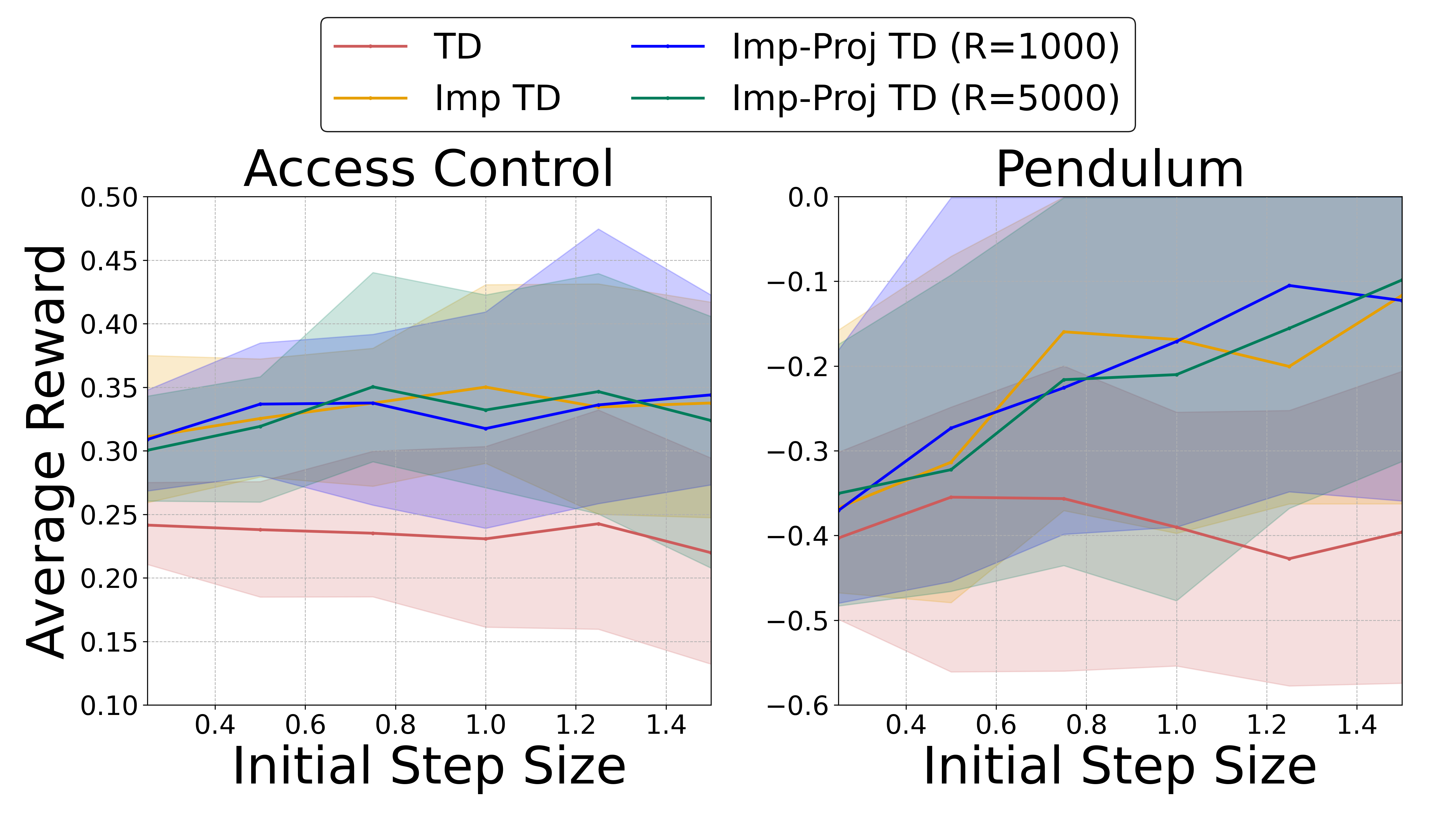}
    \end{subfigure}
\end{figure}

\subsubsection{Pendulum Environment}
We also apply the proposed average-reward implicit TD($\lambda$) to the \texttt{Pendulum-v1} environment. Because the environment is defined with episodic terminations \citep{towers2024gymnasium}, we modify it to match the infinite-horizon average-reward setting. The control objective is to keep the pendulum upright. The right panel of Figure~\ref{fig:combined_result_presentation} shows results for the pendulum environment with $(\lambda, c_\alpha) = (0.25, 1.0)$.
Mirroring the access-control task, larger initial step-sizes benefit the average-reward implicit TD($\lambda$) methods, which achieve higher average reward and remain stable, whereas standard average-reward TD($\lambda$) fails to benefit from larger step-sizes and exhibits no improvement.

\section{Conclusion}
We introduced average-reward implicit TD($\lambda$), a fixed-point variant of the average reward TD($\lambda$) that preserves per iteration complexity while markedly improving stability to step-size choices. Our theoretical guarantees provide explicit finite-time error bounds under both constant and decaying step-sizes, established via a projection-based analysis. Across policy evaluation and control examples, the implicit updates deliver robust performance over a wide range of step-sizes, demonstrating strong practical performance consistent with the theory. Looking ahead, promising directions include a full theoretical analysis of average-reward implicit SARSA and rigorous two-time-scale extensions of both standard and implicit average-reward TD($\lambda$). A related direction is to build implicit TD methods to estimate the asymptotic variance of cumulative reward in the average-reward regime.

\newpage
\appendix
\section{List of notations}
\noindent To provide rigorous details behind the established theoretical results, we provide a summary of the necessary assumptions, notations, and facts.
\begin{itemize}
    \item $\mathcal{S}= \{1, \cdots, |\mathcal{S}|\}$: state space 
    \item $\{S^\mu_t\}_{t \in \mathbb{N}}$: a sequence of states under policy $\mu$
    \item $(A^\mu_t)_{t \in \mathbb{N}}$: a sequence of actions under policy $\mu$ with $A^\mu_t := \mu(S^\mu_t)$
    \item $(R^\mu_t)_{t\in \mathbb{N}}$: a sequence of rewards under policy $\mu$ with $R^\mu_t=r(S^\mu_t, A^\mu_t)$
    \item
    $
    p^\mu\left(S^\mu_{t+1}| S^\mu_t\right):=p\left\lbrace S^\mu_{t+1} | S^\mu_t, A^\mu_t=\mu(S^\mu_t)\right\rbrace
    $: transition probabilities under policy $\mu$
    \item  
    $
    \boldsymbol{P}^\mu = \bigl[P^\mu_{ij}\bigr]_{i,j=1}^{|\mathcal{S}|}
    $: time-homogeneous transition probability matrix with 
    $
    P^\mu_{ij} = p^\mu \left\lbrace S^\mu_{t+1}=j\mid S^\mu_{t}=i\right\rbrace
    $
    \item
    $
    \boldsymbol{\pi}^{\mu} = (\pi^{\mu}_i)_{i=1}^{|\mathcal{S}|}
    $: a unique stationary distribution of $\{S^\mu_t\}_{t \in \mathbb{N}}$ 
    \item
        $
        \boldsymbol{r}^\mu = 
        \left[r\{1, \mu(1)\}, \cdots, r\{|\mathcal{S}|, \mu(|\mathcal{S}|)\}\right]^\top
        $: a reward vector under policy $\mu$
    \item $\mathbb{E}^\mu$: expectation with respect to the Markov chain $\{S^\mu_t\}_{t\in\mathbb{N}}$ with a fixed initial state $S^\mu_0$
    \item $\mathbb{E}^{\boldsymbol{\pi}^\mu}$: expectation with respect to the stationary distribution of the Markov chain $\{S^\mu_t\}_{t\in\mathbb{N}}$
    \item $\|\cdot\|$: Euclidean norm for vectors and the induced operator norm for matrices
    \item $\lesssim$: inequality up to a constant
    \item $\boldsymbol{X}_t := \left(S^\mu_t ,S^\mu_{t+1}, \boldsymbol{z}_t\right), \quad \boldsymbol{X}_{t-\tau:t} = (S^\mu_t, S^\mu_{t+1}, \boldsymbol{z}_{t-\tau:t})$ \item $\boldsymbol{\phi}_l = \boldsymbol{\phi}(S^\mu_l), \quad \|\boldsymbol{\phi}_l \| \le 1, \quad \boldsymbol{\Phi}\in\mathbb{R}^{|\mathcal S|\times d}:$ \text{feature 
matrix whose $i^{\text{th}}$ row is} $\boldsymbol{\phi}(i)^\top$ 
    \item $\boldsymbol{z}_t = \sum_{l=0}^t \lambda^{t-l}\boldsymbol{\phi}_l, \quad \boldsymbol{z}_{t-\tau:t} = \sum_{l=t-\tau}^t \lambda^{t-l} \boldsymbol{\phi}_l, \quad \omega^{\mu} = \lim_{T \to \infty} \frac{1}{T} \mathbb{E}^{\mu}\left(\sum_{t=0}^{T-1} R^\mu_t\right) = {\boldsymbol{\pi}^{\mu}}^\top \boldsymbol{r}^{\mu}$
    \item $\Pi_{\mathbb O}$: projection operator onto $\mathbb O := 
\mathbb{S}_{\boldsymbol{\Phi},\boldsymbol e}^\perp$ where
$\mathbb{S}_{\boldsymbol{\Phi},\boldsymbol e}:=\operatorname{span}\{\boldsymbol{\theta}: \boldsymbol{\Phi}\boldsymbol{\theta}=\boldsymbol e\}, \quad \boldsymbol{\Pi}:= \begin{bmatrix}
            1 & 0 \\
            0 & \Pi_{\mathbb{O}}
        \end{bmatrix}$
    \item $\boldsymbol{A}_t = \boldsymbol{A}(\boldsymbol{X}_t) := \begin{bmatrix}
    - c_{\alpha} & 0 \\
    - \boldsymbol{z}_t  &{\boldsymbol{z}_t}(\boldsymbol{\phi}_{t+1}^{\top}-\boldsymbol{\phi}_t^{\top}) 
    \end{bmatrix}, \quad \boldsymbol{A}:= \mathbb{E}^{\pi^\mu} \boldsymbol{A}_t$
    \item $\boldsymbol{A}_{t-\tau:t} = 
   \boldsymbol{A}(\boldsymbol{X}_{t-\tau:t}) := \begin{bmatrix}
    - c_{\alpha} & 0 \\
    - \boldsymbol{z}_{t-\tau:t}  &{\boldsymbol{z}_{t-\tau:t}}(\boldsymbol{\phi}_{t+1}^{\top}-\boldsymbol{\phi}_{t}^{\top}) 
    \end{bmatrix}$ 
    \item $\boldsymbol{b}_t = \boldsymbol{\boldsymbol{b}(\boldsymbol{X}_t)} := \begin{bmatrix}
        c_\alpha R^\mu_t \\
        R^\mu_t \boldsymbol{z}_t
    \end{bmatrix}, \quad \boldsymbol{b} = \mathbb{E}^{\boldsymbol{\pi}^\mu} \boldsymbol{b}_t, \quad \boldsymbol{b}_{t-\tau:t} = \boldsymbol{b}(\boldsymbol{X}_{t-\tau:t}):= \begin{bmatrix}
        c_\alpha R^\mu_t \\
        R^\mu_t \boldsymbol{z}_{t-\tau:t}
    \end{bmatrix}$
    \item $\widehat{\boldsymbol{\Theta}}_t := \begin{bmatrix}
        \widehat{\omega}_t\\
        \widehat{\pmb{\theta}}_t
        \end{bmatrix}, \quad \boldsymbol{\Theta}^* = \begin{bmatrix} \omega^{\mu}\\ \boldsymbol{\theta}^*\end{bmatrix}\in \mathbb{R}\times \mathbb{O}$ is the unique element satisfying $\boldsymbol{A}\boldsymbol{\Theta}^* = \boldsymbol{b}$.
    \item $\boldsymbol{D}_t := \begin{bmatrix}
                \frac{1}{1+c_{\alpha}\beta_t} & 0 \\
                0 & \frac{1}{1+\beta_t ||\boldsymbol{z}_t||^2} \boldsymbol{I}_{d}
            \end{bmatrix}, \quad \underline{\boldsymbol{D}_t} := \gamma_t \boldsymbol{I}_{d+1}, \quad \gamma_t = \min\left(\frac{1}{1+c_\alpha \beta_t},\frac{(1-\lambda)^2}{(1-\lambda)^2+\beta_t}\right)$
    \item $\zeta_t(\boldsymbol{\Theta}, \boldsymbol{X}_t) := \left\langle \left( \boldsymbol{A}(\boldsymbol{X}_t)-\boldsymbol{A}\right)\boldsymbol{\Theta^*}, \underline{\boldsymbol{D}_t}(\boldsymbol{\Theta^*}-\boldsymbol{\Theta})\right\rangle + \left\langle \boldsymbol{b}(\boldsymbol{X}_t)-\boldsymbol{b}, \underline{\boldsymbol{D}_t}(\boldsymbol{\Theta^*}-\boldsymbol{\Theta})\right\rangle$
    \item $\xi_t(\boldsymbol{\Theta}, \boldsymbol{X}_t) := (\boldsymbol{\Theta^*}-\boldsymbol{\Theta})^\top \left(\boldsymbol{A}(\boldsymbol{X}_t)^\top-\boldsymbol{A}^\top \right)\underline{\boldsymbol{D}_t}  \left( \boldsymbol{\Theta^*}-\boldsymbol{\Theta}\right)$
    \item $\boldsymbol{M} = \text{diag}\left(\pi^\mu_1, \cdots, \pi^\mu_{|\mathcal{S}|}\right), \quad
\Delta := \min_{\|\boldsymbol{\theta}\| = 1, \boldsymbol{\theta} \in \mathbb{O}} \boldsymbol{\theta}^\top \boldsymbol{\boldsymbol{\Phi}}^\top \boldsymbol{M} \left\lbrace \boldsymbol{I} -  (1 - \lambda) \sum_{m=0}^{\infty} \lambda^m (\boldsymbol{P}^\mu)^{m+1}  \right\rbrace \boldsymbol{\boldsymbol{\Phi}} \boldsymbol{\theta}$
\end{itemize}

\section{Theoretical results}
\begin{lemma}
    The implicit update rule is 
    \begin{align*}
        \widehat{\pmb{\theta}}_{t+1}&= \widehat{\pmb{\theta}}_t  + \frac{\beta_t}{1 + \beta_t \|\boldsymbol{z}_t\|^2} \left( R^\mu_t - \widehat{\omega}_t + \boldsymbol{\phi}_{t+1}^\top \widehat{\pmb{\theta}}_t  - \boldsymbol{\phi}_t^\top \widehat{\pmb{\theta}}_t \right) \boldsymbol{z}_t \\
        \widehat{\omega}_{t+1}&=\widehat{\omega}_t + \frac{c_\alpha \beta_t}{1+c_\alpha \beta_t}( R^\mu_t -\widehat{\omega}_t)
    \end{align*}
\end{lemma}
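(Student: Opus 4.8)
The plan is to treat the two implicit recursions \eqref{eqn:param_update_im2} and \eqref{eqn:param_update_im1} separately, since they decouple: the average-reward update is a scalar fixed-point equation, whereas the weight update is a vector fixed-point equation whose only nonlinearity in the unknown $\widehat{\pmb{\theta}}_{t+1}$ enters through the inner product $\boldsymbol{z}_t^\top \widehat{\pmb{\theta}}_{t+1}$. In both cases the strategy is simply to isolate the new iterate and rearrange.

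First I would dispatch the scalar recursion for $\widehat{\omega}_{t+1}$. Collecting the $\widehat{\omega}_{t+1}$ terms in \eqref{eqn:param_update_im2} gives $(1 + c_\alpha \beta_t)\,\widehat{\omega}_{t+1} = \widehat{\omega}_t + c_\alpha \beta_t R^\mu_t$; dividing through and rewriting the right-hand side as $\widehat{\omega}_t + \frac{c_\alpha\beta_t}{1+c_\alpha\beta_t}(R^\mu_t-\widehat{\omega}_t)$ reproduces the stated closed form. This step is pure one-line algebra.

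For the weight update, the key observation is that $\widehat{\pmb{\theta}}_{t+1}$ appears on the right-hand side of \eqref{eqn:param_update_im1} only through the scalar $\boldsymbol{z}_t^\top \widehat{\pmb{\theta}}_{t+1}$. I would therefore left-multiply both sides by $\boldsymbol{z}_t^\top$ to reduce the vector equation to a single scalar equation in the unknown $y := \boldsymbol{z}_t^\top \widehat{\pmb{\theta}}_{t+1}$. Abbreviating the terms that do not involve $\widehat{\pmb{\theta}}_{t+1}$ by $c := R^\mu_t - \widehat{\omega}_t + \boldsymbol{\phi}_{t+1}^\top \widehat{\pmb{\theta}}_t + \lambda \boldsymbol{z}_{t-1}^\top \widehat{\pmb{\theta}}_t$, this scalar equation reads $y = \boldsymbol{z}_t^\top \widehat{\pmb{\theta}}_t + \beta_t (c - y)\|\boldsymbol{z}_t\|^2$, which solves to $y = (\boldsymbol{z}_t^\top \widehat{\pmb{\theta}}_t + \beta_t c\,\|\boldsymbol{z}_t\|^2)/(1 + \beta_t \|\boldsymbol{z}_t\|^2)$. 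Substituting $y$ back into \eqref{eqn:param_update_im1} and simplifying the scalar coefficient to $c - y = (c - \boldsymbol{z}_t^\top \widehat{\pmb{\theta}}_t)/(1 + \beta_t\|\boldsymbol{z}_t\|^2)$ yields $\widehat{\pmb{\theta}}_{t+1} = \widehat{\pmb{\theta}}_t + \frac{\beta_t}{1 + \beta_t\|\boldsymbol{z}_t\|^2}(c - \boldsymbol{z}_t^\top \widehat{\pmb{\theta}}_t)\,\boldsymbol{z}_t$.

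The final step is to recognize $c - \boldsymbol{z}_t^\top \widehat{\pmb{\theta}}_t$ as the ordinary TD error evaluated at $\widehat{\pmb{\theta}}_t$. Using the eligibility-trace identity $\boldsymbol{\phi}_t = \boldsymbol{z}_t - \lambda \boldsymbol{z}_{t-1}$ already invoked in the derivation preceding \eqref{eqn:param_update_im1}, the combination $\lambda \boldsymbol{z}_{t-1}^\top \widehat{\pmb{\theta}}_t - \boldsymbol{z}_t^\top \widehat{\pmb{\theta}}_t$ collapses to $-\boldsymbol{\phi}_t^\top \widehat{\pmb{\theta}}_t$, so that $c - \boldsymbol{z}_t^\top \widehat{\pmb{\theta}}_t = R^\mu_t - \widehat{\omega}_t + \boldsymbol{\phi}_{t+1}^\top \widehat{\pmb{\theta}}_t - \boldsymbol{\phi}_t^\top \widehat{\pmb{\theta}}_t$, giving exactly the claimed update. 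I do not anticipate a genuine obstacle; the only point requiring care is the realization that projecting the vector fixed-point equation onto $\boldsymbol{z}_t$ turns it into a solvable scalar equation, avoiding a direct inversion of $\boldsymbol{I} + \beta_t \boldsymbol{z}_t \boldsymbol{z}_t^\top$. One could instead invert this rank-one perturbation via the Sherman--Morrison identity and obtain the same expression, but the projection route is cleaner and makes the appearance of the $1 + \beta_t\|\boldsymbol{z}_t\|^2$ denominator transparent.
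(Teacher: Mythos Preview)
Your proposal is correct. The scalar recursion for $\widehat{\omega}_{t+1}$ is handled identically to the paper. For the $\widehat{\pmb{\theta}}_{t+1}$ update, however, you take a different route: the paper moves the $\boldsymbol{z}_t\boldsymbol{z}_t^\top\widehat{\pmb{\theta}}_{t+1}$ term to the left, applies the Sherman--Morrison (Woodbury) identity to invert $\boldsymbol{I}+\beta_t\boldsymbol{z}_t\boldsymbol{z}_t^\top$ explicitly, and then expands and regroups the resulting six-term expression to arrive at the closed form. Your approach instead exploits the structural fact that the unknown enters only through the scalar $\boldsymbol{z}_t^\top\widehat{\pmb{\theta}}_{t+1}$, so projecting onto $\boldsymbol{z}_t$ reduces the problem to a one-line scalar solve. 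Both methods are equivalent in content---indeed you note the Sherman--Morrison alternative yourself---but yours is shorter and makes the origin of the $1+\beta_t\|\boldsymbol{z}_t\|^2$ shrinkage factor immediate, whereas the paper's expansion requires more bookkeeping before the same denominator emerges.
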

\begin{proof}
We first revisit the recursion formula in \eqref{eqn:param_update_im2} and \eqref{eqn:param_update_im1} : 
\begin{align*}
\widehat{\pmb{\theta}}_{t+1}
&= \widehat{\pmb{\theta}}_t + \beta_t (R^\mu_t - \widehat{\omega}_t +  \boldsymbol{\phi}_{t+1}^\top \widehat{\pmb{\theta}}_t  + \lambda \boldsymbol{z}_{t-1}^\top \widehat{\pmb{\theta}}_t -  \boldsymbol{z}_t^\top \textcolor{red}{\widehat{\pmb{\theta}}_{t+1}}) \boldsymbol{z}_t\\ 
\widehat{\omega}_{t+1} &= \widehat{\omega}_t + c_\alpha \beta_t (R^\mu_t - \textcolor{red}{\widehat{\omega}_{t+1}})
\end{align*}

\noindent Our goal is to derive the update rule for $\widehat{\pmb{\theta}}$ and $\widehat{\omega}$, which can be done by combining $\widehat{\pmb{\theta}}_{t+1}, \widehat{\omega}_{t+1}$. 

\paragraph{(1) Update Rule for $\widehat{\pmb{\theta}}$}
Let us first examine the update rule for the parameter $\widehat{\pmb{\theta}}$. Combining the rightmost term to the left, we have 
\begin{equation}
\begin{split}
\left(\boldsymbol{I}+\beta_t \boldsymbol{z}_t \boldsymbol{z}_t^\top\right)\widehat{\pmb{\theta}}_{t+1} 
&= \widehat{\pmb{\theta}}_t + \beta_t \left(R^\mu_t - \widehat{\omega}_t +  \boldsymbol{\phi}_{t+1}^\top \widehat{\pmb{\theta}}_t + \lambda \boldsymbol{z}_{t-1}^\top \widehat{\pmb{\theta}}_t\right) \boldsymbol{z}_t.
\end{split}
\end{equation}
From the Woodbury matrix identity $\left(\boldsymbol{I}+\beta_t \boldsymbol{z}_t \boldsymbol{z}_t^\top\right)^{-1}=\boldsymbol{I}-\frac{\beta_t}{1+\beta_t \|\boldsymbol{z}_t\|^2} \boldsymbol{z}_t \boldsymbol{z}_t^\top$, we have
\begin{align*}
\widehat{\pmb{\theta}}_{t+1}
&= \widehat{\pmb{\theta}}_t + \beta_t \left(R^\mu_t - \widehat{\omega}_t + \boldsymbol{\phi}_{t+1}^\top \widehat{\pmb{\theta}}_t + \lambda \boldsymbol{z}_{t-1}^\top \widehat{\pmb{\theta}}_t\right) \boldsymbol{z}_t  - \frac{\beta_t \boldsymbol{z}_t^\top \widehat{\pmb{\theta}}_t}{1 + \beta_t \|\boldsymbol{z}_t\|^2} \boldsymbol{z}_t 
- \beta_t \frac{\beta_t R^\mu_t \|\boldsymbol{z}_t\|^2 }{1 + \beta_t \|\boldsymbol{z}_t\|^2}  \boldsymbol{z}_t 
+ \beta_t \frac{\beta_t\widehat{\omega}_t \|\boldsymbol{z}_t\|^2 }{1 + \beta_t \|\boldsymbol{z}_t\|^2} \boldsymbol{z}_t \\
&\quad - \beta_t  \frac{\beta_t \boldsymbol{\phi}_{t+1}^\top \widehat{\pmb{\theta}}_t \|\boldsymbol{z}_t\|^2}{1 + \beta_t \|\boldsymbol{z}_t\|^2} \boldsymbol{z}_t 
- \beta_t  \frac{\beta_t\lambda \boldsymbol{z}_{t-1}^\top \widehat{\pmb{\theta}}_t \|\boldsymbol{z}_t\|^2 }{1 + \beta_t \|\boldsymbol{z}_t\|^2} \boldsymbol{z}_t \\
&= \widehat{\pmb{\theta}}_t + \beta_t R^\mu_t \left(1 - \frac{\beta_t \|\boldsymbol{z}_t\|^2}{1 + \beta_t \|\boldsymbol{z}_t\|^2}  \right) \boldsymbol{z}_t 
- \beta_t \widehat{\omega}_t \left(1 - \frac{\beta_t \|\boldsymbol{z}_t\|^2 }{1 + \beta_t \|\boldsymbol{z}_t\|^2} \right) \boldsymbol{z}_t \\
&\quad + \beta_t \boldsymbol{\phi}_{t+1}^\top \widehat{\pmb{\theta}}_t \left(1 - \frac{\beta_t \|\boldsymbol{z}_t\|^2 }{1 + \beta_t \|\boldsymbol{z}_t\|^2} \right) \boldsymbol{z}_t 
+ \beta_t \lambda \boldsymbol{z}_{t-1}^\top \widehat{\pmb{\theta}}_t \left(1 - \frac{\beta_t\|\boldsymbol{z}_t\|^2}{1 + \beta_t \|\boldsymbol{z}_t\|^2}  \right) \boldsymbol{z}_t 
- \frac{\beta_t \boldsymbol{z}_t^\top \widehat{\pmb{\theta}}_t}{1 + \beta_t \|\boldsymbol{z}_t\|^2} \boldsymbol{z}_t \\
&= \widehat{\pmb{\theta}}_t + \frac{\beta_t}{1 + \beta_t \|\boldsymbol{z}_t\|^2} \left( R^\mu_t - \widehat{\omega}_t + \boldsymbol{\phi}_{t+1}^\top \widehat{\pmb{\theta}}_t + \lambda \boldsymbol{z}_{t-1}^\top \widehat{\pmb{\theta}}_t - \boldsymbol{z}_t^\top \widehat{\pmb{\theta}}_t \right) \boldsymbol{z}_t \\
&= \widehat{\pmb{\theta}}_t + \frac{\beta_t}{1 + \beta_t \|\boldsymbol{z}_t\|^2} \left( R^\mu_t - \widehat{\omega}_t + \boldsymbol{\phi}_{t+1}^\top \widehat{\pmb{\theta}}_t - \boldsymbol{\phi}_t^\top \widehat{\pmb{\theta}}_t \right) \boldsymbol{z}_t 
\end{align*}

\paragraph{(2) Update Rule for $\widehat{\omega}$}
Similarly, for the update rule for the $\widehat{\omega}$, we combine the term to the left hand side, which yields 

\begin{equation*}
    (1+c_\alpha \beta_t)\widehat{\omega}_{t+1} = \widehat{\omega}_t + c_\alpha \beta_t R^\mu_t.
\end{equation*}

\noindent Now, dividing with $(1+c_\alpha \beta_t)$ we have

\begin{equation*}
\widehat{\omega}_{t+1} = \frac{1}{1+c_\alpha \beta_t}\widehat{\omega}_t + \frac{c_\alpha \beta_t}{1+c_\alpha \beta_t} R^\mu_t =\widehat{\omega}_t + \frac{c_\alpha \beta_t}{1+c_\alpha \beta_t}( R^\mu_t -\widehat{\omega}_t).
\end{equation*}
This completes the whole update rules. 
\end{proof}

\subsection{Proof of Main Theorems}
\noindent  In this section, we provide a proof of the finite-time error bounds of the projected average-reward projected TD($\lambda$) algorithm. To this end, recall that the implicit average-reward TD($\lambda$) update rule is given by 
\begin{align*}
    \widehat{\omega}_{t+1}&=\widehat{\omega}_t + \frac{c_\alpha \beta_t}{1+c_\alpha \beta_t}\left( R^\mu_t -\widehat{\omega}_t\right)\\
    \widehat{\pmb{\theta}}_{t+1}&= \widehat{\pmb{\theta}}_t + \frac{\beta_t}{1 + \beta_t \|\boldsymbol{z}_t\|^2} \left( R^\mu_t - \widehat{\omega}_t + \boldsymbol{\phi}_{t+1}^\top \widehat{\pmb{\theta}}_t - \boldsymbol{\phi}_t^\top \widehat{\pmb{\theta}}_t \right) \boldsymbol{z}_t .
\end{align*}
To gain numerical stability as well as to facilitate theoretical analysis, we impose additional projection steps to both the primary iterate $\widehat{\pmb{\theta}}_t$ and the reward iterate $\widehat{\omega}_t$. Recall that the projection operator with a radius $R>0$ is given by $$\Pi_R(\boldsymbol{u}) = \begin{cases} \boldsymbol{u}, \quad\quad \text{if}~ \|\boldsymbol{u}\| \le R\\ \frac{R}{\|\boldsymbol{u}\|}\boldsymbol{u}  \quad \text{otherwise}.\end{cases}$$ Then the projected average-reward TD($\lambda$) update is as follows
\begin{align*}
    \widehat{\omega}_{t+1}&=\Pi_{R_\omega} \left\lbrace\widehat{\omega}_t + \frac{c_\alpha \beta_t}{1+c_\alpha \beta_t}\left( R^\mu_t -\widehat{\omega}_t\right)\right\rbrace,\\
     \widehat{\pmb{\theta}}_{t+1}&= \Pi_{R_{\boldsymbol{\theta}}}\left\lbrace \widehat{\pmb{\theta}}_t + \frac{\beta_t}{1 + \beta_t \|\boldsymbol{z}_t\|^2} \left( R^\mu_t - \widehat{\omega}_t + \boldsymbol{\phi}_{t+1}^\top \widehat{\pmb{\theta}}_t - \boldsymbol{\phi}_t^\top \widehat{\pmb{\theta}}_t  \right) \boldsymbol{z}_t\right\rbrace ,
\end{align*}
where $R_\omega > 0$ and $R_{\boldsymbol{\theta}} > 0$  are large enough such that $R_\omega \ge \|\omega^{\mu}\|$ and $R_{\boldsymbol{\theta}} \ge \|\boldsymbol{\theta^{*}}\|$. Following the analysis of the average-reward TD($\lambda$) in \citep{zhang2021finite}, we consider the following auxiliary iterates
\begin{align}
    \widehat{\omega}_{t+1}&=\Pi_{R_\omega} \left\lbrace\widehat{\omega}_t + \frac{c_\alpha \beta_t}{1+c_\alpha \beta_t}\left( R^\mu_t -\widehat{\omega}_t\right)\right\rbrace, \label{APPEND_IMP_REWARD_UPDATE}\\
    \widehat{\pmb{\theta}}_{t+1}&= \Pi_{\mathbb{O}}\left[\Pi_{R_{\boldsymbol{\theta}}} \left\lbrace \widehat{\pmb{\theta}}_t + \frac{\beta_t}{1 + \beta_t \|\boldsymbol{z}_t\|^2} \left( R^\mu_t - \widehat{\omega}_t + \boldsymbol{\phi}_{t+1}^\top \widehat{\pmb{\theta}}_t - \boldsymbol{\phi}_t^\top \widehat{\pmb{\theta}}_t \right) \boldsymbol{z}_t\right\rbrace\right] \label{APPEND_IMP_PRIMARY_UPDATE},
\end{align}
to facilitate finite-time error analysis. Here, the space $\mathbb O$ is the orthogonal complement of the space generated by $\boldsymbol{\theta_e}$ where $\boldsymbol{\Phi}\boldsymbol{\theta_e} = \boldsymbol{e}$. In short, when considering the primary iterate, any deviance in the direction of $\boldsymbol{\theta_e}$ will be ignored under $\Pi_{\mathbb{O}}$. Using the matrix notations we introduced, we can now succinctly write both \eqref{APPEND_IMP_REWARD_UPDATE} and \eqref{APPEND_IMP_PRIMARY_UPDATE} as
$$
\widehat{\boldsymbol{\Theta}}_{t+1} = \boldsymbol{\Pi}\left[\Pi_{R_{\boldsymbol{\Theta}}}\left\lbrace\widehat{\boldsymbol{\Theta}}_{t} + \beta_t \boldsymbol{D}_t\left(\boldsymbol{ A_t}\widehat{\boldsymbol{\Theta}}_{t} + \boldsymbol{b}_t\right) \right\rbrace\right],
$$
where 
$
\Pi_{R_{\boldsymbol{\Theta}}}(\widehat{\boldsymbol{\Theta}}) := \left[
\Pi_{R_\omega} (\widehat{\omega}), 
\Pi_{R_{\boldsymbol{\theta}}}(\widehat{\pmb{\theta}})
 \right]^\top
$ for $\widehat{\boldsymbol{\Theta}} = [
\widehat{\omega}, 
\widehat{\pmb{\theta}}]^\top$, $R_{\boldsymbol{\Theta}} = \sqrt{R_\omega^2 + R^2_{\boldsymbol{\theta}}}$. Since $\boldsymbol{\Theta^*} = \boldsymbol{\Pi}\left(\Pi_{R_{\boldsymbol{\Theta}}}\boldsymbol{\Theta^*}\right)$, we have
\begin{align}
\|\boldsymbol{\Theta^*} - \widehat{\boldsymbol{\Theta}}_{t+1}\|^2&= \left\|\boldsymbol{\Pi}\left(\Pi_{R_{\boldsymbol{\Theta}}}\boldsymbol{\Theta^*}\right) - \boldsymbol{\Pi}\left[\Pi_{R_{\boldsymbol{\Theta}}}\left\lbrace\widehat{\boldsymbol{\Theta}}_{t} + \beta_t \boldsymbol{D}_t\left(\boldsymbol{A}_t \widehat{\boldsymbol{\Theta}}_{t}+ \boldsymbol{b}_t\right)\right\rbrace\right] \right\|^2 \nonumber \\ &\le \left\|\Pi_{R_{\boldsymbol{\Theta}}}\boldsymbol{\Theta^*} - \Pi_{R_{\boldsymbol{\Theta}}}\left\lbrace\widehat{\boldsymbol{\Theta}}_{t} + \beta_t \boldsymbol{D}_t\left(\boldsymbol{ A_t} \widehat{\boldsymbol{\Theta}}_{t}+ \boldsymbol{b}_t\right)\right\rbrace \right\|^2 \nonumber \\
&\le \left\|\boldsymbol{\Theta^*} - \left[\widehat{\boldsymbol{\Theta}}_{t} + \beta_t \boldsymbol{D}_t\left(\boldsymbol{A}_t\widehat{\boldsymbol{\Theta}}_{t} + \boldsymbol{b}_t\right)\right] \right\|^2 \nonumber \\
&\le \left\|\boldsymbol{\Theta^*} - \widehat{\boldsymbol{\Theta}}_{t} \right\|^2 \underbrace{- 2\beta_t \left[\boldsymbol{D}_t\left(\boldsymbol{A}_t\widehat{\boldsymbol{\Theta}}_{t} + \boldsymbol{b}_t\right) \right]^\top \left( \boldsymbol{\Theta^*} - \widehat{\boldsymbol{\Theta}}_{t}\right)}_{(*)} + \underbrace{\beta_t^2 \left\|\boldsymbol{D}_t\left(\boldsymbol{A}_t\widehat{\boldsymbol{\Theta}}_{t} + \boldsymbol{b}_t\right) \right\|^2}_{(**)}, \label{fin_upper_bound}
\end{align}
where the first inequality is due to non-expansiveness of the operator $\boldsymbol{\Pi}$ and the second inequality is due to non-expansiveness of the projection operator $\Pi_{R_{\boldsymbol{\Theta}}}$. We first obtain an upper bound of the expression in $(*)$. To this end, note that
\begin{align*}
(*) &= - 2\beta_t \left\lbrace \boldsymbol{D}_t\left(\boldsymbol{A}_t\widehat{\boldsymbol{\Theta}}_{t} - \boldsymbol{A}_t\boldsymbol{\Theta^*} + \boldsymbol{A}_t\boldsymbol{\Theta^*} - \boldsymbol{A}\boldsymbol{\Theta^*} +  \boldsymbol{A}\boldsymbol{\Theta^*} + \boldsymbol{b}_t\right) \right\rbrace^\top \left(\boldsymbol{\Theta^*} - \widehat{\boldsymbol{\Theta}}_{t}\right) \nonumber \\
& = 2\beta_t (\boldsymbol{\Theta^*}-\widehat{\boldsymbol{\Theta}}_{t})^\top \boldsymbol{A}_t^\top \boldsymbol{D}_t  \left( \boldsymbol{\Theta^*} - \widehat{\boldsymbol{\Theta}}_{t}\right) - 2\beta_t \left\{\left(\boldsymbol{A}_t - \boldsymbol{A}\right)\boldsymbol{\Theta^*} + \left( \boldsymbol{b}_t-\boldsymbol{b}\right)\right\}^\top \boldsymbol{D}_t(\boldsymbol{\Theta^*}-\widehat{\boldsymbol{\Theta}}_{t}) \nonumber \\
& = 2\beta_t (\boldsymbol{\Theta^*}-\widehat{\boldsymbol{\Theta}}_{t})^\top \boldsymbol{A}_t^\top \underline{\boldsymbol{D}_t}  \left( \boldsymbol{\Theta^*} - \widehat{\boldsymbol{\Theta}}_{t}\right) - 2\beta_t \left\{\left(\boldsymbol{A}_t - \boldsymbol{A}\right)\boldsymbol{\Theta^*} + \left(\boldsymbol{b}_t-\boldsymbol{b}\right)\right\}^\top \underline{\boldsymbol{D}_t}(\boldsymbol{\Theta^*}-\widehat{\boldsymbol{\Theta}}_{t}) \\
&\quad + 2\beta_t (\boldsymbol{\Theta^*}-\widehat{\boldsymbol{\Theta}}_{t})^\top \boldsymbol{A}_t^\top (\boldsymbol{D}_t - \underline{\boldsymbol{D}_t})  \left( \boldsymbol{\Theta^*} - \widehat{\boldsymbol{\Theta}}_{t}\right) - 2\beta_t \left\{\left(\boldsymbol{A}_t - \boldsymbol{A}\right)\boldsymbol{\Theta^*} + \left(\boldsymbol{b}_t-\boldsymbol{b}\right)\right\}^\top (\boldsymbol{D}_t-\underline{\boldsymbol{D}_t})(\boldsymbol{\Theta^*}-\widehat{\boldsymbol{\Theta}}_{t})
\end{align*}
where the second equality follows from $\boldsymbol{A} \boldsymbol{\Theta^*} =\boldsymbol{b}$. We bound each term in the last expression separately. For the first term, note that
\begin{align*}
(\boldsymbol{\Theta^*}-\widehat{\boldsymbol{\Theta}}_{t})^\top \boldsymbol{A}_t^\top  \underline{\boldsymbol{D}_t}\left(\boldsymbol{\Theta^*} - \widehat{\boldsymbol{\Theta}}_{t}\right) &= (\boldsymbol{\Theta^*}-\widehat{\boldsymbol{\Theta}}_{t})^\top \left(\boldsymbol{A}_t^\top-\boldsymbol{A}^\top \right)\underline{\boldsymbol{D}_t}\left( \boldsymbol{\Theta^*} - \widehat{\boldsymbol{\Theta}}_{t}\right) +  (\boldsymbol{\Theta^*}-\widehat{\boldsymbol{\Theta}}_{t})^\top \boldsymbol{A}^\top \underline{\boldsymbol{D}_t} \left( \boldsymbol{\Theta^*} -\widehat{\boldsymbol{\Theta}}_{t}\right)\\
&\le \left(\boldsymbol{\Theta^*}-\widehat{\boldsymbol{\Theta}}_{t}\right)^\top \left(\boldsymbol{A}_t^\top-\boldsymbol{A}^\top \right) \underline{\boldsymbol{D}_t} \left(\boldsymbol{\Theta^*} - \widehat{\boldsymbol{\Theta}}_{t}\right) -\frac{\Delta \gamma_t}{2} \|\boldsymbol{\Theta^*}-\widehat{\boldsymbol{\Theta}}_{t}\|^2,   
\end{align*}
where the first inequality is the direct consequence of Lemma \ref{LEMMA:NEG_DEF}, i.e., 
$$
(\boldsymbol{\Theta^*}-\widehat{\boldsymbol{\Theta}}_{t})^\top \boldsymbol{A}^\top \left( \boldsymbol{\Theta^*} -\widehat{\boldsymbol{\Theta}}_{t}\right)
\le -\frac{\Delta}{2} \|\boldsymbol{\Theta^*}-\widehat{\boldsymbol{\Theta}}_{t}\|^2, \quad (\boldsymbol{\Theta^*}-\widehat{\boldsymbol{\Theta}}_{t}) \in \mathbb{R}\times \mathbb{O},
$$
as long as $c_\alpha \ge \Delta + \sqrt{\frac{1}{\Delta^2 (1-\lambda)^4} - \frac{1}{(1-\lambda)^2}}$. Therefore, we obtain the following bound for the first term
\begin{equation}\label{First_First_Bound}
    2\beta_t  (\boldsymbol{\Theta^*}-\widehat{\boldsymbol{\Theta}}_{t})^\top \boldsymbol{A}_t^\top\underline{\boldsymbol{D}_t} \left(\boldsymbol{\Theta^*} - \widehat{\boldsymbol{\Theta}}_{t}\right) \le  2\beta_t\xi_{t}(\widehat{\boldsymbol{\Theta}}_{t}, \boldsymbol{X}_t)  -\beta_t\gamma_{t}\Delta \|\boldsymbol{\Theta^*}-\widehat{\boldsymbol{\Theta}}_{t}\|^2
\end{equation}
which holds almost surely. For the second term, notice that
\begin{equation}\label{First_Second_Bound}
    - 2\beta_t \left\{\left(\boldsymbol{A}_t -\boldsymbol{A}\right)\boldsymbol{\Theta^*} + \left(\boldsymbol{b}_t-\boldsymbol{b}\right)\right\}^\top\underline{\boldsymbol{D}_t}(\boldsymbol{\Theta^*}-\widehat{\boldsymbol{\Theta}}_{t}) = -2\beta_t \zeta_{t}(\widehat{\boldsymbol{\Theta}}_{t}, \boldsymbol{X}_t).
\end{equation}
For the last two terms, applying Cauchy-Schwarz inequality with $\|\boldsymbol{D}_t-\underline{\boldsymbol{D}_t}\| \le \frac{(1+c_\alpha)\beta_t}{(1-\lambda)^2}$ (see Lemma \ref{LEMMA:D_Dt_bound}) gives us
\begin{align}
   2\beta_t (\boldsymbol{\Theta^*}-\widehat{\boldsymbol{\Theta}}_{t})^\top \boldsymbol{A}_t^\top (\boldsymbol{D}_t - \underline{\boldsymbol{D}_t})  \left( \boldsymbol{\Theta^*} - \widehat{\boldsymbol{\Theta}}_{t}\right) &\le \frac{8 R_{\boldsymbol{\Theta}}^2 A_{\max}(1+c_\alpha)\beta_t^2}{(1-\lambda)^2} \label{First_Third_Bound} \\
   - 2\beta_t \left\{\left(\boldsymbol{A}_t - \boldsymbol{A}\right)\boldsymbol{\Theta^*} + \left(\boldsymbol{b}_t-\boldsymbol{b}\right)\right\}^\top (\boldsymbol{D}_t-\underline{\boldsymbol{D}_t})(\boldsymbol{\Theta^*}-\widehat{\boldsymbol{\Theta}}_{t}) 
    &\le \frac{8 R_{\boldsymbol{\Theta}}(A_{\max}R_{\boldsymbol{\Theta}} + b_{\max})(1+c_\alpha)\beta_t^2}{(1-\lambda)^2} \label{First_Fourth_Bound}
\end{align}
where $A_{\max}:= \sqrt{c_\alpha^2 + \frac{5}{(1-\lambda)^2}}$ and $b_{\max}:= \sqrt{c_\alpha^2 + \frac{1}{(1-\lambda)^2}}$, which respectively serves as a uniform bound on $\|\boldsymbol{A}_t\|$ and $\|\boldsymbol{b}_t\|$.
Combining \eqref{First_First_Bound}, \eqref{First_Second_Bound}, \eqref{First_Third_Bound} and \eqref{First_Fourth_Bound}, we get
\begin{equation}\label{First_Bound}
    (*) \le  -\beta_t\gamma_t\Delta \|\boldsymbol{\Theta^*}-\widehat{\boldsymbol{\Theta}}_{t}\|^2 + 2\beta_t \xi_t(\widehat{\boldsymbol{\Theta}}_{t}, \boldsymbol{X}_t) -2\beta_t \zeta_{t}(\widehat{\boldsymbol{\Theta}}_{t}, \boldsymbol{X}_t) + G_1 \beta_t^2,
\end{equation}
where $G_1 = \{8R_{\boldsymbol{\Theta}}(2A_{\max}R_{\boldsymbol{\Theta}} + b_{\max})(1+c_\alpha)\}/(1-\lambda)^2$. 

Next, we obtain an upper bound of the expression in $(**)$. Thanks to the fact $\|\boldsymbol{D}_t\| \le 1$, we get
\begin{equation}\label{Second_Bound}
    (**) \le \beta_t^2 \|\boldsymbol{\boldsymbol{A}_t} \widehat{\boldsymbol{\Theta}}_{t} + \boldsymbol{\boldsymbol{b}_t}\|^2\le 2\beta_t^2 \left( A_{\max}^2 R_{\boldsymbol{\Theta}}^2 + b_{\max}^2\right) =:G_2\beta_t^2.
\end{equation}
Combining \eqref{First_Bound} and \eqref{Second_Bound}, we have
\begin{align*}
\|\boldsymbol{\Theta^*} - \widehat{\boldsymbol{\Theta}}_{t+1}\|^2 \le  (1-\beta_t\gamma_t\Delta )\|\boldsymbol{\Theta^*}-\widehat{\boldsymbol{\Theta}}_{t}\|^2 +2\beta_t\xi_{t}(\widehat{\boldsymbol{\Theta}}_{t}, \boldsymbol{X}_t) -2\beta_t \zeta_{t}(\widehat{\boldsymbol{\Theta}}_{t}, \boldsymbol{X}_t)+G\beta_t^2,
\end{align*}
with $G = G_1 + G_2$. Taking expectations of both sides, we have
\begin{align}
\mathbb{E}^\mu\|\boldsymbol{\Theta^*} - \widehat{\boldsymbol{\Theta}}_{t+1}\|^2 &\le  (1-\beta_t\gamma_t\Delta )\mathbb{E}^\mu\|\boldsymbol{\Theta^*}-\widehat{\boldsymbol{\Theta}}_{t}\|^2 +2\beta_t \mathbb{E}^\mu\xi_t(\widehat{\boldsymbol{\Theta}}_{t}, \boldsymbol{X}_t) -2\beta_t \mathbb{E}^\mu\zeta_t(\widehat{\boldsymbol{\Theta}}_{t}, \boldsymbol{X}_t)+G\beta_t^2 \nonumber \\
&\le  (1-\beta_t\gamma_t\Delta )\mathbb{E}^\mu\|\boldsymbol{\Theta^*}-\widehat{\boldsymbol{\Theta}}_{t}\|^2 +2\beta_t \left|\mathbb{E}^\mu\xi_t(\widehat{\boldsymbol{\Theta}}_{t}, \boldsymbol{X}_t)\right|+2\beta_t \left|\mathbb{E}^\mu\zeta_t(\widehat{\boldsymbol{\Theta}}_{t}, \boldsymbol{X}_t)\right|+G\beta_t^2 \nonumber \\
&\le \left\lbrace\prod_{i=0}^t(1-\beta_i\gamma_i\Delta )\right\rbrace \|\boldsymbol{\Theta^*}-\widehat{\boldsymbol{\Theta}}_{0}\|^2 \nonumber \\ & \quad + 2\sum_{i=0}^t \left\lbrace\prod_{k=i+1}^t(1-\beta_k\gamma_k\Delta )\right\rbrace\beta_i \left\lbrace \left|\mathbb{E}^\mu\xi_i(\widehat{\boldsymbol{\Theta}}_{i}, \boldsymbol{X}_i)\right| + \left|\mathbb{E}^\mu\zeta_i(\widehat{\boldsymbol{\Theta}}_{i}, \boldsymbol{X}_i)\right|\right\rbrace \nonumber \\ & \quad + G \sum_{i=0}^t \left\lbrace\prod_{k=i+1}^t(1-\beta_k\gamma_k\Delta )\right\rbrace\beta_i^2 \label{Theta_iterate_prelim_bound}
\end{align}
where we have used the identity $1-\beta_i\gamma_i\Delta \in (0,1)$ for all $i \in \mathbb{N}$, which is true by the assumption $c_\alpha \ge \Delta + \sqrt{\frac{1}{\Delta^2 (1-\lambda)^4} - \frac{1}{(1-\lambda)^2}}$.

\begin{theorem}\label{THM:FIN_TIME_BOUND_CONST_APPEND}
Suppose the Markov chain $\{S^\mu_t\}_{t \in \mathbb{N}}$ is uniformly geometrically ergodic with a rate parameter $\rho \in (0,1)$ and the step-size ratio parameter is chosen to satisfy $c_\alpha \ge \Delta + \sqrt{\frac{1}{\Delta^2 (1-\lambda)^4} - \frac{1}{(1-\lambda)^2}}$. With $\lambda \in [0,1)$ and constant step-size $\beta_t = \beta$, the iterates of the projected average-reward implicit TD($\lambda$) algorithm satisfy the following finite-time error bound
\begin{align*}
\mathbb{E}^\mu\|\boldsymbol{\Theta^*} - \widehat{\boldsymbol{\Theta}}_{t+1}\|^2 \lesssim (1-\beta\gamma\Delta)^{t+1} \|\boldsymbol{\Theta^*}-\widehat{\boldsymbol{\Theta}}_{0}\|^2  + \mathcal{O}\left(\beta\tau_{\beta} + h^{\tau_{\beta}} + \beta t \tau_{\beta} h^t \right), \quad t \ge 0
\end{align*}
where
$
\gamma = \min\left\{\frac{1}{1+c_\alpha \beta}, \frac{(1-\lambda)^2}{(1-\lambda)^2 + \beta}\right\}~\text{and}~ h = \max\left\{1-\beta\gamma\Delta, ~\rho, ~\lambda \right\}.
$
\end{theorem}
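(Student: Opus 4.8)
The plan is to specialize the unrolled recursion \eqref{Theta_iterate_prelim_bound} to the constant schedule and then spend essentially all of the effort controlling the Markovian bias terms. Setting $\beta_t\equiv\beta$ forces $\gamma_t\equiv\gamma$, so each factor $\prod_{k=i+1}^t(1-\beta_k\gamma_k\Delta)$ collapses to $(1-\beta\gamma\Delta)^{t-i}$ and $\prod_{i=0}^t(1-\beta_i\gamma_i\Delta)=(1-\beta\gamma\Delta)^{t+1}$. The initial-error term then already reproduces $(1-\beta\gamma\Delta)^{t+1}\|\boldsymbol{\Theta}^*-\widehat{\boldsymbol{\Theta}}_0\|^2$, while the squared-step-size term is dispatched by the geometric bound $\sum_{i=0}^t(1-\beta\gamma\Delta)^{t-i}\le(\beta\gamma\Delta)^{-1}$, giving $G\beta^2(\beta\gamma\Delta)^{-1}=\mathcal{O}(\beta)$, which folds into $\mathcal{O}(\beta\tau_\beta)$ since $\tau_\beta\ge1$. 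What remains is the weighted sum $2\beta\sum_{i=0}^t(1-\beta\gamma\Delta)^{t-i}\{|\mathbb{E}^\mu\xi_i(\widehat{\boldsymbol{\Theta}}_i,\boldsymbol{X}_i)|+|\mathbb{E}^\mu\zeta_i(\widehat{\boldsymbol{\Theta}}_i,\boldsymbol{X}_i)|\}$, which carries the whole difficulty.

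Both $\xi_i$ and $\zeta_i$ would vanish in expectation if $\boldsymbol{X}_i$ were drawn from $\boldsymbol{\pi}^\mu$ and independent of the iterate, because $\mathbb{E}^{\boldsymbol{\pi}^\mu}[\boldsymbol{A}(\boldsymbol{X}_i)]=\boldsymbol{A}$ and $\mathbb{E}^{\boldsymbol{\pi}^\mu}[\boldsymbol{b}(\boldsymbol{X}_i)]=\boldsymbol{b}$; the residual is exactly what the mixing time measures. First I would record the ingredients that the projection and the update structure make available: a uniform bound $|\xi_i|+|\zeta_i|\le C_0$ (from $\|\widehat{\boldsymbol{\Theta}}_i\|\le R_{\boldsymbol{\Theta}}$, $\|\underline{\boldsymbol{D}_t}\|\le1$, and the uniform bounds $A_{\max},b_{\max}$ on $\|\boldsymbol{A}_t\|,\|\boldsymbol{b}_t\|$); a Lipschitz estimate $|\xi_i(\boldsymbol{\Theta}_1,\boldsymbol{X})-\xi_i(\boldsymbol{\Theta}_2,\boldsymbol{X})|\le L\|\boldsymbol{\Theta}_1-\boldsymbol{\Theta}_2\|$ on the ball (and likewise for $\zeta_i$); a slow-drift bound $\|\widehat{\boldsymbol{\Theta}}_i-\widehat{\boldsymbol{\Theta}}_{i-\tau}\|\le\mathcal{O}(\beta\tau)$, immediate because each increment has norm $\le\beta(A_{\max}R_{\boldsymbol{\Theta}}+b_{\max})$; and the eligibility-trace truncation $\|\boldsymbol{z}_i-\boldsymbol{z}_{i-\tau:i}\|\le\lambda^{\tau+1}/(1-\lambda)$.

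With $\tau=\tau_\beta$ fixed, I would bound each late index $i\ge\tau_\beta$ through the telescoping $\xi_i(\widehat{\boldsymbol{\Theta}}_i,\boldsymbol{X}_i)\to\xi_i(\widehat{\boldsymbol{\Theta}}_{i-\tau_\beta},\boldsymbol{X}_i)\to\xi_i(\widehat{\boldsymbol{\Theta}}_{i-\tau_\beta},\boldsymbol{X}_{i-\tau_\beta:i})$ (and identically for $\zeta_i$): the first replacement costs $\mathcal{O}(\beta\tau_\beta)$ by drift and Lipschitzness, the second costs $\mathcal{O}(\lambda^{\tau_\beta})$ by trace truncation, and the third is handled by conditioning on the history $\mathcal{F}_{i-\tau_\beta}$ up to time $i-\tau_\beta$ — since $\widehat{\boldsymbol{\Theta}}_{i-\tau_\beta}$ is then frozen and $\boldsymbol{X}_{i-\tau_\beta:i}$ depends only on the window $S_{i-\tau_\beta},\dots,S_{i+1}$, uniform geometric ergodicity forces the conditional expectation within $\mathcal{O}(\rho^{\tau_\beta})$ of its stationary value, itself $\mathcal{O}(\lambda^{\tau_\beta})$ by truncation. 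Hence $|\mathbb{E}^\mu\xi_i|+|\mathbb{E}^\mu\zeta_i|\le C(\beta\tau_\beta+h^{\tau_\beta})$ for $i\ge\tau_\beta$, with $h=\max\{1-\beta\gamma\Delta,\rho,\lambda\}$ absorbing both $\rho^{\tau_\beta}$ and $\lambda^{\tau_\beta}$; summing these against the geometric weights yields the $\mathcal{O}(\beta\tau_\beta+h^{\tau_\beta})$ floor. For the early indices $i<\tau_\beta$, where no look-back is available, I would fall back on the uniform bound $C_0$, so that their contribution is $\le 2\beta C_0\sum_{i=0}^{\tau_\beta}(1-\beta\gamma\Delta)^{t-i}$, which the geometric factor damps into the transient term $\mathcal{O}(\beta\tau_\beta t h^{t})$. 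Collecting the three pieces gives the stated bound.

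The main obstacle is the third step of the telescoping, namely showing the conditional bias decays geometrically. It is genuinely delicate because the eligibility trace has infinite memory, so $\boldsymbol{X}_i$ is never a clean function of a bounded window; the remedy is to separate the two memory mechanisms — the geometric tail of the trace, controlled by $\lambda^{\tau_\beta}$, and the mixing of the state chain, controlled by $\rho^{\tau_\beta}$ through uniform geometric ergodicity — and to handle the coupling between the frozen iterate $\widehat{\boldsymbol{\Theta}}_{i-\tau_\beta}$ and the starting state $S_{i-\tau_\beta}$ by conditioning before invoking ergodicity. The remaining work — fixing the uniform and Lipschitz constants explicitly and assembling the early/late split so that the three residual terms appear with the correct dependence on $\tau_\beta$ and $t$ — is routine bookkeeping once these estimates are in place.
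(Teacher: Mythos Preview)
Your plan follows the paper's proof closely: both specialize \eqref{Theta_iterate_prelim_bound} to the constant schedule, sum the geometric factors, and control the bias terms $\xi_i,\zeta_i$ via a look-back/telescope that separates iterate drift ($\mathcal O(\beta\tau_\beta)$), trace truncation ($\mathcal O(\lambda^{\tau_\beta})$), and chain mixing ($\mathcal O(\rho^{\tau_\beta})$). Two technical points differ from the paper and the second needs tightening. First, the paper looks back $2\tau_\beta$ for the iterate while truncating the trace only over the last $\tau_\beta$ steps; the extra gap between $\widehat{\boldsymbol{\Theta}}_{i-2\tau_\beta}$ and the window start $S_{i-\tau_\beta}$ lets it decouple the two via an independent-marginals argument (Lemma~9 of \citep{bhandari2018finite}) rather than conditioning, which cleanly isolates a $\rho^{\tau_\beta}$ penalty from the residual marginal bias $\tau_\beta q^i+\lambda^{\tau_\beta}$ coming from Lemmas~\ref{LEMMA:exp_A_bound}--\ref{LEMMA:exp_b_bound}; the $\tau_\beta q^i$ piece is what produces the stated $\beta\tau_\beta t h^t$ term after summing. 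Your single-$\tau_\beta$ conditioning is valid but the conditional bias fuses the two mechanisms into $\sum_{j}\lambda^{\tau_\beta-j}\rho^{j}\lesssim \tau_\beta q^{\tau_\beta}$ rather than $\rho^{\tau_\beta}$ alone. Second, and more important, for the early indices $i<\tau_\beta$ the paper does \emph{not} fall back on the uniform bound $C_0$: it looks back all the way to the deterministic $\widehat{\boldsymbol{\Theta}}_0$ and obtains $|\mathbb E^\mu\xi_i|+|\mathbb E^\mu\zeta_i|\lesssim \tau_\beta\beta+iq^i$ (the ``otherwise'' cases of Lemmas~\ref{LEMMA:expect_zeta_bound}--\ref{LEMMA:expect_xi_bound}). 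The $\beta$ factor lets the geometric sum absorb these into $\mathcal O(\beta\tau_\beta)$, and the $q^i$ factor pairs with $(1-\beta\gamma\Delta)^{t-i}$ to give $h^t$ directly. Your uniform bound yields an early-index contribution of order $\beta\tau_\beta(1-\beta\gamma\Delta)^{t-\tau_\beta}$, and claiming this is $\mathcal O(\beta\tau_\beta t h^t)$ requires $h^{-\tau_\beta}\lesssim t$, which is not uniform in $\beta$; using the paper's refinement for small $i$ closes this cleanly.
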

\begin{proof}
Starting from \eqref{Theta_iterate_prelim_bound}, we have
\begin{align*}
\mathbb{E}^\mu\|\boldsymbol{\Theta^*} - \widehat{\boldsymbol{\Theta}}_{t+1}\|^2 &\le (1-\beta\gamma\Delta )^{t+1} \|\boldsymbol{\Theta^*}-\widehat{\boldsymbol{\Theta}}_{0}\|^2 \\ &+ 2\beta\sum_{i=0}^t (1-\beta\gamma\Delta )^{t-i} \left\lbrace \left|\mathbb{E}^\mu\xi_i(\widehat{\boldsymbol{\Theta}}_{i}, \boldsymbol{X}_i)\right| + \left|\mathbb{E}^\mu\zeta_i(\widehat{\boldsymbol{\Theta}}_{i}, \boldsymbol{X}_i)\right|\right\rbrace  
\\ &+ G\beta^2 \sum_{i=0}^t (1-\beta\gamma\Delta )^{t-i}   \\
&\lesssim (1-\beta\gamma\Delta )^{t+1} \|\boldsymbol{\Theta^*}-\widehat{\boldsymbol{\Theta}}_{0}\|^2  \\&\quad + \beta^2\tau_{\beta} \sum_{i=0}^{2\tau_{\beta}} (1-\beta\gamma\Delta )^{t-i}  + \beta\tau_\beta\sum_{i=0}^{2\tau_{\beta}} (1-\beta\gamma\Delta )^{t-i} q^i \\
&\quad + \beta(\beta\tau_{\beta} + q^{\tau_{\beta}})\sum_{i=2\tau_{\beta}+1}^{t} (1-\beta\gamma\Delta )^{t-i} + \beta\tau_{\beta} \sum_{i=2\tau_{\beta}+1}^{t} (1-\beta\gamma\Delta )^{t-i}  q^i  \\& \quad +\beta^2 \sum_{i=0}^t (1-\beta\gamma\Delta )^{t-i} \\
&\lesssim (1-\beta\gamma\Delta)^{t+1} \|\boldsymbol{\Theta^*}-\widehat{\boldsymbol{\Theta}}_{0}\|^2  + \frac{\beta\tau_{\beta} }{\gamma\Delta}  + \beta\tau_\beta^2 h^t + \frac{\beta\tau_{\beta} + q^{\tau_{\beta}}}{\gamma\Delta} + \beta \tau_{\beta} t  h^t   +\frac{\beta}{\gamma\Delta} 
\end{align*}
where in the second inequality, we used Lemma \ref{LEMMA:expect_zeta_bound} and Lemma \ref{LEMMA:expect_xi_bound} with $q = \max\{\rho, \lambda\}$. In the last inequality, we used $h = \max\left\{1-\beta\gamma\Delta, ~\rho, ~\lambda \right\}$. Summarizing the terms, we get
\begin{align*}
\mathbb{E}^\mu\|\boldsymbol{\Theta^*} - \widehat{\boldsymbol{\Theta}}_{t+1}\|^2 \lesssim (1-\beta\gamma\Delta)^{t+1}\|\boldsymbol{\Theta^*}-\widehat{\boldsymbol{\Theta}}_{0}\|^2 +\mathcal{O}\left(\beta\tau_{\beta} + h^{\tau_{\beta}} + \beta \tau_{\beta} t h^t \right).
\end{align*}
\end{proof}

\begin{theorem}\label{THM:FIN_TIME_BOUND_DECR_APPEND}
Suppose the Markov chain $\{S^\mu_t\}_{t \in \mathbb{N}}$ is uniformly geometrically ergodic with a rate parameter $\rho \in (0,1)$ and the step-size ratio parameter is chosen to satisfy $c_\alpha \ge \Delta + \sqrt{\frac{1}{\Delta^2 (1-\lambda)^4} - \frac{1}{(1-\lambda)^2}}$. With $\lambda \in [0,1)$ and decreasing step-sizes $\beta_t = \frac{\beta_0}{(t+1)^s}, ~s \in (0,1)$, the iterates of the projected average-reward implicit TD($\lambda$) algorithm satisfy the following finite-time error bound, for $t \ge 0$, 
\begin{align*}
\mathbb{E}^\mu\|\boldsymbol{\Theta^*} - \widehat{\boldsymbol{\Theta}}_{t+1}\|^2 \lesssim \exp\left[-\frac{\Delta \gamma_0 \beta_0}{1-s}\{(1+t)^{1-s}-1\} \right] \|\boldsymbol{\Theta^*} - \widehat{\boldsymbol{\Theta}}_{0}\|^2 + \mathcal{O}\left\lbrace\tau_{\beta_t}t \exp\left(-ct^{1-s}\right) + \tau_{\beta_t} t^{-s} + q^{\tau_{\beta_t}}\right\rbrace,
\end{align*}
for some constant $c > 0$,
$
\gamma_{0} = \min\left\{\frac{1}{1+c_\alpha \beta_0}, \frac{(1-\lambda)^2}{(1-\lambda)^2 + \beta_0}\right\}~\text{and}~ q = \max\{\rho, \lambda \}.
$
\end{theorem}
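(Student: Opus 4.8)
The plan is to start from the general per-step recursion already established in \eqref{Theta_iterate_prelim_bound}, which was derived without reference to any particular step-size schedule and therefore applies verbatim to $\beta_t=\beta_0/(t+1)^s$. Thus the whole task reduces to specializing the three pieces of that bound—the contraction product $\prod_{i=0}^t(1-\beta_i\gamma_i\Delta)$ multiplying the initial error, the weighted noise sums involving $|\mathbb{E}^\mu\xi_i|$ and $|\mathbb{E}^\mu\zeta_i|$, and the weighted variance sum $G\sum_i\{\prod_{k>i}(1-\beta_k\gamma_k\Delta)\}\beta_i^2$—to the polynomial schedule, exactly as Theorem \ref{THM:FIN_TIME_BOUND_CONST_APPEND} specialized it to the constant schedule.

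First I would control the leading contraction factor. Since $\gamma_i=\min\{(1+c_\alpha\beta_i)^{-1},(1-\lambda)^2/((1-\lambda)^2+\beta_i)\}$ is non-decreasing as $\beta_i$ shrinks, we have $\gamma_i\ge\gamma_0$ for every $i$; combined with $1-x\le e^{-x}$ this yields $\prod_{i=0}^t(1-\beta_i\gamma_i\Delta)\le\exp(-\Delta\gamma_0\sum_{i=0}^t\beta_i)$. The integral comparison $\sum_{i=0}^t(i+1)^{-s}\ge\int_1^{t+1}x^{-s}\,dx=\{(t+1)^{1-s}-1\}/(1-s)$ then produces precisely the stretched-exponential prefactor $\exp[-\Delta\gamma_0\beta_0\{(1+t)^{1-s}-1\}/(1-s)]$ in front of $\|\boldsymbol{\Theta^*}-\widehat{\boldsymbol{\Theta}}_0\|^2$.

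Next I would bound the noise sums $\sum_{i=0}^t\{\prod_{k=i+1}^t(1-\beta_k\gamma_k\Delta)\}\beta_i(|\mathbb{E}^\mu\xi_i|+|\mathbb{E}^\mu\zeta_i|)$. Using the same exponential bound on the partial products, $\prod_{k=i+1}^t(1-\beta_k\gamma_k\Delta)\le\exp(-\Delta\gamma_0\sum_{k=i+1}^t\beta_k)$, together with the mixing-time estimates of Lemma \ref{LEMMA:expect_xi_bound} and Lemma \ref{LEMMA:expect_zeta_bound} with $q=\max\{\rho,\lambda\}$, I would split the index range at $2\tau_{\beta_t}$. The early block $i\le2\tau_{\beta_t}$ still carries the full contraction from the remaining steps, which I lower-bound by $\exp(-ct^{1-s})$ for a suitable $c>0$, giving the transient term $\tau_{\beta_t}\,t\,\exp(-ct^{1-s})$. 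In the late block $i>2\tau_{\beta_t}$ the chain is within the mixing tolerance, so the per-term noise is of order $\beta_{i-\tau_{\beta_t}}\tau_{\beta_t}+q^{\tau_{\beta_t}}$; summing against the decaying weights and invoking the effective-horizon estimate $\sum_i\beta_i\exp(-\Delta\gamma_0\sum_{k>i}\beta_k)\lesssim(\Delta\gamma_0)^{-1}$ delivers the stationary bias $\tau_{\beta_t}t^{-s}$ and the irreducible mixing error $q^{\tau_{\beta_t}}$. The variance sum $G\sum_i\{\prod_{k>i}(1-\beta_k\gamma_k\Delta)\}\beta_i^2$ is treated identically and is absorbed into the $\tau_{\beta_t}t^{-s}$ order.

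The main obstacle is this third step: unlike the constant-step case, the partial products $\prod_{k=i+1}^t(1-\beta_k\gamma_k\Delta)$ are no longer geometric in $t-i$, so the closed-form geometric sums of Theorem \ref{THM:FIN_TIME_BOUND_CONST_APPEND} are unavailable and must be replaced by sharp integral estimates—bounding $\sum_{k=i+1}^t\beta_k$ below by $\beta_0\{(t+1)^{1-s}-(i+1)^{1-s}\}/(1-s)$ and then showing the resulting weighted sums decay at the advertised rate $t^{-s}$ while the transient contracts faster than any polynomial. Additional care is needed because the mixing lemmas look back $\tau_{\beta_t}$ steps, so one must verify $\beta_{i-\tau_{\beta_t}}\asymp\beta_i$ up to constants (which holds since $\tau_{\beta_t}=\mathcal{O}(\log t)$ is lower order than $t$) and that the single largest mixing time $\tau_{\beta_t}$, corresponding to the smallest step-size $\beta_t$, uniformly dominates the index-dependent mixing times so that it can be pulled out of the sum without loss.
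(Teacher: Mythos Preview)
Your proposal is correct and follows essentially the same approach as the paper: start from \eqref{Theta_iterate_prelim_bound}, replace $1-\beta_i\gamma_i\Delta$ by $\exp(-\beta_i\gamma_i\Delta)$, use the monotonicity $\gamma_i\ge\gamma_0$, and then reduce all three pieces to integral estimates on sums of the form $\sum_i\exp(-\Delta\gamma_0\sum_{k>i}\beta_k)\beta_i^p$, invoking Lemmas~\ref{LEMMA:expect_zeta_bound}, \ref{LEMMA:expect_xi_bound}, \ref{LEMMA:prelim_exp_bound}, and \ref{LEMMA:prelim_all_time_exp_bound} exactly as you outline. One small clarification: the $t$ factor in $\tau_{\beta_t}\,t\,\exp(-ct^{1-s})$ does not arise from the early block $i\le 2\tau_{\beta_t}$ alone (that block has only $\mathcal{O}(\tau_{\beta_t})$ terms and contributes $\tau_{\beta_t}\exp(-ct^{1-s})$); rather, it comes from the $\tau_{\beta_t}q^i$ piece of the late-block noise, which the paper handles by a further split at $\lfloor t/2\rfloor$ so that the first half contributes $\tau_{\beta_t}\,t\,\exp(-ct^{1-s})$ and the tail contributes $\tau_{\beta_t}/t^s$.
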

\begin{proof}
Starting from \eqref{Theta_iterate_prelim_bound} with the identity $1-\beta_i\gamma_i\Delta \le \exp(-\beta_i\gamma_i\Delta)$ for all $i \in \mathbb{N}$, we have
\begin{align*}
\mathbb{E}^\mu\|\boldsymbol{\Theta^*} - \widehat{\boldsymbol{\Theta}}_{t+1}\|^2 &\le \exp\left(-\Delta\sum_{i=0}^t\beta_i\gamma_i\right) \|\boldsymbol{\Theta^*} - \widehat{\boldsymbol{\Theta}}_{0}\|^2 \\
& \quad+ 2\sum_{i=0}^t \exp\left(-\Delta\sum_{k=i+1}^t\beta_k\gamma_k\right) \beta_i \left|\mathbb{E}^\mu\xi_i(\widehat{\boldsymbol{\Theta}}_{i}, \boldsymbol{X}_i)+\mathbb{E}^\mu\zeta_i(\widehat{\boldsymbol{\Theta}}_{i}, \boldsymbol{X}_i)\right| \\
& \quad + G\sum_{i=0}^t \exp\left(-\Delta\sum_{k=i+1}^t\beta_k\gamma_k\right) \beta^2_i \\
&\le \exp\left(-\Delta \gamma_0 \sum_{i=0}^t\beta_i\right) \|\boldsymbol{\Theta^*} - \widehat{\boldsymbol{\Theta}}_{0}\|^2 \\
&\quad + 2\sum_{i=0}^t \exp\left(-\Delta\gamma_0\sum_{k=i+1}^t\beta_k\right) \beta_i \left|\mathbb{E}^\mu\xi_i(\widehat{\boldsymbol{\Theta}}_{i}, \boldsymbol{X}_i)+\mathbb{E}^\mu\zeta_i(\widehat{\boldsymbol{\Theta}}_{i}, \boldsymbol{X}_i)\right| \\
& \quad + G\sum_{i=0}^t \exp\left(-\Delta\gamma_0\sum_{k=i+1}^t\beta_k\right) \beta^2_i\\
\end{align*}
where the second inequality follows from the fact that $\gamma_i$ is increasing. The final expression can be re-expressed as
\begin{align}
 &\exp\left\lbrace-\Delta \gamma_0 \beta_0 \sum_{i=0}^t \frac{1}{(i+1)^s}\right\rbrace \|\boldsymbol{\Theta^*} - \widehat{\boldsymbol{\Theta}}_{0}\|^2 \label{decr_error_first_term}\\ &\quad + 2\sum_{i=0}^t \exp\left\lbrace-\Delta\gamma_0\beta_0\sum_{k=i+1}^t\frac{1}{(k+1)^s}\right\rbrace \beta_i \left|\mathbb{E}^\mu \xi_i(\widehat{\boldsymbol{\Theta}}_{i}, \boldsymbol{X}_i)+\mathbb{E}^\mu \zeta_i(\widehat{\boldsymbol{\Theta}}_{i}, \boldsymbol{X}_i)\right| \label{decr_error_second_term} \\
& \quad + G\sum_{i=0}^t \exp\left\lbrace-\Delta\gamma_0\beta_0\sum_{k=i+1}^t\frac{1}{(k+1)^s}\right\rbrace \beta^2_i.\label{decr_error_third_term}
\end{align}
The first term in \eqref{decr_error_first_term} admits the following bound
\begin{equation}\label{decreasing_first_bound}
 \exp\left\lbrace-\Delta \gamma_0 \beta_0 \sum_{i=0}^t \frac{1}{(i+1)^s}\right\rbrace \|\boldsymbol{\Theta^*} - \widehat{\boldsymbol{\Theta}}_{0}\|^2   \le \exp\left[-\frac{\Delta \gamma_0 \beta_0}{1-s}\{(1+t)^{1-s}-1\} \right] \|\boldsymbol{\Theta^{*}} - \widehat{\boldsymbol{\Theta}}_{0}\|^2.     
\end{equation}

\noindent For the second term in \eqref{decr_error_second_term}, we first note that 
\begin{align}
&2\sum_{i=0}^t \exp\left\lbrace-\Delta\gamma_0\beta_0\sum_{k=i+1}^t\frac{1}{(k+1)^s}\right\rbrace \beta_i \left|\mathbb{E}^\mu\xi_i(\widehat{\boldsymbol{\Theta}}_{i}, \boldsymbol{X}_i)+\mathbb{E}^\mu\zeta_i(\widehat{\boldsymbol{\Theta}}_{i}, \boldsymbol{X}_i)\right| \nonumber \\&\lesssim \sum_{i=0}^{2\tau_{\beta_t}} \exp\left\lbrace-\Delta\gamma_0\beta_0\sum_{k=i+1}^t\frac{1}{(k+1)^s}\right\rbrace \beta_i \left(\tau_{\beta_t}\beta_0 + iq^i\right) \nonumber \\ &\quad + \sum_{i=2\tau_{\beta_t}+1}^t \exp\left\lbrace-\Delta\gamma_0\beta_0\sum_{k=i+1}^t\frac{1}{(k+1)^s}\right\rbrace \beta_i \left(\tau_{\beta_t}\beta_{i-2\tau_{\beta_t}} + \tau_{\beta_t}q^i + q^{\tau_{\beta_t}} \right) \nonumber  \\
&\lesssim \tau_{\beta_t}\beta_0 \sum_{i=0}^{2\tau_{\beta_t}} \exp\left\lbrace-\Delta\gamma_0\beta_0\sum_{k=i+1}^t\frac{1}{(k+1)^s}\right\rbrace \beta_i \label{error_bd_decr_first} \\ 
&\quad + \tau_{\beta_t}\sum_{i=0}^t \exp\left\lbrace-\Delta\gamma_0\beta_0\sum_{k=i+1}^t\frac{1}{(k+1)^s}\right\rbrace \beta_i q^i \label{error_bd_decr_second}  \\ 
&\quad + \tau_{\beta_t}\sum_{i=2\tau_{\beta_t}+1}^t \exp\left\lbrace-\Delta\gamma_0\beta_0\sum_{k=i+1}^t\frac{1}{(k+1)^s}\right\rbrace \beta_i \beta_{i-2\tau_{\beta_t}} \label{error_bd_decr_third} \\
&\quad + q^{\tau_{\beta_t}} \sum_{i=2\tau_{\beta_t}+1}^t \exp\left\lbrace-\Delta\gamma_0\beta_0\sum_{k=i+1}^t\frac{1}{(k+1)^s}\right\rbrace \beta_i   \label{error_bd_decr_fourth} 
\end{align}
where we used Lemma \ref{LEMMA:expect_zeta_bound} and Lemma \ref{LEMMA:expect_xi_bound} in the first inequality. We first establish an upper bound on \eqref{error_bd_decr_first} using Lemma \ref{LEMMA:prelim_exp_bound}. 
\begin{align*}
    \tau_{\beta_t}\beta_0\sum_{i=0}^{2\tau_{\beta_t}} \exp\left\lbrace-\Delta\gamma_0\beta_0\sum_{k=i+1}^t\frac{1}{(k+1)^s}\right\rbrace \beta_i &\le \frac{\tau_{\beta_t}\beta_0  e^{\Delta \gamma_0\beta_0}}{\Delta \gamma_0} \exp\left[-\frac{\Delta \gamma_0 \beta_0}{(1-s)}\left\lbrace(1+t)^{1-s} - (1+2\tau_{\beta_t})^{1-s}\right\rbrace \right] \\
    &\lesssim \tau_{\beta_t}\exp\left[-\frac{\Delta \gamma_0 \beta_0}{(1-s)}\left\lbrace(1+t)^{1-s} - (1+2\tau_{\beta_t})^{1-s}\right\rbrace \right]
\end{align*}

\noindent Next, we obtain an upper bound on \eqref{error_bd_decr_second}. To this end, consider
\begin{align}
    \tau_{\beta_t}\sum_{i=0}^{t} \exp\left\lbrace-\Delta\gamma_0\beta_0\sum_{k=i+1}^t\frac{1}{(k+1)^s}\right\rbrace \beta_i q^i &= \tau_{\beta_t}\sum_{i=0}^{\floor{t/2}} \exp\left\lbrace-\Delta\gamma_0\beta_0\sum_{k=i+1}^t\frac{1}{(k+1)^s}\right\rbrace \beta_i q^i \label{decr_err_second_first}\\ 
    &\quad + \tau_{\beta_t}\sum_{i=\floor{t/2}+1}^{t} \exp\left\lbrace-\Delta\gamma_0\beta_0\sum_{k=i+1}^t\frac{1}{(k+1)^s}\right\rbrace \beta_i q^i. \label{decr_err_second_second}
\end{align}
The term in \eqref{decr_err_second_first} admits the following bound
\begin{align*}
   \eqref{decr_err_second_first}
   \le \tau_{\beta_t}\beta_0\sum_{i=0}^{\floor{t/2}} \exp\left\lbrace-\Delta\gamma_0\beta_0\sum_{k=\floor{t/2}+1}^t\frac{1}{(k+1)^s}\right\rbrace. 
\end{align*}
Since 
$$
\sum_{k=\floor{t/2}+1}^t\frac{1}{(k+1)^s} \ge \int_{\floor{t/2}+1}^t x^{-s}dx \ge \frac{t^{1-s} - (t/2 + 1)^{1-s}}{1-s} = \Omega\left(t^{1-s}\right),
$$
we have
$$
\eqref{decr_err_second_first} \lesssim \tau_{\beta_t} t \exp\left(-c t^{1-s} \right),
$$
for some constant $c > 0$. For the term in \eqref{decr_err_second_second}, we have
$$
\eqref{decr_err_second_second} \le \tau_{\beta_t}\sum_{i=\floor{t/2}+1}^{t}  \beta_i q^i \le \tau_{\beta_t}\sum_{i=\floor{t/2}+1}^{t} \frac{\beta_0}{(t/2)^s}q^i \lesssim \frac{\tau_{\beta_t}}{t^s}. 
$$
And hence, we get the bound for \eqref{error_bd_decr_second}, given by
$$
\tau_{\beta_t}\sum_{i=0}^{t} \exp\left\lbrace-\Delta\gamma_0\beta_0\sum_{k=i+1}^t\frac{1}{(k+1)^s}\right\rbrace \beta_i q^i \lesssim \tau_{\beta_t} t \exp\left(-c t^{1-s} \right)+ \frac{\tau_{\beta_t}}{t^s}.
$$
Next, we obtain an upper bound on \eqref{error_bd_decr_third}. From Lemma \ref{LEMMA:prelim_exp_bound},
\begin{align*}
 \tau_{\beta_t}\sum_{i=2\tau_{\beta_t}+1}^t \exp\left\lbrace-\Delta\gamma_0\beta_0\sum_{k=i+1}^t\frac{1}{(k+1)^s}\right\rbrace \beta_i \beta_{i-2\tau_{\beta_t}} \lesssim \tau_{\beta_t}\left(\exp\left[-\frac{\Delta \gamma_0\beta_0}{2(1-s)}\left\lbrace(t+1)^{1-s} - 1\right\rbrace \right] + \beta_{t-2\tau_{\beta_t}} \right).
\end{align*}

\noindent The only remaining term is the one in \eqref{error_bd_decr_fourth}, whose bound is given by 
\begin{align*}
q^{\tau_{\beta_t}} \sum_{i=2\tau_{\beta_t}+1}^t \exp\left\lbrace-\Delta\gamma_0\beta_0\sum_{k=i+1}^t\frac{1}{(k+1)^s}\right\rbrace \beta_i \lesssim q^{\tau_{\beta_t}},
\end{align*}
where we have used \eqref{prelim_second_pre}. Combining altogether to obtain a bound of \eqref{decr_error_second_term}, we get
\begin{align}
    \eqref{decr_error_second_term} &\lesssim \tau_{\beta_t}\exp\left[-\frac{\Delta \gamma_0 \beta_0}{(1-s)}\left\lbrace(1+t)^{1-s} - (1+2\tau_{\beta_t})^{1-s}\right\rbrace \right] \nonumber\\ &\quad + \tau_{\beta_t} t \exp\left(-c t^{1-s} \right)+ \frac{\tau_{\beta_t}}{t^s} \nonumber\\ &\quad + \tau_{\beta_t}\left(\exp\left[-\frac{\Delta \gamma_0\beta_0}{2(1-s)}\left\lbrace(1+t)^{1-s} - 1\right\rbrace \right] + \beta_{t-2\tau_{\beta_t}} \right) + q^{\tau_{\beta_t}} \nonumber\\
    & \quad \lesssim \tau_{\beta_t}t \exp\left(-ct^{1-s}\right) + \tau_{\beta_t} t^{-s} + q^{\tau_{\beta_t}} \label{decreasing_second_bound},
\end{align}
for some constant $c > 0$.\\

\noindent Lastly, from Lemma \ref{LEMMA:prelim_all_time_exp_bound}, the last term in \eqref{decr_error_third_term} is upper bounded by
\begin{align}
 G\sum_{i=0}^t \exp\left\lbrace-\Delta\gamma_0\beta_0\sum_{k=i+1}^t\frac{1}{(k+1)^s}\right\rbrace \beta^2_i &\lesssim \exp\left\lbrace-\frac{\Delta \gamma_0}{2}\beta_0\sum_{k=0}^t\frac{1}{(k+1)^s}\right\rbrace + \beta_t \nonumber \\
 &\lesssim  \exp\left[-\frac{\Delta \gamma_0 \beta_0}{1-s}\{(1+t)^{1-s}-1\} \right] + \beta_t .\label{decreasing_last_bound}
\end{align}
Combining \eqref{decreasing_first_bound}, \eqref{decreasing_second_bound} and \eqref{decreasing_last_bound}, we have
\begin{align*}
\mathbb{E}^\mu\|\boldsymbol{\Theta^*} - \widehat{\boldsymbol{\Theta}}_{t+1}\|^2&\lesssim \exp\left[-\frac{\Delta \gamma_0 \beta_0}{1-s}\{(1+t)^{1-s}-1\} \right] \|\boldsymbol{\Theta^*} - \widehat{\boldsymbol{\Theta}}_{0}\|^2 \\ &\quad+ \tau_{\beta_t}t \exp\left(-ct^{1-s}\right) + \tau_{\beta_t} t^{-s} + q^{\tau_{\beta_t}} \\ &\quad+ \exp\left[-\frac{\Delta \gamma_0 \beta_0}{1-s}\{(1+t)^{1-s}-1\} \right] + \beta_t,
\end{align*}
which can be further succinctly written as
$$
\mathbb{E}^\mu\|\boldsymbol{\Theta^*} - \widehat{\boldsymbol{\Theta}}_{t+1}\|^2 \le \exp\left[-\frac{\Delta \gamma_0 \beta_0}{1-s}\{(1+t)^{1-s}-1\} \right] \|\boldsymbol{\Theta^*} - \widehat{\boldsymbol{\Theta}}_{0}\|^2 + \mathcal{O}\left\lbrace\tau_{\beta_t}t \exp\left(-ct^{1-s}\right) + \tau_{\beta_t} t^{-s} + q^{\tau_{\beta_t}}\right\rbrace
$$
for some constant $c > 0$.
\end{proof}
\section{Supporting lemmas}
\noindent Our goal is to establish a finite-time error bound on $\|\boldsymbol{\Theta^*}-\widehat{\boldsymbol{\Theta}}_{t}\|$. To this end, we state the preliminary lemmas and provide their proofs. The first lemma establishes a norm bound on the eligibility trace.

\begin{lemma}
\label{LEMMA:eligibility_trace_bound}
Given an exponential weighting parameter \( \lambda \in [0,1) \), \( \|\boldsymbol{z}_t\| \leq \frac{1}{1 - \lambda} \) for all \( t \in \mathbb{N} \).
\end{lemma}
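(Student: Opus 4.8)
The plan is to bound $\|\boldsymbol{z}_t\|$ directly from the definition $\boldsymbol{z}_t = \sum_{l=0}^t \lambda^{t-l}\boldsymbol{\phi}_l$ by combining the triangle inequality with the normalization assumption $\|\boldsymbol{\phi}_l\| \le 1$ and the closed form of a finite geometric series. The structure of $\boldsymbol{z}_t$ as a geometrically weighted sum of unit-norm vectors makes this essentially a one-line estimate.

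First I would apply the triangle inequality to the defining sum, giving $\|\boldsymbol{z}_t\| \le \sum_{l=0}^t \lambda^{t-l}\|\boldsymbol{\phi}_l\|$. Invoking $\|\boldsymbol{\phi}_l\| \le 1$ for every $l$ bounds each summand by $\lambda^{t-l}$, so $\|\boldsymbol{z}_t\| \le \sum_{l=0}^t \lambda^{t-l}$. Reindexing by $k = t-l$ rewrites this as the partial geometric series $\sum_{k=0}^t \lambda^k = \frac{1-\lambda^{t+1}}{1-\lambda}$. Since $\lambda \in [0,1)$ forces $\lambda^{t+1} \ge 0$ and $1-\lambda > 0$, the right-hand side is at most $\frac{1}{1-\lambda}$, which yields the claimed bound.

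There is no substantive obstacle; the only point meriting attention is that the estimate must hold uniformly in $t$, which it does because the partial sums increase monotonically toward their limit $\frac{1}{1-\lambda}$ and never exceed it. Both hypotheses are used in an essential way: the normalization $\|\boldsymbol{\phi}_l\| \le 1$ is what controls the summands, and the restriction $\lambda < 1$ is what keeps the geometric series finite and supplies the explicit constant $\frac{1}{1-\lambda}$. This uniform bound on the eligibility trace is precisely the ingredient later invoked to produce the uniform operator-norm bounds $\|\boldsymbol{A}_t\|$, $\|\boldsymbol{b}_t\|$ and the $\gamma_t$ lower bound on the implicit rescaling, so the simple constant obtained here propagates throughout the finite-time analysis.
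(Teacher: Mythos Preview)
Your proposal is correct and follows essentially the same approach as the paper: apply the triangle inequality to the defining sum, use the normalization $\|\boldsymbol{\phi}_l\|\le 1$, and bound the resulting finite geometric series by $\frac{1}{1-\lambda}$. The paper's proof is the one-line version of exactly this argument.
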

\begin{proof}
From the definition of the eligibility trace, \( \boldsymbol{z}_t = \sum_{i=0}^{t} \lambda^{t - i} \boldsymbol{\phi_i} \), we observe
$
\|\boldsymbol{z}_t\| = \left\|\sum_{i=0}^{t} \lambda^{t - i} \boldsymbol{\phi_i}\right\| \leq \sum_{i=0}^{t} \lambda^{t-i}  \leq  \frac{1}{1 - \lambda }.
$
The first inequality comes from the triangle inequality, given $\|\boldsymbol{\phi_i}\| \leq 1$. The latter is true by the sum of a geometric series. 
\end{proof}

\noindent Next lemma provides explicit bounds on the error incurred by replacing the true, steady-state eligibility trace with its $\tau$‑step truncated version.
\begin{lemma}\label{LEMMA:EL_discrepancy_bound}
Given an initial state $s_0 \in \mathcal{S}$, suppose $\{S^\mu_t\}_{t\in \mathbb{N}}$ is uniformly geometrically ergodic with a rate parameter $\rho \in (0,1)$. For all $t \in \mathbb{N}$, let $\tau \in \{0,\cdots, t\}$, then
\begin{enumerate}
    \item $\left\|\mathbb{E}^{\boldsymbol{\pi}^\mu} \left(\boldsymbol{z}_t\right) - \mathbb{E}^\mu\left(\boldsymbol{z}_{t-\tau:t}\right)\right\| \lesssim \tau q^t + \lambda^\tau $
    \item $\left\|\mathbb{E}^{\boldsymbol{\pi}^\mu} \left(\boldsymbol{z}_t \boldsymbol{\phi}_t^\top \right) - \mathbb{E}^\mu\left(\boldsymbol{z}_{t-\tau:t}\boldsymbol{\phi}_t^\top\right)\right\| \lesssim \tau q^t + \lambda^\tau $
    \item $\left\|\mathbb{E}^{\boldsymbol{\pi}^\mu} \left(\boldsymbol{z}_t\boldsymbol{\phi}_{t+1}^\top\right) - \mathbb{E}^\mu\left(\boldsymbol{z}_{t-\tau:t}\boldsymbol{\phi}_{t+1}^\top\right)\right\| \lesssim \tau q^t + \lambda^\tau$
\end{enumerate}
where $q:= \max\{\lambda, \rho\}$.
\end{lemma}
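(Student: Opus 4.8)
The plan is to establish all three bounds through one common two-step decomposition, inserting the intermediate quantity $\mathbb{E}^{\boldsymbol{\pi}^\mu}(\boldsymbol{z}_{t-\tau:t})$ obtained by first truncating the trace while retaining the stationary law and then switching from the stationary law to the law of the chain started at the fixed $S^\mu_0$. For part 1 this reads
\[
\mathbb{E}^{\boldsymbol{\pi}^\mu}(\boldsymbol{z}_t) - \mathbb{E}^\mu(\boldsymbol{z}_{t-\tau:t}) = \underbrace{\mathbb{E}^{\boldsymbol{\pi}^\mu}(\boldsymbol{z}_t - \boldsymbol{z}_{t-\tau:t})}_{\text{(truncation)}} + \underbrace{\mathbb{E}^{\boldsymbol{\pi}^\mu}(\boldsymbol{z}_{t-\tau:t}) - \mathbb{E}^\mu(\boldsymbol{z}_{t-\tau:t})}_{\text{(mixing)}},
\]
and the triangle inequality separates the target bound into a $\lambda^\tau$ piece from the truncation term and a $\tau q^t$ piece from the mixing term; parts 2 and 3 use the identical split with the extra right factor $\boldsymbol{\phi}_t^\top$ or $\boldsymbol{\phi}_{t+1}^\top$ carried along.

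For the truncation term I would use $\boldsymbol{z}_t - \boldsymbol{z}_{t-\tau:t} = \sum_{l=0}^{t-\tau-1}\lambda^{t-l}\boldsymbol{\phi}_l$ together with $\|\boldsymbol{\phi}_l\|\le 1$ to get the \emph{pathwise} estimate $\|\boldsymbol{z}_t - \boldsymbol{z}_{t-\tau:t}\| \le \lambda^{\tau+1}/(1-\lambda) \lesssim \lambda^\tau$, which is precisely the geometric-tail argument behind Lemma \ref{LEMMA:eligibility_trace_bound}. Because this bound holds along every sample path it survives any expectation, and in parts 2 and 3 the additional rank-one factor contributes only $\|\boldsymbol{\phi}\|\le 1$ via $\|\boldsymbol{u}\boldsymbol{v}^\top\| = \|\boldsymbol{u}\|\,\|\boldsymbol{v}\|$, so the $\lambda^\tau$ rate is unchanged.

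For the mixing term I would expand $\boldsymbol{z}_{t-\tau:t} = \sum_{l=t-\tau}^t \lambda^{t-l}\boldsymbol{\phi}_l$ and control each summand by the discrepancy between the stationary and chain marginals of $S^\mu_l$. In part 1, $\|\mathbb{E}^{\boldsymbol{\pi}^\mu}\boldsymbol{\phi}_l - \mathbb{E}^\mu\boldsymbol{\phi}_l\| \le 2\,d_{\mathrm{TV}}\{\mathbb{P}^\mu(S^\mu_l=\cdot),\pi^\mu\} \le 2m\rho^l$ by uniform geometric ergodicity and $\|\boldsymbol{\phi}\|\le 1$; then $\sum_{l=t-\tau}^t \lambda^{t-l}\rho^l \le (\tau+1)q^t$, which is of order $\tau q^t$ since $\lambda^{t-l}\rho^l \le q^t$ with $q=\max\{\lambda,\rho\}$. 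Combining the two pieces yields $\lambda^\tau + \tau q^t$ for part 1.

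The one genuinely new point — and the step I expect to be the main obstacle — is extending this mixing estimate to the products $\boldsymbol{\phi}_l\boldsymbol{\phi}_t^\top$ and $\boldsymbol{\phi}_l\boldsymbol{\phi}_{t+1}^\top$ of parts 2 and 3, which depend jointly on two time indices rather than one. The resolution is to condition on the \emph{earlier} state $S^\mu_l$: under both the stationary and the chain measures the conditional law of $S^\mu_t$ (resp. $S^\mu_{t+1}$) given $S^\mu_l$ is the same power of $\boldsymbol{P}^\mu$, so the two joint laws differ only through the marginal of $S^\mu_l$. Writing $\mathbb{E}(\boldsymbol{\phi}_l\boldsymbol{\phi}_t^\top) = \sum_{s,s'}\mathbb{P}(S^\mu_l=s)\,[\boldsymbol{P}^\mu]^{t-l}_{s,s'}\,\boldsymbol{\phi}(s)\boldsymbol{\phi}(s')^\top$ and subtracting the two versions, the transition-kernel rows sum to one and the features are bounded by one, so the difference collapses to $\sum_s|\mathbb{P}^\mu(S^\mu_l=s)-\pi^\mu_s|\le 2m\rho^l$ exactly as in the scalar case. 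Summing against $\lambda^{t-l}$ over $l\in\{t-\tau,\dots,t\}$ again produces $\tau q^t$, and adding the $\lambda^\tau$ truncation contribution closes all three parts simultaneously.
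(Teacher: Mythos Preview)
Your proposal is correct and follows essentially the same route as the paper: the same two-piece split into a truncation term (bounded by a $\lambda^\tau$ geometric tail) and a mixing term (bounded termwise via $\lambda^{t-l}\rho^l\le q^t$ summed over $\tau+1$ indices), and the same conditioning-on-$S^\mu_l$ device to reduce the joint-law discrepancy in parts 2 and 3 to the marginal discrepancy at time $l$. The only cosmetic difference is that the paper interprets $\mathbb{E}^{\boldsymbol{\pi}^\mu}(\boldsymbol{z}_t)$ via the steady-state expression $\sum_{l=-\infty}^t\lambda^{t-l}\boldsymbol{\phi}_l$, so its truncation tail runs from $-\infty$ to $t-\tau-1$, whereas you keep the finite sum $\sum_{l=0}^{t}$ and bound the truncation pathwise; both yield the same $\lambda^\tau/(1-\lambda)$ estimate.
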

\begin{proof} 
We leverage the steady-state expression of the eligibility trace $\boldsymbol{z}_t = \sum_{l=-\infty}^t \lambda^{t-l}\boldsymbol{\phi}_l$ whenever the expectation is with respect to the stationary distribution induced by the policy $\mu$. \\

\noindent Proof of the first statement: 
\begin{align*}
    \mathbb{E}^{\boldsymbol{\pi}^\mu} \left(\boldsymbol{z}_t\right) -\mathbb{E}^\mu\left(\boldsymbol{z}_{t-\tau:t}\right)
    &=\mathbb{E}^{\boldsymbol{\pi}^\mu}\left(\sum_{l=-\infty}^t \lambda^{t-l}\boldsymbol{\phi}_l \right) - \mathbb{E}^\mu\left(\sum_{l=t-\tau}^t \lambda^{t-l}\boldsymbol{\phi}_l \right)\\
    &= \sum_{l=-\infty}^t \lambda^{t-l}\mathbb{E}^{\boldsymbol{\pi}^\mu}\left(\boldsymbol{\phi}_l \right) - \sum_{l=t-\tau}^t \lambda^{t-l}\mathbb{E}^\mu\left(\boldsymbol{\phi}_l\right)\\
    &= \sum_{l=t-\tau}^t \lambda^{t-l}\left\{\mathbb{E}^{\boldsymbol{\pi}^\mu}\left(\boldsymbol{\phi}_l \right) - \mathbb{E}^\mu\left(\boldsymbol{\phi}_l \right)\right\}+ \sum_{l=-\infty}^{t-\tau-1} \lambda^{t-l}\mathbb{E}^{\boldsymbol{\pi}^\mu} \left(\boldsymbol{\phi}_l\right)\\
    & = \sum_{l=t-\tau}^t \lambda^{t-l}\left[\sum_{s \in \mathcal{S}} \left\{\pi_s^\mu\boldsymbol{\phi}(s) - p^\mu(S^\mu_l = s|S^\mu_0 = s_0)\boldsymbol{\phi}(s)\right\}\right]+ \sum_{l=-\infty}^{t-\tau-1} \lambda^{t-l}\mathbb{E}^{\boldsymbol{\pi}^\mu} \left(\boldsymbol{\phi}_l\right)
\end{align*}
Note that $\sup_{s \in \mathcal{S}} |\pi_s^\mu - p^\mu(S^\mu_l = s|S^\mu_0 = s_0)| \lesssim \rho^{l}$ for some $\rho \in (0,1)$ follows from the geometric ergodicity of the Markov chain $\{S^\mu_t\}_{t \in \mathbb{N}}$. From the finiteness of $\mathcal{S}$ and normalized features, we have
\begin{align*}
 \left\|\mathbb{E}^{\boldsymbol{\pi}^\mu} \left(\boldsymbol{z}_t\right) - \mathbb{E}^\mu\left(\boldsymbol{z}_{t-\tau:t}\right)\right\| &\lesssim \sum_{l = t-\tau}^t \lambda^{t-l}\rho^l  + \sum_{l = -\infty}^{t-\tau-1} \lambda^{t-l} \\ 
 &\lesssim \sum_{l=t-\tau}^t q^t + \sum_{l = -\infty}^{t-\tau-1} \lambda^{t-l} \quad \text{where} ~q = \max\{\lambda, \rho\} \in (0,1) \\
 &\lesssim \tau q^t + \lambda^\tau.
\end{align*}

\noindent Proof of the second statement: 
\begin{align*}
&\mathbb{E}^{\boldsymbol{\pi}^\mu} \left(\boldsymbol{z}_t\boldsymbol{\phi}_t^\top\right) - \mathbb{E}^\mu\left(\boldsymbol{z}_{t-\tau:t}\boldsymbol{\phi}_t^\top \right)\ =
    \mathbb{E}^{\boldsymbol{\pi}^\mu}\left(\sum_{l=-\infty}^t \lambda^{t-l}\boldsymbol{\phi}_l \boldsymbol{\phi}_t^\top \right) - \mathbb{E}^\mu\left(\sum_{l=t-\tau}^t \lambda^{t-l}\boldsymbol{\phi}_l \boldsymbol{\phi}_t^\top  \right)\\
    &= \sum_{l=-\infty}^t \lambda^{t-l}\mathbb{E}^{\boldsymbol{\pi}^\mu}\left(\boldsymbol{\phi}_l \boldsymbol{\phi}_t^\top \right) - \sum_{l=t-\tau}^t \lambda^{t-l}\mathbb{E}^\mu\left(\boldsymbol{\phi}_l \boldsymbol{\phi}_t^\top \right)\\
    &= \sum_{l=t-\tau}^t \lambda^{t-l}\left\{\mathbb{E}^{\boldsymbol{\pi}^\mu}\left(\boldsymbol{\phi}_l  \boldsymbol{\phi}_t^\top \right) - \mathbb{E}^\mu\left(\boldsymbol{\phi}_l \boldsymbol{\phi}_t^\top \right)\right\}+ \sum_{l=-\infty}^{t-\tau-1} \lambda^{t-l}\mathbb{E}^{\boldsymbol{\pi}^\mu} \left(\boldsymbol{\phi}_l \boldsymbol{\phi}_t^\top \right)\\
    & = \sum_{l=t-\tau}^t \lambda^{t-l}\left[\sum_{s \in \mathcal{S}} \left\lbrace\pi_s^\mu\boldsymbol{\phi}(s)\mathbb{E}^{\mu}\left(\boldsymbol{\phi}_t^\top |S^\mu_l = s \right) - p^\mu(S^\mu_l = s|S^\mu_0 =s_0)\boldsymbol{\phi}(s)\mathbb{E}^{\mu}\left(\boldsymbol{\phi}_t^\top |S^\mu_l = s \right)\right\rbrace\right] 
    \\ &\quad + \sum_{l=-\infty}^{t-\tau-1} \lambda^{t-l}\mathbb{E}^{\boldsymbol{\pi}^\mu} \left(\boldsymbol{\phi}_l\boldsymbol{\phi}_l^\top\right)
\end{align*}
By the same logic as in the first statement, we have
\begin{align*}
 \left\|\mathbb{E}^{\boldsymbol{\pi}^\mu} \left(\boldsymbol{z}_t\boldsymbol{\phi}_t^\top\right) - \mathbb{E}^\mu\left(\boldsymbol{z}_{t-\tau:t}\boldsymbol{\phi}_t^\top\right)\right\| \lesssim \sum_{l = t-\tau}^t \lambda^{t-l}\rho^l  + \sum_{l = -\infty}^{t-\tau-1} \lambda^{t-l} \lesssim \tau q^t + \lambda^\tau, 
\end{align*}
where $q = \max\{\lambda, \rho\} \in (0,1)$. \\

\noindent Proof of the third statement:
\begin{align*}
&\mathbb{E}^{\boldsymbol{\pi}^\mu} \left(\boldsymbol{z}_t\boldsymbol{\phi}_{t+1}^\top\right) - \mathbb{E}^\mu\left(\boldsymbol{z}_{t-\tau:t}\boldsymbol{\phi}_{t+1}^\top \right)\ =
    \mathbb{E}^{\boldsymbol{\pi}^\mu}\left(\sum_{l=-\infty}^t \lambda^{t-l}\boldsymbol{\phi}_l \boldsymbol{\phi}_{t+1}^\top \right) - \mathbb{E}^\mu\left(\sum_{l=t-\tau}^t \lambda^{t-l}\boldsymbol{\phi}_l \boldsymbol{\phi}_{t+1}^\top  \right)\\
    &= \sum_{l=-\infty}^t \lambda^{t-l}\mathbb{E}^{\boldsymbol{\pi}^\mu}\left(\boldsymbol{\phi}_l \boldsymbol{\phi}_{t+1}^\top \right) - \sum_{l=t-\tau}^t \lambda^{t-l}\mathbb{E}^\mu\left(\boldsymbol{\phi}_l \boldsymbol{\phi}_{t+1}^\top  \right)\\
    &= \sum_{l=t-\tau}^t \lambda^{t-l}\left\{\mathbb{E}^{\boldsymbol{\pi}^\mu}\left(\boldsymbol{\phi}_l  \boldsymbol{\phi}_{t+1}^\top \right) - \mathbb{E}^\mu\left(\boldsymbol{\phi}_l \boldsymbol{\phi}_{t+1}^\top \right)\right\}+ \sum_{l=-\infty}^{t-\tau-1} \lambda^{t-l}\mathbb{E}^{\boldsymbol{\pi}^\mu} \left(\boldsymbol{\phi}_l \boldsymbol{\phi}_{t+1}^\top \right)\\
    & = \sum_{l=t-\tau}^t \lambda^{t-l}\left[\sum_{s \in \mathcal{S}} \left\lbrace\pi_s^\mu\boldsymbol{\phi}(s)\mathbb{E}^{\mu}\left(\boldsymbol{\phi}_{t+1}^\top |S^\mu_l = s \right) - p^\mu\left(S^\mu_l = s|S^\mu_0=s_0\right)\boldsymbol{\phi}(s)\mathbb{E}^{\mu}\left(\boldsymbol{\phi}_{t+1}^\top |S^\mu_l = s \right)\right\rbrace\right]\\&\qquad+ \sum_{l=-\infty}^{t-\tau-1} \lambda^{t-l}\mathbb{E}^{\boldsymbol{\pi}^\mu} \left(\boldsymbol{\phi}_l\boldsymbol{\phi}_{l+1}^\top\right),
\end{align*}
where in the last equality, we use the law of iterated expectations. Following the proof of the first statement, we have
\begin{align*}
 \left\|\mathbb{E}^{\boldsymbol{\pi}^\mu} \left(\boldsymbol{z}_t\boldsymbol{\phi}_{t+1}^\top\right) - \mathbb{E}^\mu\left(\boldsymbol{z}_{t-\tau:t}\boldsymbol{\phi}_{t+1}^\top\right)\right\| \lesssim \sum_{l = t-\tau}^t \lambda^{t-l}\rho^l  + \sum_{l = -\infty}^{t-\tau-1} \lambda^{t-l} \lesssim \tau q^t + \lambda^\tau,
\end{align*}
where $q = \max\{\lambda, \rho\} \in (0,1)$.
\end{proof}

\noindent Subsequent lemma provides explicit bounds on the error incurred by replacing the steady-state reward expectation with its non steady-state version.

\begin{lemma}\label{LEMMA:ER_discrepancy_bound}
Given an initial state $s_0 \in \mathcal{S}$, suppose $\{S^\mu_t\}_{t\in \mathbb{N}}$ is uniformly geometrically ergodic with a rate parameter $\rho \in (0,1)$. For $\tau \in \{0,\cdots, t\}$,
\begin{enumerate}
    \item $|\mathbb{E}^{\boldsymbol{\pi}^\mu}(R^\mu_t) - \mathbb{E}^\mu(R^\mu_t)| \lesssim \rho^t$
    \item $\|\mathbb{E}^{\boldsymbol{\pi}^\mu}(R^\mu_t \boldsymbol{z}_t) - \mathbb{E}^\mu(R^\mu_t \boldsymbol{z}_{t-\tau:t}) \|\lesssim \tau q^t + \lambda^\tau$
\end{enumerate}
where $q:= \max\{\lambda, \rho\}$.
\end{lemma}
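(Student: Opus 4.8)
The plan is to mirror the structure of the proof of Lemma \ref{LEMMA:EL_discrepancy_bound}, exploiting three facts: the reward is bounded ($r:\mathcal{S}\times\mathcal{A}\to[0,1]$, so $|R^\mu_t|\le 1$), the features are normalized ($\|\boldsymbol{\phi}(s)\|\le 1$), and $\mathcal{S}$ is finite. For the first statement, I would write $R^\mu_t = r^\mu(S^\mu_t)$ and expand both expectations over states as
\[
\mathbb{E}^{\boldsymbol{\pi}^\mu}(R^\mu_t) - \mathbb{E}^\mu(R^\mu_t) = \sum_{s\in\mathcal{S}} \left\{\pi_s^\mu - p^\mu(S^\mu_t = s \mid S^\mu_0 = s_0)\right\} r^\mu(s).
\]
Uniform geometric ergodicity gives $\sup_s |\pi_s^\mu - p^\mu(S^\mu_t=s\mid S^\mu_0=s_0)| \lesssim \rho^t$, and combined with $|r^\mu(s)|\le 1$ and $|\mathcal{S}|<\infty$ this immediately yields the claimed $\rho^t$ bound.

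For the second statement, I would substitute the steady-state representation $\boldsymbol{z}_t = \sum_{l=-\infty}^t \lambda^{t-l}\boldsymbol{\phi}_l$ under the stationary expectation and decompose the difference, separating the overlapping window $\{t-\tau,\dots,t\}$ from the truncated tail:
\begin{align*}
\mathbb{E}^{\boldsymbol{\pi}^\mu}(R^\mu_t\boldsymbol{z}_t) - \mathbb{E}^\mu(R^\mu_t\boldsymbol{z}_{t-\tau:t})
&= \sum_{l=t-\tau}^t \lambda^{t-l}\left\{\mathbb{E}^{\boldsymbol{\pi}^\mu}(R^\mu_t\boldsymbol{\phi}_l) - \mathbb{E}^\mu(R^\mu_t\boldsymbol{\phi}_l)\right\} \\
&\quad + \sum_{l=-\infty}^{t-\tau-1}\lambda^{t-l}\,\mathbb{E}^{\boldsymbol{\pi}^\mu}(R^\mu_t\boldsymbol{\phi}_l).
\end{align*}

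The crucial step, and the one I expect to be the main obstacle, is bounding each summand in the overlapping window. Here I would apply the law of iterated expectations by conditioning on $S^\mu_l = s$ for $l\le t$: because the Markov property makes the conditional reward factor $g(s):=\mathbb{E}^\mu(R^\mu_t\mid S^\mu_l = s) = [(\boldsymbol{P}^\mu)^{t-l}\boldsymbol{r}^\mu]_s$ depend only on $s$ and \emph{not} on the initialization, this factor is shared by both expectations. Consequently only the occupancy difference survives, giving
\[
\mathbb{E}^{\boldsymbol{\pi}^\mu}(R^\mu_t\boldsymbol{\phi}_l) - \mathbb{E}^\mu(R^\mu_t\boldsymbol{\phi}_l)
= \sum_{s\in\mathcal{S}}\left\{\pi_s^\mu - p^\mu(S^\mu_l=s\mid S^\mu_0=s_0)\right\}\boldsymbol{\phi}(s)\,g(s),
\]
and since $\|\boldsymbol{\phi}(s)\|\le 1$, $|g(s)|\le 1$, together with geometric ergodicity this yields $\|\mathbb{E}^{\boldsymbol{\pi}^\mu}(R^\mu_t\boldsymbol{\phi}_l) - \mathbb{E}^\mu(R^\mu_t\boldsymbol{\phi}_l)\| \lesssim \rho^l$. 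Identifying this common conditional factor is the delicate part; everything else is bookkeeping.

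Finally I would assemble the two pieces. Using $q=\max\{\lambda,\rho\}$ and $\lambda^{t-l}\rho^l \le q^{t-l}q^l = q^t$, the overlapping sum contributes $\sum_{l=t-\tau}^t \lambda^{t-l}\rho^l \lesssim \tau q^t$. For the tail, the trivial bound $\|\mathbb{E}^{\boldsymbol{\pi}^\mu}(R^\mu_t\boldsymbol{\phi}_l)\|\le 1$ and summing the geometric series give $\sum_{l=-\infty}^{t-\tau-1}\lambda^{t-l} = \lambda^{\tau+1}/(1-\lambda)\lesssim \lambda^\tau$. Adding the two contributions produces $\|\mathbb{E}^{\boldsymbol{\pi}^\mu}(R^\mu_t\boldsymbol{z}_t) - \mathbb{E}^\mu(R^\mu_t\boldsymbol{z}_{t-\tau:t})\| \lesssim \tau q^t + \lambda^\tau$, completing the proof.
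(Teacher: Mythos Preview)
Your proposal is correct and follows essentially the same route as the paper: both parts expand over states, invoke uniform geometric ergodicity for the occupancy difference, and for the second statement use the steady-state trace representation, split into the overlapping window and the truncated tail, and condition on $S^\mu_l$ so that the shared factor $\mathbb{E}^\mu(R^\mu_t\mid S^\mu_l=s)$ cancels across the two expectations. The bookkeeping bounds $\sum_{l=t-\tau}^t \lambda^{t-l}\rho^l \lesssim \tau q^t$ and $\sum_{l\le t-\tau-1}\lambda^{t-l}\lesssim \lambda^\tau$ are exactly those in the paper.
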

\begin{proof}
For the first statement, notice that
\begin{align*}
\mathbb{E}^{\boldsymbol{\pi}^\mu}(R^\mu_t) - \mathbb{E}^\mu(R^\mu_t) = \sum_{s \in \mathcal{S}} r\{s, \mu(s)\} \left\{\pi_s^\mu -  p^\mu(S^\mu_t = s |S^\mu_0 = s_0)\right\} \Longrightarrow |\mathbb{E}^{\boldsymbol{\pi}^\mu}(R^\mu_t) - \mathbb{E}^\mu(R^\mu_t) |\lesssim \rho^k,
\end{align*}
where the last inequality follows from the uniform geometric ergodicity of the Markov chain $\{S^\mu_t\}_{t \in \mathbb{N}}$. For the second statement, we again leverage the steady-state expression of the eligibility trace $\boldsymbol{z}_t = \sum_{l=-\infty}^t \lambda^{t-l}\boldsymbol{\phi}_l$ whenever the expectation is with respect to the steady-state distribution induced by the policy $\mu$. 
\begin{align*}
\mathbb{E}^{\boldsymbol{\pi}^\mu} (R^\mu_t \boldsymbol{z}_t) - \mathbb{E}^\mu(R^\mu_t \boldsymbol{z}_{t-\tau:t}) &= \mathbb{E}^{\boldsymbol{\pi}^\mu}\left(R^\mu_t\sum_{l=-\infty}^t \lambda^{t-l}\boldsymbol{\phi}_l\right) - \mathbb{E}^\mu\left(R^\mu_t\sum_{l=t-\tau}^t \lambda^{t-l}\boldsymbol{\phi}_l\right) \\
    &= \sum_{l=-\infty}^t \lambda^{t-l}\mathbb{E}^{\boldsymbol{\pi}^\mu}\left(R^\mu_t\boldsymbol{\phi}_l\right) - \sum_{l=t-\tau}^t \lambda^{t-l}\mathbb{E}^\mu\left(R^\mu_t\boldsymbol{\phi}_l\right) \\
    &= \sum_{l=t-\tau}^t \lambda^{t-l}\left\{\mathbb{E}^{\boldsymbol{\pi}^\mu}\left(R^\mu_t\boldsymbol{\phi}_l\right) - \mathbb{E}^\mu\left(R^\mu_t\boldsymbol{\phi}_l\right)\right\} + \sum_{l=-\infty}^{t-\tau-1} \lambda^{t-l}\mathbb{E}^{\boldsymbol{\pi}^\mu}\left(R^\mu_t\boldsymbol{\phi}_l\right) \\
    &= \sum_{l=t-\tau}^t \lambda^{t-l}\left[\sum_{s \in \mathcal{S}}\{\pi^\mu_s-p^\mu(S^\mu_l=s|S^\mu_0=s_0)\}\boldsymbol{\phi}(s) \mathbb{E}^\mu(R^\mu_t |S^\mu_l = s) \right] \\
    &\quad +\sum_{l=-\infty}^{t-\tau-1} \lambda^{t-l}\mathbb{E}^{\boldsymbol{\pi}^\mu}\left(R^\mu_t\boldsymbol{\phi}_l\right) .
\end{align*}
Taking the norm on both sides with the uniform geometric ergodicity of the Markov chain $\{S^\mu_t\}_{t \in \mathbb{N}}$, we have
\begin{align*}
    \| \mathbb{E}^{\boldsymbol{\pi}^\mu} (R^\mu_t \boldsymbol{z}_t) - \mathbb{E}^\mu(R^\mu_t \boldsymbol{z}_{t-\tau:t}) \| &\lesssim \sum_{l=t-\tau}^t \lambda^{t-l} \rho^l + \sum_{l=-\infty}^{t-\tau-1} \lambda^{t-l}  \\
    &\lesssim \tau q^t + \lambda^\tau,
\end{align*}
where the last inequality follows from $q = \max\{\lambda, \rho\}$.
\end{proof}

\noindent Lemma below establishes uniform bounds on the norm of $\boldsymbol{A}_t$ and $\boldsymbol{b}_t$, which will appear in the finite-time error bounds.

\begin{lemma}\label{LEMMA:A_b_bound}
For all $t \in \mathbb{N}$, $\|\boldsymbol{A}_t\| \le A_{\max}$ and $\|\boldsymbol{b}_t\| \le b_{\max}$ for some constants $A_{\max}, b_{\max} \in \mathbb{R}_{>0}$    
\end{lemma}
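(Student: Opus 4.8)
The plan is to read both bounds directly off the block structure of $\boldsymbol{A}_t$ and $\boldsymbol{b}_t$, using three elementary ingredients: the reward is bounded, $|R^\mu_t|\le 1$, since $r$ maps into $[0,1]$; the features are normalized, $\|\boldsymbol{\phi}_l\|\le 1$; and the eligibility trace is uniformly bounded, $\|\boldsymbol{z}_t\|\le 1/(1-\lambda)$, which is exactly Lemma \ref{LEMMA:eligibility_trace_bound}. Everything then reduces to the triangle inequality and, for $\boldsymbol{A}_t$, a single norm comparison.

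First I would handle $\boldsymbol{b}_t$. Since $\boldsymbol{b}_t=[\,c_\alpha R^\mu_t,\ R^\mu_t\boldsymbol{z}_t\,]^\top$, we get $\|\boldsymbol{b}_t\|^2=c_\alpha^2 (R^\mu_t)^2+(R^\mu_t)^2\|\boldsymbol{z}_t\|^2\le c_\alpha^2+1/(1-\lambda)^2$, where I used $|R^\mu_t|\le 1$ together with the trace bound. Taking square roots yields $\|\boldsymbol{b}_t\|\le b_{\max}=\sqrt{c_\alpha^2+1/(1-\lambda)^2}$, matching the constant used in the main proof.

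For $\boldsymbol{A}_t$ I would bound the operator norm by the Frobenius norm, which is convenient because the only nontrivial block, $\boldsymbol{z}_t(\boldsymbol{\phi}_{t+1}^\top-\boldsymbol{\phi}_t^\top)$, is a rank-one outer product. Summing squared entries gives $\|\boldsymbol{A}_t\|_F^2=c_\alpha^2+\|\boldsymbol{z}_t\|^2+\|\boldsymbol{z}_t(\boldsymbol{\phi}_{t+1}-\boldsymbol{\phi}_t)^\top\|_F^2$, and for an outer product $\|\boldsymbol{z}_t(\boldsymbol{\phi}_{t+1}-\boldsymbol{\phi}_t)^\top\|_F=\|\boldsymbol{z}_t\|\,\|\boldsymbol{\phi}_{t+1}-\boldsymbol{\phi}_t\|$. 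The triangle inequality gives $\|\boldsymbol{\phi}_{t+1}-\boldsymbol{\phi}_t\|\le 2$, so $\|\boldsymbol{A}_t\|_F^2\le c_\alpha^2+5\|\boldsymbol{z}_t\|^2\le c_\alpha^2+5/(1-\lambda)^2$, whence $\|\boldsymbol{A}_t\|\le\|\boldsymbol{A}_t\|_F\le A_{\max}=\sqrt{c_\alpha^2+5/(1-\lambda)^2}$.

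There is no genuine obstacle in this argument; the only point requiring minor care is the passage from the operator norm to the Frobenius norm for $\boldsymbol{A}_t$, which is precisely what lets me exploit the rank-one structure of its bottom-right block and recover the explicit $A_{\max}$ quoted downstream. A block-wise operator-norm estimate would be slightly tighter, but the Frobenius bound already produces exactly the constants used in the theorem proofs, so I would use it for simplicity.
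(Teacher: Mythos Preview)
Your proof is correct and essentially identical to the paper's: both bound $\|\boldsymbol{A}_t\|$ via the Frobenius norm, exploit the rank-one outer-product structure of the bottom-right block together with $\|\boldsymbol{\phi}_{t+1}-\boldsymbol{\phi}_t\|\le 2$ and Lemma~\ref{LEMMA:eligibility_trace_bound}, and compute $\|\boldsymbol{b}_t\|$ directly from $|R^\mu_t|\le 1$ and the trace bound, arriving at the same constants $A_{\max}=\sqrt{c_\alpha^2+5/(1-\lambda)^2}$ and $b_{\max}=\sqrt{c_\alpha^2+1/(1-\lambda)^2}$.
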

\begin{proof}
\begin{align*}
\|\boldsymbol{A}_t\|^2 &\le \|\boldsymbol{A}_t\|^2_F \le c_\alpha^2 + \|\boldsymbol{z}_t\|^2 + \|\boldsymbol{z}_t\|^2 \|\boldsymbol{\phi}_{t+1}^\top - \boldsymbol{\phi}_t^\top\|^2 \le c_\alpha^2 + \frac{5}{\left(1-\lambda \right)^2}  =: A^2_{\max} 
\\
\|\boldsymbol{b}_t\|^2 &\le c_\alpha^2 + \frac{1}{(1-\lambda)^2} =: b^2_{\max},
\end{align*}
where in the equality for $\|\boldsymbol{b}_t\|^2$, we used the fact $|R^\mu_t| \le 1$ for all $t \in \mathbb{N}.$
\end{proof}

\begin{lemma}\label{LEMMA:D_Dt_bound}
For all $t \in \mathbb{N}$, $\|\boldsymbol{D}_t-\underline{\boldsymbol{D}_t}\| \le \frac{(1+c_\alpha)\beta_t}{(1-\lambda)^2}$.    
\end{lemma}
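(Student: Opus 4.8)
The plan is to use that $\boldsymbol{D}_t$ and $\underline{\boldsymbol{D}_t}$ are both diagonal, so $\boldsymbol{D}_t-\underline{\boldsymbol{D}_t}$ is diagonal and its operator norm is exactly the largest absolute value among its diagonal entries. Writing $d_1:=1/(1+c_\alpha\beta_t)$, $d_2:=1/(1+\beta_t\|\boldsymbol{z}_t\|^2)$, and $e:=(1-\lambda)^2/\{(1-\lambda)^2+\beta_t\}$, the diagonal entries of the difference are $d_1-\gamma_t$ (once) and $d_2-\gamma_t$ (with multiplicity $d$), where $\gamma_t=\min(d_1,e)$. Thus it suffices to bound $\max\{|d_1-\gamma_t|,\,|d_2-\gamma_t|\}$ by $(1+c_\alpha)\beta_t/(1-\lambda)^2$.

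First I would apply Lemma~\ref{LEMMA:eligibility_trace_bound} to get $\|\boldsymbol{z}_t\|^2\le(1-\lambda)^{-2}$, hence $1+\beta_t\|\boldsymbol{z}_t\|^2\le\{(1-\lambda)^2+\beta_t\}/(1-\lambda)^2$, i.e.\ $d_2\ge e$. Combined with $d_1\ge\min(d_1,e)=\gamma_t$ and $d_2\ge e\ge\gamma_t$, this shows both diagonal entries are nonnegative, so the absolute values drop and $\|\boldsymbol{D}_t-\underline{\boldsymbol{D}_t}\|=\max\{d_1-\gamma_t,\,d_2-\gamma_t\}$.

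The remaining step is to bound each of $d_1-\gamma_t$ and $d_2-\gamma_t$, splitting on which argument attains the minimum in $\gamma_t$. In every branch the relevant difference telescopes to a single fraction after placing it over a common denominator: the numerator equals $\beta_t$ times a factor that is nonnegative (forced precisely by the branch condition) and bounded by $1+c_\alpha$, while the denominator is at least $(1-\lambda)^2$ because $1+c_\alpha\beta_t\ge1$, $1+\beta_t\|\boldsymbol{z}_t\|^2\ge1$, and $(1-\lambda)^2+\beta_t\ge(1-\lambda)^2$. For instance, when $\gamma_t=e$ one computes $d_2-e=\beta_t\{1-(1-\lambda)^2\|\boldsymbol{z}_t\|^2\}/[\{1+\beta_t\|\boldsymbol{z}_t\|^2\}\{(1-\lambda)^2+\beta_t\}]\le\beta_t/(1-\lambda)^2$ using $\|\boldsymbol{z}_t\|^2\le(1-\lambda)^{-2}$; and when $\gamma_t=d_1$ one gets $d_2-d_1=\beta_t(c_\alpha-\|\boldsymbol{z}_t\|^2)/[\cdots]\le\beta_t c_\alpha$, which is absorbed into $(1+c_\alpha)\beta_t/(1-\lambda)^2$ since $(1-\lambda)^2\le1$. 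The two branches for $d_1-\gamma_t$ are handled identically (one is trivially zero). Each yields a bound no larger than $(1+c_\alpha)\beta_t/(1-\lambda)^2$, which completes the proof.

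I expect the only real obstacle to be the case-analysis bookkeeping: one must track signs carefully to confirm each numerator is nonnegative (so the operator norm really is the maximum diagonal entry and not its magnitude) and, in each branch, select the correct crude lower bound on the denominator and upper bound on the numerator. No single sub-case is difficult, but the coupling introduced by the $\min$ defining $\gamma_t$ requires some care to keep the sub-cases consistent.
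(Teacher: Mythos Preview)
Your proposal is correct and follows essentially the same route as the paper: reduce the operator norm to the maximum absolute diagonal entry, then put each difference over a common denominator and apply the crude bounds $1+c_\alpha\beta_t\ge 1$, $1+\beta_t\|\boldsymbol{z}_t\|^2\ge 1$, $(1-\lambda)^2+\beta_t\ge(1-\lambda)^2$ together with $\|\boldsymbol{z}_t\|^2\le(1-\lambda)^{-2}$. The only cosmetic difference is that you first establish $d_1,d_2\ge\gamma_t$ (so the absolute values drop) and then do an explicit case split on which argument attains the $\min$, whereas the paper keeps the absolute values throughout and simply bounds $|d_1-\gamma_t|\le|d_1-e|$ and $|d_2-\gamma_t|\le\max\{|d_2-d_1|,|d_2-e|\}$; the underlying computations are identical.
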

\begin{proof}
By the definition of the operator norm, we know
\begin{align*}
\|\boldsymbol{D}_t-\underline{\boldsymbol{D}_t}\| \le \max \left\{\left|\frac{1}{1+c_\alpha \beta_t} - \gamma_t \right|,  \left|\frac{1}{1+\beta_t\|\boldsymbol{z}_t\|^2} - \gamma_t \right|\right\}.
\end{align*}
Note that 
\begin{align*}
 \left|\frac{1}{1+c_\alpha \beta_t} - \gamma_t  \right| \le \left |\frac{1}{1+c_\alpha \beta_t} -  \frac{(1-\lambda)^2}{(1-\lambda)^2+ \beta_t} \right| \le \left|\frac{\{1-(1-\lambda)^2 c_\alpha\}\beta_t}{(1-\lambda)^2} \right| \le \frac{(1+c_\alpha)\beta_t}{(1-\lambda)^2}.
\end{align*}
Since 
\begin{align*}
\left |\frac{1}{1+\beta_t\|\boldsymbol{z}_t\|^2} - \frac{1}{1+c_\alpha \beta_t} \right| &\le \left|\frac{c_\alpha\beta_t-\beta_t \|\boldsymbol{z}_t\|^2}{(1+\beta_t\|\boldsymbol{z}_t\|^2)(1+c_\alpha\beta_t)} \right| \le \left\{c_\alpha+\frac{1}{(1-\lambda)^2}\right\}\beta_t \\
\left |\frac{1}{1+\beta_t\|\boldsymbol{z}_t\|^2}  -  \frac{(1-\lambda)^2}{(1-\lambda)^2+ \beta_t} \right| &\le \left|\frac{\{1-(1-\lambda)^2 \|\boldsymbol{z}_t\|^2\}\beta_t}{(1-\lambda)^2} \right| \le \frac{\beta_t}{(1-\lambda)^2} 
\end{align*}
we have
$$
\left|\frac{1}{1+\beta_t\|\boldsymbol{z}_t\|^2} - \gamma_t  \right| \le\frac{(1+c_\alpha)\beta_t}{(1-\lambda)^2},
$$
which yields the desired bound.
\end{proof}

\noindent The following lemma establishes a uniform bound on the norm of the function $\zeta$. 

\begin{lemma}\label{LEMMA:zeta_bound}
For all $t \in \mathbb{N}$, $\boldsymbol{\Theta} \in \{\boldsymbol{\Xi}:\|\boldsymbol{\Xi}\| \le R_{\boldsymbol{\Theta}}\}$, 
$\|\zeta_t(\boldsymbol{\Theta}, \boldsymbol{X}_t) \| \le  C_\zeta$ for some constant $C_\zeta > 0$. 
\end{lemma}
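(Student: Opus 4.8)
The plan is to treat $\zeta_t(\boldsymbol{\Theta},\boldsymbol{X}_t)$ as what it is, namely a scalar formed by two inner products, so that the claimed ``norm'' bound is simply an absolute-value bound obtained by the triangle inequality together with Cauchy--Schwarz applied to each inner product. Writing $\zeta_t = \langle (\boldsymbol{A}_t-\boldsymbol{A})\boldsymbol{\Theta^*}, \underline{\boldsymbol{D}_t}(\boldsymbol{\Theta^*}-\boldsymbol{\Theta})\rangle + \langle \boldsymbol{b}_t-\boldsymbol{b}, \underline{\boldsymbol{D}_t}(\boldsymbol{\Theta^*}-\boldsymbol{\Theta})\rangle$, I would bound the two summands separately, noting that the factor $\underline{\boldsymbol{D}_t}(\boldsymbol{\Theta^*}-\boldsymbol{\Theta})$ is common to both. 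The ingredients I would collect first are: (i) $\|\boldsymbol{A}_t\|\le A_{\max}$ and $\|\boldsymbol{b}_t\|\le b_{\max}$ from Lemma~\ref{LEMMA:A_b_bound}; (ii) the corresponding bounds $\|\boldsymbol{A}\|\le A_{\max}$ and $\|\boldsymbol{b}\|\le b_{\max}$ on the stationary averages, which follow from (i) by Jensen's inequality (convexity of the operator/Euclidean norm under $\mathbb{E}^{\boldsymbol{\pi}^\mu}$); (iii) $\|\underline{\boldsymbol{D}_t}\|=\gamma_t\le 1$, immediate from $\underline{\boldsymbol{D}_t}=\gamma_t\boldsymbol{I}_{d+1}$ with $\gamma_t\in(0,1]$; and (iv) the radius bounds $\|\boldsymbol{\Theta}\|\le R_{\boldsymbol{\Theta}}$ (hypothesis of the lemma) and $\|\boldsymbol{\Theta^*}\|\le R_{\boldsymbol{\Theta}}$.

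With these in hand the estimate is mechanical. The common factor obeys $\|\underline{\boldsymbol{D}_t}(\boldsymbol{\Theta^*}-\boldsymbol{\Theta})\|\le\|\underline{\boldsymbol{D}_t}\|\,(\|\boldsymbol{\Theta^*}\|+\|\boldsymbol{\Theta}\|)\le 2R_{\boldsymbol{\Theta}}$. For the first inner product, submultiplicativity gives $\|(\boldsymbol{A}_t-\boldsymbol{A})\boldsymbol{\Theta^*}\|\le(\|\boldsymbol{A}_t\|+\|\boldsymbol{A}\|)\,\|\boldsymbol{\Theta^*}\|\le 2A_{\max}R_{\boldsymbol{\Theta}}$, so Cauchy--Schwarz bounds it by $4A_{\max}R_{\boldsymbol{\Theta}}^2$. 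For the second, $\|\boldsymbol{b}_t-\boldsymbol{b}\|\le 2b_{\max}$, giving the bound $4b_{\max}R_{\boldsymbol{\Theta}}$. Summing, $|\zeta_t(\boldsymbol{\Theta},\boldsymbol{X}_t)|\le 4A_{\max}R_{\boldsymbol{\Theta}}^2+4b_{\max}R_{\boldsymbol{\Theta}}=:C_\zeta$, a finite constant independent of $t$ and of the particular $\boldsymbol{\Theta}$ in the ball, which is exactly the claim.

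This argument has no genuine obstacle: it is a uniform-boundedness statement that reduces to the triangle inequality and Cauchy--Schwarz once the building-block norm bounds are available. The only points warranting care are recording that $\zeta_t$ is scalar-valued (so $\|\cdot\|$ collapses to $|\cdot|$) and justifying the passage from $\|\boldsymbol{A}_t\|\le A_{\max}$, $\|\boldsymbol{b}_t\|\le b_{\max}$ to the same bounds for the expectations $\boldsymbol{A}$, $\boldsymbol{b}$ via Jensen; both are routine. I would keep the explicit constant $C_\zeta$ displayed above rather than optimize it, since only its finiteness and $t$-independence are used downstream in the telescoping error bounds.
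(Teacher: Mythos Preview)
Your proof is correct and essentially identical to the paper's: both apply the triangle inequality and Cauchy--Schwarz to the two inner products, use $\|\underline{\boldsymbol{D}_t}\|\le 1$, $\|\boldsymbol{A}_t\|,\|\boldsymbol{A}\|\le A_{\max}$, $\|\boldsymbol{b}_t\|,\|\boldsymbol{b}\|\le b_{\max}$, and the radius bounds to arrive at the same constant $C_\zeta = 4A_{\max}R_{\boldsymbol{\Theta}}^2 + 4b_{\max}R_{\boldsymbol{\Theta}}$. Your added remarks about $\zeta_t$ being scalar and about Jensen for the stationary averages are correct clarifications the paper leaves implicit.
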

\begin{proof}
Note that $\|\underline{\boldsymbol{D}_t}\| \le 1$, $\|\boldsymbol{A}_t\| \le A_{\max}$ and $\|\boldsymbol{b}_t\| \le b_{\max}$,
\begin{align*}
\|\zeta_t(\boldsymbol{\Theta}, \boldsymbol{X}_t) \| &\le \left\|\boldsymbol{A}_t -\boldsymbol{A}\right\|\left\|\boldsymbol{\Theta^*} \right\| \left\|\underline{\boldsymbol{D}_t}\right\| \|\boldsymbol{\Theta^*}-\boldsymbol{\Theta}\| +\left\|\boldsymbol{\boldsymbol{b}_t} -\boldsymbol{b}\right\| \left\|\underline{\boldsymbol{D}_t}\right\| \|\boldsymbol{\Theta^*}-\boldsymbol{\Theta}\|  \\
&\le 2\left\|\boldsymbol{A}_t - \boldsymbol{A}\right\|R_{\boldsymbol{\Theta}}^2 + 2\left\|\boldsymbol{b}_t - \boldsymbol{b}\right\| R_{\boldsymbol{\Theta}}\\
&\le 4A_{\max} R_{\boldsymbol{\Theta}}^2 + 4 b_{\max} R_{\boldsymbol{\Theta}} =: C_\zeta
\end{align*}
\end{proof}

\noindent Two lemmas below respectively establish Lipschitzness of the function $\zeta$ with respect to $\boldsymbol{\Theta}$ component and the deviance bound with respect to $\boldsymbol{X}$ component. 

\begin{lemma}\label{LEMMA:zeta_lipschitz}
For all $t \in \mathbb{N}$, $\boldsymbol{\Theta_1}, \boldsymbol{\Theta_2} \in \{\boldsymbol{\Xi}:\|\boldsymbol{\Xi}\| \le R_{\boldsymbol{\Theta}}\}$, 
$$
\left|\zeta_t(\boldsymbol{\Theta_1}, \boldsymbol{X}_t) - \zeta_t(\boldsymbol{\Theta_2}, \boldsymbol{X}_t)\right| \le L_\zeta \|\boldsymbol{\Theta_1} - \boldsymbol{\Theta_2}\|,
$$ 
for some constant $L_\zeta > 0$.
\end{lemma}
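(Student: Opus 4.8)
The plan is to exploit the fact that $\zeta_t(\cdot, \boldsymbol{X}_t)$ is an affine function of its first argument, so that the Lipschitz bound reduces to a single application of Cauchy--Schwarz together with the uniform norm bounds already established in Lemma \ref{LEMMA:A_b_bound}. First I would observe that the two inner products in the definition of $\zeta_t$ share the common second argument $\underline{\boldsymbol{D}_t}(\boldsymbol{\Theta^*}-\boldsymbol{\Theta})$, so $\zeta_t$ collapses to the single pairing $\zeta_t(\boldsymbol{\Theta}, \boldsymbol{X}_t) = \langle \boldsymbol{w}_t, \underline{\boldsymbol{D}_t}(\boldsymbol{\Theta^*}-\boldsymbol{\Theta})\rangle$, where $\boldsymbol{w}_t := (\boldsymbol{A}_t - \boldsymbol{A})\boldsymbol{\Theta^*} + (\boldsymbol{b}_t - \boldsymbol{b})$ depends only on $\boldsymbol{X}_t$ and not on $\boldsymbol{\Theta}$.

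Given this representation, forming the difference $\zeta_t(\boldsymbol{\Theta_1}, \boldsymbol{X}_t) - \zeta_t(\boldsymbol{\Theta_2}, \boldsymbol{X}_t)$ cancels the $\boldsymbol{\Theta^*}$ contribution and leaves the purely linear term $\langle \boldsymbol{w}_t, \underline{\boldsymbol{D}_t}(\boldsymbol{\Theta_2} - \boldsymbol{\Theta_1})\rangle$. Applying Cauchy--Schwarz and sub-multiplicativity of the operator norm then gives $|\zeta_t(\boldsymbol{\Theta_1}, \boldsymbol{X}_t) - \zeta_t(\boldsymbol{\Theta_2}, \boldsymbol{X}_t)| \le \|\boldsymbol{w}_t\|\,\|\underline{\boldsymbol{D}_t}\|\,\|\boldsymbol{\Theta_1} - \boldsymbol{\Theta_2}\|$.

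It remains to bound the prefactor uniformly in $t$. Since $\underline{\boldsymbol{D}_t} = \gamma_t \boldsymbol{I}_{d+1}$ with $\gamma_t \le 1$, we have $\|\underline{\boldsymbol{D}_t}\| \le 1$. For $\|\boldsymbol{w}_t\|$, I would use the triangle inequality together with $\|\boldsymbol{\Theta^*}\| \le R_{\boldsymbol{\Theta}}$ and the bounds $\|\boldsymbol{A}_t\|, \|\boldsymbol{A}\| \le A_{\max}$, $\|\boldsymbol{b}_t\|, \|\boldsymbol{b}\| \le b_{\max}$ from Lemma \ref{LEMMA:A_b_bound} (the bounds on $\boldsymbol{A}$ and $\boldsymbol{b}$ following from those on $\boldsymbol{A}_t, \boldsymbol{b}_t$ by Jensen's inequality, as $\boldsymbol{A} = \mathbb{E}^{\boldsymbol{\pi}^\mu}\boldsymbol{A}_t$ and $\boldsymbol{b} = \mathbb{E}^{\boldsymbol{\pi}^\mu}\boldsymbol{b}_t$), giving $\|\boldsymbol{w}_t\| \le 2A_{\max}R_{\boldsymbol{\Theta}} + 2b_{\max}$. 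This furnishes the claimed constant $L_\zeta := 2A_{\max}R_{\boldsymbol{\Theta}} + 2b_{\max}$, independent of $t$ and of $\boldsymbol{\Theta_1}, \boldsymbol{\Theta_2}$.

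Because the argument is entirely linear-algebraic, there is no genuine obstacle here; the affine structure does all the work, and the only point requiring care is confirming that every constant is uniform in $t$. That is precisely why the $t$-independent bounds $A_{\max}, b_{\max}$ of Lemma \ref{LEMMA:A_b_bound} and the inequality $\gamma_t \le 1$ are invoked, rather than any step-size-dependent quantity, and why the same constant $L_\zeta$ that controls the magnitude of $\zeta_t$ in Lemma \ref{LEMMA:zeta_bound} also controls its increments.
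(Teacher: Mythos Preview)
Your proposal is correct and follows essentially the same approach as the paper: both exploit the affine dependence of $\zeta_t$ on $\boldsymbol{\Theta}$, form the difference $\langle (\boldsymbol{A}_t-\boldsymbol{A})\boldsymbol{\Theta^*} + (\boldsymbol{b}_t-\boldsymbol{b}),\ \underline{\boldsymbol{D}_t}(\boldsymbol{\Theta_2}-\boldsymbol{\Theta_1})\rangle$, and apply Cauchy--Schwarz together with $\|\underline{\boldsymbol{D}_t}\|\le 1$ and the uniform bounds from Lemma~\ref{LEMMA:A_b_bound} to obtain the identical constant $L_\zeta = 2A_{\max}R_{\boldsymbol{\Theta}} + 2b_{\max}$. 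One small inaccuracy in your closing commentary: the constant in Lemma~\ref{LEMMA:zeta_bound} is $C_\zeta = 4A_{\max}R_{\boldsymbol{\Theta}}^2 + 4b_{\max}R_{\boldsymbol{\Theta}} = 2R_{\boldsymbol{\Theta}} L_\zeta$, not $L_\zeta$ itself, though this does not affect the proof.
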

\begin{proof}
\begin{align*}
    \left|\zeta_t(\boldsymbol{\Theta_1}, \boldsymbol{X}_t) - \zeta_t(\boldsymbol{\Theta_2}, \boldsymbol{X}_t)\right| &= \left\langle \left(\boldsymbol{A}_t -\boldsymbol{A}\right)\boldsymbol{\Theta^*}, \underline{\boldsymbol{D}_t}(\boldsymbol{\Theta_2} - \boldsymbol{\Theta_1}) \right\rangle + \left\langle\boldsymbol{b}_t -\boldsymbol{b}, \underline{\boldsymbol{D}_t}(\boldsymbol{\Theta_2} - \boldsymbol{\Theta_1}) \right\rangle \\
    &\le \left\|\boldsymbol{A}_t -\boldsymbol{A}\right\| \|\boldsymbol{\Theta^*}\| \|\underline{\boldsymbol{D}_t}\| \|\boldsymbol{\Theta_2} - \boldsymbol{\Theta_1}\| + \left\|\boldsymbol{b}_t -\boldsymbol{b}\right\| \|\underline{\boldsymbol{D}_t}\| \|\boldsymbol{\Theta_2} - \boldsymbol{\Theta_1}\| \\
    &\le \left\|\boldsymbol{A}_t - \boldsymbol{A}\right\|R_{\boldsymbol{\Theta}} \|\boldsymbol{\Theta_2} - \boldsymbol{\Theta_1}\| + \left\|\boldsymbol{b}_t - \boldsymbol{b}\right\| \|\boldsymbol{\Theta_2} - \boldsymbol{\Theta_1}\|\\
    &\le (2A_{\max} R_{\boldsymbol{\Theta}} + 2 b_{\max })\|\boldsymbol{\Theta_2} - \boldsymbol{\Theta_1}\| \\
    & =: L_\zeta \|\boldsymbol{\Theta_2} - \boldsymbol{\Theta_1}\| 
\end{align*}
where in the first inequality, we used the Cauchy-Schwarz inequality. The second and third inequalities follow from $\|\underline{\boldsymbol{D}_t}\| \le 1$, $\|\boldsymbol{A}_t\| \le A_{\max}$ and $\|\boldsymbol{b}_t\| \le b_{\max}$.
\end{proof}

\begin{lemma}\label{LEMMA:zeta_x_deviance_bound}
For all $t \in \mathbb{N}$, $\boldsymbol{\Theta} \in \{\boldsymbol{\Xi}:\|\boldsymbol{\Xi}\| \le R_{\boldsymbol{\Theta}}\}$, let $\tau \in \{0,\cdots, t\}$, then we have
$$
\left | \zeta_t(\boldsymbol{\Theta}, \boldsymbol{X}_t) - \zeta_t(\boldsymbol{\Theta}, \boldsymbol{X}_{t-\tau:t} )\right|\lesssim \lambda^\tau.
$$  
\end{lemma}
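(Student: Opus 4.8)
The plan is to exploit the fact that $\zeta_t(\boldsymbol{\Theta},\cdot)$ depends on its second argument only through the random matrices $\boldsymbol{A}(\boldsymbol{X}_t)$ and $\boldsymbol{b}(\boldsymbol{X}_t)$, whereas the scaling $\underline{\boldsymbol{D}_t}=\gamma_t\boldsymbol{I}_{d+1}$ is a deterministic function of $\beta_t,c_\alpha,\lambda$ and does \emph{not} involve the eligibility trace. Consequently, when I form the difference $\zeta_t(\boldsymbol{\Theta},\boldsymbol{X}_t)-\zeta_t(\boldsymbol{\Theta},\boldsymbol{X}_{t-\tau:t})$, the factor $\underline{\boldsymbol{D}_t}(\boldsymbol{\Theta}^*-\boldsymbol{\Theta})$ is common to both terms, so the difference collapses to
$$
\left\langle (\boldsymbol{A}_t-\boldsymbol{A}_{t-\tau:t})\boldsymbol{\Theta}^*,\ \underline{\boldsymbol{D}_t}(\boldsymbol{\Theta}^*-\boldsymbol{\Theta})\right\rangle + \left\langle \boldsymbol{b}_t-\boldsymbol{b}_{t-\tau:t},\ \underline{\boldsymbol{D}_t}(\boldsymbol{\Theta}^*-\boldsymbol{\Theta})\right\rangle.
$$
Everything therefore reduces to controlling $\|\boldsymbol{A}_t-\boldsymbol{A}_{t-\tau:t}\|$ and $\|\boldsymbol{b}_t-\boldsymbol{b}_{t-\tau:t}\|$.

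Next I would isolate the trace discrepancy. Since $\boldsymbol{A}_t$ and $\boldsymbol{A}_{t-\tau:t}$ (and likewise $\boldsymbol{b}_t,\boldsymbol{b}_{t-\tau:t}$) differ only by replacing $\boldsymbol{z}_t$ with the truncated trace $\boldsymbol{z}_{t-\tau:t}$, the relevant quantity is
$$
\boldsymbol{z}_t-\boldsymbol{z}_{t-\tau:t}=\sum_{l=0}^{t-\tau-1}\lambda^{t-l}\boldsymbol{\phi}_l,
$$
whose norm, using $\|\boldsymbol{\phi}_l\|\le 1$ and summing the geometric tail, is bounded by $\sum_{l=0}^{t-\tau-1}\lambda^{t-l}\le \lambda^{\tau+1}/(1-\lambda)\lesssim\lambda^\tau$. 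Writing out the matrix difference explicitly (only the bottom block row is affected) gives $\|\boldsymbol{A}_t-\boldsymbol{A}_{t-\tau:t}\|\lesssim \|\boldsymbol{z}_t-\boldsymbol{z}_{t-\tau:t}\|\,(1+\|\boldsymbol{\phi}_{t+1}-\boldsymbol{\phi}_t\|)\lesssim \lambda^\tau$, the factor $1+\|\boldsymbol{\phi}_{t+1}-\boldsymbol{\phi}_t\|\le 3$ being harmless, and similarly $\|\boldsymbol{b}_t-\boldsymbol{b}_{t-\tau:t}\|=|R^\mu_t|\,\|\boldsymbol{z}_t-\boldsymbol{z}_{t-\tau:t}\|\lesssim\lambda^\tau$ since $|R^\mu_t|\le 1$.

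Finally I would apply the Cauchy--Schwarz inequality to the two inner products and use the deterministic bound $\|\underline{\boldsymbol{D}_t}\|=\gamma_t\le 1$ together with $\|\boldsymbol{\Theta}^*\|\le R_{\boldsymbol{\Theta}}$ and $\|\boldsymbol{\Theta}^*-\boldsymbol{\Theta}\|\le 2R_{\boldsymbol{\Theta}}$ (both points lying in the ball of radius $R_{\boldsymbol{\Theta}}$). This yields
$$
\left|\zeta_t(\boldsymbol{\Theta},\boldsymbol{X}_t)-\zeta_t(\boldsymbol{\Theta},\boldsymbol{X}_{t-\tau:t})\right| \le 2R_{\boldsymbol{\Theta}}^2\,\|\boldsymbol{A}_t-\boldsymbol{A}_{t-\tau:t}\| + 2R_{\boldsymbol{\Theta}}\,\|\boldsymbol{b}_t-\boldsymbol{b}_{t-\tau:t}\| \lesssim \lambda^\tau,
$$
which is the claimed bound.

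This argument is essentially routine; the only point requiring care is recognizing that $\underline{\boldsymbol{D}_t}$ carries no dependence on the eligibility trace, so it cancels exactly in the difference rather than contributing an extra discrepancy term. Once that is noted, the estimate is driven entirely by the geometric decay of the truncated-trace error, and—unlike the companion deviance bounds that compare stationary with non-stationary expectations—no mixing-time or ergodicity input is needed here, since both arguments share the same sample path and only the truncation of the trace is at issue.
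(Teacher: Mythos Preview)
Your proposal is correct and follows essentially the same approach as the paper: reduce the difference to inner products involving $\boldsymbol{A}_t-\boldsymbol{A}_{t-\tau:t}$ and $\boldsymbol{b}_t-\boldsymbol{b}_{t-\tau:t}$, bound these via the truncated-trace discrepancy $\|\boldsymbol{z}_t-\boldsymbol{z}_{t-\tau:t}\|\lesssim\lambda^\tau$, and finish with Cauchy--Schwarz together with $\|\underline{\boldsymbol{D}_t}\|\le 1$ and the radius bounds. Your explicit remark that $\underline{\boldsymbol{D}_t}$ is trace-independent (so no extra discrepancy arises there) is a helpful clarification the paper leaves implicit.
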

\begin{proof}
Note that
\begin{align}
\left | \zeta_t(\boldsymbol{\Theta}, \boldsymbol{X}_t) - \zeta_t(\boldsymbol{\Theta}, \boldsymbol{X}_{t-\tau:t} )\right| &= \left\langle \left(\boldsymbol{A}_t -\boldsymbol{A}_{t-\tau:t}\right)\boldsymbol{\Theta^*}, \underline{\boldsymbol{D}_t}(\boldsymbol{\Theta^*} - \boldsymbol{\Theta}) \right\rangle + \left\langle \boldsymbol{b}_t-\boldsymbol{b}_{t-\tau:t}, \underline{\boldsymbol{D}_t}(\boldsymbol{\Theta^*} - \boldsymbol{\Theta}) \right\rangle \nonumber \\
&\le 2\left\|\boldsymbol{A}_t - \boldsymbol{A}_{t-\tau:t}\right\| R_{\boldsymbol{\Theta}}^2 + 2\left\| \boldsymbol{b}_t-\boldsymbol{b}_{t-\tau:t} \right\|R_{\boldsymbol{\Theta}}. \label{zeta_x_discrep_bound_prelim}\end{align}
where we used $\|\boldsymbol{\Theta^*}\| \le R_{\boldsymbol{\Theta}}$, $\|\boldsymbol{\Theta}\| \le R_{\boldsymbol{\Theta}}$, $\|\underline{\boldsymbol{D}_t}\| \le 1$. To obtain a bound on $\left\|\boldsymbol{A}_t - \boldsymbol{A}_{t-\tau:t}\right\|$ and $\left\| \boldsymbol{b}_t-\boldsymbol{b}_{t-\tau:t}\right\|$, note that 
$$
\|\boldsymbol{z}_t - \boldsymbol{z}_{t-\tau:t}\| \le \sum_{l=0}^{t-\tau} \lambda^{t-l} \le \lambda^\tau / (1-\lambda).
$$
Now consider
$$
\boldsymbol{A}_t - \boldsymbol{A}_{t-\tau:t} = \begin{bmatrix}
    0 & 0 \\
    -(\boldsymbol{z}_t-\boldsymbol{z}_{t-\tau:t}) & (\boldsymbol{z}_t-\boldsymbol{z}_{t-\tau:t})(\boldsymbol{\phi}_{t+1}^\top-\boldsymbol{\phi}_t^\top),
\end{bmatrix}
$$
whose operator norm satisfies the following bound
\begin{equation}\label{A_norm_bound}
    \|\boldsymbol{A}_t- \boldsymbol{A}_{t-\tau:t}\|^2 \le 
    \|\boldsymbol{A}_t- \boldsymbol{A}_{t-\tau:t}\|_F^2 \le \|\boldsymbol{z}_t-\boldsymbol{z}_{t-\tau:t}\|^2 +\|\boldsymbol{z}_t-\boldsymbol{z}_{t-\tau:t}\|^2 \left\|\boldsymbol{\phi}_{t+1}^\top -\boldsymbol{\phi}_t^\top\right\|^2 \le 5\lambda^{2\tau}/(1-\lambda)^2.
\end{equation}
Similarly, consider
$$
\boldsymbol{b}_t- \boldsymbol{b}_{t-\tau:t} = \begin{bmatrix}
    0  \\
    R^\mu_t(\boldsymbol{z}_t-\boldsymbol{z}_{t-\tau:t})
\end{bmatrix},
$$
for which the Euclidean norm is bounded as follows
\begin{equation}\label{b_norm_bound}
    \|\boldsymbol{b}_t- \boldsymbol{b}_{t-\tau:t}\| \le  \|\boldsymbol{z}_t-\boldsymbol{z}_{t-\tau:t}\| \le \lambda^{\tau}/(1-\lambda).
\end{equation}
Plugging \eqref{A_norm_bound} and \eqref{b_norm_bound} into \eqref{zeta_x_discrep_bound_prelim}, we have
$$
\left | \zeta_t(\boldsymbol{\Theta}, \boldsymbol{X}_t) - \zeta_t(\boldsymbol{\Theta}, \boldsymbol{X}_{t-\tau:t} )\right| \lesssim \lambda^\tau.
$$
\end{proof}

\noindent As we did for the function $\zeta_t$, we next establish boundedness, Lipschitzness of the function $\xi_t$ with respect to $\boldsymbol{\Theta}$ as well as the deviance bound with respect to $\boldsymbol{X}$ component.
\begin{lemma}\label{LEMMA:xi_bound}
For all $t \in \mathbb{N}, ~\boldsymbol{\Theta} \in \{\boldsymbol{\Xi}:\|\boldsymbol{\Xi}\| \le R_{\boldsymbol{\Theta}}\}$, 
$|\xi_t(\boldsymbol{\Theta}, \boldsymbol{X}_t) | \le  C_\xi$ for some constant $C_\xi > 0$. 
\end{lemma}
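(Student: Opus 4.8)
The plan is to bound the scalar quantity $\xi_t(\boldsymbol{\Theta}, \boldsymbol{X}_t)$ directly by exploiting its bilinear structure, mirroring the argument already used for $\zeta_t$ in Lemma \ref{LEMMA:zeta_bound}. Recall that
$$
\xi_t(\boldsymbol{\Theta}, \boldsymbol{X}_t) = (\boldsymbol{\Theta^*}-\boldsymbol{\Theta})^\top \left(\boldsymbol{A}_t^\top-\boldsymbol{A}^\top \right)\underline{\boldsymbol{D}_t}\left( \boldsymbol{\Theta^*}-\boldsymbol{\Theta}\right),
$$
so the first step is to apply the Cauchy--Schwarz inequality (equivalently, submultiplicativity of the operator norm) to peel off the two copies of $\boldsymbol{\Theta^*}-\boldsymbol{\Theta}$ and the matrix factors, yielding
$$
|\xi_t(\boldsymbol{\Theta}, \boldsymbol{X}_t)| \le \|\boldsymbol{\Theta^*}-\boldsymbol{\Theta}\|\,\bigl\|\boldsymbol{A}_t^\top-\boldsymbol{A}^\top\bigr\|\,\bigl\|\underline{\boldsymbol{D}_t}\bigr\|\,\|\boldsymbol{\Theta^*}-\boldsymbol{\Theta}\|.
$$

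The remaining steps simply collect uniform bounds on each factor. For the parameter difference, since both $\boldsymbol{\Theta}$ and $\boldsymbol{\Theta^*}$ lie in the ball of radius $R_{\boldsymbol{\Theta}}$, the triangle inequality gives $\|\boldsymbol{\Theta^*}-\boldsymbol{\Theta}\| \le 2R_{\boldsymbol{\Theta}}$. For the matrix $\underline{\boldsymbol{D}_t}=\gamma_t \boldsymbol{I}_{d+1}$, one observes $\gamma_t = \min\{(1+c_\alpha\beta_t)^{-1}, (1-\lambda)^2/((1-\lambda)^2+\beta_t)\}\le 1$, hence $\|\underline{\boldsymbol{D}_t}\|\le 1$. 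For the matrix difference, I would invoke Lemma \ref{LEMMA:A_b_bound}, which gives $\|\boldsymbol{A}_t\|\le A_{\max}$; since $\boldsymbol{A}=\mathbb{E}^{\boldsymbol{\pi}^\mu}\boldsymbol{A}_t$, Jensen's inequality yields $\|\boldsymbol{A}\|\le A_{\max}$ as well, so $\|\boldsymbol{A}_t^\top-\boldsymbol{A}^\top\|=\|\boldsymbol{A}_t-\boldsymbol{A}\|\le 2A_{\max}$.

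Combining these bounds produces $|\xi_t(\boldsymbol{\Theta}, \boldsymbol{X}_t)| \le 8 A_{\max} R_{\boldsymbol{\Theta}}^2$, so setting $C_\xi := 8 A_{\max} R_{\boldsymbol{\Theta}}^2$ completes the argument. I do not anticipate any genuine obstacle here: every required norm bound has already been established in the preceding lemmas, and the only mild point to verify is the elementary observation $\gamma_t\le 1$ guaranteeing $\|\underline{\boldsymbol{D}_t}\|\le 1$. The proof is thus essentially a direct application of Cauchy--Schwarz followed by substitution of uniform constants, exactly parallel to the bound obtained for $\zeta_t$.
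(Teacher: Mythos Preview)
Your proposal is correct and follows essentially the same approach as the paper's proof: apply Cauchy--Schwarz to split off the two vector factors and the matrix factors, then substitute the uniform bounds $\|\boldsymbol{\Theta^*}-\boldsymbol{\Theta}\|\le 2R_{\boldsymbol{\Theta}}$, $\|\underline{\boldsymbol{D}_t}\|\le 1$, and $\|\boldsymbol{A}_t-\boldsymbol{A}\|\le 2A_{\max}$, arriving at the identical constant $C_\xi = 8A_{\max}R_{\boldsymbol{\Theta}}^2$.
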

\begin{proof}
\begin{align*}
|\xi_t(\boldsymbol{\Theta}, \boldsymbol{X}_t) | &= \left|(\boldsymbol{\Theta^*}-\boldsymbol{\Theta})\left(\boldsymbol{A}_t -\boldsymbol{A}\right)^\top \underline{\boldsymbol{D}_t}(\boldsymbol{\Theta^*}-\boldsymbol{\Theta}) \right|\\ &\le \left\|\boldsymbol{\Theta^*}-\boldsymbol{\Theta}\right\| \left\|\boldsymbol{A}_t -\boldsymbol{A}\right\| \left\|\underline{\boldsymbol{D}_t}\right\| \left\|\boldsymbol{\Theta^*}-\boldsymbol{\Theta} \right\|\\
&\le 8R_{\boldsymbol{\Theta}}^2 A_{\max} = : C_\xi
\end{align*}
where in the first inequality, we used Cauchy-Schwarz inequality and in the second inequality, we used $\|\underline{\boldsymbol{D}_t}\| \le 1$, $\|\boldsymbol{A}_t\| \le A_{\max}$ and $\|\boldsymbol{A}\| \le A_{\max}$.  
\end{proof}

\noindent Two lemmas below respectively establish Lipschitzness of the function $\xi_t$ with respect to $\boldsymbol{\Theta}$ component and the deviance bound with respect to $\boldsymbol{X}$ component. 

\begin{lemma}\label{LEMMA:xi_lipschitz}
For all $t \in \mathbb{N}, ~\boldsymbol{\Theta_1}, \boldsymbol{\Theta_2} \in \{\boldsymbol{\Xi}:\|\boldsymbol{\Xi}\| \le R_{\boldsymbol{\Theta}}\}$, 
$$
\left|\xi_t(\boldsymbol{\Theta_1}, \boldsymbol{X}_t) - \xi_t(\boldsymbol{\Theta_2}, \boldsymbol{X}_t)\right| \le L_\xi \|\boldsymbol{\Theta_1} - \boldsymbol{\Theta_2}\|,
$$ 
for some constant $L_\xi > 0$.
\end{lemma}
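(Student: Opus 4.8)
The plan is to exploit the fact that $\xi_t(\boldsymbol{\Theta}, \boldsymbol{X}_t)$ is a quadratic form in the displacement $\boldsymbol{\Theta^*} - \boldsymbol{\Theta}$. Writing $\boldsymbol{B}_t := (\boldsymbol{A}_t^\top - \boldsymbol{A}^\top)\underline{\boldsymbol{D}_t}$, we have $\xi_t(\boldsymbol{\Theta}, \boldsymbol{X}_t) = (\boldsymbol{\Theta^*} - \boldsymbol{\Theta})^\top \boldsymbol{B}_t (\boldsymbol{\Theta^*} - \boldsymbol{\Theta})$, so the task reduces to controlling the difference of two quadratic forms sharing the common kernel $\boldsymbol{B}_t$. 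This is the structural point that distinguishes the argument from the proof of Lemma \ref{LEMMA:zeta_lipschitz}, where $\zeta_t$ is only linear in the displacement.

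First I would set $\boldsymbol{u}_1 := \boldsymbol{\Theta^*} - \boldsymbol{\Theta_1}$ and $\boldsymbol{u}_2 := \boldsymbol{\Theta^*} - \boldsymbol{\Theta_2}$ and apply the telescoping identity $\boldsymbol{u}_1^\top \boldsymbol{B}_t \boldsymbol{u}_1 - \boldsymbol{u}_2^\top \boldsymbol{B}_t \boldsymbol{u}_2 = \boldsymbol{u}_1^\top \boldsymbol{B}_t (\boldsymbol{u}_1 - \boldsymbol{u}_2) + (\boldsymbol{u}_1 - \boldsymbol{u}_2)^\top \boldsymbol{B}_t \boldsymbol{u}_2$. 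Since $\boldsymbol{u}_1 - \boldsymbol{u}_2 = \boldsymbol{\Theta_2} - \boldsymbol{\Theta_1}$, two applications of the Cauchy--Schwarz inequality give $|\xi_t(\boldsymbol{\Theta_1}, \boldsymbol{X}_t) - \xi_t(\boldsymbol{\Theta_2}, \boldsymbol{X}_t)| \le \|\boldsymbol{B}_t\|\,(\|\boldsymbol{u}_1\| + \|\boldsymbol{u}_2\|)\,\|\boldsymbol{\Theta_1} - \boldsymbol{\Theta_2}\|$.

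Then I would bound each factor using the ingredients already established. The operator norm of the kernel satisfies $\|\boldsymbol{B}_t\| \le \|\boldsymbol{A}_t^\top - \boldsymbol{A}^\top\|\,\|\underline{\boldsymbol{D}_t}\| \le 2A_{\max}$, invoking Lemma \ref{LEMMA:A_b_bound} together with $\|\boldsymbol{A}\| \le A_{\max}$ and $\|\underline{\boldsymbol{D}_t}\| \le 1$. Because $\boldsymbol{\Theta_1}$, $\boldsymbol{\Theta_2}$, and $\boldsymbol{\Theta^*}$ all lie in the Euclidean ball of radius $R_{\boldsymbol{\Theta}}$, the triangle inequality yields $\|\boldsymbol{u}_1\|, \|\boldsymbol{u}_2\| \le 2R_{\boldsymbol{\Theta}}$. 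Assembling these bounds gives the claim with $L_\xi = 8A_{\max} R_{\boldsymbol{\Theta}}$.

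This argument is essentially routine and closely parallels the preceding Lipschitz lemma; the only genuine wrinkle is the quadratic (rather than bilinear) dependence on the displacement, which is precisely why the telescoping identity is needed to expose two copies of $\boldsymbol{\Theta_1} - \boldsymbol{\Theta_2}$. I do not anticipate any serious obstacle beyond carefully tracking the constant.
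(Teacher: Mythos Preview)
Your proposal is correct and essentially identical to the paper's own proof: both insert an intermediate cross term to telescope the difference of the two quadratic forms, then apply Cauchy--Schwarz together with $\|\underline{\boldsymbol{D}_t}\|\le 1$, $\|\boldsymbol{A}_t-\boldsymbol{A}\|\le 2A_{\max}$, and the radius bound $\|\boldsymbol{\Theta^*}-\boldsymbol{\Theta}_j\|\le 2R_{\boldsymbol{\Theta}}$. The only cosmetic difference is that the paper records $L_\xi = 4R_{\boldsymbol{\Theta}}A_{\max}$ rather than your $8A_{\max}R_{\boldsymbol{\Theta}}$, but the lemma only asserts existence of some constant, so this is immaterial.
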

\begin{proof}
\begin{align*}
    \left|\xi_t(\boldsymbol{\Theta_1}, \boldsymbol{X}_t) - \xi_t(\boldsymbol{\Theta_2}, \boldsymbol{X}_t)\right| &= \left|(\boldsymbol{\Theta^*}-\boldsymbol{\Theta_1})^\top \left(\boldsymbol{A}_t-\boldsymbol{A} \right)^\top \underline{\boldsymbol{D}_t}(\boldsymbol{\Theta^*}-\boldsymbol{\Theta_1})-(\boldsymbol{\Theta^*}-\boldsymbol{\Theta_2})^\top \left(\boldsymbol{A}_t-\boldsymbol{A} \right)^\top \underline{\boldsymbol{D}_t}(\boldsymbol{\Theta^*}-\boldsymbol{\Theta_2})\right| \\
    &\le \left|(\boldsymbol{\Theta^*}-\boldsymbol{\Theta_1})^\top \left(\boldsymbol{A}_t-\boldsymbol{A} \right)^\top \underline{\boldsymbol{D}_t}(\boldsymbol{\Theta^*}-\boldsymbol{\Theta_1})-(\boldsymbol{\Theta^*}-\boldsymbol{\Theta_2})^\top \left(\boldsymbol{A}_t-\boldsymbol{A} \right)^\top \underline{\boldsymbol{D}_t}(\boldsymbol{\Theta^*}-\boldsymbol{\Theta_1})\right| \\
    &\quad + \left|(\boldsymbol{\Theta^*}-\boldsymbol{\Theta_2})^\top \left(\boldsymbol{A}_t-\boldsymbol{A} \right)^\top \underline{\boldsymbol{D}_t}(\boldsymbol{\Theta^*}-\boldsymbol{\Theta_1})-(\boldsymbol{\Theta^*}-\boldsymbol{\Theta_2})^\top \left(\boldsymbol{A}_t-\boldsymbol{A} \right)^\top \underline{\boldsymbol{D}_t}(\boldsymbol{\Theta^*}-\boldsymbol{\Theta_2})\right| \\
    &= \left|(\boldsymbol{\Theta_2}-\boldsymbol{\Theta_1})^\top \left(\boldsymbol{A}_t-\boldsymbol{A} \right)^\top \underline{\boldsymbol{D}_t}(\boldsymbol{\Theta^*}-\boldsymbol{\Theta_1})\right| + \left|(\boldsymbol{\Theta^*}-\boldsymbol{\Theta_2})^\top \left(\boldsymbol{A}_t-\boldsymbol{A} \right)^\top \underline{\boldsymbol{D}_t}(\boldsymbol{\Theta_2}-\boldsymbol{\Theta_1})\right| \\
    &\le \left\|\boldsymbol{\Theta_2}-\boldsymbol{\Theta_1}\right\| \left\|\boldsymbol{A}_t-\boldsymbol{A} \right\| \|\underline{\boldsymbol{D}_t}\|\left\|\boldsymbol{\Theta^*}-\boldsymbol{\Theta_1}\right\| + \left\|\boldsymbol{\Theta^*}-\boldsymbol{\Theta_2}\right\| \left\|\boldsymbol{A}_t-\boldsymbol{A} \right\|\|\underline{\boldsymbol{D}_t}\|\left\|\boldsymbol{\Theta_2}-\boldsymbol{\Theta_1}\right\| \\
    &\le 4R_{\boldsymbol{\Theta}} A_{\max} \left\|\boldsymbol{\Theta_2}-\boldsymbol{\Theta_1}\right\| \\
    &=: L_{\xi}\left\|\boldsymbol{\Theta_1}-\boldsymbol{\Theta_2}\right\| 
\end{align*}
where in the first inequality, we used the triangle inequality. The rest of the inequalities follow from Cauchy-Schwarz inequality, $\|\underline{\boldsymbol{D}_t}\| \le 1$, $\|\boldsymbol{A}_t\| \le A_{\max}$ and $\|\boldsymbol{A}\| \le A_{\max}$.
\end{proof}

\begin{lemma}\label{LEMMA:xi_x_deviance_bound}
For all $t \in \mathbb{N}, ~\boldsymbol{\Theta} \in \{\boldsymbol{\Xi}:\|\boldsymbol{\Xi}\| \le R_{\boldsymbol{\Theta}}\}$, let $\tau \in \{0,\cdots, t\}$, then we have
$$
\left | \xi_t(\boldsymbol{\Theta}, \boldsymbol{X}_t) - \xi_t(\boldsymbol{\Theta}, \boldsymbol{X}_{t-\tau:t} )\right|\lesssim \lambda^\tau.
$$  
\end{lemma}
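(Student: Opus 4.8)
The plan is to mirror the argument used for $\zeta_t$ in Lemma~\ref{LEMMA:zeta_x_deviance_bound}, exploiting the fact that the only data-dependent factor distinguishing $\xi_t(\boldsymbol{\Theta},\boldsymbol{X}_t)$ from $\xi_t(\boldsymbol{\Theta},\boldsymbol{X}_{t-\tau:t})$ is the matrix $\boldsymbol{A}_t$ versus its truncated counterpart $\boldsymbol{A}_{t-\tau:t}$. Crucially, the diagonal matrix $\underline{\boldsymbol{D}_t}=\gamma_t\boldsymbol{I}_{d+1}$ depends only on $\beta_t$, $c_\alpha$, and $\lambda$ through $\gamma_t$, not on the realized states or the eligibility trace, so it is \emph{identical} in both expressions. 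Moreover, $\xi_t$ carries no $\boldsymbol{b}$-term, so unlike the $\zeta_t$ case only a single difference $\boldsymbol{A}_t-\boldsymbol{A}_{t-\tau:t}$ needs to be controlled.

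First I would write the difference as a single quadratic form,
$$
\xi_t(\boldsymbol{\Theta},\boldsymbol{X}_t)-\xi_t(\boldsymbol{\Theta},\boldsymbol{X}_{t-\tau:t})=(\boldsymbol{\Theta^*}-\boldsymbol{\Theta})^\top\bigl(\boldsymbol{A}_t^\top-\boldsymbol{A}_{t-\tau:t}^\top\bigr)\underline{\boldsymbol{D}_t}(\boldsymbol{\Theta^*}-\boldsymbol{\Theta}),
$$
and then apply the Cauchy--Schwarz inequality to bound its absolute value by $\|\boldsymbol{\Theta^*}-\boldsymbol{\Theta}\|^2\,\|\boldsymbol{A}_t-\boldsymbol{A}_{t-\tau:t}\|\,\|\underline{\boldsymbol{D}_t}\|$. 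Using $\|\boldsymbol{\Theta^*}\|,\|\boldsymbol{\Theta}\|\le R_{\boldsymbol{\Theta}}$ gives $\|\boldsymbol{\Theta^*}-\boldsymbol{\Theta}\|\le 2R_{\boldsymbol{\Theta}}$, while $\gamma_t\le 1$ gives $\|\underline{\boldsymbol{D}_t}\|\le 1$.

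The remaining ingredient is the estimate $\|\boldsymbol{A}_t-\boldsymbol{A}_{t-\tau:t}\|\lesssim\lambda^\tau$, which is exactly the bound \eqref{A_norm_bound} already derived inside the proof of Lemma~\ref{LEMMA:zeta_x_deviance_bound}: since $\boldsymbol{A}_t-\boldsymbol{A}_{t-\tau:t}$ has nonzero entries only in the block rows built from $\boldsymbol{z}_t-\boldsymbol{z}_{t-\tau:t}$, and $\|\boldsymbol{z}_t-\boldsymbol{z}_{t-\tau:t}\|\le\lambda^\tau/(1-\lambda)$, the Frobenius-norm computation yields $\|\boldsymbol{A}_t-\boldsymbol{A}_{t-\tau:t}\|\le\sqrt{5}\,\lambda^\tau/(1-\lambda)$. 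Substituting this into the Cauchy--Schwarz bound gives $|\xi_t(\boldsymbol{\Theta},\boldsymbol{X}_t)-\xi_t(\boldsymbol{\Theta},\boldsymbol{X}_{t-\tau:t})|\le 4\sqrt{5}\,R_{\boldsymbol{\Theta}}^2\lambda^\tau/(1-\lambda)\lesssim\lambda^\tau$, as claimed.

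I do not anticipate a genuine obstacle here: the computation is a routine and slightly simplified repetition of Lemma~\ref{LEMMA:zeta_x_deviance_bound}. The only point requiring care is to confirm that $\underline{\boldsymbol{D}_t}$ is data-independent, so that it factors cleanly out of the difference and no term involving $\boldsymbol{D}_t-\underline{\boldsymbol{D}_t}$ or the reward enters; once that is noted, reusing the already-established bound on $\|\boldsymbol{A}_t-\boldsymbol{A}_{t-\tau:t}\|$ closes the argument immediately.
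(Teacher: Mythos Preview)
Your proposal is correct and follows essentially the same argument as the paper: collapse the difference to the single quadratic form $(\boldsymbol{\Theta^*}-\boldsymbol{\Theta})^\top(\boldsymbol{A}_t-\boldsymbol{A}_{t-\tau:t})^\top\underline{\boldsymbol{D}_t}(\boldsymbol{\Theta^*}-\boldsymbol{\Theta})$, apply Cauchy--Schwarz, and invoke $\|\underline{\boldsymbol{D}_t}\|\le 1$ together with the Frobenius-norm bound \eqref{A_norm_bound}. Your observation that $\underline{\boldsymbol{D}_t}$ is data-independent is exactly the point that makes the cancellation clean, and the paper uses it implicitly in the same way.
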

\begin{proof}
Note that
\begin{align*}
\left | \xi_t(\boldsymbol{\Theta}, \boldsymbol{X}_t) - \xi_t(\boldsymbol{\Theta}, \boldsymbol{X}_{t-\tau:t} )\right| &= \left|(\boldsymbol{\Theta^*}-\boldsymbol{\Theta})^\top (\boldsymbol{A}_t -\boldsymbol{A})^\top \underline{\boldsymbol{D}_t} (\boldsymbol{\Theta^*}-\boldsymbol{\Theta})-(\boldsymbol{\Theta^*}-\boldsymbol{\Theta})^\top (\boldsymbol{A}_{t-\tau:t} -\boldsymbol{A})^\top \underline{\boldsymbol{D}_t}(\boldsymbol{\Theta^*}-\boldsymbol{\Theta}) \right|\\&= \left|(\boldsymbol{\Theta^*}-\boldsymbol{\Theta})^\top (\boldsymbol{A}_t -\boldsymbol{A}_{t-\tau:t})^\top \underline{\boldsymbol{D}_t} (\boldsymbol{\Theta^*}-\boldsymbol{\Theta}) \right| \\
& \le  \left\|\boldsymbol{\Theta^*}-\boldsymbol{\Theta}\right\| \left\|\boldsymbol{A}_t -\boldsymbol{A}_{t-\tau:t}\right\| \|\underline{\boldsymbol{D}_t}\| \left\|\boldsymbol{\Theta^*}-\boldsymbol{\Theta}\right\| \\
&\lesssim \lambda^\tau
\end{align*}
where the first inequality follows from the Cauchy-Schwarz. The last inequality follows from \eqref{A_norm_bound} with the fact $\|\underline{\boldsymbol{D}_t}\| \le 1.$
\end{proof}

\noindent The next two lemmas will serve as key ingredients in establishing bounds on the non-steady state expectations of $\zeta_t$ and $\xi_t$ terms.

\begin{lemma}\label{LEMMA:exp_A_bound}
For all $t \in \mathbb{N}$, let $\tau \in \{0, \cdots, t\}$, then we have
$$
\|\mathbb{E}^\mu\left(\boldsymbol{A}_{t-\tau:t} \right) - \boldsymbol{A}\| \lesssim \tau q^t + \lambda^\tau
$$ 
where $q:= \max\{\lambda, \rho\}.$
\end{lemma}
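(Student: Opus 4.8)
The plan is to reduce the claim to the three eligibility-trace discrepancy bounds already established in Lemma \ref{LEMMA:EL_discrepancy_bound}. First I would write out the difference $\mathbb{E}^\mu(\boldsymbol{A}_{t-\tau:t}) - \boldsymbol{A}$ blockwise. Since $\boldsymbol{A} = \mathbb{E}^{\boldsymbol{\pi}^\mu}(\boldsymbol{A}_t)$ and both $\boldsymbol{A}_{t-\tau:t}$ and $\boldsymbol{A}_t$ share the deterministic top row $[-c_\alpha,\; \boldsymbol{0}^\top]$, the $(1,1)$ and $(1,2)$ blocks cancel exactly, leaving
\begin{equation*}
\mathbb{E}^\mu(\boldsymbol{A}_{t-\tau:t}) - \boldsymbol{A}
= \begin{bmatrix}
0 & \boldsymbol{0}^\top \\
-\bigl(\mathbb{E}^\mu \boldsymbol{z}_{t-\tau:t} - \mathbb{E}^{\boldsymbol{\pi}^\mu}\boldsymbol{z}_t\bigr)
& \mathbb{E}^\mu\bigl[\boldsymbol{z}_{t-\tau:t}(\boldsymbol{\phi}_{t+1}^\top-\boldsymbol{\phi}_t^\top)\bigr] - \mathbb{E}^{\boldsymbol{\pi}^\mu}\bigl[\boldsymbol{z}_t(\boldsymbol{\phi}_{t+1}^\top-\boldsymbol{\phi}_t^\top)\bigr]
\end{bmatrix}.
\end{equation*}

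Next I would bound the operator norm of this block matrix by the sum of the norms of its two nonzero blocks (the $(2,1)$ column block and the $(2,2)$ matrix block), using that for a matrix of the form $\left[\begin{smallmatrix}0&\boldsymbol{0}^\top\\\boldsymbol{u}&\boldsymbol{V}\end{smallmatrix}\right]$ the operator norm is at most $\|\boldsymbol{u}\| + \|\boldsymbol{V}\|$. The $(2,1)$ block is precisely the quantity controlled by statement~1 of Lemma \ref{LEMMA:EL_discrepancy_bound}, so $\|\mathbb{E}^\mu \boldsymbol{z}_{t-\tau:t} - \mathbb{E}^{\boldsymbol{\pi}^\mu}\boldsymbol{z}_t\| \lesssim \tau q^t + \lambda^\tau$. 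For the $(2,2)$ block I would split $\boldsymbol{\phi}_{t+1}^\top - \boldsymbol{\phi}_t^\top$ additively and apply the triangle inequality, writing the block as the difference of the two matched expectations of $\boldsymbol{z}\,\boldsymbol{\phi}_{t+1}^\top$ and of $\boldsymbol{z}\,\boldsymbol{\phi}_t^\top$; these are exactly statements~3 and~2 of Lemma \ref{LEMMA:EL_discrepancy_bound}, each bounded by $\tau q^t + \lambda^\tau$.

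Summing the three contributions via the triangle inequality and absorbing the fixed dimensional and additive constants into $\lesssim$ then yields $\|\mathbb{E}^\mu(\boldsymbol{A}_{t-\tau:t}) - \boldsymbol{A}\| \lesssim \tau q^t + \lambda^\tau$, as claimed. There is essentially no hard analytic step here: the work has already been done in Lemma \ref{LEMMA:EL_discrepancy_bound}, and the only things to get right are the blockwise cancellation of the deterministic top row and the additive split of $\boldsymbol{\phi}_{t+1}-\boldsymbol{\phi}_t$ so that statements 2 and 3 can be invoked cleanly. The mildest subtlety worth stating explicitly is that the lemma's bounds are in operator norm while a block decomposition is most naturally handled in Frobenius norm; since the matrices have fixed dimension $d+1$, the two norms are equivalent up to a constant that $\lesssim$ absorbs, so this causes no difficulty.
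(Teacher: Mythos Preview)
Your proposal is correct and matches the paper's proof essentially line for line: the paper also writes the difference blockwise, observes the top row cancels, bounds via the Frobenius norm, splits the $(2,2)$ block into the $\boldsymbol{z}\boldsymbol{\phi}_{t+1}^\top$ and $\boldsymbol{z}\boldsymbol{\phi}_t^\top$ pieces, and invokes the three statements of Lemma~\ref{LEMMA:EL_discrepancy_bound}. The only cosmetic difference is that the paper works with the squared Frobenius norm throughout rather than your direct operator-norm block bound, but this is immaterial.
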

\begin{proof}
Note that
\begin{align*}
   \mathbb{E}^\mu\left(\boldsymbol{A}_{t-\tau:t} \right) - \boldsymbol{A} &= \begin{bmatrix}
       -c_\alpha & 0 \\
       -\mathbb{E}^\mu(\boldsymbol{z}_{t-\tau:t}) & \mathbb{E}^\mu\{\boldsymbol{z}_t (\boldsymbol{\phi}_{t+1}^\top- \boldsymbol{\phi}_t^\top)\} 
   \end{bmatrix} - \begin{bmatrix}
       -c_\alpha & 0 \\
       -\mathbb{E}^{\boldsymbol{\pi}^\mu}(\boldsymbol{z}_t) & \mathbb{E}^{\boldsymbol{\pi}^\mu}\{\boldsymbol{z}_t (\boldsymbol{\phi}_{t+1}^\top- \boldsymbol{\phi}_t^\top)\} 
   \end{bmatrix}\\
   &= \begin{bmatrix}
       0 & 0 \\
      \mathbb{E}^{\boldsymbol{\pi}^\mu}(\boldsymbol{z}_t) -\mathbb{E}^\mu(\boldsymbol{z}_{t-\tau:t}) & \mathbb{E}^\mu\{\boldsymbol{z}_t (\boldsymbol{\phi}_{t+1}^\top- \boldsymbol{\phi}_t^\top)\}-\mathbb{E}^{\boldsymbol{\pi}^\mu}\{\boldsymbol{z}_t (\boldsymbol{\phi}_{t+1}^\top- \boldsymbol{\phi}_t^\top)\} 
   \end{bmatrix}.
\end{align*}
Therefore,
\begin{align*}
\left\|\mathbb{E}^\mu(\boldsymbol{A}_{t-\tau:t}) - \boldsymbol{A}\right\|^2 &\le \left\|\mathbb{E}^\mu(\boldsymbol{A}_{t-\tau:t}) - \boldsymbol{A}\right\|_F^2 \\ &\le \left\|\mathbb{E}^{\boldsymbol{\pi}^\mu}(\boldsymbol{z}_t) - \mathbb{E}^\mu(\boldsymbol{z}_{t-\tau:t})\right\|^2 + \left\|\mathbb{E}^\mu\{\boldsymbol{z}_t (\boldsymbol{\phi}_{t+1}^\top- \boldsymbol{\phi}_t^\top)\}-\mathbb{E}^{\boldsymbol{\pi}^\mu}\{\boldsymbol{z}_t (\boldsymbol{\phi}_{t+1}^\top- \boldsymbol{\phi}_t^\top)\}\right\|^2  \\
    &\le \left\|\mathbb{E}^{\boldsymbol{\pi}^\mu}(\boldsymbol{z}_t) - \mathbb{E}^\mu(\boldsymbol{z}_{t-\tau:t})\right\|^2 + 2\left\|\mathbb{E}^\mu(\boldsymbol{z}_t \boldsymbol{\phi}_{t+1}^\top)-\mathbb{E}^{\boldsymbol{\pi}^\mu}(\boldsymbol{z}_t \boldsymbol{\phi}_{t+1}^\top) \right\|^2 + 2\left\|\mathbb{E}^{\boldsymbol{\pi}^\mu}(\boldsymbol{z}_t \boldsymbol{\phi}_t^\top)-\mathbb{E}^\mu(\boldsymbol{z}_t \boldsymbol{\phi}_t^\top) \right\|^2 \\
    &\lesssim \left(\tau q^t + \lambda^\tau \right)^2,
\end{align*}
where the last line follows from Lemma \ref{LEMMA:EL_discrepancy_bound}.
\end{proof}

\begin{lemma}\label{LEMMA:exp_b_bound}
For all $t \in \mathbb{N}$, let $\tau \in \{0, \cdots, t\}$, then we have
$$\|\mathbb{E}^\mu\left(\boldsymbol{b}_{t-\tau:t} \right) - \boldsymbol{b}\| \lesssim \tau q^t + \lambda^\tau,
$$
where $q= \max\{\lambda, \rho\}.$
\end{lemma}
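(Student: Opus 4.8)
The plan is to treat this as an immediate two-block analogue of Lemma~\ref{LEMMA:exp_A_bound}, now invoking Lemma~\ref{LEMMA:ER_discrepancy_bound} in place of Lemma~\ref{LEMMA:EL_discrepancy_bound}. First I would write the error as a block vector. Recalling that $\boldsymbol{b}_{t-\tau:t} = [\, c_\alpha R^\mu_t,\; R^\mu_t \boldsymbol{z}_{t-\tau:t}\,]^\top$ while $\boldsymbol{b} = \mathbb{E}^{\boldsymbol{\pi}^\mu}\boldsymbol{b}_t = \mathbb{E}^{\boldsymbol{\pi}^\mu}[\, c_\alpha R^\mu_t,\; R^\mu_t \boldsymbol{z}_t\,]^\top$, the top entries share the same integrand $c_\alpha R^\mu_t$ and differ only through the expectation operator, so
\begin{align*}
\mathbb{E}^\mu(\boldsymbol{b}_{t-\tau:t}) - \boldsymbol{b}
= \begin{bmatrix}
c_\alpha\left\{\mathbb{E}^\mu(R^\mu_t) - \mathbb{E}^{\boldsymbol{\pi}^\mu}(R^\mu_t)\right\} \\[2pt]
\mathbb{E}^\mu(R^\mu_t \boldsymbol{z}_{t-\tau:t}) - \mathbb{E}^{\boldsymbol{\pi}^\mu}(R^\mu_t \boldsymbol{z}_t)
\end{bmatrix}.
\end{align*}

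Next I would bound the squared Euclidean norm by the sum of the squared norms of the two blocks, giving
\begin{align*}
\left\|\mathbb{E}^\mu(\boldsymbol{b}_{t-\tau:t}) - \boldsymbol{b}\right\|^2
= c_\alpha^2\left|\mathbb{E}^\mu(R^\mu_t) - \mathbb{E}^{\boldsymbol{\pi}^\mu}(R^\mu_t)\right|^2
+ \left\|\mathbb{E}^\mu(R^\mu_t \boldsymbol{z}_{t-\tau:t}) - \mathbb{E}^{\boldsymbol{\pi}^\mu}(R^\mu_t \boldsymbol{z}_t)\right\|^2.
\end{align*}
Now the two statements of Lemma~\ref{LEMMA:ER_discrepancy_bound} apply termwise: part~(1) controls the scalar reward discrepancy by $\rho^t$, and part~(2) controls the reward-weighted eligibility-trace discrepancy by $\tau q^t + \lambda^\tau$, with $q = \max\{\lambda,\rho\}$. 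Substituting these yields $\left\|\mathbb{E}^\mu(\boldsymbol{b}_{t-\tau:t}) - \boldsymbol{b}\right\|^2 \lesssim (\rho^t)^2 + (\tau q^t + \lambda^\tau)^2$.

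Finally I would absorb the first summand into the second. Since $\rho \le q$ we have $\rho^t \le q^t$, and $q^t \le \tau q^t + \lambda^\tau$ whenever $\tau \ge 1$, while for $\tau = 0$ the term $\lambda^\tau = 1$ already dominates $\rho^t$; in either case $\rho^t \lesssim \tau q^t + \lambda^\tau$. Hence the right-hand side is $\lesssim (\tau q^t + \lambda^\tau)^2$, and taking square roots gives the claimed bound. I do not anticipate a genuine obstacle here: the result is essentially a packaging of Lemma~\ref{LEMMA:ER_discrepancy_bound} into block-vector form, and the only point requiring a line of care is verifying that the $\rho^t$ contribution from the average-reward block is uniformly dominated by $\tau q^t + \lambda^\tau$ across all admissible $\tau$, including the boundary case $\tau = 0$.
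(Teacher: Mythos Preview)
Your proposal is correct and follows essentially the same approach as the paper: write the difference as a two-block vector, square the norm to split into the scalar reward term and the reward-weighted eligibility-trace term, then apply the two parts of Lemma~\ref{LEMMA:ER_discrepancy_bound} and absorb the $\rho^t$ contribution into $\tau q^t + \lambda^\tau$. If anything, your treatment is slightly more careful than the paper's, since you explicitly verify the absorption at the boundary case $\tau = 0$, whereas the paper simply writes $c_\alpha^2 \rho^{2t} + (\tau q^t + \lambda^\tau)^2 \lesssim (\tau q^t + \lambda^\tau)^2$ without comment.
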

\begin{proof}
    Note that
\begin{align*}
   \mathbb{E}^\mu\left(\boldsymbol{b}_{t-\tau:t} \right) - \boldsymbol{b} &= \begin{bmatrix}
       c_\alpha \mathbb{E}^\mu(R^\mu_t) - c_\alpha \mathbb{E}^{\boldsymbol{\pi}^\mu}(R^\mu_t)\\
       \mathbb{E}^\mu(R^\mu_t \boldsymbol{z}_{t-\tau:t}) - \mathbb{E}^{\boldsymbol{\pi}^\mu}(R^\mu_t \boldsymbol{z}_t)
   \end{bmatrix}
\end{align*}
Therefore,
\begin{align*}
\left\|\mathbb{E}^\mu\left(\boldsymbol{b}_{t-\tau:t} \right) - \boldsymbol{b}\right\|^2 &\le c_\alpha^2 \left|\mathbb{E}^\mu(R_{t}) - \mathbb{E}^{\boldsymbol{\pi}^\mu}(R^\mu_t)\right|^2 + \left\|\mathbb{E}^\mu(R_{t}\boldsymbol{z}_{t-\tau:t}) - \mathbb{E}^{\boldsymbol{\pi}^\mu}(R^\mu_t\boldsymbol{z}_t)\right\|^2  \\
    &\lesssim c_\alpha^2 \rho^{2t}+ \left(\tau q^t + \lambda^\tau \right)^2 \\
    &\lesssim \left(\tau q^t + \lambda^\tau \right)^2
\end{align*}
where the second inequality follows from Lemma \ref{LEMMA:ER_discrepancy_bound}.
\end{proof}

\noindent We now establish bounds on the expectation of $\zeta_t$.

\begin{lemma}\label{LEMMA:expect_zeta_bound}
Suppose $(\beta_t)_{t \in \mathbb{N}}$ is a non-increasing sequence and $q= \max\{\lambda, \rho\}$. Given $t \in \mathbb{N}$, suppose $i > 2\tau_{\beta_t}$ then,
\begin{align*}
    \left| \mathbb{E}^\mu\zeta_i(\widehat{\boldsymbol{\Theta}}_i, \boldsymbol{X}_i) \right| \lesssim \tau_{\beta_t}\beta_{i-2\tau_{\beta_t}} + \tau_{\beta_t} q^i + q^{\tau_{\beta_t}}.
\end{align*}
Otherwise, 
\begin{align*}
    \left| \mathbb{E}^\mu\zeta_i(\widehat{\boldsymbol{\Theta}}_i, \boldsymbol{X}_i) \right| \lesssim \tau_{\beta_t} \beta_0 + iq^i.
\end{align*}
\end{lemma}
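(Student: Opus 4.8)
The plan is to decouple the random iterate $\widehat{\boldsymbol{\Theta}}_i$ from the Markovian data $\boldsymbol{X}_i$ by a backward-in-time shift of length $2\tau_{\beta_t}$, so that the residual correlation is controlled by the mixing time. Write $\tau:=\tau_{\beta_t}$ and first treat the long-horizon case $i>2\tau$. I would insert two intermediate quantities and bound the telescoped differences: (i) replace $\boldsymbol{X}_i$ by its $\tau$-truncated version $\boldsymbol{X}_{i-\tau:i}$, which by Lemma \ref{LEMMA:zeta_x_deviance_bound} costs $\mathbb{E}^\mu|\zeta_i(\widehat{\boldsymbol{\Theta}}_i,\boldsymbol{X}_i)-\zeta_i(\widehat{\boldsymbol{\Theta}}_i,\boldsymbol{X}_{i-\tau:i})|\lesssim\lambda^\tau$; and (ii) replace $\widehat{\boldsymbol{\Theta}}_i$ by the earlier iterate $\widehat{\boldsymbol{\Theta}}_{i-2\tau}$, whose cost is handled by the Lipschitz bound of Lemma \ref{LEMMA:zeta_lipschitz} together with a drift estimate $\|\widehat{\boldsymbol{\Theta}}_i-\widehat{\boldsymbol{\Theta}}_{i-2\tau}\|\le\sum_{j=i-2\tau}^{i-1}\beta_j\|\boldsymbol{D}_j(\boldsymbol{A}_j\widehat{\boldsymbol{\Theta}}_j+\boldsymbol{b}_j)\|\lesssim\tau\beta_{i-2\tau}$. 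This drift estimate uses non-expansiveness of the projection, $\|\boldsymbol{D}_j\|\le1$, the uniform bounds $\|\boldsymbol{A}_j\|\le A_{\max}$ and $\|\boldsymbol{b}_j\|\le b_{\max}$ from Lemma \ref{LEMMA:A_b_bound}, and monotonicity of the step-sizes. These two steps produce the $\lambda^\tau$ and $\tau_{\beta_t}\beta_{i-2\tau_{\beta_t}}$ contributions.

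The crux is the remaining main term $\mathbb{E}^\mu\zeta_i(\widehat{\boldsymbol{\Theta}}_{i-2\tau},\boldsymbol{X}_{i-\tau:i})$, where the key observation is that $\widehat{\boldsymbol{\Theta}}_{i-2\tau}$ is measurable with respect to $S^\mu_0,\dots,S^\mu_{i-2\tau}$ while $\boldsymbol{X}_{i-\tau:i}$ is a function only of $S^\mu_{i-\tau},\dots,S^\mu_{i+1}$, so the two are separated by exactly $\tau$ transitions. Conditioning on $\mathcal{F}_{i-2\tau}$ and invoking the Markov property, I would couple the conditional law of $\boldsymbol{X}_{i-\tau:i}$ to its unconditional marginal (an independent copy), paying a total-variation price bounded by the mixing contraction $\rho^\tau$ and the uniform magnitude $C_\zeta$ of $\zeta$ from Lemma \ref{LEMMA:zeta_bound}; this yields a term $\lesssim\rho^\tau\le q^{\tau}$. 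Once decoupled, the parameter is frozen, and since $\zeta_i(\boldsymbol{\Theta},\cdot)$ is linear in $\boldsymbol{A}_{i-\tau:i}-\boldsymbol{A}$ and $\boldsymbol{b}_{i-\tau:i}-\boldsymbol{b}$, Cauchy--Schwarz with $\|\underline{\boldsymbol{D}_i}\|\le1$ and $\|\boldsymbol{\Theta}\|,\|\boldsymbol{\Theta}^*\|\le R_{\boldsymbol{\Theta}}$ reduces the bound to $\|\mathbb{E}^\mu\boldsymbol{A}_{i-\tau:i}-\boldsymbol{A}\|$ and $\|\mathbb{E}^\mu\boldsymbol{b}_{i-\tau:i}-\boldsymbol{b}\|$, which Lemmas \ref{LEMMA:exp_A_bound} and \ref{LEMMA:exp_b_bound} bound by $\tau q^i+\lambda^\tau$. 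Collecting everything gives $\tau_{\beta_t}\beta_{i-2\tau_{\beta_t}}+\tau_{\beta_t}q^i+q^{\tau_{\beta_t}}$, as claimed.

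For the short-horizon case $i\le2\tau$, the backward shift of length $2\tau$ is not available, so I would instead shift all the way back to the deterministic initialization $\widehat{\boldsymbol{\Theta}}_0$. The drift bound then reads $\|\widehat{\boldsymbol{\Theta}}_i-\widehat{\boldsymbol{\Theta}}_0\|\lesssim i\beta_0\le2\tau\beta_0$, so Lemma \ref{LEMMA:zeta_lipschitz} contributes the $\tau_{\beta_t}\beta_0$ term. Because $\widehat{\boldsymbol{\Theta}}_0$ is deterministic, no coupling is needed: the residual $\mathbb{E}^\mu\zeta_i(\widehat{\boldsymbol{\Theta}}_0,\boldsymbol{X}_i)$ is handled by applying Lemmas \ref{LEMMA:exp_A_bound} and \ref{LEMMA:exp_b_bound} directly with truncation parameter $\tau=i$ (i.e.\ the full, untruncated trace), giving $\lesssim iq^i+\lambda^i\lesssim iq^i$. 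Summing yields $\tau_{\beta_t}\beta_0+iq^i$.

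I expect the decoupling step in the long-horizon case to be the main obstacle: the eligibility trace makes $\boldsymbol{X}_i$ depend on the entire past, which is precisely why the trace must first be truncated to a window of length $\tau$ before the Markov/mixing argument can separate $\widehat{\boldsymbol{\Theta}}_{i-2\tau}$ from the data. Keeping the two geometric factors distinct also requires care: the $q^{\tau_{\beta_t}}$ factor originates from the $\tau$-step mixing gap introduced by conditioning, whereas the $\tau_{\beta_t}q^i$ factor originates from comparing the marginal law of the data (started from the fixed initial state at time $0$) against stationarity through Lemmas \ref{LEMMA:exp_A_bound} and \ref{LEMMA:exp_b_bound}.
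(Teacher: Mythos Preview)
Your proposal is correct and follows essentially the same route as the paper's proof: the same three-term telescoping in the long-horizon case (Lipschitz in $\boldsymbol{\Theta}$ plus one-step drift, trace truncation via Lemma~\ref{LEMMA:zeta_x_deviance_bound}, and a mixing/decoupling argument reducing to Lemmas~\ref{LEMMA:exp_A_bound}--\ref{LEMMA:exp_b_bound}), and the same shift-to-$\widehat{\boldsymbol{\Theta}}_0$ argument in the short-horizon case. The only cosmetic difference is that the paper swaps $\widehat{\boldsymbol{\Theta}}_i\to\widehat{\boldsymbol{\Theta}}_{i-2\tau}$ first and truncates the trace second (invoking Lemma~9 of \citep{bhandari2018finite} explicitly for the decoupling), whereas you reverse the order; this is immaterial since the Lipschitz constant in Lemma~\ref{LEMMA:zeta_lipschitz} depends only on the uniform bounds $A_{\max},b_{\max}$, which hold equally for $\boldsymbol{A}_{i-\tau:i},\boldsymbol{b}_{i-\tau:i}$.
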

\begin{proof}
We begin by considering the case $i > 2\tau_{\beta_t}$. From the triangle inequality, we have
\begin{align}
    \left| \mathbb{E}^\mu\zeta_i(\widehat{\boldsymbol{\Theta}}_i, \boldsymbol{X}_i) \right| &\le \left| \mathbb{E}^\mu\zeta_i(\widehat{\boldsymbol{\Theta}}_i, \boldsymbol{X}_i) - \mathbb{E}^\mu\zeta_i(\widehat{\boldsymbol{\Theta}}_{i-2\tau_{\beta_t}}, \boldsymbol{X}_i)\right| \label{zeta_first_case_first_term}\\ &\quad + \left| \mathbb{E}^\mu\zeta_i(\widehat{\boldsymbol{\Theta}}_{i-2\tau_{\beta_t}}, \boldsymbol{X}_i) - \mathbb{E}^\mu\zeta_i(\widehat{\boldsymbol{\Theta}}_{i-2\tau_{\beta_t}}, \boldsymbol{X}_{i-\tau_{\beta_t}:i}) \right| \label{zeta_first_case_second_term}\\ &\quad + \left |\mathbb{E}^\mu\zeta_i(\widehat{\boldsymbol{\Theta}}_{i-2\tau_{\beta_t}}, \boldsymbol{X}_{i-\tau_{\beta_t}:i}) \right| \label{zeta_first_case_third_term}
\end{align}
To obtain an upper bound of \eqref{zeta_first_case_first_term}, note that
\begin{align}
    &\left| \mathbb{E}^\mu\zeta_i(\widehat{\boldsymbol{\Theta}}_i, \boldsymbol{X}_i) - \mathbb{E}^\mu\zeta_i(\widehat{\boldsymbol{\Theta}}_{i-2\tau_{\beta_t}}, \boldsymbol{X}_i)\right|\nonumber\\ &\le \left| \mathbb{E}^\mu\zeta_i(\widehat{\boldsymbol{\Theta}}_i, \boldsymbol{X}_i) - \mathbb{E}^\mu\zeta_i(\boldsymbol{\Theta}_{i-1}, \boldsymbol{X}_i)\right|  + \cdots +  \left| \mathbb{E}^\mu\zeta_i(\widehat{\boldsymbol{\Theta}}_{i-2\tau_{\beta_t}+1}, \boldsymbol{X}_i) - \mathbb{E}^\mu\zeta_i(\widehat{\boldsymbol{\Theta}}_{i-2\tau_{\beta_t}}, \boldsymbol{X}_i)\right|\nonumber \\
    &\le L_\zeta \sum_{l=i-2\tau_{\beta_t}}^{i-1} \left\|\widehat{\boldsymbol{\Theta}}_{l+1} - \widehat{\boldsymbol{\Theta}}_{l}\right\| \label{zeta_consec_bound} 
\end{align}
where we used Lemma \ref{LEMMA:zeta_lipschitz} in the second inequality. For $\left\|\widehat{\boldsymbol{\Theta}}_{l+1} - \widehat{\boldsymbol{\Theta}}_{l}\right\|$, we observe the following inequality
\begin{align}
\left\|\widehat{\boldsymbol{\Theta}}_{l+1} - \widehat{\boldsymbol{\Theta}}_{l}\right\| &= \left\|\boldsymbol{\Pi}\left[\Pi_{R_{\boldsymbol{\Theta}}}\left\{\widehat{\boldsymbol{\Theta}}_l + \beta_l \underline{\boldsymbol{D}_l}\left(\boldsymbol{A}_l\widehat{\boldsymbol{\Theta}}_l + \boldsymbol{b}_l\right) \right\}\right] - \boldsymbol{\Pi}\left(\Pi_{R_{\boldsymbol{\Theta}}}\widehat{\boldsymbol{\Theta}}_l\right)\right\| \nonumber \\&\le \left\|\Pi_{R_{\boldsymbol{\Theta}}}\left\{\widehat{\boldsymbol{\Theta}}_l + \beta_l \underline{\boldsymbol{D}_l}\left(\boldsymbol{A}_l\widehat{\boldsymbol{\Theta}}_l + \boldsymbol{b}_l\right) \right\} - \Pi_{R_{\boldsymbol{\Theta}}}\widehat{\boldsymbol{\Theta}}_l  \right\| \nonumber \\
 &\le \left\|\widehat{\boldsymbol{\Theta}}_l + \beta_l \underline{\boldsymbol{D}_l}\left\{\boldsymbol{A}_l\widehat{\boldsymbol{\Theta}}_l + \boldsymbol{b}_l\right\} - \widehat{\boldsymbol{\Theta}}_l  \right\| \nonumber \\
 &\le \beta_l\left\|\boldsymbol{A}_l\widehat{\boldsymbol{\Theta}}_l + \boldsymbol{b}_l \right\| \nonumber \\
 &\le \beta_l (A_{\max} R_{\boldsymbol{\Theta}} + b_{\max}). \label{Theta_consec_bound}
\end{align}
The first and second inequalities follow from the non-expansiveness of the projection operators, while the third inequality follows from the bound $\|\underline{\boldsymbol{D}_l}\| \le 1.$ Plugging \eqref{Theta_consec_bound} back to \eqref{zeta_consec_bound}, we have 
\begin{equation}\label{case1_first_bound}
     \left| \mathbb{E}^\mu\zeta_i(\widehat{\boldsymbol{\Theta}}_i, \boldsymbol{X}_i) - \mathbb{E}^\mu\zeta_i(\widehat{\boldsymbol{\Theta}}_{i-2\tau_{\beta_t}}, \boldsymbol{X}_i)\right| \lesssim \sum_{l=i-2\tau_{\beta_t}}^{i-1}\beta_l 
\end{equation}
Next, by applying Lemma \ref{LEMMA:zeta_x_deviance_bound} together with Jensen’s inequality, we immediately observe the upper bound of the term in \eqref{zeta_first_case_second_term}, namely,
\begin{equation}\label{case1_second_bound}
\left| \mathbb{E}^\mu\zeta_i(\widehat{\boldsymbol{\Theta}}_{i-2\tau_{\beta_t}}, \boldsymbol{X}_i) - \mathbb{E}^\mu\zeta_i(\widehat{\boldsymbol{\Theta}}_{i-2\tau_{\beta_t}}, \boldsymbol{X}_{i-\tau_{\beta_t}:i}) \right| \lesssim \lambda^{\tau_{\beta_t}}.
\end{equation}
We now obtain an upper bound of the term in  \eqref{zeta_first_case_third_term}. To this end, let us set
$$
f_i(\widehat{\boldsymbol{\Theta}}_{i-2\tau_{\beta_t}}, \boldsymbol{Y}_{i-\tau_{\beta_t}:i}) := \zeta_i(\widehat{\boldsymbol{\Theta}}_{i-2\tau_{\beta_t}}, \boldsymbol{X}_{i-\tau_{\beta_t}:i}),
$$
where $\boldsymbol{Y}_{i-\tau_{\beta_t}:i} = \left(S^\mu_{i-\tau_{\beta_t}},S^\mu_{i-\tau_{\beta_t}+1}, \cdots, S^\mu_{i-1}, \boldsymbol{X}_i\right)$. We further define $\boldsymbol{\Theta'}_{i-2\tau_{\beta_t}}$ and $\boldsymbol{Y'}_{i-\tau_{\beta_t}:i}$ as random variables drawn independently from the marginal distributions of $\widehat{\boldsymbol{\Theta}}_{i-2\tau_{\beta_t}}$ and $\boldsymbol{Y}_{i-\tau_{\beta_t}:i}$ respectively. Since
$$
\widehat{\boldsymbol{\Theta}}_{i-2\tau_{\beta_t}} \to S^\mu_{i-2\tau_{\beta_t}} \to S^\mu_{i-\tau_{\beta_t}} \to S^\mu_i \to \boldsymbol{X}_i = (S^\mu_i, S^\mu_{i+1}, \boldsymbol{z}_i)
$$
forms a Markov chain, an application of Lemma 9 in \citep{bhandari2018finite} results in
$$
\left|\mathbb{E}^\mu f_i(\widehat{\boldsymbol{\Theta}}_{i-2\tau_{\beta_t}}, \boldsymbol{Y}_{i-\tau_{\beta_t}:i}) \right| \lesssim \left|\mathbb{E}f_i(\boldsymbol{\Theta'}_{i-2\tau_{\beta_t}}, \boldsymbol{Y'}_{i-\tau_{\beta_t}:i}) \right| + \rho^{\tau_{\beta_t}},
$$
for all $i > 2\tau_{\beta_t}$. Since $\boldsymbol{\Theta'}_{i-2\tau_{\beta_t}}$ and $\boldsymbol{Y'}_{i-\tau_{\beta_t}:i}$ are independent to each other, we get
\begin{align*}
\mathbb{E}f_i(\boldsymbol{\Theta'}_{i-2\tau_{\beta_t}}, \boldsymbol{Y'}_{i-\tau_{\beta_t}:i}) &= \boldsymbol{\Theta^*}^\top \mathbb{E}^\mu\left(\boldsymbol{A}_{i-\tau_{\beta_t}:i} - \boldsymbol{A}\right)^\top \underline{\boldsymbol{D}_i}\mathbb{E}^\mu\left(\boldsymbol{\Theta^*}- \boldsymbol{\Theta'}_{i-2\tau_{\beta_t}} \right) \\ &\quad + \mathbb{E}^\mu\left(\boldsymbol{b}_{i-\tau_{\beta_t}:i} - \boldsymbol{b} \right)^\top \underline{\boldsymbol{D}_i} \mathbb{E}^\mu\left(\boldsymbol{\Theta^*}- \boldsymbol{\Theta'}_{i-2\tau_{\beta_t}} \right).
\end{align*}
From the Cauchy-Schwarz inequality coupled with the Jensen's inequality, we get
\begin{align*}
\left|\mathbb{E}f_i(\boldsymbol{\Theta'}_{i-2\tau_{\beta_t}}, \boldsymbol{Y'}_{i-\tau_{\beta_t}:i})\right| &\le 2 \left\|\mathbb{E}^\mu\left(\boldsymbol{A}_{i-\tau_{\beta_t}:i} -\boldsymbol{A}  \right)\right\| R_{\boldsymbol{\Theta}}^2 + 2\left\| \mathbb{E}^\mu\left(\boldsymbol{b}_{i-\tau_{\beta_t}:i} -\boldsymbol{b} \right)\right\| R_{\boldsymbol{\Theta}}  
\lesssim \tau_{\beta_t} q^i + \lambda^{\tau_{\beta_t}}
\end{align*}
where the second inequality is due to Lemma \ref{LEMMA:exp_A_bound} and Lemma \ref{LEMMA:exp_b_bound}. Therefore, we get
\begin{equation}\label{case1_third_bound}
\left|\mathbb{E}^\mu f_i(\widehat{\boldsymbol{\Theta}}_{i-2\tau_{\beta_t}}, \boldsymbol{Y}_{i-\tau_{\beta_t}:i}) \right| \lesssim \tau_{\beta_t} q^i + \lambda^{\tau_{\beta_t}} + \rho^{\tau_{\beta_t}} \lesssim \tau_{\beta_t} q^i + q^{\tau_{\beta_t}},
\end{equation}
with $q = \max \{\lambda, \rho \}$. Combining \eqref{case1_first_bound}, \eqref{case1_second_bound} and \eqref{case1_third_bound}, we get
\begin{align*}
    \left| \mathbb{E}^\mu \zeta_i(\widehat{\boldsymbol{\Theta}}_i, \boldsymbol{X}_i) \right| \lesssim \sum_{l=i-2\tau_{\beta_t}}^{i-1}\beta_l + \tau_{\beta_t} q^i + q^{\tau_{\beta_t}} \lesssim \tau_{\beta_t}\beta_{i-2\tau_{\beta_t}} + \tau_{\beta_t} q^i + q^{\tau_{\beta_t}}.
\end{align*}
Next we consider the case $i \le 2\tau_{\beta_t}$. From the triangle inequality, we have
\begin{align*}
    \left| \mathbb{E}^\mu\zeta_i(\widehat{\boldsymbol{\Theta}}_i, \boldsymbol{X}_i) \right| \le \left| \mathbb{E}^\mu\zeta_i(\widehat{\boldsymbol{\Theta}}_i, \boldsymbol{X}_i) - \mathbb{E}^\mu\zeta_i(\widehat{\boldsymbol{\Theta}}_0, \boldsymbol{X}_i)\right| + \left| \mathbb{E}^\mu\zeta_i(\widehat{\boldsymbol{\Theta}}_0, \boldsymbol{X}_i)\right|.
\end{align*}
From Lemma \ref{LEMMA:zeta_lipschitz} combined with \eqref{Theta_consec_bound}, we have
\begin{equation}\label{}
\left| \mathbb{E}^\mu\zeta_i(\widehat{\boldsymbol{\Theta}}_i, \boldsymbol{X}_i) - \mathbb{E}^\mu\zeta_i(\widehat{\boldsymbol{\Theta}}_0, \boldsymbol{X}_i)\right| \lesssim \sum_{l=0}^{i-1}\beta_l \lesssim \tau_{\beta_t} \beta_0.  \label{case2_first_bound}
\end{equation}
For the second term, since $\widehat{\boldsymbol{\Theta}}_0$ is deterministic,
\begin{align*}
\mathbb{E}^\mu \zeta_i(\widehat{\boldsymbol{\Theta}}_0, \boldsymbol{X}_i) = \left\langle \mathbb{E}^\mu\left(\boldsymbol{A}_i-\boldsymbol{A}\right)\boldsymbol{\Theta^*}, \underline{\boldsymbol{D}_i}(\boldsymbol{\Theta^*}-\widehat{\boldsymbol{\Theta}}_0)\right\rangle + \left\langle \mathbb{E}^\mu\left(\boldsymbol{b}-\boldsymbol{b}_i\right), \underline{\boldsymbol{D}_i}(\boldsymbol{\Theta^*}-\widehat{\boldsymbol{\Theta}}_0)\right\rangle,
\end{align*}
and therefore
\begin{equation}\label{case2_second_bound}
\left| \mathbb{E}^\mu\zeta_i(\widehat{\boldsymbol{\Theta}}_0, \boldsymbol{X}_i)\right| 
\le 2\left\|\mathbb{E}^\mu\left(\boldsymbol{A}_i-\boldsymbol{A}\right)\right\|R_{\boldsymbol{\Theta}}^2 + 2\left\|\mathbb{E}^\mu\left(\boldsymbol{b}-\boldsymbol{b}_i\right) \right\|R_{\boldsymbol{\Theta}}
\lesssim iq^i + \lambda^i \lesssim iq^i 
\end{equation}
where the second inequality follows from Lemma \ref{LEMMA:exp_A_bound} and the last inequality is by the definition $q := \max\{\lambda, \rho\} \in (0,1)$. Combining \eqref{case2_first_bound} and \eqref{case2_second_bound}, we get
\begin{align*}
    \left| \mathbb{E}^\mu\zeta_i(\widehat{\boldsymbol{\Theta}}_i, \boldsymbol{X}_i) \right| \lesssim \tau_{\beta_t} \beta_0 + iq^i.
\end{align*}
\end{proof}

\begin{lemma}\label{LEMMA:expect_xi_bound}
Suppose $(\beta_t)_{t \in \mathbb{N}}$ is a non-increasing sequence and $q= \max\{\lambda, \rho\}$. Given $t \in \mathbb{N}$, suppose $i > 2\tau_{\beta_t}$ then,
\begin{align*}
    \left| \mathbb{E}^\mu\xi_i(\widehat{\boldsymbol{\Theta}}_i, \boldsymbol{X}_i) \right| \lesssim \tau_{\beta_t}\beta_{i-2\tau_{\beta_t}} + \tau_{\beta_t} q^i + q^{\tau_{\beta_t}}.
\end{align*}
Otherwise, 
\begin{align*}
    \left| \mathbb{E}^\mu\xi_i(\widehat{\boldsymbol{\Theta}}_i, \boldsymbol{X}_i) \right| \lesssim \tau_{\beta_t} \beta_0 + iq^i.
\end{align*}
\end{lemma}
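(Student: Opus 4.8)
The plan is to mirror, almost verbatim, the proof of Lemma~\ref{LEMMA:expect_zeta_bound}, since $\xi_t$ plays the exact same role as $\zeta_t$ and every ingredient used for $\zeta_t$ has a $\xi_t$-counterpart already established. I would split into the two regimes $i>2\tau_{\beta_t}$ and $i\le 2\tau_{\beta_t}$. For $i>2\tau_{\beta_t}$, I insert the same three-term telescope via the triangle inequality:
\begin{align*}
\left|\mathbb{E}^\mu\xi_i(\widehat{\boldsymbol{\Theta}}_i,\boldsymbol{X}_i)\right|
&\le \left|\mathbb{E}^\mu\xi_i(\widehat{\boldsymbol{\Theta}}_i,\boldsymbol{X}_i)-\mathbb{E}^\mu\xi_i(\widehat{\boldsymbol{\Theta}}_{i-2\tau_{\beta_t}},\boldsymbol{X}_i)\right|\\
&\quad+\left|\mathbb{E}^\mu\xi_i(\widehat{\boldsymbol{\Theta}}_{i-2\tau_{\beta_t}},\boldsymbol{X}_i)-\mathbb{E}^\mu\xi_i(\widehat{\boldsymbol{\Theta}}_{i-2\tau_{\beta_t}},\boldsymbol{X}_{i-\tau_{\beta_t}:i})\right|\\
&\quad+\left|\mathbb{E}^\mu\xi_i(\widehat{\boldsymbol{\Theta}}_{i-2\tau_{\beta_t}},\boldsymbol{X}_{i-\tau_{\beta_t}:i})\right|.
\end{align*}
The first term is controlled by the Lipschitz property in the $\boldsymbol{\Theta}$-argument (Lemma~\ref{LEMMA:xi_lipschitz}) combined with the per-step increment bound \eqref{Theta_consec_bound}, giving $\lesssim\sum_{l=i-2\tau_{\beta_t}}^{i-1}\beta_l\lesssim\tau_{\beta_t}\beta_{i-2\tau_{\beta_t}}$ by monotonicity of $(\beta_t)$. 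The second term is the $\boldsymbol{X}$-deviance bound, handled directly by Lemma~\ref{LEMMA:xi_x_deviance_bound} together with Jensen's inequality, yielding $\lesssim\lambda^{\tau_{\beta_t}}$.

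The only step requiring care is the third term, which is where I expect the main (albeit mild) obstacle. I would again invoke the Markov structure $\widehat{\boldsymbol{\Theta}}_{i-2\tau_{\beta_t}}\to S^\mu_{i-2\tau_{\beta_t}}\to S^\mu_{i-\tau_{\beta_t}}\to S^\mu_i\to\boldsymbol{X}_i$ and apply Lemma~9 of \citep{bhandari2018finite} to replace the coupled expectation by the product of marginals at the cost of $\rho^{\tau_{\beta_t}}$, introducing independent copies $\boldsymbol{\Theta}'_{i-2\tau_{\beta_t}}$ and $\boldsymbol{X}'_{i-\tau_{\beta_t}:i}$. The subtlety is that $\xi_i$ is a \emph{quadratic} form in $(\boldsymbol{\Theta}^*-\boldsymbol{\Theta})$, unlike the bilinear $\zeta_i$. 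However, $\underline{\boldsymbol{D}_i}=\gamma_i\boldsymbol{I}_{d+1}$ is deterministic, so conditioning on $\boldsymbol{\Theta}'_{i-2\tau_{\beta_t}}$ and taking expectation over the independent $\boldsymbol{X}'$-block leaves
\[
\left(\boldsymbol{\Theta}^*-\boldsymbol{\Theta}'_{i-2\tau_{\beta_t}}\right)^\top\left\{\mathbb{E}^\mu(\boldsymbol{A}_{i-\tau_{\beta_t}:i})^\top-\boldsymbol{A}^\top\right\}\underline{\boldsymbol{D}_i}\left(\boldsymbol{\Theta}^*-\boldsymbol{\Theta}'_{i-2\tau_{\beta_t}}\right),
\]
which by Cauchy–Schwarz, $\|\underline{\boldsymbol{D}_i}\|\le1$, and $\|\boldsymbol{\Theta}^*-\boldsymbol{\Theta}'\|\le 2R_{\boldsymbol{\Theta}}$ is bounded by $4R_{\boldsymbol{\Theta}}^2\,\|\mathbb{E}^\mu(\boldsymbol{A}_{i-\tau_{\beta_t}:i})-\boldsymbol{A}\|$. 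Lemma~\ref{LEMMA:exp_A_bound} then gives $\lesssim\tau_{\beta_t}q^i+\lambda^{\tau_{\beta_t}}$, and adding the decoupling error produces $\lesssim\tau_{\beta_t}q^i+\lambda^{\tau_{\beta_t}}+\rho^{\tau_{\beta_t}}\lesssim\tau_{\beta_t}q^i+q^{\tau_{\beta_t}}$. Note that here there is no $\boldsymbol{b}_t$-contribution, so this step is in fact strictly simpler than its $\zeta_i$ analogue, requiring only Lemma~\ref{LEMMA:exp_A_bound} and not Lemma~\ref{LEMMA:exp_b_bound}. Summing the three terms yields the stated bound $\tau_{\beta_t}\beta_{i-2\tau_{\beta_t}}+\tau_{\beta_t}q^i+q^{\tau_{\beta_t}}$.

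For the remaining regime $i\le 2\tau_{\beta_t}$, I would split $|\mathbb{E}^\mu\xi_i(\widehat{\boldsymbol{\Theta}}_i,\boldsymbol{X}_i)|$ against the deterministic initialization $\widehat{\boldsymbol{\Theta}}_0$. The difference $|\mathbb{E}^\mu\xi_i(\widehat{\boldsymbol{\Theta}}_i,\boldsymbol{X}_i)-\mathbb{E}^\mu\xi_i(\widehat{\boldsymbol{\Theta}}_0,\boldsymbol{X}_i)|$ is again bounded via Lemma~\ref{LEMMA:xi_lipschitz} and \eqref{Theta_consec_bound} by $\sum_{l=0}^{i-1}\beta_l\lesssim\tau_{\beta_t}\beta_0$. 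For the residual term, since $\widehat{\boldsymbol{\Theta}}_0$ is deterministic and $\underline{\boldsymbol{D}_i}$ is deterministic, $\mathbb{E}^\mu\xi_i(\widehat{\boldsymbol{\Theta}}_0,\boldsymbol{X}_i)=(\boldsymbol{\Theta}^*-\widehat{\boldsymbol{\Theta}}_0)^\top\{\mathbb{E}^\mu(\boldsymbol{A}_i)^\top-\boldsymbol{A}^\top\}\underline{\boldsymbol{D}_i}(\boldsymbol{\Theta}^*-\widehat{\boldsymbol{\Theta}}_0)$, which is bounded by $4R_{\boldsymbol{\Theta}}^2\|\mathbb{E}^\mu(\boldsymbol{A}_i)-\boldsymbol{A}\|\lesssim iq^i+\lambda^i\lesssim iq^i$ by Lemma~\ref{LEMMA:exp_A_bound} with $\tau=i$. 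Combining gives $\tau_{\beta_t}\beta_0+iq^i$, completing the proof. Overall there is no genuinely new difficulty beyond confirming that the deterministic structure of $\underline{\boldsymbol{D}_i}$ permits the quadratic form to factor cleanly in the decoupling argument.
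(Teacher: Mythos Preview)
Your proposal is correct and follows the paper's proof almost exactly: the same three-term split for $i>2\tau_{\beta_t}$, the same use of Lemmas~\ref{LEMMA:xi_lipschitz}, \ref{LEMMA:xi_x_deviance_bound}, and~\ref{LEMMA:exp_A_bound}, the same Markov-chain decoupling via Lemma~9 of \citep{bhandari2018finite}, and the same treatment of the $i\le 2\tau_{\beta_t}$ regime.

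The one place where you diverge slightly is in bounding the decoupled quadratic form $\mathbb{E}\big[(\boldsymbol{\Theta}^*-\boldsymbol{\Theta}')^\top(\boldsymbol{A}'_{i-\tau_{\beta_t}:i}-\boldsymbol{A})^\top\underline{\boldsymbol{D}_i}(\boldsymbol{\Theta}^*-\boldsymbol{\Theta}')\big]$. You condition on $\boldsymbol{\Theta}'$, use linearity in $\boldsymbol{A}'$ to pull the inner expectation through, and then bound the resulting deterministic quadratic form by Cauchy--Schwarz. The paper instead rewrites the quadratic form as a trace, uses independence to factor the expectation, and applies Von~Neumann's trace inequality $|\mathrm{Trace}(\boldsymbol{M}\boldsymbol{N})|\le\|\boldsymbol{M}\|\,\|\boldsymbol{N}\|_*$ together with a nuclear-norm bound $\|\mathbb{E}^\mu[(\boldsymbol{\Theta}^*-\boldsymbol{\Theta}')(\boldsymbol{\Theta}^*-\boldsymbol{\Theta}')^\top]\|_*\le 4R_{\boldsymbol{\Theta}}^2$. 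Both routes land on the same estimate $4R_{\boldsymbol{\Theta}}^2\,\|\mathbb{E}^\mu(\boldsymbol{A}_{i-\tau_{\beta_t}:i})-\boldsymbol{A}\|$; your conditioning argument is more elementary and avoids the trace machinery, while the paper's version makes the independence step slightly more explicit at the matrix level.
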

\begin{proof}
We begin by considering the case $i > 2\tau_{\beta_t}$. Again by the triangle inequality, we have
\begin{align}
    \left| \mathbb{E}^\mu\xi_i(\widehat{\boldsymbol{\Theta}}_i, \boldsymbol{X}_i) \right| &\le \left| \mathbb{E}^\mu\xi_i(\widehat{\boldsymbol{\Theta}}_i, \boldsymbol{X}_i) - \mathbb{E}^\mu\xi_i(\widehat{\boldsymbol{\Theta}}_{i-2\tau_{\beta_t}}, \boldsymbol{X}_i)\right| \label{xi_first_case_first_term}\\ &\quad + \left| \mathbb{E}^\mu\xi_i(\widehat{\boldsymbol{\Theta}}_{i-2\tau_{\beta_t}}, \boldsymbol{X}_i) - \mathbb{E}^\mu\xi_i(\widehat{\boldsymbol{\Theta}}_{i-2\tau_{\beta_t}}, \boldsymbol{X}_{i-\tau_{\beta_t}:i}) \right| \label{xi_first_case_second_term}\\ &\quad + \left |\mathbb{E}^\mu\xi_i(\widehat{\boldsymbol{\Theta}}_{i-2\tau_{\beta_t}}, \boldsymbol{X}_{i-\tau_{\beta_t}:i}) \right| \label{xi_first_case_third_term}
\end{align}
To obtain an upper bound of \eqref{xi_first_case_first_term}, note that
\begin{align}
    \left| \mathbb{E}^\mu\xi_i(\widehat{\boldsymbol{\Theta}}_i, \boldsymbol{X}_i) - \mathbb{E}^\mu\xi_i(\widehat{\boldsymbol{\Theta}}_{i-2\tau_{\beta_t}}, \boldsymbol{X}_i)\right| &\le \left| \mathbb{E}^\mu\xi_i(\widehat{\boldsymbol{\Theta}}_i, \boldsymbol{X}_i) - \mathbb{E}^\mu\xi_i(\boldsymbol{\Theta}_{i-1}, \boldsymbol{X}_i)\right|  + \cdots \nonumber \\&\quad+  \left| \mathbb{E}^\mu\xi_i(\widehat{\boldsymbol{\Theta}}_{i-2\tau_{\beta_t}+1}, \boldsymbol{X}_i) - \mathbb{E}^\mu\xi_i(\widehat{\boldsymbol{\Theta}}_{i-2\tau_{\beta_t}}, \boldsymbol{X}_i)\right|\nonumber \\
    &\le L_\xi \sum_{l=i-2\tau_{\beta_t}}^{i-1} \left\|\widehat{\boldsymbol{\Theta}}_{l+1} - \widehat{\boldsymbol{\Theta}}_{l}\right\| \label{xi_consec_bound} 
\end{align}
where the second inequality is due to Lemma \ref{LEMMA:xi_lipschitz}. Recall from \eqref{Theta_consec_bound} that 
\begin{align*}
 \left\|\widehat{\boldsymbol{\Theta}}_{l+1} - \widehat{\boldsymbol{\Theta}}_{l}\right\| \le \beta_l (A_{\max} R_{\boldsymbol{\Theta}} + b_{\max}).
\end{align*}
Plugging \eqref{Theta_consec_bound} back to \eqref{xi_consec_bound}, we have 
\begin{equation}\label{case1_xi_first_bound}
     \left| \mathbb{E}^\mu\xi_i(\widehat{\boldsymbol{\Theta}}_i, \boldsymbol{X}_i) - \mathbb{E}^\mu\xi_i(\widehat{\boldsymbol{\Theta}}_{i-2\tau_{\beta_t}}, \boldsymbol{X}_i)\right| \lesssim \sum_{l=i-2\tau_{\beta_t}}^{i-1}\beta_l 
\end{equation}
Next, by applying Lemma \ref{LEMMA:xi_x_deviance_bound} together with Jensen’s inequality, we immediately obtain the upper bound of the term in \eqref{xi_first_case_second_term}, namely,
\begin{equation}\label{case1_xi_second_bound}
\left| \mathbb{E}^\mu\xi_i(\widehat{\boldsymbol{\Theta}}_{i-2\tau_{\beta_t}}, \boldsymbol{X}_i) - \mathbb{E}^\mu\xi_i(\widehat{\boldsymbol{\Theta}}_{i-2\tau_{\beta_t}}, \boldsymbol{X}_{i-\tau_{\beta_t}:i}) \right| \lesssim \lambda^{\tau_{\beta_t}}.
\end{equation}
We now obtain an upper bound of the term in  \eqref{xi_first_case_third_term}. To this end, let us set
$$
g_i(\widehat{\boldsymbol{\Theta}}_{i-2\tau_{\beta_t}}, \boldsymbol{Y}_{i-\tau_{\beta_t}:i}) := \xi_i(\widehat{\boldsymbol{\Theta}}_{i-2\tau_{\beta_t}}, \boldsymbol{X}_{i-\tau_{\beta_t}:i}),
$$
where $\boldsymbol{Y}_{i-\tau_{\beta_t}:i} = \left(S^\mu_{i-\tau_{\beta_t}}, S^\mu_{i-\tau_{\beta_t}+1}, \cdots, S^\mu_{i-1}, \boldsymbol{X}_i\right)$. We further define $\boldsymbol{\Theta'}_{i-2\tau_{\beta_t}}$ and $\boldsymbol{Y'}_{i-\tau_{\beta_t}:i}$ as random variables drawn independently from the marginal distributions of $\widehat{\boldsymbol{\Theta}}_{i-2\tau_{\beta_t}}$ and $\boldsymbol{Y}_{i-\tau_{\beta_t}:i}$ respectively. Since
$$
\widehat{\boldsymbol{\Theta}}_{i-2\tau_{\beta_t}} \to S^\mu_{i-2\tau_{\beta_t}} \to S^\mu_{i-\tau_{\beta_t}} \to S^\mu_i \to \boldsymbol{X}_i = (S^\mu_i, S^\mu_{i+1}, \boldsymbol{z}_i)
$$
forms a Markov chain, an application of Lemma 9 in \citep{bhandari2018finite} results in
$$
\left|\mathbb{E}^\mu g_i(\widehat{\boldsymbol{\Theta}}_{i-2\tau_{\beta_t}}, \boldsymbol{Y}_{i-\tau_{\beta_t}:i}) \right| \lesssim \left|\mathbb{E}g_i(\boldsymbol{\Theta'}_{i-2\tau_{\beta_t}}, \boldsymbol{Y'}_{i-\tau_{\beta_t}:i}) \right| + \rho^{\tau_{\beta_t}},
$$
for all $i > 2\tau_{\beta_t}$. Since $\boldsymbol{\Theta'}_{i-2\tau_{\beta_t}}$ and $\boldsymbol{Y'}_{i-\tau_{\beta_t}:i}$ are independent to each other, we get
\begin{align*}
\mathbb{E}g_i(\boldsymbol{\Theta'}_{i-2\tau_{\beta_t}}, \boldsymbol{Y'}_{i-\tau_{\beta_t}:i}) &= \mathbb{E} \left[\left(\boldsymbol{\Theta^*}-\boldsymbol{\Theta'}_{i-2\tau_{\beta_t}}\right)^\top \left(\boldsymbol{A'}_{i-\tau_{\beta_t}:i}-\boldsymbol{A} \right)^\top \underline{\boldsymbol{D}_i} \left(\boldsymbol{\Theta^*}-\boldsymbol{\Theta'}_{i-2\tau_{\beta_t}}\right)\right] \\
&= \mathbb{E} \left[
\text{Trace}\left\{\left(\boldsymbol{A'}_{i-\tau_{\beta_t}:i}-\boldsymbol{A} \right)^\top \underline{\boldsymbol{D}_i}  \left(\boldsymbol{\Theta^*}-\boldsymbol{\Theta'}_{i-2\tau_{\beta_t}}\right)\left(\boldsymbol{\Theta^*}-\boldsymbol{\Theta'}_{i-2\tau_{\beta_t}}\right)^\top 
\right\}\right] \\
&= \text{Trace}\left[\mathbb{E} \left\{\left(\boldsymbol{A'}_{i-\tau_{\beta_t}:i}-\boldsymbol{A} \right)^\top \underline{\boldsymbol{D}_i}  \left(\boldsymbol{\Theta^*}-\boldsymbol{\Theta'}_{i-2\tau_{\beta_t}}\right)\left(\boldsymbol{\Theta^*}-\boldsymbol{\Theta'}_{i-2\tau_{\beta_t}}\right)^\top\right\} 
\right] \\
&= \text{Trace}\left[\mathbb{E}^\mu \left\{\left(\boldsymbol{A'}_{i-\tau_{\beta_t}:i}-\boldsymbol{A} \right)^\top \right\} \underline{\boldsymbol{D}_i} \mathbb{E}^\mu \left\{\left(\boldsymbol{\Theta^*}-\boldsymbol{\Theta'}_{i-2\tau_{\beta_t}}\right)\left(\boldsymbol{\Theta^*}-\boldsymbol{\Theta'}_{i-2\tau_{\beta_t}}\right)^\top\right\} 
\right].
\end{align*}
where the last equality comes from the independence between $\boldsymbol{\Theta'}_{i-2\tau_{\beta_t}}$ and $\boldsymbol{Y'}_{i-\tau_{\beta_t}:i}$. By the Von-Neumann’s trace inequality (see Theorem 7.4.1.1 of \citep{horn2012matrix}) with a nuclear norm notation $\|\cdot\|_*$, we have
\begin{align*}
\left|\mathbb{E}g_i(\boldsymbol{\Theta'}_{i-2\tau_{\beta_t}}, \boldsymbol{Y'}_{i-\tau_{\beta_t}:i})\right| &\le  \left\|\mathbb{E}^\mu \left(\boldsymbol{A'}_{i-\tau_{\beta_t}:i}\right)-\boldsymbol{A} \right\| \left\|\underline{\boldsymbol{D}_i}\mathbb{E}^\mu\left\{\left(\boldsymbol{\Theta^*}-\boldsymbol{\Theta'}_{i-2\tau_{\beta_t}}\right)\left(\boldsymbol{\Theta^*}-\boldsymbol{\Theta'}_{i-2\tau_{\beta_t}}\right)^\top 
\right\} \right\|_*\\
&\le  \left\|\mathbb{E}^\mu \left(\boldsymbol{A'}_{i-\tau_{\beta_t}:i}\right)-\boldsymbol{A} \right\| \left\|\underline{\boldsymbol{D}_i}\right\| \left\|\mathbb{E}^\mu \left\{\left(\boldsymbol{\Theta^*}-\boldsymbol{\Theta'}_{i-2\tau_{\beta_t}}\right)\left(\boldsymbol{\Theta^*}-\boldsymbol{\Theta'}_{i-2\tau_{\beta_t}}\right)^\top 
\right\} \right\|_*
\end{align*}
where the second inequality is due to an identity $\|\boldsymbol{A}\boldsymbol{B}\|_* \le \|\boldsymbol{A}\|\|\boldsymbol{B}\|_*$. Furthermore, notice that 
\begin{align*}
 \left\|\mathbb{E}^\mu \left\{\left(\boldsymbol{\Theta^*}-\boldsymbol{\Theta'}_{i-2\tau_{\beta_t}}\right)\left(\boldsymbol{\Theta^*}-\boldsymbol{\Theta'}_{i-2\tau_{\beta_t}}\right)^\top 
\right\} \right\|_* &\le \mathbb{E}^\mu\left\{\left\|\left(\boldsymbol{\Theta^*}-\boldsymbol{\Theta'}_{i-2\tau_{\beta_t}}\right)\left(\boldsymbol{\Theta^*}-\boldsymbol{\Theta'}_{i-2\tau_{\beta_t}}
\right)^\top \right\|_* \right\} \\&= \mathbb{E}^\mu\left\{\left\|\boldsymbol{\Theta^*}-\boldsymbol{\Theta'}_{i-2\tau_{\beta_t}} \right\|^2 \right\} \\  
&\le 4R_{\boldsymbol{\Theta}}^2
\end{align*}
where the first inequality is due to Jensen's inequality. Therefore, we arrive at 
\begin{align*}
\left|\mathbb{E}^\mu g_i(\boldsymbol{\Theta'}_{i-2\tau_{\beta_t}}, \boldsymbol{Y'}_{i-\tau_{\beta_t}:i})\right| \lesssim  \left\|\mathbb{E}^\mu \left( \boldsymbol{A'}_{i-\tau_{\beta_t}:i}\right)- \boldsymbol{A} \right\| \lesssim \tau_{\beta_t} q^i + \lambda^{\tau_{\beta_t}}
\end{align*}
where the last inequality follows from Lemma \ref{LEMMA:exp_A_bound}. This then gives us 
\begin{equation}\label{case1_xi_third_bound}
\left|\mathbb{E}^\mu g_i(\widehat{\boldsymbol{\Theta}}_{i-2\tau_{\beta_t}}, \boldsymbol{Y}_{i-\tau_{\beta_t}:i}) \right| \lesssim \tau_{\beta_t} q^i + \lambda^{\tau_{\beta_t}} + \rho^{\tau_{\beta_t}} \lesssim \tau_{\beta_t} q^i + q^{\tau_{\beta_t}},
\end{equation}
for $q = \max \{\lambda, \rho \}$. Combining \eqref{case1_xi_first_bound}, \eqref{case1_xi_second_bound} and \eqref{case1_xi_third_bound}, we get
\begin{align*}
    \left| \mathbb{E}^\mu \xi_i(\widehat{\boldsymbol{\Theta}}_i, \boldsymbol{X}_i) \right| \lesssim \sum_{l=i-2\tau_{\beta_t}}^{i-1}\beta_l + \tau_{\beta_t} q^i + q^{\tau_{\beta_t}} \lesssim \tau_{\beta_t}\beta_{i-2\tau_{\beta_t}} + \tau_{\beta_t} q^i + q^{\tau_{\beta_t}}.
\end{align*}
Next we consider the case $i \le 2\tau_{\beta_t}$. From the triangle inequality, we have
\begin{align*}
    \left| \mathbb{E}^\mu \xi_i(\widehat{\boldsymbol{\Theta}}_i, \boldsymbol{X}_i) \right| \le \left| \mathbb{E}^\mu \xi_i(\widehat{\boldsymbol{\Theta}}_i, \boldsymbol{X}_i) - \mathbb{E}^\mu \xi_i(\widehat{\boldsymbol{\Theta}}_0, \boldsymbol{X}_i)\right| + \left| \mathbb{E}^\mu \xi_i(\widehat{\boldsymbol{\Theta}}_0, \boldsymbol{X}_i)\right|.
\end{align*}
From Lemma \ref{LEMMA:xi_lipschitz} combined with \eqref{Theta_consec_bound}, we have
\begin{equation}
\left| \mathbb{E}^\mu\xi_i(\widehat{\boldsymbol{\Theta}}_i, \boldsymbol{X}_i) - \mathbb{E}^\mu\xi_i(\widehat{\boldsymbol{\Theta}}_0, \boldsymbol{X}_i)\right| \lesssim \sum_{l=0}^{i-1}\beta_l \lesssim \tau_{\beta_t} \beta_0.  \label{case2_xi_first_bound}
\end{equation}
For the second term, since $\widehat{\boldsymbol{\Theta}}_0$ is deterministic,
\begin{align*}
\mathbb{E}^\mu\xi_i(\widehat{\boldsymbol{\Theta}}_0, \boldsymbol{X}_i) = (\boldsymbol{\Theta^*}-\widehat{\boldsymbol{\Theta}}_0)^\top \mathbb{E}^\mu \left(\boldsymbol{A_i} -\boldsymbol{A} \right)^\top \underline{\boldsymbol{D}_i} (\boldsymbol{\Theta^*}-\widehat{\boldsymbol{\Theta}}_0)
\end{align*}
and therefore
\begin{equation}\label{case2_xi_second_bound}
\left| \mathbb{E}^\mu\xi_i(\widehat{\boldsymbol{\Theta}}_0, \boldsymbol{X}_i)\right| \le 4R_{\boldsymbol{\Theta}}^2 \left \| \mathbb{E}\boldsymbol{A}_i -\boldsymbol{A}\right\| 
\lesssim iq^i + \lambda^i 
\lesssim iq^i 
\end{equation}
where the second inequality follows from Lemma \ref{LEMMA:exp_A_bound} and in the last inequality we used the definition $q := \max\{\lambda, \rho\} \in (0,1)$. Combining \eqref{case2_xi_first_bound} and \eqref{case2_xi_second_bound}, we get
\begin{align*}
    \left| \mathbb{E}^\mu\zeta_i(\widehat{\boldsymbol{\Theta}}_i, \boldsymbol{X}_i) \right| \lesssim \tau_{\beta_t} \beta_0 + iq^i.
\end{align*}
\end{proof}

\begin{lemma}\label{LEMMA:prelim_all_time_exp_bound} For $t \in \mathbb{N}$, let $\beta_t = \frac{\beta_0}{(t+1)^s}$ and $s \in (0,1)$. With $\gamma > 0$,
\begin{align*}
    \sum_{i=0}^t \left(e^{-\gamma\sum_{k=i+1}^t \beta_k} \right)\beta^2_i \le 2\left(K_b e^{-\frac{\gamma}{2}\sum_{k=0}^t \beta_k} + \beta_t\right)\frac{e^{\frac{\gamma \beta_0}{2}} }{\gamma}, 
\end{align*}
where $K_b = \beta_0 e^{\frac{\gamma}{2}\sum_{k=0}^{i_0} \beta_k}$ for some $i_0 \in \mathbb{N}$. 
\end{lemma}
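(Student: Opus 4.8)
The plan is to reduce the claim to a telescoping estimate and then split the resulting sum into a transient head and a tail that contracts to the final step-size. Write $S_j := \sum_{k=0}^j \beta_k$, so that the left-hand side equals $e^{-\gamma S_t}\sum_{i=0}^t \beta_i^2 e^{\gamma S_i}$, and recall that $\beta_i = \beta_0/(i+1)^s$ is nonincreasing with $\beta_i \le \beta_0$. The elementary bound $e^x \ge 1+x$ gives, for every $i \ge 1$,
\[
e^{\frac\gamma2 S_i} - e^{\frac\gamma2 S_{i-1}} = e^{\frac\gamma2 S_{i-1}}\left(e^{\frac\gamma2\beta_i}-1\right) \ge \tfrac\gamma2\,\beta_i\,e^{-\frac\gamma2\beta_0}\,e^{\frac\gamma2 S_i},
\]
so that $\beta_i e^{\frac\gamma2 S_i} \le \frac{2e^{\gamma\beta_0/2}}{\gamma}\left(e^{\frac\gamma2 S_i}-e^{\frac\gamma2 S_{i-1}}\right)$. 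This is the source of the prefactor $2e^{\gamma\beta_0/2}/\gamma$, and after multiplying by $\beta_i e^{\frac\gamma2 S_i}$ and using $e^{\frac\gamma2 S_i}(e^{\frac\gamma2 S_i}-e^{\frac\gamma2 S_{i-1}}) \le e^{\gamma S_i}-e^{\gamma S_{i-1}}$ it yields the comparison $\beta_i^2 e^{\gamma S_i} \le \frac{2e^{\gamma\beta_0/2}}{\gamma}\,\beta_i\left(e^{\gamma S_i}-e^{\gamma S_{i-1}}\right)$ that I will feed into a summation-by-parts argument.

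Next I would apply Abel summation to $\sum_{i=1}^t \beta_i\left(e^{\gamma S_i}-e^{\gamma S_{i-1}}\right)$. Since $\{\beta_i\}$ is nonincreasing, this produces a boundary term $\beta_t e^{\gamma S_t}$ together with a nonnegative residual $\sum_{i=1}^{t-1}(\beta_i-\beta_{i+1})e^{\gamma S_i}$; multiplying through by $e^{-\gamma S_t}$ turns the boundary term into exactly $\beta_t$ and the residual into $R := \sum_{i=1}^{t-1}(\beta_i-\beta_{i+1})e^{-\gamma(S_t-S_i)}$ (the isolated $i=0$ term contributes only $\beta_0^2 e^{\gamma\beta_0}e^{-\gamma S_t}$, which is of transient type). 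I would then split $R$ at an index $i_0$. On the head $i \le i_0$, bounding $e^{-\gamma(S_t-S_i)} \le e^{-\frac\gamma2(S_t-S_{i_0})}$ and telescoping $\sum_{i\le i_0}(\beta_i-\beta_{i+1}) \le \beta_0$ yields $R_{\mathrm{head}} \le \beta_0 e^{\frac\gamma2 S_{i_0}}e^{-\frac\gamma2 S_t} = K_b\, e^{-\frac\gamma2 S_t}$, which is precisely the transient term in the statement and fixes the definition of $K_b = \beta_0 e^{\frac\gamma2\sum_{k=0}^{i_0}\beta_k}$.

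The \emph{crux}, and the step I expect to be the main obstacle, is the tail $R_{\mathrm{tail}} = \sum_{i=i_0+1}^{t-1}(\beta_i-\beta_{i+1})e^{-\gamma(S_t-S_i)}$, since the naive bound $\sum(\beta_i-\beta_{i+1}) \le \beta_{i_0+1}$ returns only a constant rather than the desired $\beta_t$. The point is that the step-size increments are of strictly smaller order than the step-sizes themselves: by the mean value theorem $\beta_i-\beta_{i+1} \le s\,\beta_0 (i+1)^{-s-1} = \frac{s}{i+1}\beta_i$. Splitting $R_{\mathrm{tail}}$ once more at $\lfloor t/2\rfloor$, the lower block is controlled by $e^{-\gamma(S_t-S_{\lfloor t/2\rfloor})}$, which—using $\sum_{k=\lfloor t/2\rfloor+1}^t (k+1)^{-s} = \Omega(t^{1-s})$ exactly as in the proof of Theorem \ref{THM:FIN_TIME_BOUND_DECR_APPEND}—is of the same exponential-in-$t^{1-s}$ order as $e^{-\frac\gamma2 S_t}$ and is absorbed into the transient term; on the upper block $i > \lfloor t/2\rfloor$ one has $\frac1{i+1} \le \frac2t$, so that this block is at most $\frac{2s}{t}\sum_i \beta_i e^{-\gamma(S_t-S_i)} \le \frac{2s\,e^{\gamma\beta_0}}{\gamma t}$ after applying the telescoping inequality $\beta_i e^{-\gamma(S_t-S_i)} \le \frac{e^{\gamma\beta_0}}{\gamma}\big(e^{-\gamma(S_t-S_i)}-e^{-\gamma(S_t-S_{i-1})}\big)$. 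Because $s<1$ forces $t^{-1} \le t^{-s} = \beta_t/\beta_0$, this upper block is $O(\beta_t/\gamma)$. Collecting the boundary term, $R_{\mathrm{head}}$, and $R_{\mathrm{tail}}$, and absorbing the accumulated constants into the prefactor $\frac{2e^{\gamma\beta_0/2}}{\gamma}$ coming from the telescoping inequality, delivers the stated bound $\frac{2e^{\gamma\beta_0/2}}{\gamma}\big(K_b e^{-\frac\gamma2 S_t}+\beta_t\big)$.
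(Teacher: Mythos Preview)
Your route via telescoping and Abel summation is genuinely different from the paper's, and most of it is sound, but the step you flagged as the crux does contain a gap. The paper bypasses summation by parts entirely: it writes
\[
e^{-\gamma\sum_{k>i}\beta_k}\beta_i^2 \;=\; \Bigl(\beta_i\,e^{-\frac\gamma2\sum_{k>i}\beta_k}\Bigr)\Bigl(\beta_i\,e^{-\frac\gamma2\sum_{k>i}\beta_k}\Bigr),
\]
bounds $\sum_i$ of the second factor by the Riemann-sum estimate $\sum_i \beta_i e^{-\frac\gamma2\sum_{k>i}\beta_k}\le \tfrac{2e^{\gamma\beta_0/2}}{\gamma}$ (your telescoping inequality), and takes the supremum of the first factor. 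The key observation is that $i\mapsto \beta_i e^{-\frac\gamma2\sum_{k>i}\beta_k}$ is \emph{eventually increasing} in $i$ (because $\log(\beta_i/\beta_{i+1})\sim s/i$ while $\tfrac\gamma2\beta_{i+1}\sim i^{-s}$ decays more slowly for $s<1$), so beyond a fixed index $i_0$ the supremum is attained at $i=t$ and equals $\beta_t$, while on $\{0,\dots,i_0\}$ it is crudely bounded by $K_b e^{-\frac\gamma2 S_t}$. This $\ell^\infty$--$\ell^1$ split is short and delivers the stated constants exactly.

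The gap in your argument is the absorption of the lower block of $R_{\mathrm{tail}}$. You bound it by $\beta_0\, e^{-\gamma(S_t - S_{\lfloor t/2\rfloor})}$ and assert this ``is of the same exponential-in-$t^{1-s}$ order as $e^{-\frac\gamma2 S_t}$ and is absorbed into the transient term.'' Both quantities do decay like $\exp(-\Theta(t^{1-s}))$, but the rates differ in the wrong direction: since $S_{\lfloor t/2\rfloor}/S_t \to 2^{-(1-s)} > \tfrac12$ for every $s\in(0,1)$, one has $\gamma(S_t - S_{\lfloor t/2\rfloor}) < \tfrac\gamma2 S_t$ for all large $t$, and therefore
\[
\frac{e^{-\gamma(S_t - S_{\lfloor t/2\rfloor})}}{e^{-\frac\gamma2 S_t}} \;=\; e^{\gamma S_{\lfloor t/2\rfloor} - \frac\gamma2 S_t}\;\longrightarrow\;\infty.
\]
So this term cannot be bounded by any constant multiple of $K_b e^{-\frac\gamma2 S_t}$, and your argument as written yields only a bound of the shape $C_1 e^{-c_1 t^{1-s}} + C_2\beta_t$ with $c_1 < \tfrac{\gamma\beta_0}{2(1-s)}$---weaker than the lemma as stated (though adequate for Theorem~\ref{THM:FIN_TIME_BOUND_DECR}, where all such terms are merged into a single $\mathcal{O}(e^{-ct^{1-s}})$). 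To recover the exact exponent $\tfrac\gamma2 S_t$ along your lines you would need to replace the split at $\lfloor t/2\rfloor$ by the eventual-monotonicity argument applied to the Abel residual itself; at that point the paper's direct $\ell^\infty$--$\ell^1$ split is both shorter and gives the constants without further adjustment.
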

\begin{proof}
Let $T_t = \sum_{i=0}^{t-1} \beta_i$ and use the convention $\sum_{k=t+1}^t \beta_k = 0$ and $\sum_{k=t+1}^t \beta^2_k = 0$. Notice that
\begin{align}
\sum_{i=0}^t \left(e^{-\frac{\gamma}{2}\sum_{k=i+1}^t \beta_k} \right)\beta_i &\le \left(\sup_{i \ge 0} e^{\frac{\gamma}{2}\beta_i}\right) \left\{\sum_{i=0}^t \left(e^{-\frac{\gamma}{2}\sum_{k=i}^t \beta_k} \right)\beta_i \right\} 
\nonumber \\ &= \left(\sup_{i \ge 0} e^{\frac{\gamma}{2}\beta_i}\right) \left\{\sum_{i=0}^t \left(e^{-\frac{\gamma}{2}(T_{t+1}-T_i)} \right)\beta_i \right\} \nonumber \\
&\le \left(\sup_{i \ge 0} e^{\frac{\gamma}{2}\beta_i}\right)  \int_{0}^{T_{t+1}} e^{-\frac{\gamma}{2}(T_{t+1}-s)} ds \nonumber \\
&\le \left(\sup_{i \ge 0} e^{\frac{\gamma}{2}\beta_i}\right) \frac{2}{\gamma} \le \frac{2e^{\frac{\gamma \beta_0}{2}}}{\gamma}, \label{exp_order1_bound}
\end{align}
where we have used the definition of the left-Riemann sum in the first inequality. The last inequality is due to the fact that $\{\beta_t\}$ is a non-increasing sequence. Now consider 
\begin{align}
\sum_{i=0}^t \left(e^{-\gamma\sum_{k=i+1}^t \beta_k} \right)\beta^2_i &\le \sup_{0 \le i \le t} \left(\beta_i e^{-\frac{\gamma}{2}\sum_{k=i+1}^t \beta_k}\right) \left\{\sum_{i=0}^t \left(e^{-\frac{\gamma}{2}\sum_{k=i+1}^t \beta_k} \right)\beta_i \right\} \nonumber \\
&\le \sup_{0 \le i \le t} \left(\beta_i e^{-\frac{\gamma}{2}\sum_{k=i+1}^t \beta_k}\right)\frac{2e^{\frac{\gamma \beta_0}{2}}}{\gamma}\label{intermediate_exp_square_bdd}
\end{align}
where the last inequality follows from \eqref{exp_order1_bound}. Note that $\beta_i e^{-\frac{\gamma}{2}\sum_{k=i+1}^t \beta_k}$ is eventually increasing, i.e., after some time $i_0 \in \mathbb{N}$,  for all $t \ge i_0$, we have
$$
\sup_{i_0 \le i \le t} \left\{\beta_i \exp\left(-\frac{\gamma}{2}\sum_{k=i+1}^t \beta_k \right)\right\} \le \beta_t,
$$
where we used the convention $\sum_{k=t+1}^t \beta_k = 0$. Therefore, we have
\begin{align*}
    \eqref{intermediate_exp_square_bdd} &\le \left\{\sup_{0 \le i \le i_0} \left(\beta_i e^{-\frac{\gamma}{2}\sum_{k=i+1}^t \beta_k}\right) + \beta_t\right\}\frac{2e^{\frac{\gamma \beta_0}{2}} }{\gamma} \\
    &\le \left\{e^{-\frac{\gamma}{2}\sum_{k=0}^t \beta_k}\sup_{0 \le i \le i_0} \left(\beta_i e^{\frac{\gamma}{2}\sum_{k=0}^i \beta_k}\right) + \beta_t\right\}\frac{2e^{\frac{\gamma \beta_0}{2}} }{\gamma} \\
    &\le \left(K_b e^{-\frac{\gamma}{2}\sum_{k=0}^t \beta_k} + \beta_t\right)\frac{2e^{\frac{\gamma \beta_0}{2}} }{\gamma},
\end{align*}
where $K_b = \beta_0 e^{\frac{\gamma}{2}\sum_{k=0}^{i_0} \beta_k}$.
\end{proof}

\begin{lemma}\label{LEMMA:prelim_exp_bound} For $t \in \mathbb{N}$, let $\beta_t = \frac{\beta_0}{(t+1)^s}$ and $s\in (0,1)$. With $\gamma > 0$ and $\tau \in \{0, \cdots, t\}$,
\begin{enumerate}
    \item $\sum_{i=0}^{\tau} e^{-\gamma \sum_{k=i+1}^{t} \beta_{k}} \beta_{i}  \le \frac{e^{\gamma \beta_0}}{\gamma} e^{-\frac{\gamma \beta_0}{1-s}\left\lbrace(1+t)^{1-s}-\left(1+\tau\right)^{1-s}\right\rbrace}$. 
    \item $\sum_{i=2\tau_{\beta_t}+1}^{t} \left(e^{-\gamma \sum_{k=i+1}^{t} \beta_{k}} \beta_{i-2\tau_{\beta_t}} \beta_{i} \right)  \le \left[e^{\frac{-\gamma \beta_0}{2(1-s)}\left\lbrace(t+1)^{1-s}-1\right\rbrace} D_{\beta} \mathbb{I}_{\left(2\tau_{\beta_t}+1 < i_{\beta} \right)}+\beta_{t-2\tau_{\beta_t}} \right] \frac{2 e^{\gamma \beta_0 / 2}}{\gamma}$
\end{enumerate}
where 
$D_\beta = \exp\lbrace\left(\gamma / 2\right) \sum_{k=0}^{i_{\beta}} \beta_{k}\rbrace \beta_0$ for some $i_{f_\beta} \in \mathbb{N}$.
\end{lemma}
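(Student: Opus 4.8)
The plan is to treat both parts as instances of controlling an exponentially weighted sum of step-sizes by comparison with the integral of a continuous surrogate, the same device already used in Lemma~\ref{LEMMA:prelim_all_time_exp_bound}. Throughout I write $\Sigma_j := \sum_{k=0}^{j}\beta_k$ for the cumulative step-size, so that $\sum_{k=i+1}^{t}\beta_k = \Sigma_t - \Sigma_i$ and each weight factorizes as $e^{-\gamma\sum_{k=i+1}^{t}\beta_k} = e^{-\gamma\Sigma_t}\,e^{\gamma\Sigma_i}$. Because $\beta_k = \beta_0(k+1)^{-s}$ is decreasing, the integral test for the map $x\mapsto(x+1)^{-s}$ furnishes a lower bound $\Sigma_b-\Sigma_a \ge \frac{\beta_0}{1-s}\{(1+b)^{1-s}-(1+a)^{1-s}\}$, which is what converts cumulative sums into the closed-form exponents appearing on the right-hand sides.

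For part~1 I would pull out the $i$-independent factor and write $\sum_{i=0}^{\tau}e^{-\gamma\sum_{k=i+1}^{t}\beta_k}\beta_i = e^{-\gamma\Sigma_t}\sum_{i=0}^{\tau}e^{\gamma\Sigma_i}\beta_i$. The inner sum is (after a one-step index shift) a left Riemann sum of the increasing integrand $\sigma\mapsto e^{\gamma\sigma}$ over the partition $\{\Sigma_i\}$ with mesh $\beta_i = \Sigma_i-\Sigma_{i-1}$; the shift costs a factor $e^{\gamma\beta_i}\le e^{\gamma\beta_0}$, after which $\sum_{i=0}^{\tau}e^{\gamma\Sigma_i}\beta_i \le e^{\gamma\beta_0}\int_{0}^{\Sigma_\tau}e^{\gamma\sigma}\,d\sigma \le \frac{e^{\gamma\beta_0}}{\gamma}e^{\gamma\Sigma_\tau}$. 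Recombining gives $\frac{e^{\gamma\beta_0}}{\gamma}e^{-\gamma(\Sigma_t-\Sigma_\tau)}$, and applying the integral lower bound to $\Sigma_t-\Sigma_\tau = \sum_{k=\tau+1}^{t}\beta_k$ produces the claimed exponent $-\frac{\gamma\beta_0}{1-s}\{(1+t)^{1-s}-(1+\tau)^{1-s}\}$.

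For part~2 I would mirror the supremum-extraction step of Lemma~\ref{LEMMA:prelim_all_time_exp_bound}. Peeling one copy of the weight off each summand, $\sum_{i=2\tau_{\beta_t}+1}^{t}e^{-\gamma\sum_{k=i+1}^{t}\beta_k}\beta_{i-2\tau_{\beta_t}}\beta_i \le \big[\sup_{2\tau_{\beta_t}+1\le i\le t}\psi(i)\big]\sum_{i=0}^{t}e^{-\frac{\gamma}{2}\sum_{k=i+1}^{t}\beta_k}\beta_i$, where $\psi(i):=\beta_{i-2\tau_{\beta_t}}e^{-\frac{\gamma}{2}\sum_{k=i+1}^{t}\beta_k}$. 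The trailing sum is at most $\frac{2e^{\gamma\beta_0/2}}{\gamma}$ by \eqref{exp_order1_bound}. For the supremum I would write $\psi(i)=e^{-\frac{\gamma}{2}\Sigma_t}\,\beta_{i-2\tau_{\beta_t}}e^{\frac{\gamma}{2}\Sigma_i}$ and argue that $\psi$ is eventually increasing, so past a turning index $i_\beta$ the supremum is attained at $i=t$, contributing $\beta_{t-2\tau_{\beta_t}}$; on the initial range $i<i_\beta$ one uses $\beta_{i-2\tau_{\beta_t}}e^{\frac{\gamma}{2}\Sigma_i}\le\beta_0 e^{\frac{\gamma}{2}\Sigma_{i_\beta}} = D_\beta$ together with the integral bound on $e^{-\frac{\gamma}{2}\Sigma_t}$, yielding $D_\beta\,e^{-\frac{\gamma\beta_0}{2(1-s)}\{(t+1)^{1-s}-1\}}$ precisely when $2\tau_{\beta_t}+1<i_\beta$, which is what the indicator records.

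The main obstacle is justifying that $\psi$ (equivalently $\beta_i e^{-\frac{\gamma}{2}\sum_{k=i+1}^{t}\beta_k}$ in the parent lemma) is eventually increasing, since both the location of $i_\beta$ and the whole supremum estimate rest on it. This reduces to showing the consecutive ratio $\frac{\beta_{i+1-2\tau_{\beta_t}}}{\beta_{i-2\tau_{\beta_t}}}e^{\frac{\gamma}{2}\beta_{i+1}}\ge 1$ for all large $i$: the step-size decay factor is $1-\Theta(i^{-1})$ while the exponential factor is $1+\Theta(i^{-s})$, so for $s\in(0,1)$ the latter eventually dominates and the product exceeds one. The remaining work is routine index bookkeeping and absorbing the mild endpoint slack in the integral comparisons into the stated constants.
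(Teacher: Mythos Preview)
Your proposal is correct and follows essentially the same approach as the paper: the Riemann-sum/integral comparison for part~1, and for part~2 the supremum extraction combined with the eventually-increasing argument for $\psi(i)$ and the integral lower bound on cumulative step-sizes are exactly what the paper does. Your explicit ratio-test justification that $\psi$ is eventually increasing (comparing the $1-\Theta(i^{-1})$ decay against the $1+\Theta(i^{-s})$ growth) in fact fills a step the paper leaves implicit.
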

\begin{proof}
Let $T_t = \sum_{i=0}^{t-1} \beta_i$ and use the convention $\sum_{k=t+1}^t \beta_k = 0$. For the first statement, 
\begin{align*}
\sum_{i=0}^{\tau} e^{-\gamma \sum_{k=i+1}^{t} \beta_{k}} \beta_{i} & \leq \max _{i \geq 0}\left(e^{\gamma \beta_{i}}\right) \sum_{i=0}^{\tau} e^{-\gamma \sum_{k=i}^{t} \beta_{k}} \beta_{i} = e^{\gamma \beta_0} \sum_{i=0}^{\tau} e^{-\gamma\left(T_{t+1}-T_{i}\right)} \beta_{i} \\
& \leq e^{\gamma \beta_0} \int_{0}^{T_{\tau+1}} e^{-\gamma\left(T_{t+1}-s\right)} d s  \leq \frac{e^{\gamma \beta_0}}{\gamma} e^{-\gamma\left(T_{t+1}-T_{\tau+1}\right)} \\&= \frac{e^{\gamma \beta_0}}{\gamma} e^{-\gamma \beta_0 \sum_{k=\tau+1}^{t} 1 /(1+k)^{s}} \le \frac{e^{\gamma \beta_0}}{\gamma} e^{-\frac{\gamma \beta_0}{1-s}\left\lbrace(1+t)^{1-s}-\left(1+\tau\right)^{1-s}\right\rbrace}.
\end{align*}
For the second statement, first notice that 
\begin{align}
\sum_{i=2\tau_{\beta_t}+1}^{t} e^{-\gamma \sum_{k=i+1}^{t} \beta_{k}} \beta_{i} & \leq \max _{i \geq 0}\left(e^{\gamma \beta_{i}}\right)  \sum_{i=2\tau_{\beta_t}+1}^{t} e^{-\gamma \sum_{k=i}^{t} \beta_{k}} \beta_{i} = e^{\gamma \beta_0}  \sum_{i=2\tau_{\beta_t}+1}^{t} e^{-\gamma\left(T_{t+1}-T_{i}\right)} \beta_{i} \nonumber \\
& \leq e^{\gamma \beta_0}\int_{T_{2\tau_{\beta_t}+1}}^{T_{t+1}} e^{-\gamma\left(T_{t+1}-s\right)} ds =\frac{e^{\gamma \beta_0}}{\gamma} \left\lbrace1-e^{-\gamma \left(T_{t+1}-T_{2\tau_{\beta_t}+1}\right)}\right\rbrace \leq \frac{e^{\gamma \beta_0}}{\gamma}. \label{prelim_second_pre}
\end{align}
Then, we have
\begin{align}
\sum_{i=2\tau_{\beta_t}+1}^{t} e^{-\gamma \sum_{k=i+1}^{t} \beta_{k}} \beta_{i-2\tau_{\beta_t}} \beta_{i} & \leq \max _{i \in\left[2\tau_{\beta_t}+1, t\right]}\left\{e^{\left(-\gamma / 2\right) \sum_{k=i+1}^{t} \beta_{k}} \beta_{i-2\tau_{\beta_t}}\right\} \sum_{i=2\tau_{\beta_t}+1}^{t} e^{\left(-\gamma / 2\right) \sum_{k=i+1}^{t} \beta_{k}} \beta_{i} \nonumber \\
& \leq \max _{i \in\left[2\tau_{\beta_t}+1, t\right]}\left\{e^{\left(-\gamma / 2\right) \sum_{k=i+1}^{t} \beta_{k}} \beta_{i-2\tau_{\beta_t}}\right\} \frac{2 e^{\gamma \beta_0 / 2}}{\gamma} \label{part_c_exp_bound}
\end{align}
where the second inequality follows from \eqref{prelim_second_pre}. To bound the first term in \eqref{part_c_exp_bound}, note that the sequence $\left\{e^{\left(-\gamma / 2\right) \sum_{k=i+1}^{t} \beta_{k}} \beta_{i-2\tau_{\beta_t}}\right\}_{i\in \mathbb{N}}$ is eventually increasing. In other words, there exists $i_{\beta} \in \mathbb{N}$ such that, 
\begin{align*}
\max _{i \in\left[2\tau_{\beta_t}+1, t\right]}\left\{e^{\left(-\gamma / 2\right) \sum_{k=i+1}^{t} \beta_{k}} \beta_{i-2\tau_{\beta_t}}\right\}&=\beta_{t-2\tau_{\beta_t}} \quad \text{if} \quad 2\tau_{\beta_t}+1\ge i_{\beta}.
\end{align*}
If $2\tau_{\beta_t}+1<i_{\beta}$, then
\begin{align*}
&\max _{i \in\left[2\tau_{\beta_t}+1, t\right]}\left\{e^{\left(-\gamma / 2\right) \sum_{k=i+1}^{t} \beta_{k}} \beta_{i-2\tau_{\beta_t}}\right\}\\ &\leq \max _{i \in\left[2\tau_{\beta}+1, i_{\beta}\right]}\left\{e^{\left(-\gamma / 2\right) \sum_{k=i+1}^{t} \beta_{k}} \beta_{i-2\tau_{\beta_t}}\right\}+\max _{i \in\left[i_{\beta}+1, t\right]}\left\{e^{\left(-\gamma / 2\right) \sum_{k=i+1}^{t} \beta_{k}} \beta_{i-2\tau_{\beta_t}}\right\} \\
& \leq e^{\left(-\gamma / 2\right) \sum_{k=0}^{t} \beta_{k}} \max _{i \in\left[2\tau_{\beta_t}+1, i_{\beta}\right]}\left\{e^{\left(\gamma / 2\right) \sum_{k=0}^{i} \beta_{k}} \beta_{i-2\tau_{\beta_t}}\right\}+\beta_{t-2\tau_{\beta_t}} \\
& \leq e^{-\left(\gamma / 2\right) \sum_{k=0}^{t} \beta_{k}} e^{\left(\gamma / 2\right) \sum_{k=0}^{i_{\beta}} \beta_{k}} \beta_{0}+\beta_{t-2\tau_{\beta_t}} \\
& \leq e^{\frac{-\gamma \beta_0}{2(1-s)}\left\lbrace(t+1)^{1-s}-1\right\rbrace} D_{\beta}+\beta_{t-2\tau_{\beta_t}}
\end{align*}
where $D_\beta = e^{\left(\gamma / 2\right) \sum_{k=0}^{i_{\beta}} \beta_{k}} \beta_0$. Combining everything, we get
\begin{align*}
\sum_{i=2\tau_{\beta_t}+1}^{t} e^{-\gamma \sum_{k=i+1}^{t} \beta_{k}} \beta_{i-2\tau_{\beta_t}} \beta_{i} \leq \left[e^{\frac{-\gamma \beta_0}{2(1-s)}\left\lbrace(t+1)^{1-s}-1\right\rbrace} D_{\beta} \mathbb{I}_{\left(2\tau_{\beta_t}+1 < i_{\beta} \right)}+\beta_{t-2\tau_{\beta_t}} \right] \frac{2 e^{\gamma \beta_0 / 2}}{\gamma}. 
\end{align*}
\end{proof}

\section{Additional Experimental Details}
We provide detailed explanations of the experimental setups for both evaluation and control experiments in this supplementary results section. For each run, we set the exponential weight parameter $\lambda = 0.25$ and the step-size ratio parameter $c_\alpha = 1.0$. Under this hyperparameter configuration, we evaluate four methods: (i) average-reward TD($\lambda$), (ii) average-reward implicit TD($\lambda$) without projection, and (iii–iv) average-reward implicit TD($\lambda$) with projection, using parameter radius $R_{\boldsymbol{\Theta}} \in \{1000,\ 5000\}$. For the projection of the average-reward estimate, we fix the radius $R_\omega = 1$, which safely bounds the true average-reward since $\omega^\mu \in [-1,\ 1]$ by construction in all our settings.

\subsection{Evaluation Experiments}\label{subsec:APPEND_EVAL_LOSS}
\paragraph{Computing true parameters.}
We compute the oracle quantities used in the loss $(\widehat{\omega}_t - \omega^{\mu})^2 + \|\Pi_{\mathbb{O}}(\widehat{\pmb{\theta}}_t - \boldsymbol{\theta}^*)\|^2$ as follows. Given a transition matrix $\boldsymbol P^\mu$, reward vector $\boldsymbol r^\mu$, and feature matrix $\boldsymbol{\Phi}$, we first obtain the stationary distribution $\boldsymbol{\pi}^\mu$ satisfying ${\boldsymbol{\pi}^\mu}^\top \boldsymbol P^\mu = {\boldsymbol{\pi}^{\mu}}^\top$ and ${\boldsymbol{\pi}^\mu}^\top \boldsymbol{e} = 1$. The average reward is then $\omega^{\mu} = {\boldsymbol{\pi}^\mu}^\top {\boldsymbol r}^\mu$. In addition, to compute the optimal weight vector $\boldsymbol{\theta}^*$, we first solve for the basic differential value $\boldsymbol v^\mu$, which is the unique solution to
$$
(\boldsymbol I - \boldsymbol P^\mu)\boldsymbol v^\mu = \boldsymbol r^\mu - \omega^{\mu} \boldsymbol{e},\qquad
{\boldsymbol{\pi}^\mu}^\top \boldsymbol v^\mu = 0.
$$
Because $\boldsymbol{e}$ and $\boldsymbol v^\mu$ are included as columns of $\boldsymbol{\Phi}$, which is of full rank, the optimal weight vector $\boldsymbol{\theta}^*$ can be obtained from solving the linear system
$
\boldsymbol{\Phi}\boldsymbol{\theta}^* = \boldsymbol v^\mu.
$
In an analogous fashion, to compute the projection to the space $\mathbb{O}$, we solve for $\boldsymbol{\theta}_{\boldsymbol{e}}$ satisfying $\boldsymbol{\Phi}\boldsymbol{\theta}_{\boldsymbol{e}} = \boldsymbol{e}.$
Recall that $\boldsymbol{\theta}_{\boldsymbol{e}}$ defines the projection direction that removes the constant component of the error. The projection operator to the space $\mathbb{O}$, can be expressed as $\boldsymbol{I}-\frac{\boldsymbol{\theta}_e{\boldsymbol{\theta}_e}^\top}{\|\boldsymbol{\theta}_e\|^2}$, i.e., $\Pi_{\mathbb{O}}(\boldsymbol{\theta}) = \left(\boldsymbol{I}-\frac{\boldsymbol{\theta}_e{\boldsymbol{\theta}_e}^\top}{\|\boldsymbol{\theta}_e\|^2}\right)\boldsymbol{\theta}.$

\subsubsection{MRP}\label{sec:MRP}
\paragraph{Experiment setup.}
We describe the construction of transition probabilities, rewards, and the feature matrix following \citep{zhang2021finite}; details are reproduced here for completeness.

\begin{itemize}
\item \textbf{Transition probabilities:} For each state $s$, we generated a probability distribution over the $|\mathcal S|=100$ states by drawing ($|\mathcal S|-1$) i.i.d samples from $\mathrm{Unif}[0,1]$. We then sorted them, and took successive differences. The final entry was set to ensure the components sum to one.
\item \textbf{Rewards:} Each state $s$ received a reward sampled independently from $\mathrm{Unif}[0,1]$.
\item \textbf{Feature matrix:} Let $d$ denote the feature dimension. We draw $\widetilde{\boldsymbol{\Phi}}\in\mathbb{R}^{|\mathcal S|\times(d-2)}$ with i.i.d. $\mathrm{Bernoulli}(0.5)$ entries. We then appended the all-ones vector $\boldsymbol{e}$ and the basic differential value $\boldsymbol v^\mu$ as columns to form
$$
\boldsymbol{\Phi} = \left[\widetilde{\boldsymbol{\Phi}}\;\; \boldsymbol{e} \;\; \boldsymbol v^\mu\right].
$$
If needed, we repeated the sampling until $\boldsymbol{\Phi}$ had full column rank, then was row-normalized so that $\|\boldsymbol{\phi}(s)\| \le 1$ for all $s\in\mathcal S$.
\end{itemize}

\paragraph{Additional results.} 
We report additional MRP results under a decaying step-size schedule. For each schedule, we run 50 independent trials, initialize $\boldsymbol{\theta}_0$ by sampling each coordinate from $\mathrm{Unif}[-1,1]$, and set the initial average-reward estimate $\widehat{\omega}_{0}=0$. Complementing the constant step-size results in the main text, Figure \ref{fig:MRP_selected_stepsizes} shows performance under $\beta_t=\beta_0/(t+1)^{0.99}$ for initial step-sizes $\beta_0\in\{0.1,0.2,\ldots,3.0\}$ using the same hyperparameters $(\lambda=0.25,\,c_\alpha=1.0)$. Solid lines denote the mean loss across runs and shaded regions indicate 95$\%$ confidence intervals. As $\beta_0$ increases, average-reward implicit TD($\lambda$) methods keep the loss controlled, whereas average-reward TD($\lambda$) diverges for step-sizes larger than 2.0. For $\beta_t=1.8/(t+1)^{0.99}$, the full loss trajectory (right panel) further underscores the gap: average-reward TD($\lambda$) remains markedly worse than its implicit counterparts.

\begin{figure}[t]
\centering
\caption{\small MRP experiment results under decaying step-size schedule \( \beta_t = \beta_0 / (t+1)^{0.99} \), with exponential weighting parameter and step-size ratio set to \( (\lambda, c_{\alpha}) = (0.25, 1.0) \). Solid lines denote the mean, and shaded regions represent 95\% confidence intervals. (Left) Loss value for initial step-sizes from 0.1 to 3.0.
(Right) Full trajectory of the loss value with initial step-size $\beta_0 = 1.8$.}
\label{fig:MRP_selected_stepsizes}
    \centering
    \begin{subfigure}[t]{0.4\linewidth}
        \includegraphics[width=\linewidth]{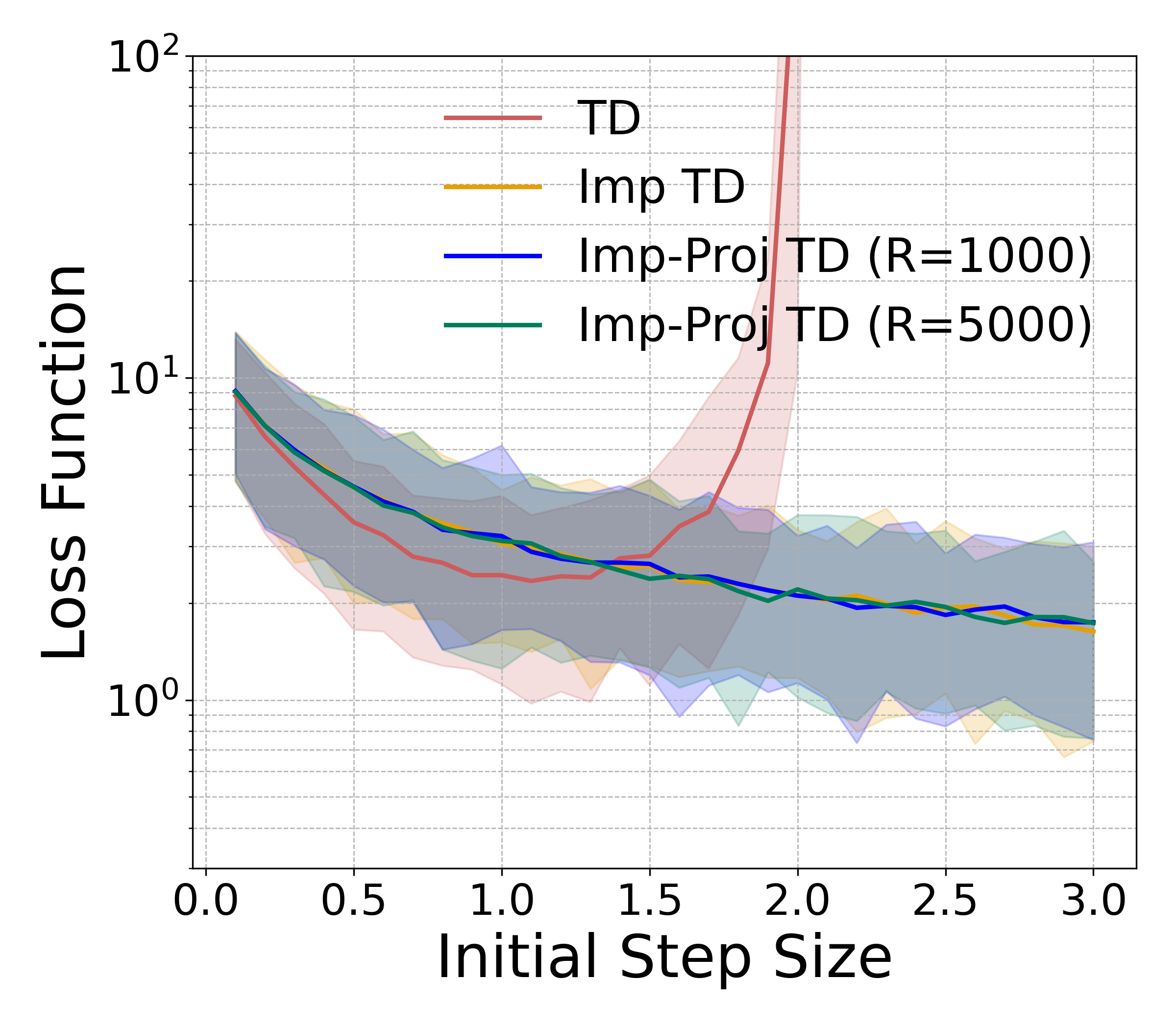}
    \end{subfigure}
    \begin{subfigure}[t]{0.4\linewidth}
        \includegraphics[width=\linewidth]{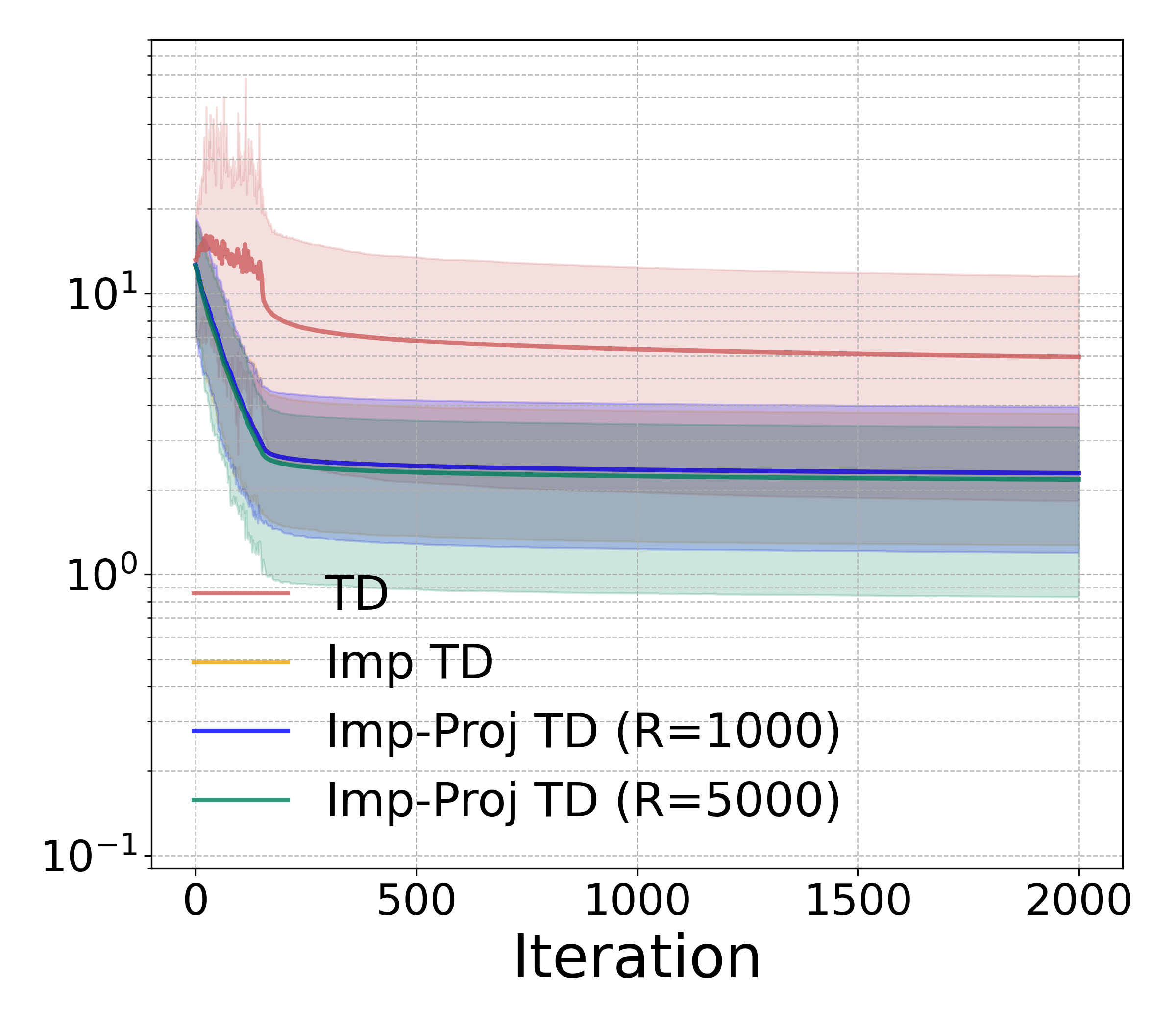}
    \end{subfigure}
\end{figure}

\subsubsection{Boyan Chain}
\label{sec:APPEND_Boyan}
\paragraph{Experiment setup.} 
We describe the construction of the transition probabilities, reward function, and feature matrix for the average-reward Boyan chain.  
The original Boyan chain was introduced by \citep{boyan2002technical} and later adapted to the average-reward setting by \citep{zhang2021average}. The chain consists of 13 states and two actions, denoted by $\{s_0, s_1, \ldots, s_{12}\}$ and $\{a_0, a_1\}$, respectively.

\begin{itemize}
    \item \textbf{Transition probabilities:} The transition probabilities of the Boyan chain are defined as 
\begin{align*}
   & p(s_{i-2} \mid s_{i}, a_0) = 1, \quad p(s_{i-1} \mid s_{i}, a_1) = 1, \quad \forall i\in\{2,3,\ldots, 12\}, \\
   & p(s_0 \mid s_1, a_0) = p(s_0 \mid s_1, a_1) = 1, \\
   & p(s_j \mid s_0, a_0) = p(s_j \mid s_0, a_1) = \tfrac{1}{13}, \quad \forall j \in \{0,1,\ldots, 12\}.
\end{align*}

    \item \textbf{Reward:} The reward function is defined as $r(i,a_0) = 0.5$ and $r(i,a_1) = 1$ for all $i \in \{0,1,\ldots, 12\}$.

    \item \textbf{Feature matrix:} 
    Consistent with the MRP experiment, we first construct the matrix $\widetilde{\boldsymbol{\Phi}}$, and append both the all-ones vector $\boldsymbol{e}$ and the differential value function $\boldsymbol v^\mu$ as columns
$$
\boldsymbol{\Phi} = \left[\widetilde{\boldsymbol{\Phi}}\;\; \boldsymbol{e} \;\; \boldsymbol v^\mu\right].
$$
    Following \cite{boyan2002technical}, the matrix $\widetilde{\boldsymbol{\Phi}}$ is constructed by linearly interpolating between the one-hot vectors $(1,0,0,0)$ and $(0,0,0,1)$ in increments of $1/4$. Specifically, the representation starts with state $0$ as $(1,0,0,0)$, then passes through intermediate states such as $(\tfrac{3}{4},\tfrac{1}{4},0,0)$ and $(\tfrac{1}{2},\tfrac{1}{2},0,0)$, eventually reaching $(0,0,0,1)$. Finally, we normalize each row to ensure $\|\boldsymbol{\phi}(s)\| \le 1$ for all $s\in\mathcal S$.
\end{itemize}

\paragraph{Additional results.} 
We also report extended results for the Boyan experiments. For each step-size schedule, we conduct $50$ independent runs with $T=2000$. 
Each component of the parameter vector $\widehat{\theta}_0$ is initialized as $\mathcal{U}[-1,1]$, and the initial average-reward estimate is set to $\widehat{\omega}_0 = 0$. 
In each experiment, we first sample actions in each state from a $\text{Binomial}(n=13, p=0.5)$ distribution. The sampled action will determine the deterministic policy to be evaluated. For each such sampled deterministic policy, an associated transition probability matrix $\boldsymbol{P}^\mu$ is induced. We provide results under the constant step-size schedule. The results as a function of the initial step-size, with hyperparameters \( (\lambda, c_{\alpha}) = (0.25, 1.0) \), are shown in Figure \ref{fig:Boyan_constant_presentation}. As shown in the left panel, the loss increases monotonically with the step-size across all methods. However, the growth is substantially reduced for the average-reward implicit TD($\lambda$). In contrast, the average-reward TD($\lambda$) diverges under similar conditions. The right panel further illustrates the trajectory of the loss value over training. At a moderately large initial step-size (\( \beta_0 = 0.5 \)), the loss of average-reward TD($\lambda$) exceeds that of its implicit counterpart.

\begin{figure}[t]
\centering
\caption{Boyan experiment results under the constant step-size, with exponential weighting parameter and step-size ratio set to $(\lambda, c_{\alpha}) = (0.25, 1.0)$. The solid line represents the mean, and the shaded region denotes the 95\% confidence interval. (Left) Loss value with initial step-size $\beta_0$, from 0.1 to 3.0. (Right) Loss value over iterations with $\beta_0 = 0.5$.}
\label{fig:Boyan_constant_presentation}
    \centering
    \begin{subfigure}[t]{0.4\linewidth}
        \includegraphics[width=\linewidth]{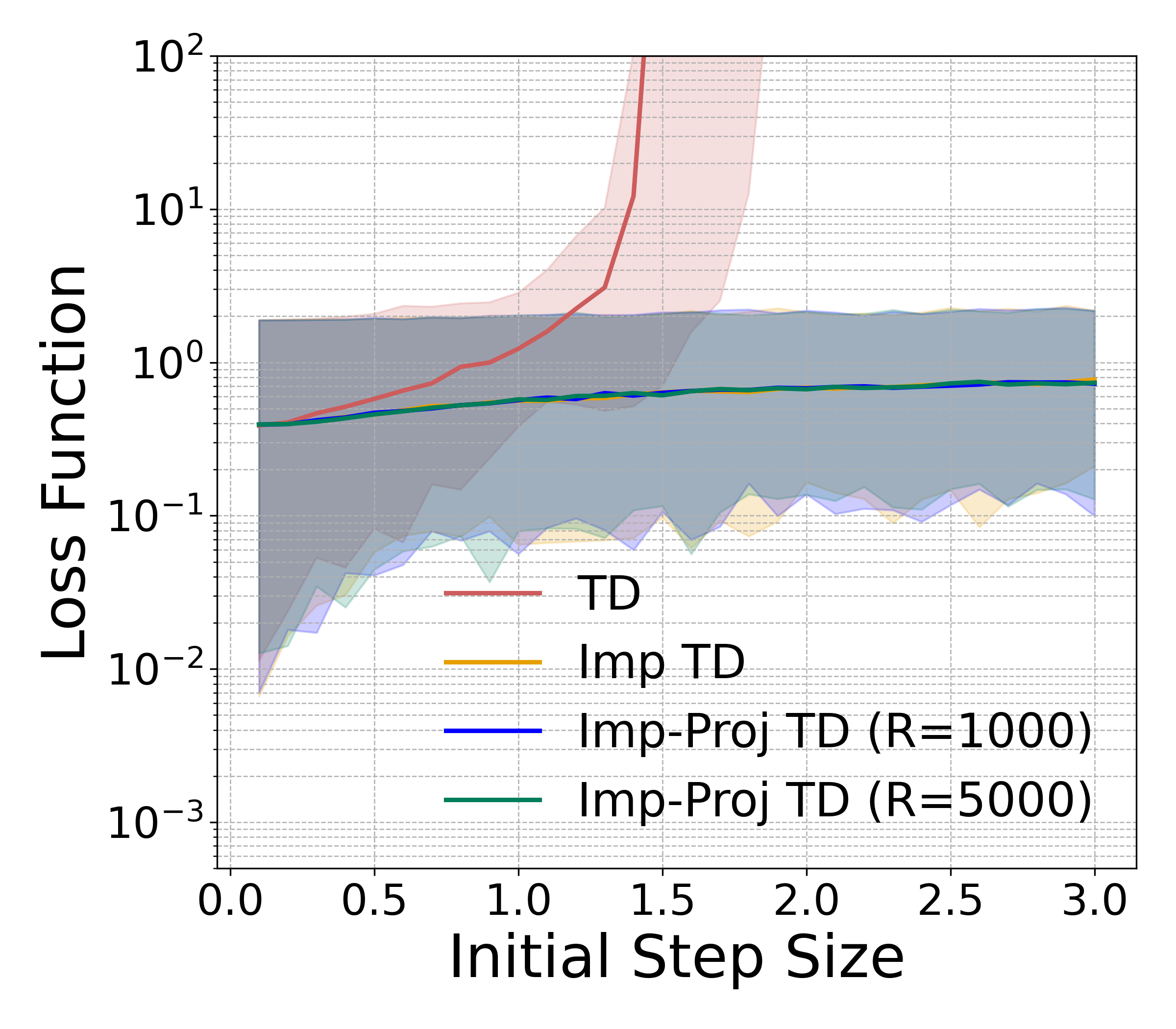}
    \end{subfigure}
    \begin{subfigure}[t]{0.4\linewidth}
    \includegraphics[width=\linewidth]{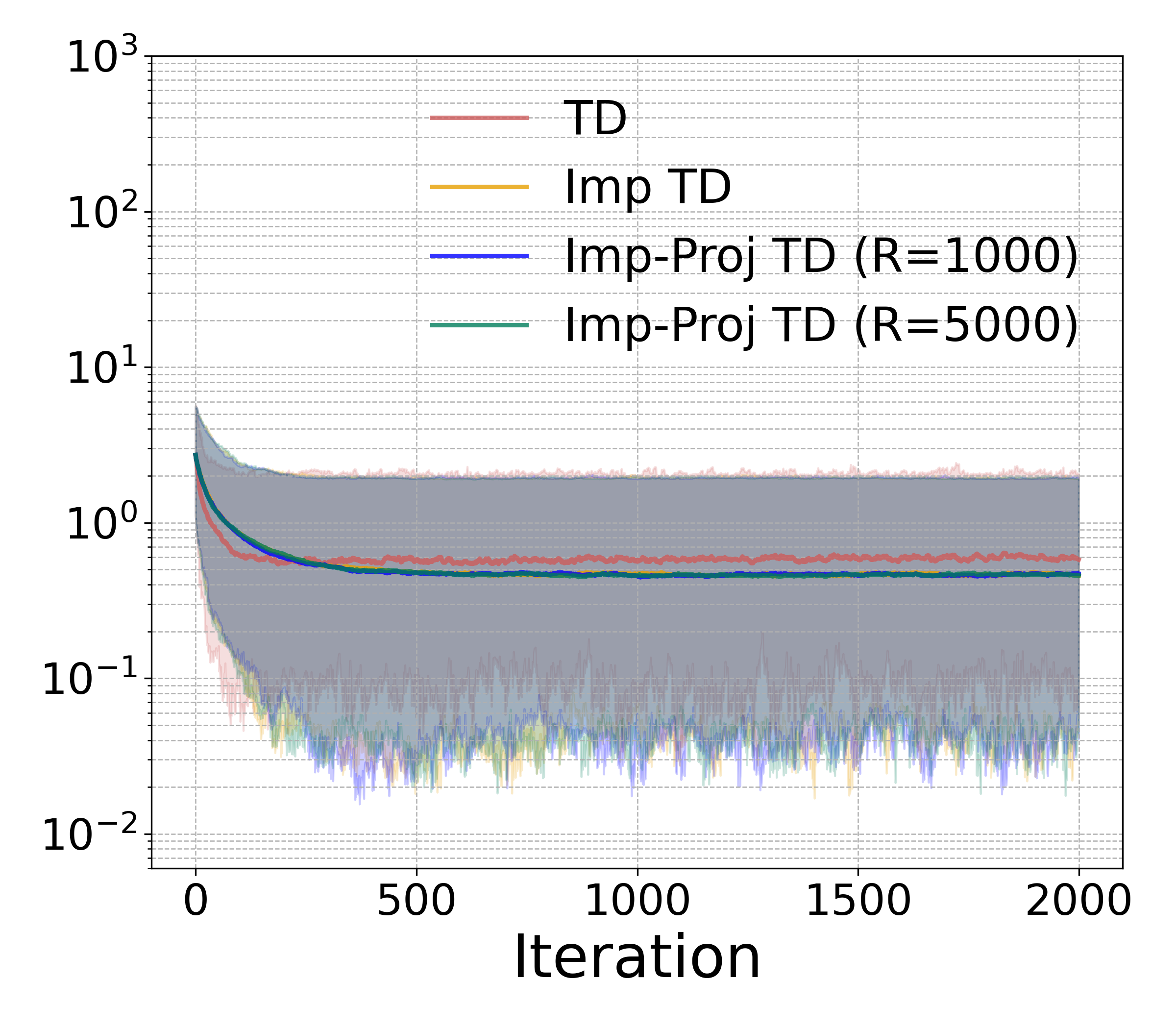}
    \end{subfigure}
\end{figure}

\subsection{Control Experiments}
\label{sec:control_experiments}
We provide details of the control experiment setup. For each action $a$, we form the joint feature  
$
\boldsymbol{\phi}(s,a) = \boldsymbol{\phi}_{\mathrm{RBF}}(s)\otimes e_{a},
$
where $e_{a}$ is the one‐hot encoding of $a \in \mathcal{A}$.  
The state-action value function is approximated by $\widehat{Q}(s,a) = \boldsymbol{\phi}(s,a)^\top \widehat{\boldsymbol{\theta}}_t$, where $\widehat{\boldsymbol{\theta}}_t$ denotes the weight parameter at iteration $t$. We adopt SARSA with $\epsilon$-greedy exploration: the agent selects the greedy action with probability $1-\epsilon$ and a random action with probability $\epsilon$. The exploration parameter $\epsilon$ starts at $0.25$, drops to $0.125$ after 5000 iterations, and is set to $0$ after 10000 iterations. Each experiment runs for $T=15000$ steps, with 30 independent runs. Each component of the initial parameter estimate $\widehat{\theta}_0$ is initialized from $\mathcal{U}[-0.5,0.5]$ and the initial average-reward estimate is $\widehat{\omega}_0 = 0$. The step-size schedule is $\beta_t = \beta_0 / (t + 400)^{0.99}$, with $\beta_0 \in \{400 \times 0.25,\ 400 \times 0.50,\ \ldots,\ 400 \times 1.5\}$. Hence, the effective initial step-size $\beta_0 / 400^{0.99}$ ranges from 0.25 to 1.5.

\subsubsection{Access-Control}
\label{sec:access-control}
\paragraph{Experiment setup} We explain the state space, action space, and the reward function of the access-control experiment. 

\begin{itemize}
    \item \textbf{States and actions:} The state space is defined by the number of free servers and the class of arriving customer. Let $n\in\mathbb{N}$ be the total number of servers and $\mathcal{C}=\{1,2,\ldots,C\}$ the set of customer classes.
The state at time $t$ is
$
S^\mu_t = (k_t, c_t)\in\{0,1,\ldots,n\}\times \mathcal{C},
$
where $k_t$ is the number of free servers and $c_t$ is the class of the arriving customer.
Arrivals are equiprobable, that is:
$
\mathbb{P}(c_t = c) = \tfrac{1}{C}~\text{for each }c\in\mathcal{C},
$
and this distribution is unknown to the decision maker. The decision maker either accepts the customer (action $a_0$) and assigns a free server, or rejects the customer (action $a_1$). Hence, the feasible action set is
\[
\mathcal{A}(S^\mu_t)=
\begin{cases}
\{a_0\ (\text{accept}), a_1\ (\text{reject})\}, & k_t>0,\\
\{a_1\ (\text{reject})\}, & k_t=0.
\end{cases}
\]

\item \textbf{Reward:}
The one-step reward is
$
R^\mu_t = \frac{2^{c_t}}{2^{C}}\mathbb{I}\{a_t=a_0\},
$
i.e., the decision maker receives $2^{c_t}/2^{C}$ if the customer is accepted and $0$ otherwise.

\item \textbf{State transitions:}
If $a_t=a_0$ and $k_t>0$, one available server is allocated. Let
$
\tilde{k}_t = k_t - \mathbb{I}\{a_t=a_0\}
$
then $b_t = n-\tilde{k}_t$ is the number of busy servers immediately after the action.
Each occupied server completes independently with probability $p\in(0,1)$ at the end of the period. Therefore, the number of free servers at the next step is
$
k_{t+1} = \min\{n,\ \tilde{k}_t + Y_t\},
$
where
$
Y_t \sim \mathrm{Binomial}(b_t,p).
$
Lastly, the next arriving class $c_{t+1}$ is drawn independently and uniformly from $\mathcal{C}$. 
\end{itemize}
The goal is to find an optimal policy $\mu$ that maximizes the average-reward $\omega^{\mu}$. We follow the setup in \cite{barto2021reinforcement}, with $C=4$ customer classes, $n=10$ total servers, and a service completion rate $p=0.06$. Accepted customers occupy a server until completion, at which point the server is freed. For feature representation, states are rescaled to $[0,1]$ and embedded via a single-scale random Fourier feature map \citep{scikit-learn,rahimi2008weighted} implemented with scikit-learn’s \texttt{RBFSampler}. The map uses twenty randomly drawn features and sets the inverse length-scale parameter to one, so that inner products in the resulting feature space provide a good approximation to the RBF kernel.

\subsubsection{Pendulum}
\label{sec:pendulum}

\paragraph{Experiment Setup}
At each time step $t$, the state is
$
S_{t}^{\mu}=(\cos\eta_{t},\sin\eta_{t},\dot{\eta}_{t}),
$
where $\eta_{t}$ is the pendulum angle and $\dot{\eta}_{t}$ its angular velocity. The action corresponds to applying a torque to the pendulum. We discretize the continuous action space $[-2,2]$ into five actions $(\mathcal{A}=\{-2,-1,0,1,2\})$. Unlike the episodic setting, this environment has no terminal state and runs indefinitely, so we optimize the long-run average-reward. The per step reward is
$
R_{t}^{\mu} = -\frac{\eta_{t}^{2} + 0.1\dot{\eta}_{t}^{2} + 0.001a_{t}^{2}}{16.27},
$
where the normalization factor 16.27 scales the reward into $[-1,0]$. We use the Gymnasium implementation of the pendulum environment \citep{towers2024gymnasium}. To approximate the state–action value, we use random Fourier features to approximate the RBF kernel. Concretely, we create two separate \texttt{RBFSampler} feature vectors—one using an inverse length-scale  of 0.5 and the other 1.0, each with 150 features, and then concatenate them into a single 300-dimensional feature representation.

\subsection{Effect of Step-Size Ratio}
In this section, we study how the step-size ratio $c_\alpha$ affects stability and performance.  
We revisit the policy evaluation settings for the MRP and average-reward Boyan chain described in Sections~\ref{sec:MRP} and~\ref{sec:APPEND_Boyan}, respectively.  
As in the main experiments, we report the loss value of the form $\bigl(\widehat{\omega}-\omega^\mu\bigr)^2+\bigl\|\Pi_{\mathbb{O}}(\widehat{\pmb{\theta}}-\pmb{\theta}^\ast)\bigr\|^2
$ as the evaluation metric. Under the decaying step-size schedule $\beta_t = \beta_0 / (t+1)^{0.99}$, we vary the step-size ratio $c_\alpha \in \{0.01, 0.05, 0.1, 0.125, 0.25, \dots, 1.5\}$ while fixing the exponential weighting parameter $\lambda = 0.25$ and the initial step-size $\beta_0 = 1.0$. Figure \ref{fig:Boyan_cal_effect} summarizes the result.  
For small values of $c_\alpha$ (e.g., $c_\alpha \leq 0.1$), the average-reward implicit TD($\lambda$) method exhibits a modest increase in the loss value. However, beyond this threshold, the method remains stable across the entire range of $c_\alpha$ values.

\begin{figure}[t]
\centering
\caption{Effect of the step-size ratio $c_\alpha$ in both the MRP and Boyan experiments under a decaying step-size schedule, with exponential weighting parameter $\lambda = 0.25$ and initial step-size $\beta_0 = 1.0$. Solid lines denote mean values; shaded regions represent 95\% confidence intervals. (Left) MRP \quad (Right) Boyan}
\label{fig:Boyan_cal_effect}
    \centering
    \begin{subfigure}[t]{0.4\linewidth}
        \includegraphics[width=\linewidth]{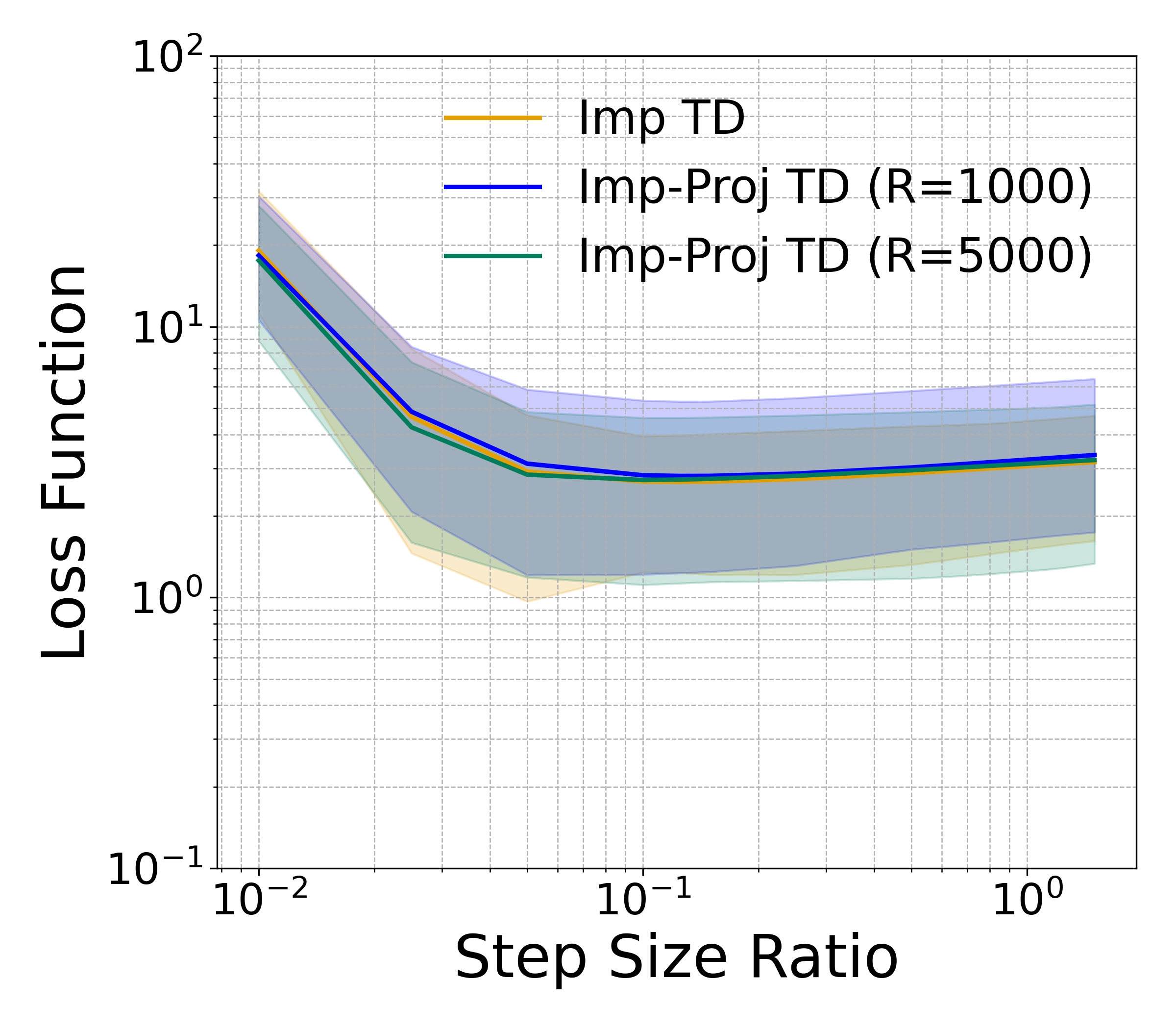}
    \end{subfigure}
    \begin{subfigure}[t]{0.4\linewidth}
        \includegraphics[width=\linewidth]{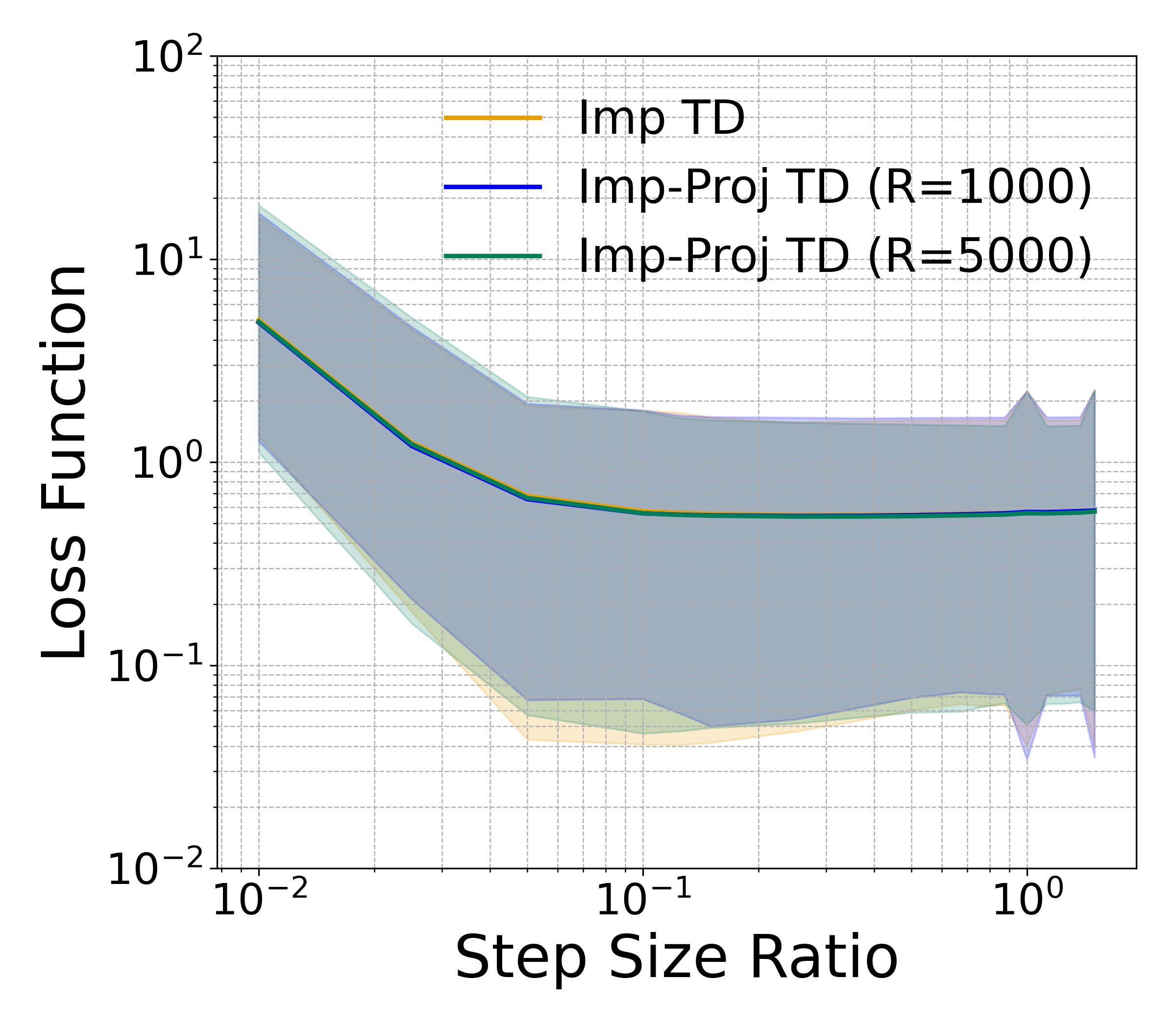}
    \end{subfigure}
\end{figure}

\bibliographystyle{plain}
\bibliography{reference}
\end{document}